\documentclass{article}

\PassOptionsToPackage{numbers, compress}{natbib}

\usepackage{fullpage}
\usepackage{natbib}
\usepackage{tikz}
\usepackage{placeins}

\usepackage[utf8]{inputenc} 
\usepackage[OT1]{fontenc}   
\usepackage{hyperref}       
\usepackage{url}            
\usepackage{booktabs}       
\usepackage{amsfonts,amsmath,amsthm,amssymb}       
\usepackage{nicefrac,multirow}      
\usepackage[table]{xcolor} 
\usepackage{silence}
\usepackage{microtype}      
\usepackage{xcolor}        
\usepackage{algorithm,algorithmic}
\usepackage{subcaption}
\usepackage{graphicx}
\usepackage{csquotes}
\usepackage{placeins,enumitem}

\newtheorem{assumption}{Assumption}
\newtheorem{theorem}{Theorem}
\newtheorem{lemma}{Lemma}
\newtheorem{proposition}{Proposition}
\newtheorem{corollary}{Corollary}
\newtheorem{remark}{Remark}

\usepackage{amsmath, amssymb, amsfonts, amsthm, mathtools}

\usepackage{tikz}
\usetikzlibrary{arrows.meta}
\usepackage{graphicx}

\definecolor{igaCyan}{RGB}{102,193,240}    
\definecolor{igaSteel}{RGB}{122,172,188}   
\definecolor{igaTan}{RGB}{173,143,110}     
\definecolor{igaOrange}{RGB}{221,123,60}   
\definecolor{igaBarLo}{RGB}{101,195,246}   
\definecolor{igaBarMid}{RGB}{153,154,138}  
\definecolor{igaBarHi}{RGB}{234,119,53}   
\definecolor{wallGray}{RGB}{120,118,112}

\DeclareMathOperator{\argmin}{arg\,min}

\newcommand{\R}{\mathbb{R}}

\newcommand{\norm}[1]{\left\lVert #1 \right\rVert}

\newcommand{\ip}[2]{\left\langle #1, #2 \right\rangle}

\newcommand{\E}{\mathbb{E}}

\newcommand{\defeq}{:=}

\usepackage{amsmath,amssymb,amsthm,mathtools}

\theoremstyle{definition}
\newtheorem{definition}{Definition}

\newcommand{\Pp}{\mathbb{P}}
\newcommand{\1}{\mathbf{1}}

\newcommand{\cX}{\mathcal{X}}
\newcommand{\cP}{\mathcal{P}}
\newcommand{\cQ}{\mathcal{Q}}
\newcommand{\cC}{\mathcal{C}}

\newcommand{\Pgen}{\mathcal{P}_{\mathrm{gen}}}
\newcommand{\Pdata}{P_{\mathrm{data}}}

\newcommand{\Hz}{H_0}
\newcommand{\He}{H_\varepsilon}
\newcommand{\Dref}{\mathcal{D}}
\newcommand{\KL}{\mathrm{KL}}

\newcommand{\TV}{\mathrm{TV}}

\DeclareMathOperator{\Tr}{Tr}
\DeclareMathOperator{\supp}{supp}

\newcommand{\opnorm}[1]{\left\lVert #1\right\rVert_{\mathrm{op}}}

\providecommand{\cX}{\mathcal{X}}
\providecommand{\R}{\mathbb{R}}
\providecommand{\E}{\mathbb{E}}
\providecommand{\Tr}{\operatorname{Tr}}
\providecommand{\defeq}{:=}

\providecommand{\KL}{\mathrm{KL}}

\providecommand{\TV}{\mathrm{TV}}
\providecommand{\argmin}{\operatorname*{arg\,min}}

\providecommand{\Pgen}{\mathcal{P}_{\mathcal G}}
\providecommand{\Pdata}{P_{\mathrm{data}}}
\providecommand{\Pref}{P_{\mathrm{ref}}}
\providecommand{\Hz}{H_0}
\providecommand{\He}{H_\varepsilon}
\providecommand{\Dref}{\mathcal{D}}

\providecommand{\cQ}{\mathcal{Q}}
\providecommand{\cC}{\mathcal{C}}
\providecommand{\cP}{\mathcal{P}}
\providecommand{\cD}{\mathcal{D}_{\mathrm{c}}}
\providecommand{\Pp}{\mathbb{P}}
\providecommand{\1}{\mathbf{1}}
\providecommand{\supp}{\operatorname{supp}}
\providecommand{\mHe}{\mathsf{H}}          
\providecommand{\Tset}{\mathbb{T}_\varepsilon} 
\providecommand{\cX}{\mathcal{X}}
\providecommand{\R}{\mathbb{R}}
\providecommand{\E}{\mathbb{E}}
\providecommand{\Tr}{\operatorname{Tr}}
\providecommand{\defeq}{:=}
\providecommand{\norm}[1]{\left\lVert #1 \right\rVert}
\providecommand{\ip}[2]{\left\langle #1,\, #2 \right\rangle}
\providecommand{\KL}{\mathrm{KL}}

\providecommand{\TV}{\mathrm{TV}}
\providecommand{\argmin}{\operatorname*{arg\,min}}
\providecommand{\opnorm}[1]{\left\lVert #1 \right\rVert_{\mathrm{op}}}
\providecommand{\Pgen}{\mathcal{P}_{\mathcal G}}
\providecommand{\Pdata}{P_{\mathrm{data}}}
\providecommand{\Pref}{P_{\mathrm{ref}}}
\providecommand{\Hz}{H_0}
\providecommand{\He}{H_\varepsilon}
\providecommand{\Dref}{\mathcal{D}}

\providecommand{\cQ}{\mathcal{Q}}
\providecommand{\cC}{\mathcal{C}}
\providecommand{\cP}{\mathcal{P}}
\providecommand{\cD}{\mathcal{D}_{\mathrm{c}}}
\providecommand{\Pp}{\mathbb{P}}
\providecommand{\1}{\mathbf{1}}
\providecommand{\supp}{\operatorname{supp}}
\providecommand{\mHe}{\mathsf{H}}            
\providecommand{\Gen}[1]{G^{\varepsilon}_{#1}} 
\providecommand{\Tset}{\mathbb{T}_\varepsilon} 

\definecolor{population}{HTML}{111827}
\definecolor{fitted}{HTML}{2563EB}
\definecolor{entropywall}{HTML}{059669}
\definecolor{beyondwall}{HTML}{D97706}
\definecolor{rareregion}{HTML}{C2413B}
\definecolor{igashade}{RGB}{245,247,250}

\newcommand{\toyfiglegenddot}[1]{
  \tikz[baseline=-0.55ex]\fill[#1] (0,0) circle[radius=2pt];}
\newcommand{\igacell}[1]{\cellcolor{igashade}#1}

\hypersetup{hidelinks}

\title{\textbf{Imaginative Generative AI\hspace{0.7mm}: Crossing the Entropy Wall\\ into Worlds Beyond Imitation}}

\author{
Farzan Farnia\thanks{Contributed Equally and Listed in Alphabetical Order. Department of Computer Science and Engineering, The Chinese University of Hong Kong, \{farnia,hosseingoli\}@cse.cuhk.edu.hk.} ,
Hossein Goli\footnotemark[1] ,
Amin Gohari\thanks{Department of Information Engineering, The Chinese University of Hong Kong, agohari@ie.cuhk.edu.hk},
}  

\date{}

\begin{document}

\maketitle

\begin{abstract}
Generative AI models are primarily designed to imitate the data distribution, an objective that neither corrects diversity lost by a learned generator nor defines how generation should extend beyond the diversity of the data itself. We introduce \textbf{\underline{I}maginative \underline{G}enerative \underline{A}I (\textbf{IGA})}, a framework that makes diversity part of the target-distribution design problem: among distributions close to a reference, IGA selects one whose spectral diversity reaches a prescribed level. Diversity is measured by the \textbf{von Neumann entropy} of the generated distribution's kernel covariance operator in a fixed representation space, providing a reference-free representation-guided measure of how broadly probability mass occupies embedding directions. The spectral entropy of the population data distribution defines an \textbf{Entropy Wall}. Below the wall, IGA performs \textbf{diversity repair}, recovering variation that a learned generator has lost while remaining within the diversity level of the data. Beyond the wall, the data distribution itself becomes infeasible, and IGA deliberately departs from it to produce distributions with greater representation-relative spectral diversity, an operational notion of \textbf{imaginative generation}. These regimes form a single regularization path from imitation to imagination and define an i.i.d.\ target distribution at each prescribed diversity level. We develop the theory of this entropy-constrained projection and show that, under a KL anchor to a pretrained generator, the optimum satisfies a self-consistent exponential-tilt relation. This characterization leads to \emph{IGA Guidance}, a retraining-free inference-time method for score-based and diffusion models, including DDPM and DDIM samplers. Experiments on synthetic and vision benchmarks demonstrate diversity repair below the Entropy Wall and controlled spectral extrapolation beyond it.
\end{abstract}

\section{Introduction}
\begin{center}  
  \begin{minipage}{0.99\textwidth}    \centering    \textit{``Imagination is more important than knowledge. Knowledge is limited. Imagination encircles the world.''} \\[0.5em]     --- \textsc{Albert Einstein}  \end{minipage}
\end{center}

The typical goal of a generative model is to reproduce the underlying distribution of its training data. For example, a successful image generator is expected to produce realistic images with approximately the same content and variation as the images on which it was trained. This principle underlies the standard paradigms of generative modeling in the literature, including generative adversarial networks (GANs)~\citep{goodfellow2014generative}, variational autoencoders (VAEs)~\citep{kingma2014auto}, and score-based and diffusion models~\citep{song2019generative,ho2020denoising,song2021score}. Although these frameworks differ significantly in architecture and training, their population-level goal can be summarized in the following distributional discrepancy minimization:
\begin{equation}
\label{eq:intro-imitation}
    \min_{Q\in\mathcal P_{\mathcal{G}}}\;
    \mathcal D\bigl(Q ; P_{\mathrm{data}}\bigr),
\end{equation}
where \(P_{\mathrm{data}}\) is the underlying data distribution, \(Q\) is the distribution produced by the generator over the set of feasible models $\mathcal P_{\mathcal{G}}$, and \(\mathcal D(\cdot ; \cdot)\) measures the discrepancy (or divergence) between the two distributions. We refer to this prevailing view of generative modeling as \textbf{distributional imitation}.

Imitation is a natural statistical objective, but it also places a ceiling on what the generator is asked to do. Even an ideal solution of~\eqref{eq:intro-imitation} is asked to match \(P_{\mathrm{data}}\), not to produce a distribution that is systematically more diverse or generate novel and creative content. Moreover, practical generators may not reach even the diversity of their training distribution. The recent study \citep{farnia2026exposing} by Farnia, et al. has found that generated samples can exhibit lower spectral diversity than real data when diversity is measured using reference-free diversity measures of the Vendi score \cite{friedman2023vendi} and Rényi kernel entropy \cite{jalali2023information}. This raises a fundamental question: 
\begin{center}
   \emph{How should the target of generative modeling be regularized when diversity and novelty, in addition to fidelity, are something we want to control?} 
\end{center}

To address this question, we propose \textbf{\underline{I}maginative \underline{G}enerative \underline{A}I} (\textbf{IGA}), a framework that makes diversity part of the target-distribution design problem. Instead of asking only for the distribution closest to a reference distribution, IGA asks for the closest distribution whose diversity score is at least above a given prescribed level. Let $P_{\mathrm{ref}}$ denote the reference distribution, which could be the empirical distribution $\widehat P_n$ of training data or the distribution $P_\theta$ of a pretrained generator. Then, IGA solves the following regularized discrepancy minimization problem:
\begin{equation}
\label{eq:intro-ega-constrained}
\begin{aligned}
    \min_{Q\in\mathcal{P_G}}\quad
        &\mathcal D\!\left(Q;P_{\mathrm{ref}}\right)
        \\
    \text{subject to}\quad
        &H(Q)\geq \rho,
\end{aligned}
\end{equation}
where \(H(Q)\) measures the spectral entropy (interpreted as diversity) of distribution \(Q\) and \(\rho\) is the desired diversity level. Under the duality conditions developed in our theoretical analysis, the constrained problem at level \(\rho\) can equivalently be stated using a Lagrangian penalty at a corresponding multiplier $\lambda\geq 0$:\vspace{-0.5mm}
\begin{equation}
\label{eq:intro-ega-penalized}
    \min_{Q\in\mathcal{P_G}}\;
    \mathcal D\!\left(Q;P_{\mathrm{ref}}\right)
    -\lambda\, H(Q)
\end{equation}
Note that the two terms have complementary roles: The discrepancy term keeps generated samples close to the reference distribution, while the entropy term rewards the spectral diversity in the Vendi score. Setting \(\lambda=0\) recovers standard reference matching; increasing \(\lambda\) gives more weight to the spectral diversity term.

We measure diversity using the \textbf{von~Neumann entropy (VNE)} of the normalized kernel covariance operator induced by \(Q\) in a fixed embedding space. Intuitively, VNE is low when generated samples concentrate along a few embedding directions and high when they spread across many directions; in the empirical setting, it is the logarithm of the Vendi score~\citep{friedman2023vendi,jalali2023information,ospanov2024scalable}. This measure is reference-free but representation-dependent: it evaluates the diversity of \(Q\) without requiring a comparison distribution, while the chosen embedding specifies which variations are meaningful. IGA thus controls spectral diversity relative to a given embedding.

The data distribution itself provides a natural reference level for this diversity. To characterize this wall, we define the underlying distribution's entropy as 
\begin{equation}
\label{eq:intro-entropy-wall}
    \rho_\star
    :=
    H\bigl(P_{\mathrm{data}}\bigr).
\end{equation}
We call \(\rho_\star\) the \textbf{Entropy Wall}. It is the spectral diversity of the data distribution in the chosen representation. The notion of entropy wall separates two different regimes in the IGA generative modeling approach:\vspace{2mm}

\noindent \textbf{(Regime I) Below the Entropy Wall: Diversity Repair.}
The below-the-wall regime concerns IGA when we choose entropy lower-bound $\rho$ to satisfy $\rho\leq\rho_\star$.

In this regime, the required diversity level is no greater than the diversity already present in the underlying data distribution. Therefore, in this regime, IGA can then be viewed as \textbf{repairing a diversity deficit in a learned generator}: it encourages the generator to recover variation that was present in the data but weakened or lost during training the generative model. Note that, as empirically demonstrated by Farnia et al. in recent work \cite{farnia2026exposing}, the standard generative models commonly suffer from a diversity bias, and the spectral entropy of their generated data cannot match that of the underlying distribution generating their training samples. In brief, the goal in this regime remains faithful modeling of the data, with an explicit mechanism for counteracting spectral diversity shortfall as shown in \cite{farnia2026exposing}.

\noindent \textbf{(Regime II) Beyond the Entropy Wall: Imaginative Generation.}
This regime of applying IGA is when we select the projection lower-bound to satisfy the strict inequality: $\rho>\rho_\star$.

Especially, we highlight that in this regime, the data distribution itself no longer satisfies the diversity constraint, and thus \textbf{the IGA solution in~\eqref{eq:intro-ega-constrained} must intentionally differ from \(P_{\mathrm{data}}\)}, regardless of whether it is anchored directly to the data or to a pretrained model. The discrepancy term prevents this solution from moving arbitrarily far from the chosen reference \(P_{\mathrm{ref}}\), while the entropy constraint pushes it to occupy a broader set of embedding directions. We call this regime \emph{imaginative} because the target has greater spectral diversity than the data distribution that defines the wall.

\begin{figure}[t]
    \centering
    \includegraphics[width=0.8\textwidth]{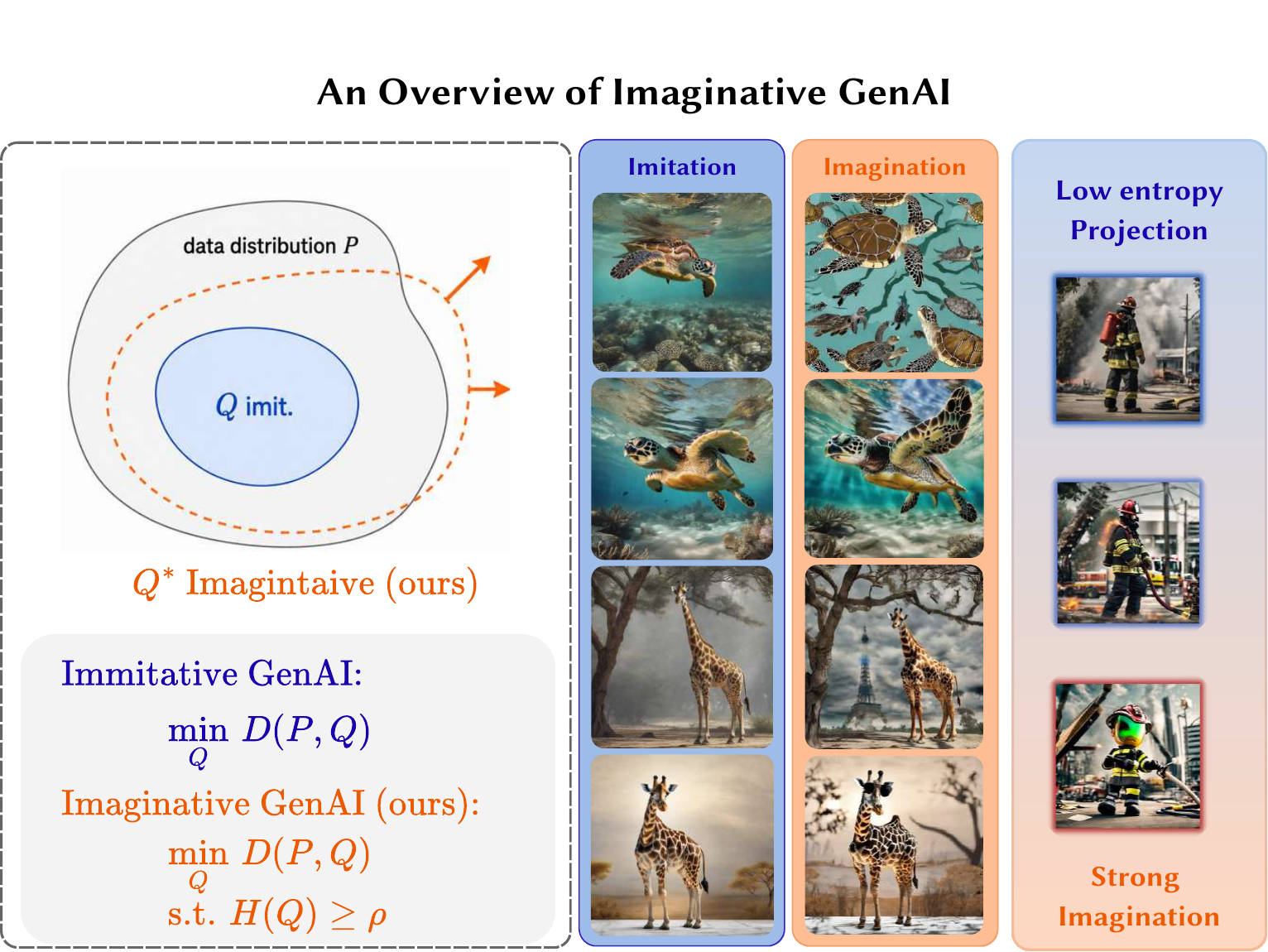} 
    \caption{
        \textbf{From Imitation to Imagination.} Imitation aims to return a distribution $Q$ close to
        $P_{\mathrm{data}}$; IGA returns the closest
        $Q$ with $H(Q)\ge\rho$, which meets the data distribution below and at the entropy wall
        $\rho_{\star}=H(P_{\mathrm{data}})$ and leaves it beyond. Images are Stable
        Diffusion XL at matched prompts and seeds; the right column sweeps one
        multiplier from the low-entropy projection to strong extrapolation.
    }
    \label{fig:overview}\vspace{2mm}
\end{figure}

\begin{figure}
    \centering
    \includegraphics[width=0.8\linewidth]{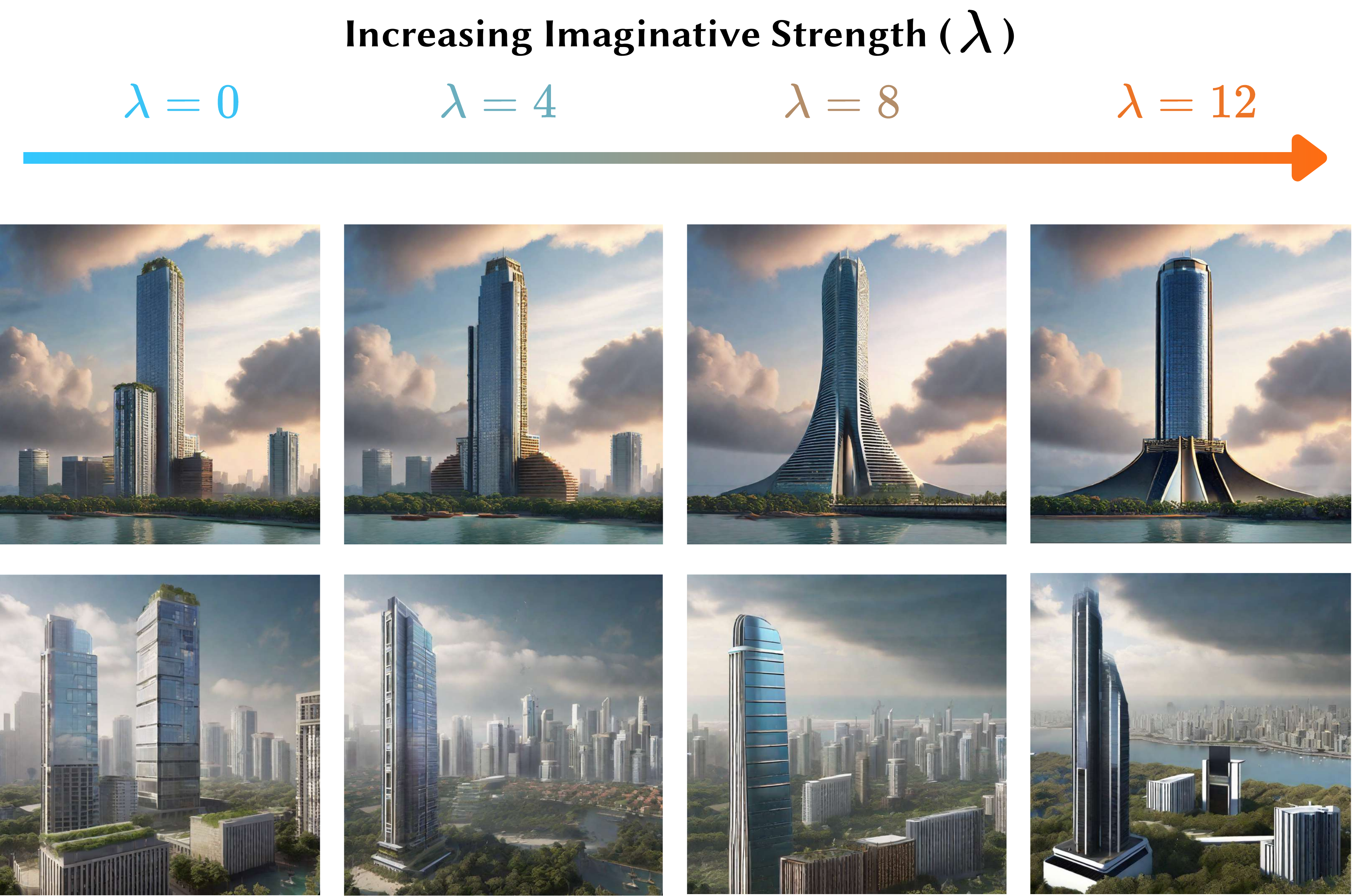}
    \caption{\textbf{From imitation to imagination with SDXL.}
    As the IGA multiplier $\lambda$ increases, SDXL produces progressively stronger structural and compositional variations while preserving the underlying concept.}
    \label{fig:iga_sdxl}
\end{figure}

Here, we use the term \textbf{imaginative} to describe generation whose spectral diversity exceeds that of the data in a specified representation space, e.g. CLIP or DINO embedding spaces for images. This representation-relative definition makes imagination operational while allowing the embedding to encode domain-relevant semantics. We evaluate the resulting variation using both the guiding representation and independent measures of sample quality and diversity.

The two regimes form a single regularization path. As the target 
$\rho$ increases, IGA moves from ordinary imitation to diversity repair and then, after crossing the entropy wall $\rho_\star = H(P_{\rm data})$ (corresponding to the real data distribution), to deliberate imagination. Therefore, our IGA framework defines diversity enhancement as a property of a single target distribution, enabling i.i.d. generation at a prescribed spectral-diversity level, including levels beyond the entropy of the data. We emphasize that this differs from methods that induce diversity through repulsive or sample-dependent interactions over the course of generating multiple samples, whose outputs form a coupled and generally non-i.i.d. batch~\citep{sadat2024cads,corso2024particle,jalali2025sparke}. IGA therefore provides an i.i.d. distributional alternative to interaction-based diversity promotion, while complementing work that evaluates diversity after generation~\citep{friedman2023vendi,jalali2023information}.

Next, we demonstrate that this distribution-level formulation further leads to a practical method for diversity-improved sampling from pretrained score-based and diffusion models. When the reference is a pretrained model \(P_\theta\) and the discrepancy measure is the KL-divergence $\mathrm{KL}(Q\| P_\theta)$, the analysis in the main body shows that the optimal target takes the form
\begin{equation}
\label{eq:intro-exponential-tilt}
    Q^\star(dx)
    \,\propto\,
    P_\theta(dx)\cdot
    \exp\bigl(
        \lambda\, G_{Q^\star}(x)
    \bigr),
\end{equation}
where $G_{Q^\star}(x)$ measures how placing probability mass near $x$ changes the VNE spectral entropy term. Because this energy depends on $Q^\star$, the relation is self-consistent. It reweights the base model to increase the spectral diversity of the generated population while remaining close to the original distribution as much as possible.

We specifically show that the application of this formulation to score-based and generative models can be performed by our proposed \textbf{IGA~Guidance}. IGA~Guidance is an inference-time approximation for score-based, DDPM, and DDIM samplers that requires no retraining. Our numerical experiments show promising results of the IGA guidance for large-scale diffusion models. For example, Figure~\ref{fig:iga_sdxl} shows the application of IGA-Guidance to the large-scale SD-XL model and how increasing the parameter $\lambda$ leads to visually more diverse and imaginative image outputs for the input prompt "A Skyscraper". The summary of our contributions are as follows:
\begin{itemize}[leftmargin=1.5em]
    \item We formulate \emph{IGA}, an entropy-constrained projection framework enabling i.i.d.\ generation at prescribed VNE diversity levels.

    \item We introduce the \emph{entropy wall}, separating diversity repair from controlled extrapolation beyond the data's spectral diversity.

    \item We characterize the IGA regularization path and derive a self-consistent exponential tilt for the KL-anchored optimum.

    \item We develop \emph{IGA guidance} for inference-time steering of pretrained score-based, DDPM, and DDIM samplers without retraining.

\end{itemize}

\newcommand{\HhatZero}{2.16}     
\newcommand{\HhatFour}{2.64}     
\newcommand{\HhatEight}{3.36}    
\newcommand{\HhatTwelve}{3.65}   
\definecolor{titleBlue}{RGB}{38,126,194}
\definecolor{titlePurple}{RGB}{112,92,200}
\definecolor{titleOrange}{RGB}{224,103,38}

\begin{figure}[H]
\centering
\resizebox{\linewidth}{!}{
\begin{tikzpicture}[x=1cm, y=1cm]

\def\SW{2.82}                  
\def\xL{1.7}
\def\xR{15.4}
\def\BandH{3.48}

\def\ybandA{0}                 
\def\ybandB{3.83}              
\def\ywall{7.46}               
\def\ybandC{8.81}              
\def\ybandD{12.64}             

\def\xmid{8.55}

\newcommand{\Slot}[3]{
  \IfFileExists{#3}{
    \node[
      anchor=south west,
      inner sep=0pt
    ] at (#1,{#2}) {
      \includegraphics[
        width=\SW cm,
        height=\SW cm
      ]{#3}
    };
  }{
    \draw[
      fill=gray!12,
      draw=gray!35,
      thin
    ]
      (#1,{#2})
      rectangle
      ++(\SW,\SW);

    \node[
      gray,
      font=\scriptsize,
      align=center
    ]
      at ({#1+0.5*\SW},{#2+0.5*\SW})
      {\detokenize{#3}};
  }
}

\newcommand{\SlotRow}[5]{
  \Slot{2.15}{#1+0.13}{#2}
  \Slot{5.48}{#1+0.13}{#3}
  \Slot{8.81}{#1+0.13}{#4}
  \Slot{12.14}{#1+0.13}{#5}
}

\newcommand{\BandBox}[3]{
  \draw[
    #3,
    draw=#2!80!black,
    fill=#2!7,
    rounded corners=6pt
  ]
    (\xL,#1)
    rectangle
    (\xR,{#1+\BandH});
}

\newcommand{\BandLabels}[4]{

  \node[
    anchor=west,
    inner sep=0pt,
    font=\normalsize\bfseries\boldmath,
    text=#2!55!black
  ]
    at (2.15,{#1+\BandH-0.24})
    {#3};

  \node[
    anchor=east,
    inner sep=0pt,
    font=\normalsize\bfseries\boldmath,
    text=#2!55!black
  ]
    at (14.96,{#1+\BandH-0.24})
    {#4};
}

\newcommand{\Frag}[5]{
  \draw[
    fill=wallGray!15,
    draw=wallGray!65,
    line width=0.4pt,
    rotate around={
      #5:({#1+#3/2},{\ywall+#2+#4/2})
    }
  ]
    (#1,{\ywall+#2})
    rectangle
    ++(#3,#4);
}

\fill[
  igaOrange!7,
  rounded corners=8pt
]
  (1.45,{\ywall+1.08})
  rectangle
  (15.65,16.8);

\fill[
  igaCyan!8,
  rounded corners=8pt
]
  (1.45,-0.15)
  rectangle
  (15.65,{\ywall-0.18});

\node[
  anchor=east,
  font=\large\bfseries\itshape\boldmath,
  text=igaOrange!60!black
]
  at (15.35,16.5)
  {imagined worlds ($\rho > \rho_\star$)};

\shade[
  bottom color=igaBarLo,
  middle color=igaBarMid,
  top color=igaBarHi
]
  (0.93,-0.20)
  rectangle
  (1.07,16.30);

\fill[igaBarHi]
  (0.78,16.26)
  --
  (1.22,16.26)
  --
  (1.00,16.72)
  --
  cycle;

\node[
  font=\large\bfseries\boldmath,
  anchor=south west,
  text=black!70
]
  at (0.20,16.82)
  {spectral diversity $H$};

\draw[
  dashed,
  black!55
]
  (1.10,{\ywall+0.50})
  --
  (\xL,{\ywall+0.50});

\node[
  anchor=east,
  font=\large\bfseries\boldmath,
  text=black!70
]
  at (0.80,{\ywall+0.50})
  {$\rho_\star$};

\BandBox
  {\ybandA}
  {igaCyan}
  {line width=0.6pt}

\SlotRow
  {\ybandA}
  {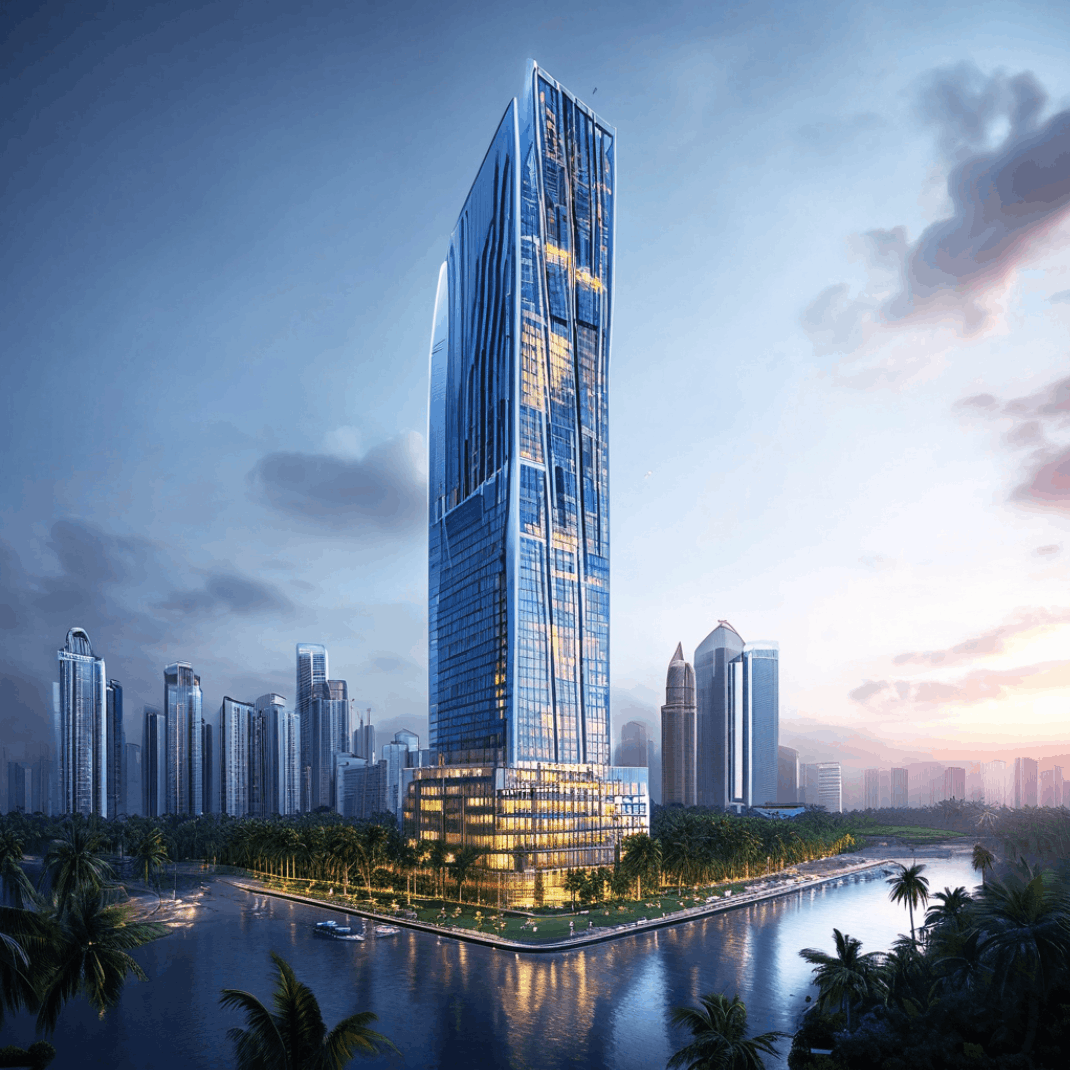}
  {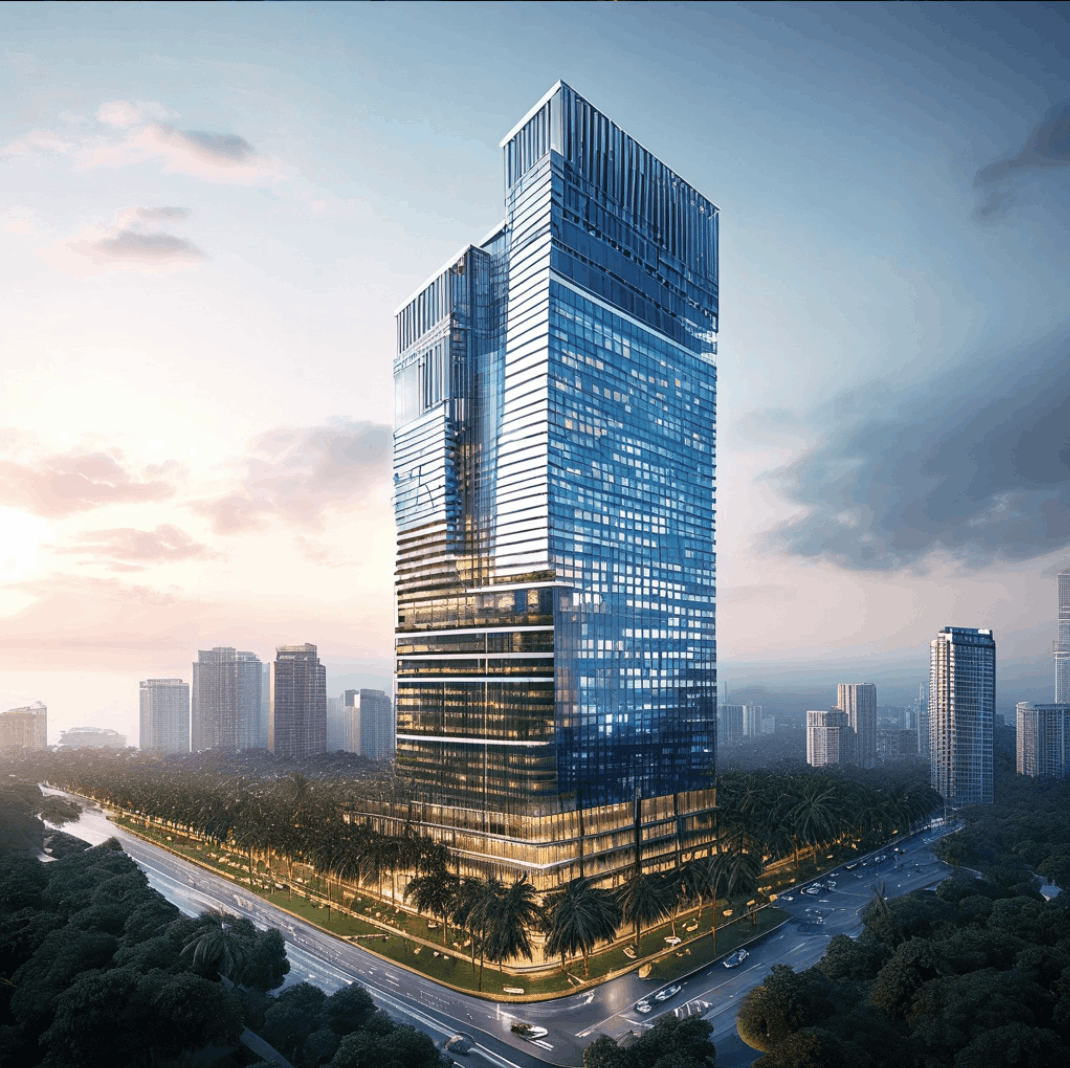}
  {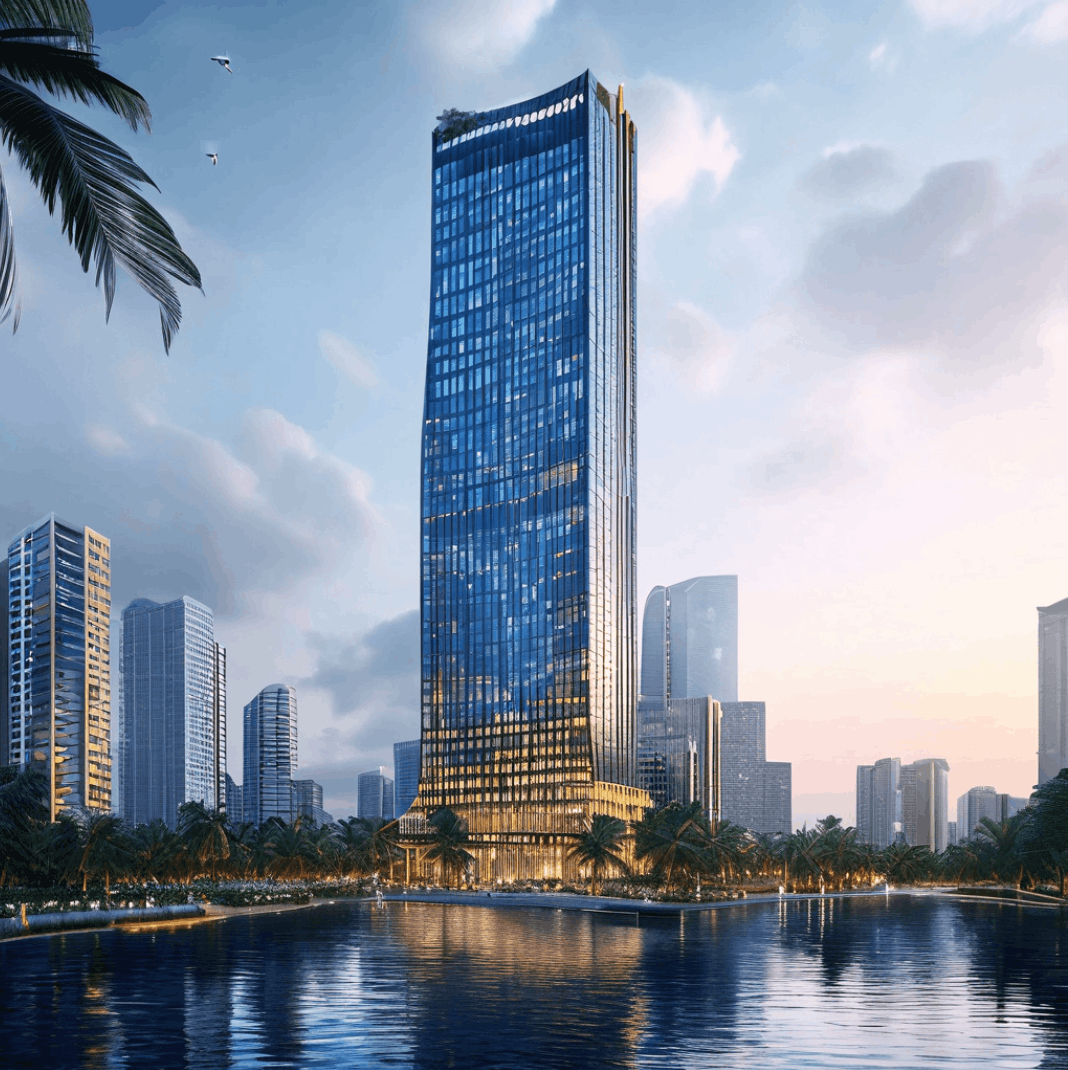}
  {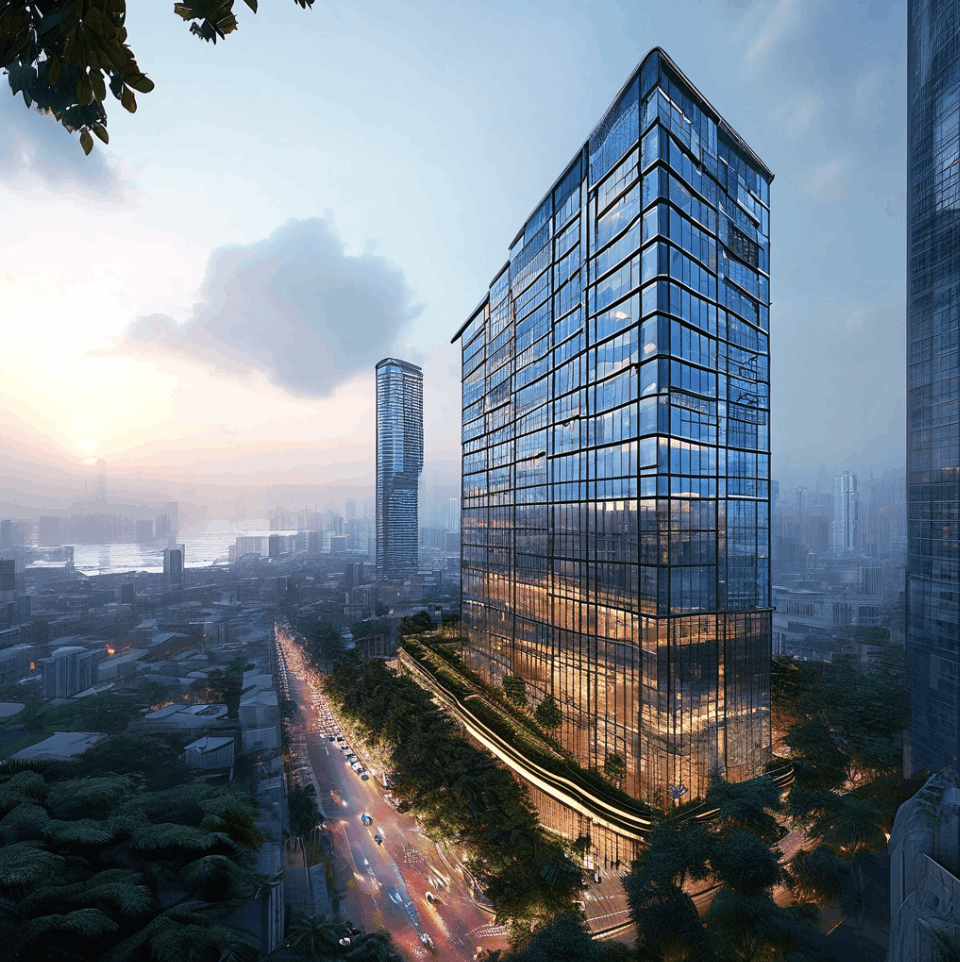}

\BandLabels
  {\ybandA}
  {igaCyan}
  {$\lambda = 0$ \, (anchor)}
  {$H = \HhatZero$}

\BandBox
  {\ybandB}
  {igaSteel}
  {line width=0.6pt}

\SlotRow
  {\ybandB}
  {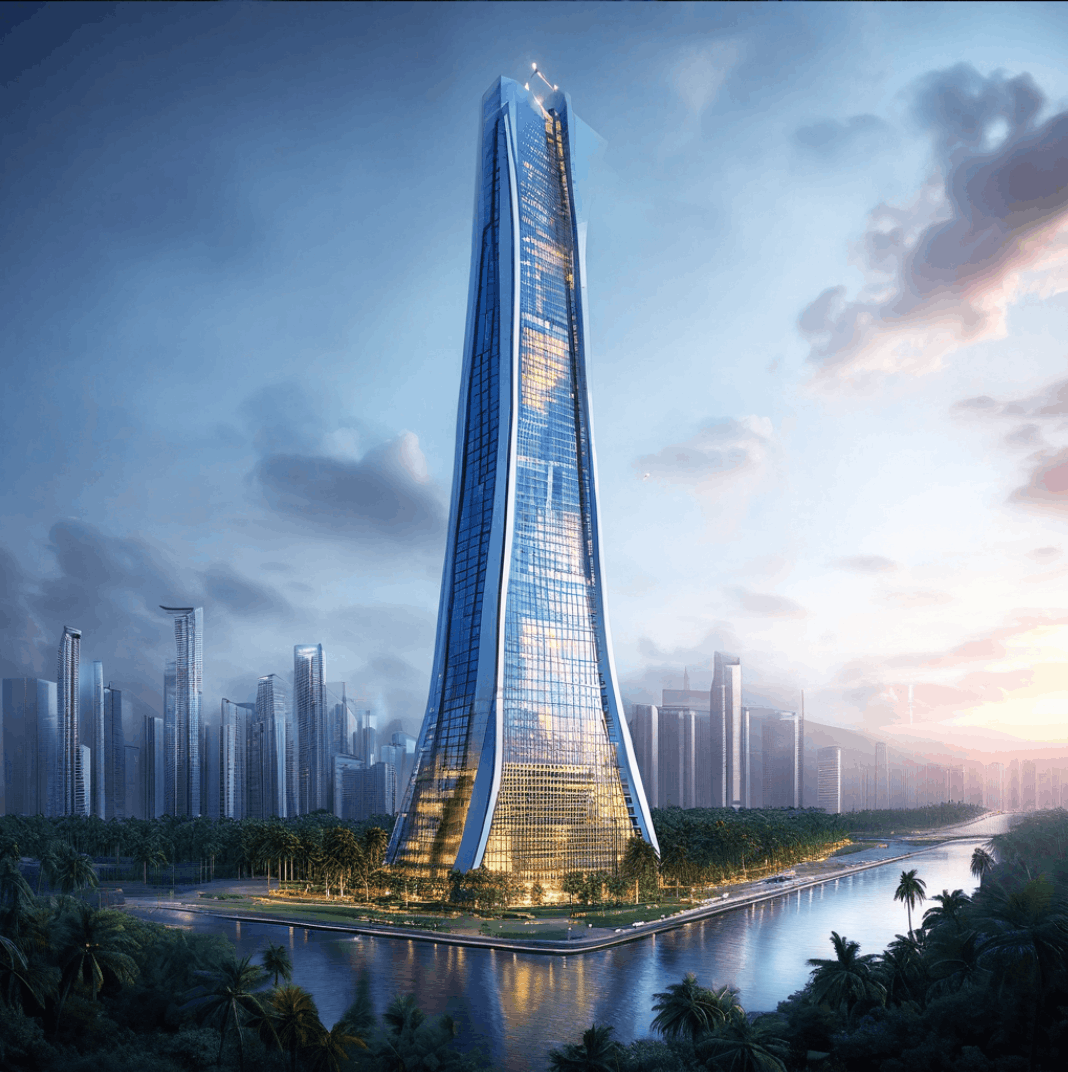}
  {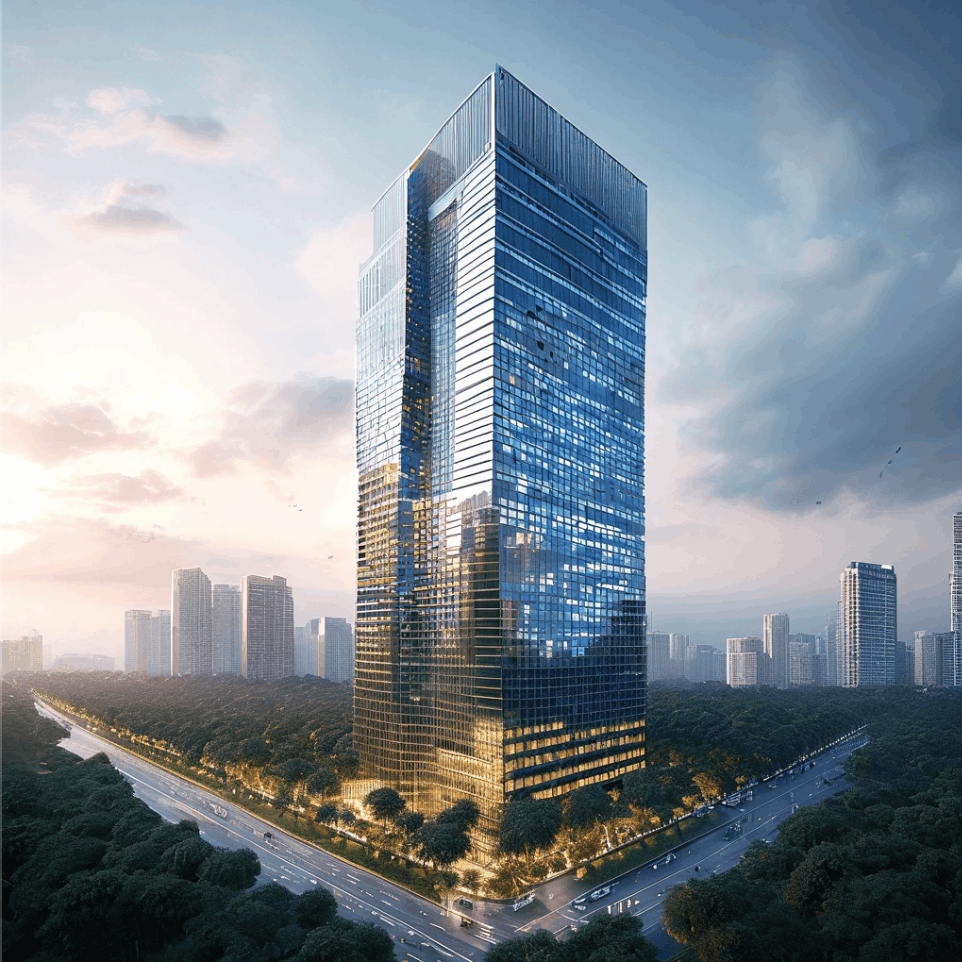}
  {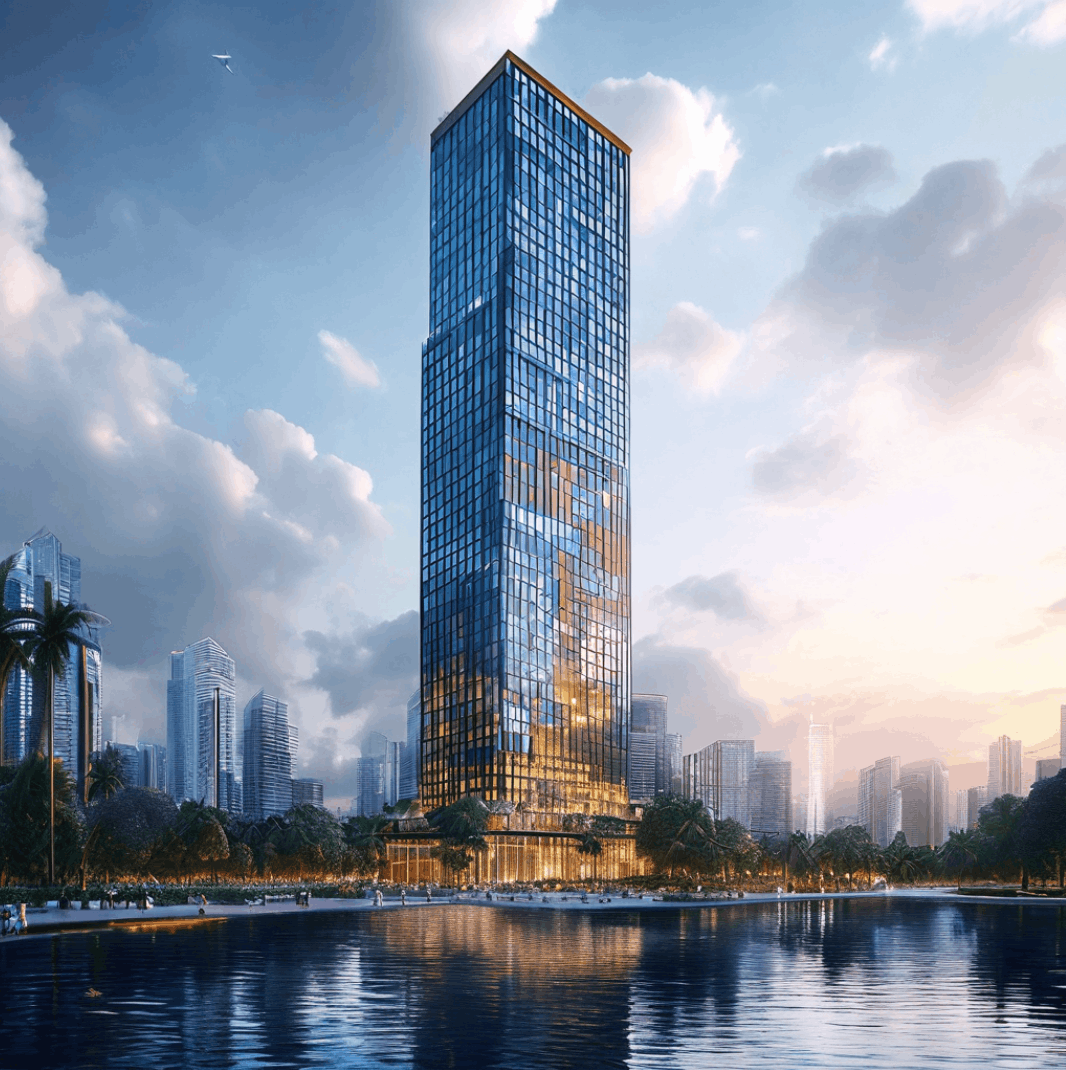}
  {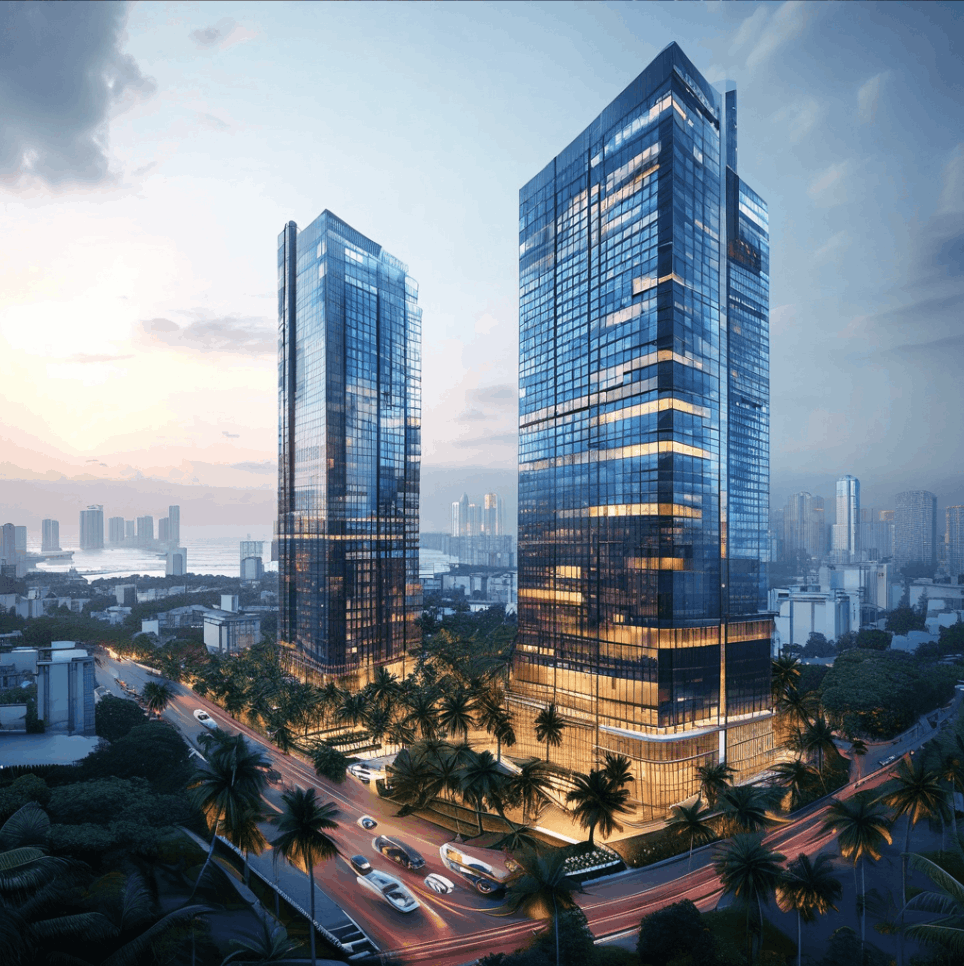}

\BandLabels
  {\ybandB}
  {igaSteel}
  {$\lambda = 5$}
  {$H = \HhatFour$}

\BandBox
  {\ybandC}
  {igaTan}
  {dash pattern=on 3.6pt off 2.4pt, line width=0.8pt}

\SlotRow
  {\ybandC}
  {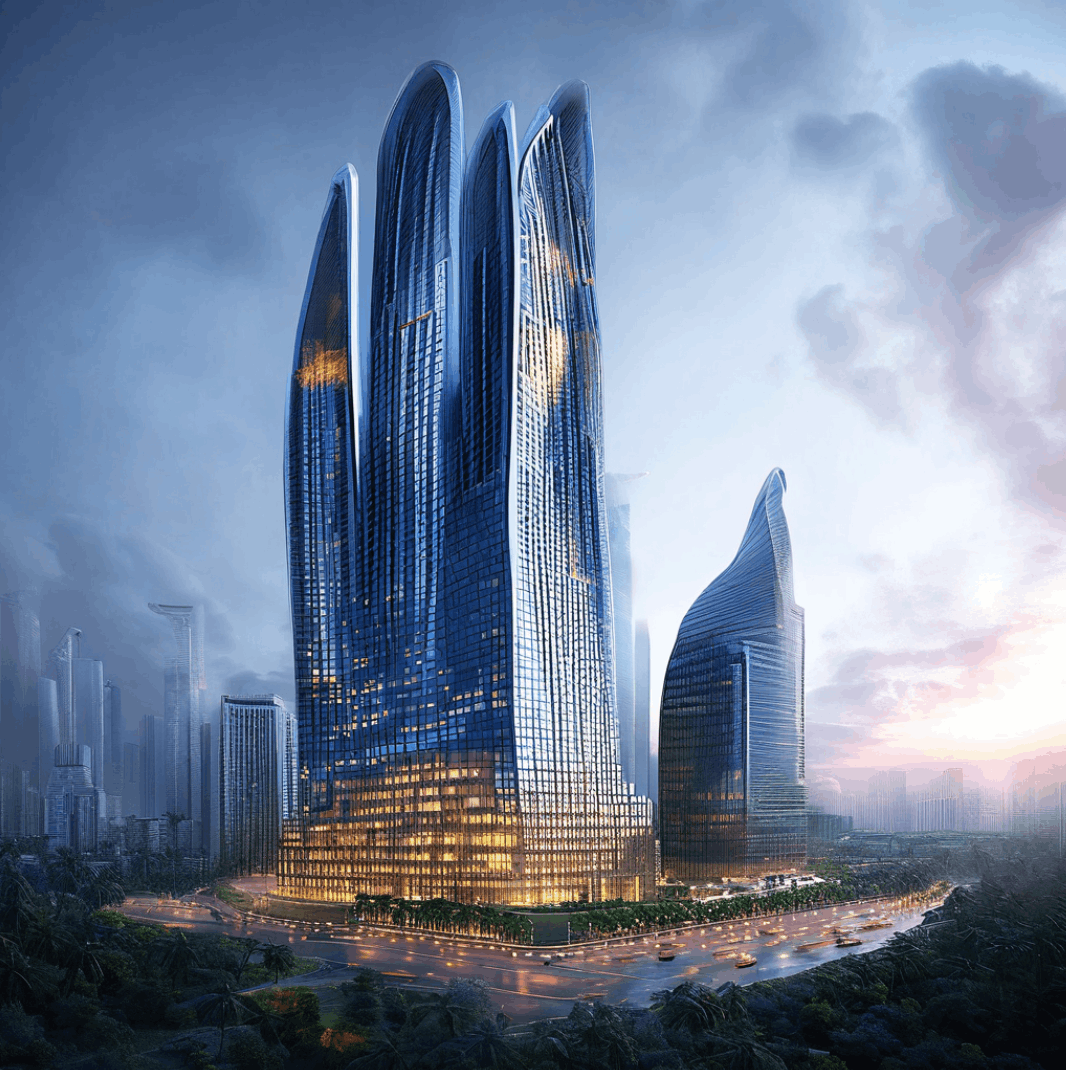}
  {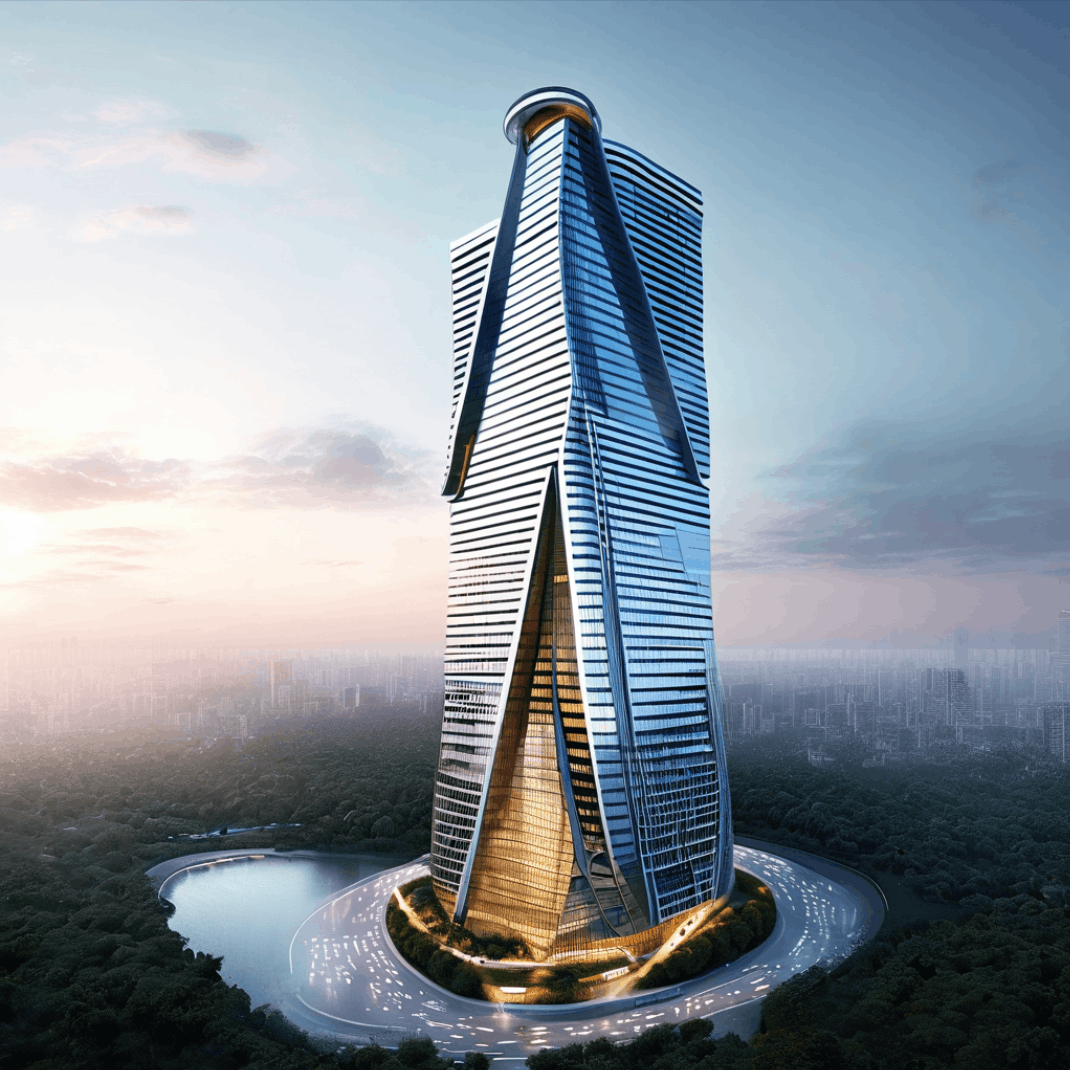}
  {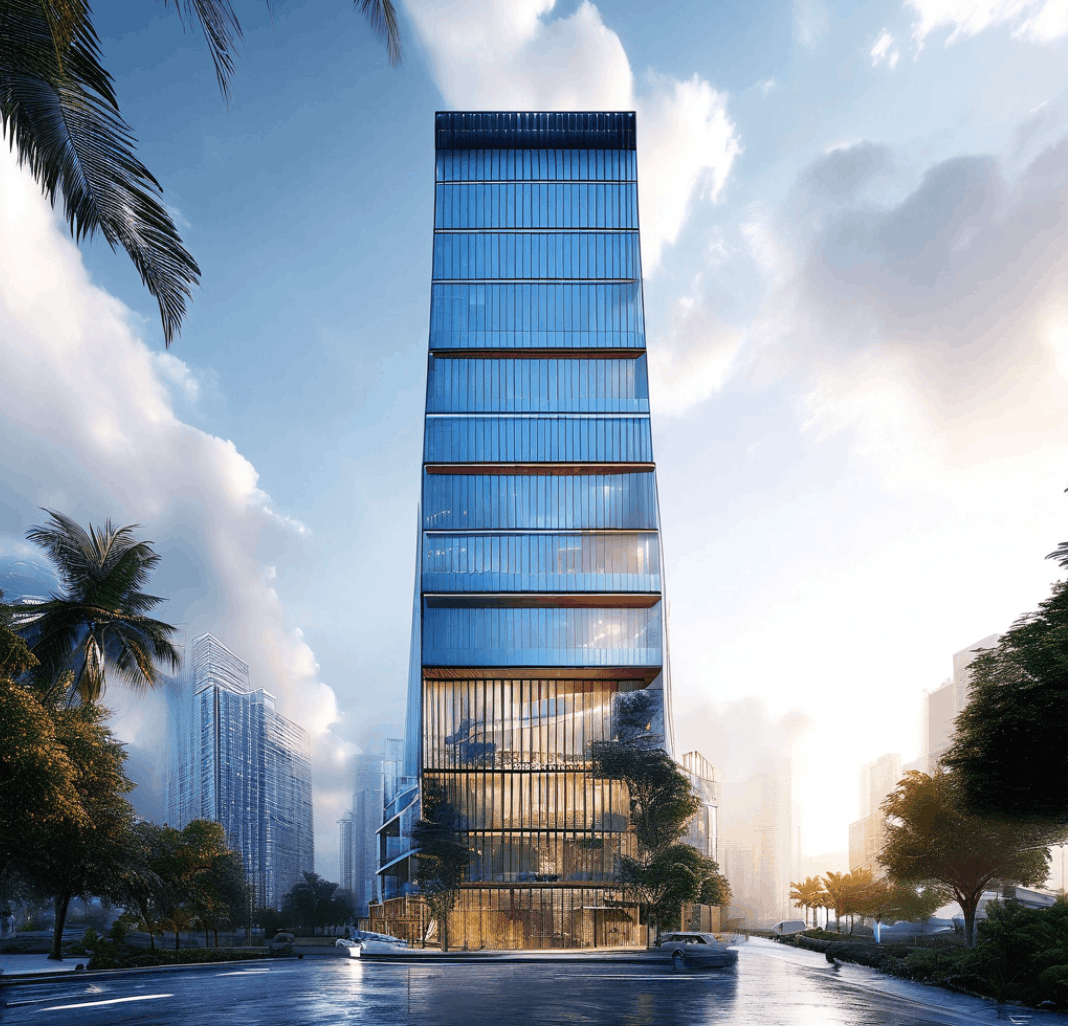}
  {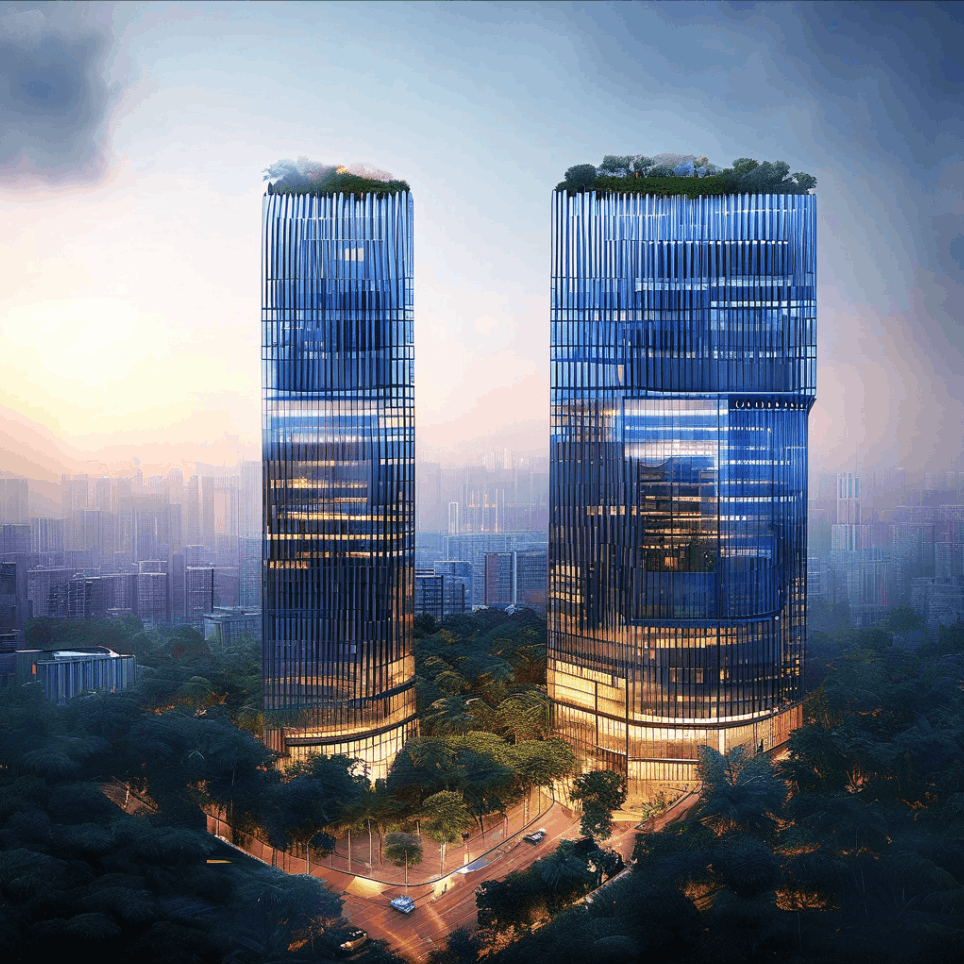}

\BandLabels
  {\ybandC}
  {igaTan}
  {$\lambda = 20$}
  {$H = \HhatEight$}

\BandBox
  {\ybandD}
  {igaOrange}
  {dash pattern=on 3.6pt off 2.4pt, line width=0.8pt}

\SlotRow
  {\ybandD}
  {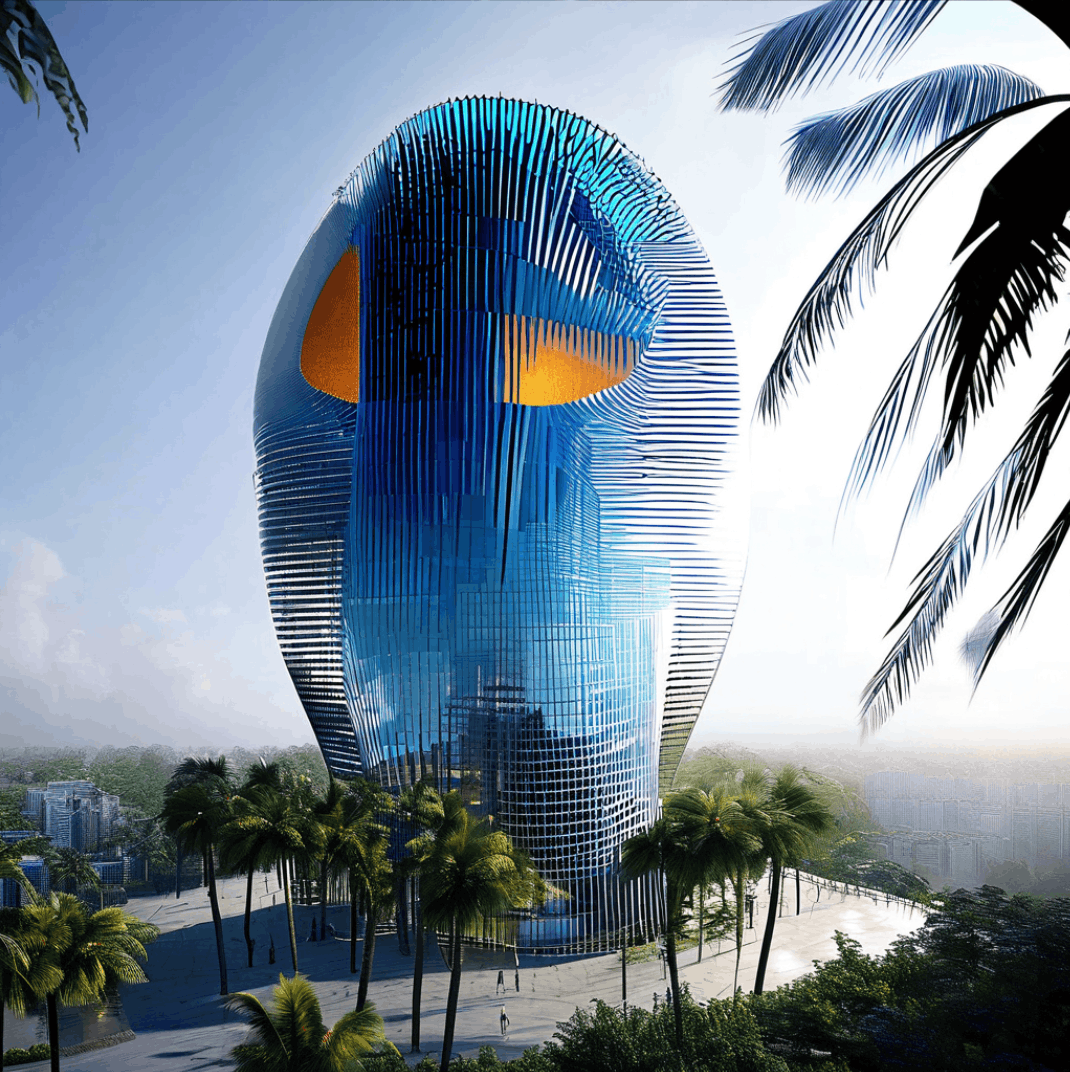}
  {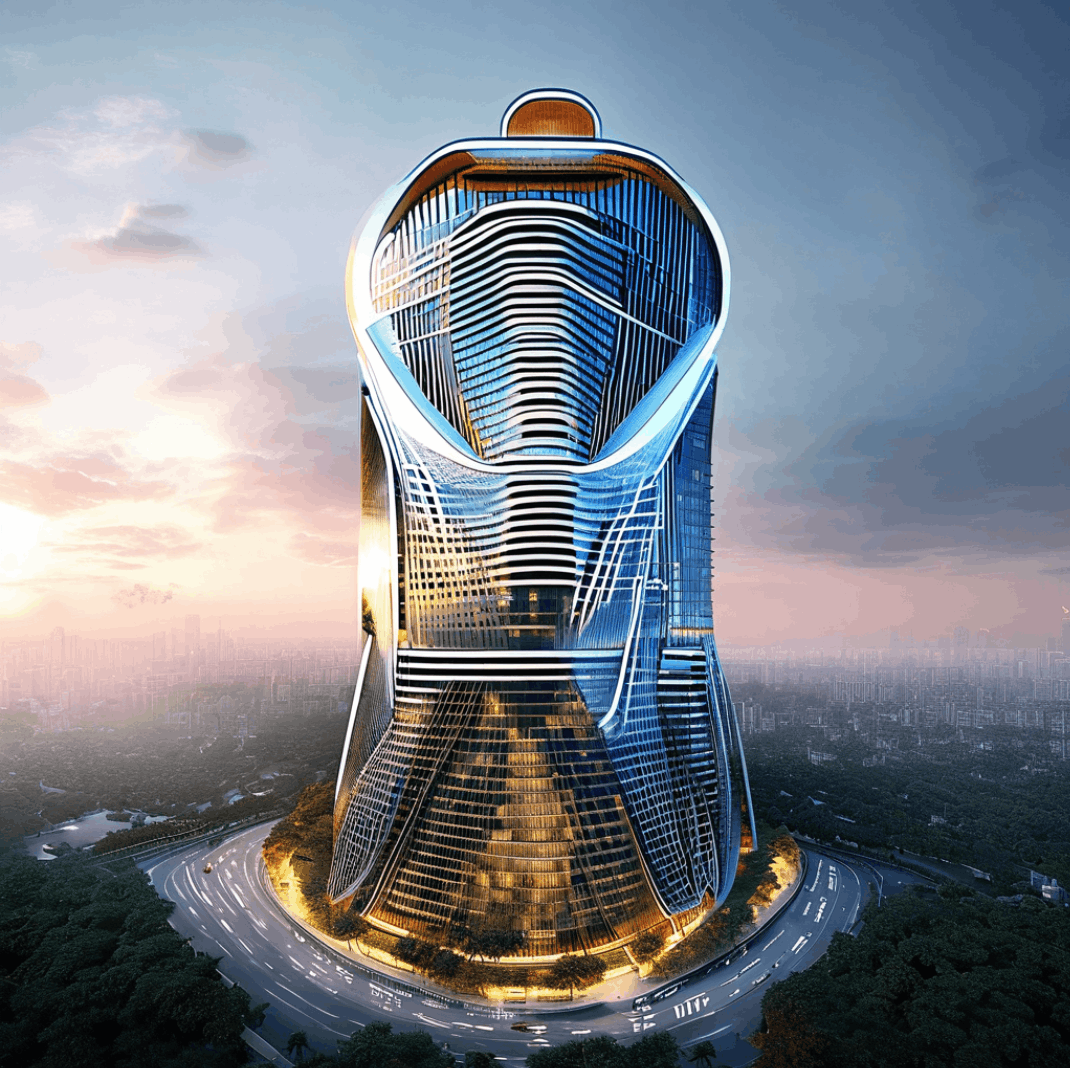}
  {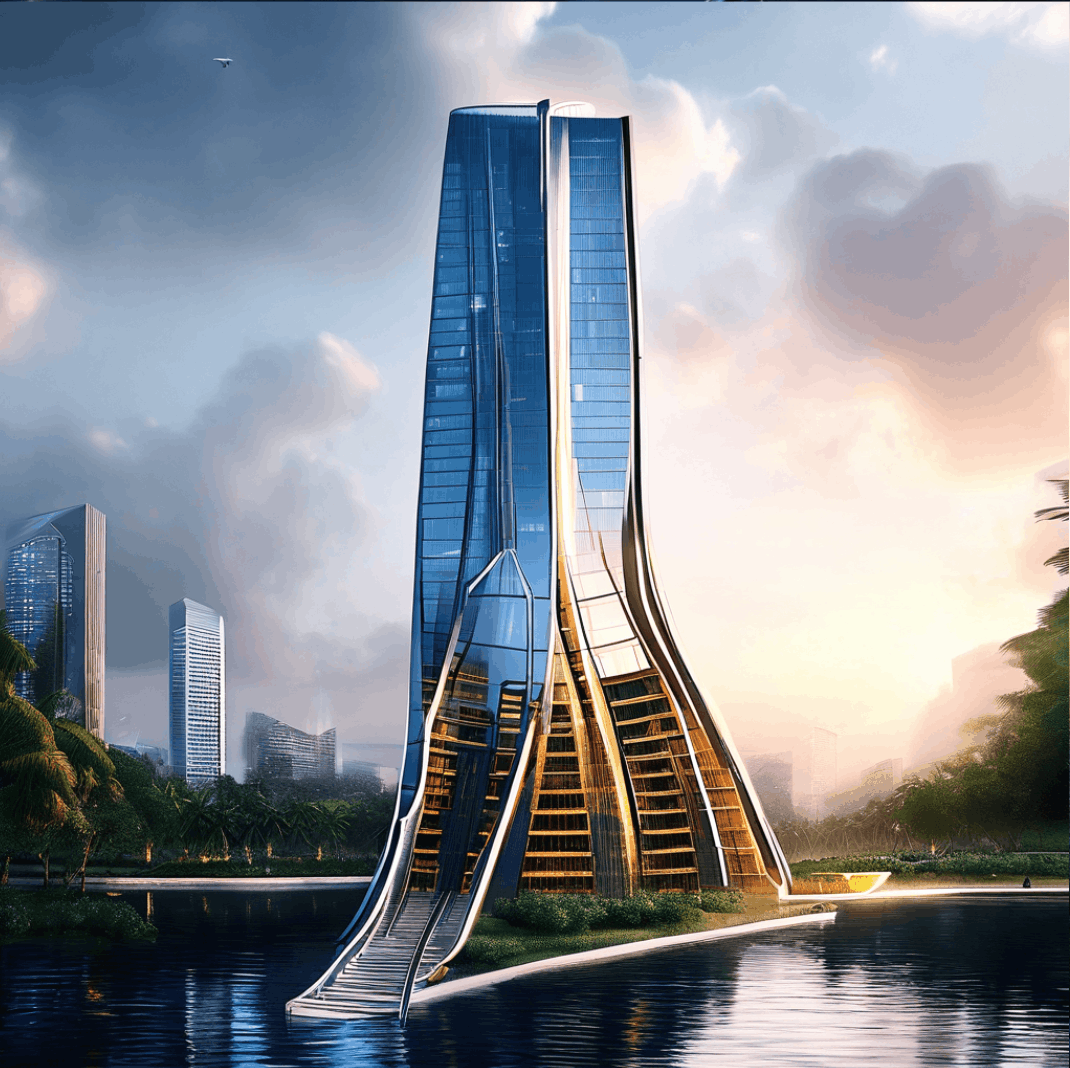}
  {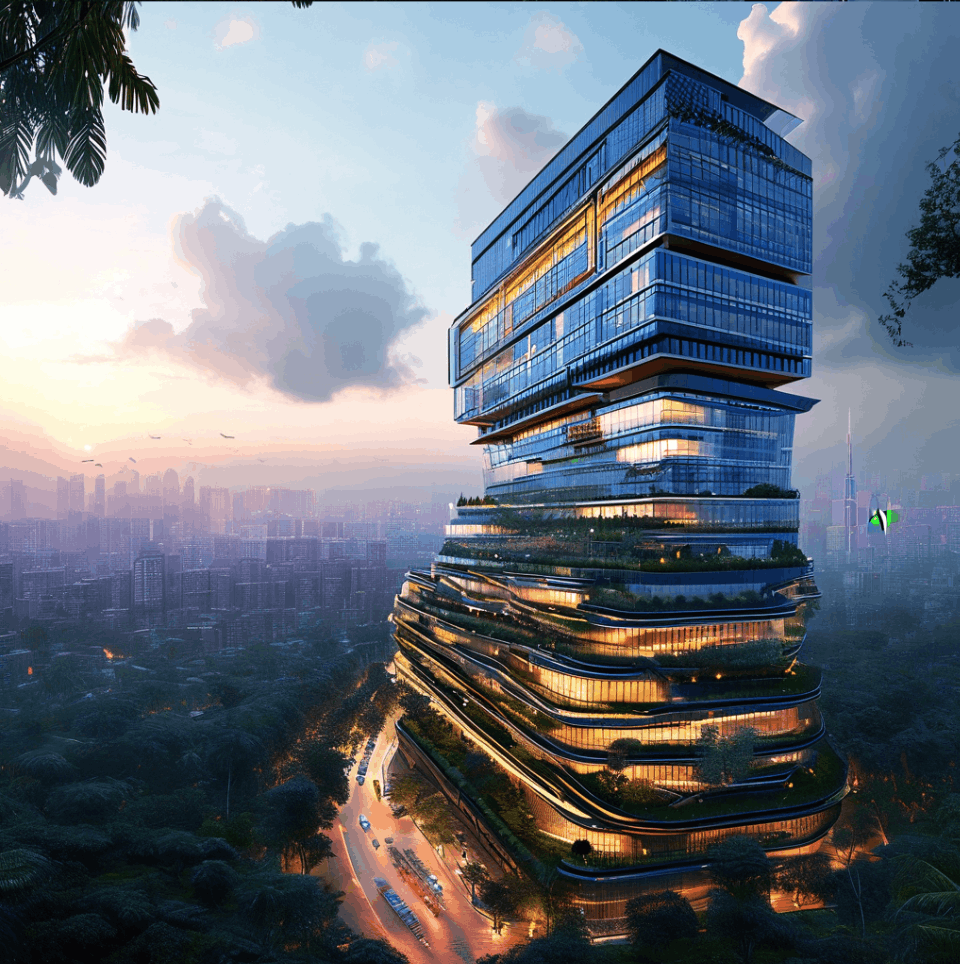}

\BandLabels
  {\ybandD}
  {igaOrange}
  {$\lambda = 30$}
  {$H = \HhatTwelve$}

\draw[
  draw=wallGray!65,
  line width=0.5pt,
  fill=wallGray!15
]
  (\xL,\ywall)
  --
  (7.95,\ywall)
  --
  (7.95,{\ywall+0.33})
  --
  (7.72,{\ywall+0.33})
  --
  (7.72,{\ywall+0.67})
  --
  (7.88,{\ywall+0.67})
  --
  (7.88,{\ywall+1.00})
  --
  (\xL,{\ywall+1.00})
  --
  cycle;

\draw[
  draw=wallGray!65,
  line width=0.5pt,
  fill=wallGray!15
]
  (9.15,\ywall)
  --
  (\xR,\ywall)
  --
  (\xR,{\ywall+1.00})
  --
  (9.22,{\ywall+1.00})
  --
  (9.22,{\ywall+0.67})
  --
  (9.38,{\ywall+0.67})
  --
  (9.38,{\ywall+0.33})
  --
  (9.15,{\ywall+0.33})
  --
  cycle;

\draw[
  wallGray!40,
  line width=0.35pt
]
  (\xL,{\ywall+0.33})
  --
  (7.72,{\ywall+0.33});

\draw[
  wallGray!40,
  line width=0.35pt
]
  (\xL,{\ywall+0.67})
  --
  (7.88,{\ywall+0.67});

\draw[
  wallGray!40,
  line width=0.35pt
]
  (9.38,{\ywall+0.33})
  --
  (\xR,{\ywall+0.33});

\draw[
  wallGray!40,
  line width=0.35pt
]
  (9.38,{\ywall+0.67})
  --
  (\xR,{\ywall+0.67});

\foreach \x in {2.8,3.9,5.0,6.1,7.2}{
  \draw[
    wallGray!40,
    line width=0.35pt
  ]
    (\x,\ywall)
    --
    (\x,{\ywall+0.33});

  \draw[
    wallGray!40,
    line width=0.35pt
  ]
    (\x,{\ywall+0.67})
    --
    (\x,{\ywall+1.00});
}

\foreach \x in {2.25,3.35,4.45,5.55,6.65}{
  \draw[
    wallGray!40,
    line width=0.35pt
  ]
    (\x,{\ywall+0.33})
    --
    (\x,{\ywall+0.67});
}

\foreach \x in {9.9,11.0,12.1,13.2,14.3}{
  \draw[
    wallGray!40,
    line width=0.35pt
  ]
    (\x,\ywall)
    --
    (\x,{\ywall+0.33});

  \draw[
    wallGray!40,
    line width=0.35pt
  ]
    (\x,{\ywall+0.67})
    --
    (\x,{\ywall+1.00});
}

\foreach \x in {10.45,11.55,12.65,13.75,14.85}{
  \draw[
    wallGray!40,
    line width=0.35pt
  ]
    (\x,{\ywall+0.33})
    --
    (\x,{\ywall+0.67});
}

\node[
  fill=wallGray!15,
  inner sep=3pt,
  font=\large\bfseries\boldmath,
  text=wallGray!25!black
]
  at (4.80,{\ywall+0.50})
  {entropy wall};

\node[
  fill=wallGray!15,
  inner sep=3pt,
  font=\large\bfseries\boldmath,
  text=wallGray!25!black
]
  at (12.30,{\ywall+0.50})
  {$\rho_\star = H(P_{\mathrm{data}})$};

\Frag{8.03}{0.02}{0.28}{0.09}{4}
\Frag{8.28}{0.03}{0.20}{0.08}{-6}
\Frag{8.86}{0.03}{0.24}{0.09}{7}

\Frag{8.13}{0.52}{0.18}{0.07}{22}
\Frag{8.97}{0.40}{0.16}{0.07}{-16}

\Frag{8.10}{1.10}{0.20}{0.075}{-18}
\Frag{7.82}{1.16}{0.15}{0.06}{28}
\Frag{9.02}{1.00}{0.15}{0.06}{12}

\draw[
  ->,
  line width=1.0pt,
  igaSteel!85!black
]
  (\xmid,{\ybandA+\BandH+0.10})
  --
  (\xmid,{\ybandB-0.08});

\draw[
  ->,
  line width=1.3pt,
  igaOrange
]
  (\xmid,{\ybandB+\BandH+0.10})
  --
  (\xmid,{\ybandC-0.10});

\draw[
  ->,
  line width=1.0pt,
  igaOrange!85
]
  (\xmid,{\ybandC+\BandH+0.10})
  --
  (\xmid,{\ybandD-0.08});

\node[
  anchor=center,
  font=\Large\bfseries
]
at (8.55,17.9) {
  \textcolor{titleBlue}{Crossing the }
  \textcolor{titlePurple}{Entropy Wall}
  \textcolor{titleOrange}{ into Worlds Beyond Imitation}
};

\end{tikzpicture}
}

\caption{\textbf{A multiverse of diversity levels.}
Matched-seed PixArt-\(\Sigma\) samples at increasing $\lambda$ in the IGA framework ($\lambda=0$ represents the original regularization-free PixArt-\(\Sigma\)), arranged by measured spectral entropy $H$. \emph{Prompt:} ``A skyscraper for a humid coastal city'' The entropy wall $\rho_\star = H(P_{\mathrm{data}})$, i.e., the entropy of the real data distribution, separates data-consistent imitated models from higher-diversity imagined distributions.}
\label{fig:multiverse}

\end{figure}

\section{Preliminaries}
\label{sec:preliminaries}

In this section, we introduce the representation-level and distributional quantities used throughout the paper. We associate every distribution with a normalized kernel covariance matrix whose spectrum defines our notion of diversity and introduce a differentiable smoothed surrogate of the resulting entropy together with its per-sample energy. Extended conventions and an elementary Gibbs-tilt identity are deferred to Appendix~\ref{app:prelim}. Throughout this work, $\Pdata$ denotes the population data distribution and $\widehat P_N=\tfrac1N\sum_{i=1}^N\delta_{X_i}$ the empirical measure of $N$ i.i.d.\ samples from it; the upper-case letter $N$ is reserved for generic empirical sample counts (evaluation batches, generated minibatches), and the lower-case letter $n$ for the size of the training set, whose empirical measure we denote with $\widehat P_n$.

\subsection{Representation Space and Kernel Covariance Matrix}
\label{subsec:kernel-covariance}

Consider a measurable representation map $\phi:\cX\to\R^d$ satisfying $\norm{\phi(x)}_2=1$ for every $x\in\cX$, which induces the normalized kernel $k(x,x')=\phi(x)^\top\phi(x')$. For a probability distribution $Q$ on $\cX$, we associate with the representation its \emph{kernel covariance matrix}
\[
    \Sigma_Q
    \defeq
    \E_{X\sim Q}\bigl[\phi(X)\phi(X)^\top\bigr]
    \in\R^{d\times d}.
\]
The unit-norm normalization makes $\Sigma_Q$ a density matrix, i.e., a positive semidefinite matrix with unit trace, whose spectrum records how the representation of $Q$ distributes its mass across orthogonal feature directions. The covariance spectrum carries the notion of diversity developed next.

The same spectrum is accessible from pairwise similarities. Given samples $x_1,\dots,x_N$ with Gram matrix $K=[k(x_i,x_j)]_{i,j=1}^N$, the empirical kernel covariance
\[
    \widehat\Sigma_N
    \defeq
    \frac1N\sum_{i=1}^N\phi(x_i)\phi(x_i)^\top
\]
shares its nonzero eigenvalues with $\tfrac{1}{N}K$. Consequently, every spectral quantity introduced below can be computed from the normalized Gram matrix without explicitly forming the feature vectors.

\subsection{Spectral Entropy Measures and Diversity Scores}
\label{subsec:spectral-diversity}

Following the discussion in \citep{bach2022information,friedman2023vendi,jalali2023information}, we use the following definition for the \emph{von Neumann entropy} of a distribution $Q$ as
\begin{equation}\label{eq:vne-def}
    \Hz(Q)\defeq-\Tr\bigl(\Sigma_Q\log\Sigma_Q\bigr),
\end{equation}
which is the Shannon entropy of the covariance spectrum. We note that the exponential of the above quantity $\mathrm{Vendi}(Q)\defeq\exp(\Hz(Q))$ is the Vendi score~\citep{friedman2023vendi}. This can be interpreted as an effective number of occupied feature directions. For example, a spectrum uniform over $r$ orthogonal directions yields $\Hz(Q)=\log r$ and $\mathrm{Vendi}(Q)=r$.

This notion of diversity is inherently reference-free, yet it depends on the representation model to embed the data: evaluating $\Hz(Q)$ requires no comparison distribution, while the fixed choice of $\phi$, or equivalently $k$, determines which variations count as distinct. Moreover, since $S\mapsto-\Tr(S\log S)$ is concave on density matrices and $Q\mapsto\Sigma_Q$ is affine, $\Hz$ is concave in $Q$ (Lemma~\ref{lem:concave}). At rank-deficient covariance matrices, however, $\Hz$ need not be differentiable.

The guidance analysis of Section~\ref{sec:guidance} requires a well-defined first variation, so we move the covariance uniformly away from the boundary of the density-matrix cone. For $\varepsilon\in(0,1)$, we define the smoothed covariance and entropy
\begin{equation}\label{eq:smoothed-def}
    S_Q^\varepsilon
    \defeq
    (1-\varepsilon)\Sigma_Q+\varepsilon\,\frac{I_d}{d},
    \qquad
    \He(Q)
    \defeq
    -\Tr\bigl(S_Q^\varepsilon\log S_Q^\varepsilon\bigr).
\end{equation}
The spectral floor $S_Q^\varepsilon\succeq \frac{\varepsilon}{d}I_d$ places every eigenvalue in $[\tfrac{\varepsilon}{d},\,1-\varepsilon(1-\tfrac1d)]$ and guarantees that $\He$ is differentiable throughout the feasible covariance set.

The derivative of $\He$ acts on individual samples through one quantity that recurs at every stage of the paper. We define the \emph{entropy energy}
\begin{equation}\label{eq:energy-def}
    \Gen{Q}(x)
    \defeq(1-\varepsilon)\,\phi(x)^\top\bigl(-\log S_Q^\varepsilon\bigr)\phi(x),
\end{equation}
which is the first variation of $\He$ at $Q$ (Lemma~\ref{lem:firstvar}, Section~\ref{sec:guidance}). Since $-\log S_Q^\varepsilon$ has large eigenvalues precisely where $S_Q^\varepsilon$ has small ones, $\Gen{Q}(x)$ is large when $\phi(x)$ aligns with feature directions that $Q$ underrepresents, and the spectral floor gives the uniform bound $0\le\Gen{Q}(x)\le(1-\varepsilon)\log(d/\varepsilon)$. The energy reappears as the payoff of the spectral adversary at training time (Section~\ref{sec:formulation}) and as the exponent of the guidance tilt at sampling time (Section~\ref{sec:guidance}).

The two entropy functionals play distinct roles in our analysis: Section~\ref{sec:wall} uses $\Hz$ to define population spectral diversity, whereas Section~\ref{sec:guidance} uses $\He$ to derive the guidance potential. Lemma~\ref{lem:smoothing} quantifies their uniform proximity as $\varepsilon$ becomes small.

\section{The IGA Framework: From Imitative to Imaginative Generative Modeling}
\label{sec:formulation}

Distributional imitation, which is mathematically formulated in \eqref{eq:intro-imitation}, requires a generative model to match a reference distribution. In the IGA framework which we formulate in this work, we intentionally augment this objective by requiring the generated distribution to \textbf{attain a prescribed level of spectral diversity}, and the resulting formulation separates two questions: (i) what distribution should be targeted, and (ii) how should that target be realized by a generative model? 

In what follows, we first present the distribution-level objective function and the regularized form in IGA. We then derive a min-max reformulation of the entropy reward in the IGA optimization, and then develop the framework's application to sampling-time and training-time settings.

\subsection{Formulating IGA via Constraining and Penalizing the Spectral Entropy}
\label{subsec:objectives}

Consider a reference distribution $\Pref$, which in our applications is either the empirical data distribution $\widehat P_n$ for the observed training data or the underlying distribution $P_\theta$ of the (already trained) generative model. We also consider the spectral entropy functional $H$. Given a divergence measure $\Dref(Q;\Pref)$, IGA selects the most faithful distribution whose spectral diversity reaches a target level $\rho\in\R$:
\begin{equation}
    \begin{aligned}
        \underset{Q\in\cP}{\operatorname{minimize}}
        \quad & \Dref(Q;\Pref)\\
        \text{subject to}
        \quad & H(Q)\ge\rho.
    \end{aligned}
    \tag{C$_\rho$}\label{eq:Crho}
\end{equation}
The divergence objective function anchors the solution to $\Pref$, ensuring the distribution solution remains as close as possible to the reference distribution, while the constraint specifies the desired spectral diversity level. In next section, we review and extend the discussion from \citep{farnia2026exposing} to interpret $\rho$ relative to the entropy of the data, distinguishing diversity repair from spectral extrapolation.

We note that due to the convex structure of the above optimization, the application of standard convex duality shows that \eqref{eq:Crho} is  equivalent to the Lagrangian formulation for a corresponding Lagrangian multiplier parameter $\lambda \ge 0$:
\begin{equation}
    Q_\lambda
    =
    \underset{Q\in\cP}{\arg\!\min}\;    
        F_\lambda(Q)
        \defeq
        \Dref(Q;\Pref)-\lambda H(Q)    \tag{P$_\lambda$}\label{eq:Plambda}
\end{equation}
\begin{remark}
For every constrained solution satisfying the standard regularity conditions, there is a multiplier $\lambda\ge0$ for which the same distribution solves~\eqref{eq:Plambda}. Conversely, each $Q_\lambda$ solves~\eqref{eq:Crho} at its attained diversity level. The precise duality and attainment statements are given in Appendix~\ref{app:formulation}.
\end{remark}

\paragraph{A Min-Max Formulation of the IGA Optimization.} 
The spectral entropy term in \eqref{eq:Plambda} is a nonlinear function of the covariance spectrum, yet it admits a precise convex dual formulation due to its concavity. For the smoothed entropy, the application of the Gibbs variational principle for the matrix-based entropy converts the smoothed IGA problem into a two-player game.

\begin{proposition}[Spectral Min-Max Formulation of IGA Optimization]\label{prop:spectral-minmax}
Consider smoothed spectral entropy with parameter $\varepsilon\in(0,1)$. Define the spectral adversary class as
$$ \Tset\defeq\bigl\{\Theta\in\R^{d\times d}:\ \Theta=\Theta^\top,\ \Tr(\Theta)=0,\ \opnorm\Theta\le\log(d/\varepsilon)\bigr\}. $$
Then, the following equivalences hold:
\begin{enumerate}[leftmargin=*]
    \item[\textup{(i)}] \textbf{Dual representation of the negative entropy.} For every distribution $Q$ on $\cX$, we have
    \begin{equation}\label{eq:vne-dual}
        -\He(Q)
        =\max_{\Theta\in\Tset}
        \Bigl\{-(1-\varepsilon)\,\E_{X\sim Q}\bigl[\phi(X)^\top\Theta\,\phi(X)\bigr]-\log\Tr\bigl(e^{-\Theta}\bigr)\Bigr\},
    \end{equation}
    and the maximum is attained at the unique traceless matrix
    \[
        \Theta^\star(Q)\defeq-\log S_Q^\varepsilon+\tfrac1d\Tr\bigl(\log S_Q^\varepsilon\bigr)I_d\in\Tset.
    \]
    \item[\textup{(ii)}] \textbf{Min-Max form of \eqref{eq:Plambda}.} For every $\lambda\ge0$, distribution $\Pref$, and every $Q\in\cP$, we have
    \begin{equation}\label{eq:IGA-minmax}
        \Dref(Q;\Pref)-\lambda\He(Q)
        =\max_{\Theta\in\Tset}\;
        \mathcal A_\lambda(Q,\Theta),
    \end{equation}
    where $\mathcal A_\lambda(Q,\Theta)\defeq\Dref(Q;\Pref)
        -\lambda(1-\varepsilon)\,\E_{X\sim Q}\bigl[\phi(X)^\top\Theta\,\phi(X)\bigr]
        -\lambda\log\Tr\bigl(e^{-\Theta}\bigr)$.
    
    Therefore, the above results show that \eqref{eq:Plambda} can be rewritten as the following two-player min-max problem (game): $$\min_{Q\in\cP}\max_{\Theta\in\Tset}\mathcal A_\lambda(Q,\Theta)$$
    \item[\textup{(iii)}] \textbf{Min-Max and Max-Min Equivalence.} If $\cP$ is a convex set and $\Dref(\cdot;\Pref)$ is a convex and lower semicontinuous function, the order of minimization and maximization in this game may be interchanged:   $$\min_{Q\in\cP}\,\max_{\Theta\in\Tset}\,\mathcal A_\lambda(Q,\Theta) \: = \: \max_{\Theta\in\Tset}\,\min_{Q\in\cP}\,\mathcal A_\lambda(Q,\Theta)$$
\end{enumerate}
\end{proposition}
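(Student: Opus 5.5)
The plan is to derive all three parts from the Gibbs variational principle for matrices, which is non-negativity of quantum relative entropy. Part (i) is a pointwise identity in the density matrix $S_Q^\varepsilon$, part (ii) is a one-line rewriting of it, and part (iii) is an application of Sion's minimax theorem.

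\emph{Step 1 (Gibbs inequality).} Fix a symmetric $\Theta$ and set $\sigma_\Theta\defeq e^{-\Theta}/\Tr(e^{-\Theta})$, a full-rank density matrix. For any density matrix $S$, the inequality $\Tr\bigl(S(\log S-\log\sigma_\Theta)\bigr)\ge0$ (Klein's inequality) rearranges to
\[
-\Tr(S\log S)\le\Tr(S\Theta)+\log\Tr\bigl(e^{-\Theta}\bigr).
\]
Equality holds if and only if $S=\sigma_\Theta$, that is, $\Theta=-\log S+cI_d$ for some scalar $c$ (this needs $S$ full rank, which $S=S_Q^\varepsilon$ is).

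\emph{Step 2 (specialize to $S=S_Q^\varepsilon$ and traceless $\Theta$).} By linearity and the definition of $S_Q^\varepsilon$,
\[
\Tr(S_Q^\varepsilon\Theta)=(1-\varepsilon)\,\E_{X\sim Q}\bigl[\phi(X)^\top\Theta\,\phi(X)\bigr]+\tfrac{\varepsilon}{d}\Tr(\Theta),
\]
and the last term vanishes on traceless $\Theta$. Negating the inequality of Step 1 shows that $-\He(Q)$ is at least the bracket in \eqref{eq:vne-dual} for every $\Theta\in\Tset$. It remains to show the bound is attained inside $\Tset$, and uniquely. The functional is invariant under $\Theta\mapsto\Theta+cI_d$. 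Hence the equality set is the line $\{-\log S_Q^\varepsilon+cI_d\}$, which meets the traceless hyperplane exactly at $\Theta^\star(Q)$; this gives both attainment and uniqueness.

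For the operator-norm constraint, use the spectral floor: every eigenvalue of $S_Q^\varepsilon$ lies in $[\varepsilon/d,1]$. So the eigenvalues of $-\log S_Q^\varepsilon$ lie in $[0,\log(d/\varepsilon)]$. Subtracting their mean keeps each centered eigenvalue within the width of this interval, so $\opnorm{\Theta^\star(Q)}\le\log(d/\varepsilon)$ and $\Theta^\star(Q)\in\Tset$. This proves (i).

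\emph{Step 3 (part (ii)).} Multiply \eqref{eq:vne-dual} by $\lambda\ge0$ and add $\Dref(Q;\Pref)$. This term does not depend on $\Theta$, so it passes inside the maximum, giving \eqref{eq:IGA-minmax}. Minimizing over $Q\in\cP$ then gives the game form.

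\emph{Step 4 (part (iii)).} Apply Sion's minimax theorem with $\Tset$ as the compact side; $\Tset$ is convex and compact as a closed bounded subset of symmetric matrices.
\begin{itemize}
\item \emph{Concavity and continuity in $\Theta$.} For fixed $Q$, $\mathcal A_\lambda(Q,\cdot)$ is linear in $\Theta$ minus $\lambda\log\Tr(e^{-\Theta})$. The map $\Theta\mapsto\log\Tr(e^{-\Theta})$ is convex (the matrix log-sum-exp, e.g.\ via its Legendre duality with $-\Tr(S\log S)$ from Step 1). Hence $\mathcal A_\lambda(Q,\cdot)$ is concave and continuous.
\item \emph{Convexity and lower semicontinuity in $Q$.} For fixed $\Theta$, $\mathcal A_\lambda(\cdot,\Theta)$ is $\Dref(\cdot;\Pref)$ plus a term affine in $Q$. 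It is therefore convex on the convex set $\cP$, and lower semicontinuous whenever the affine term $Q\mapsto\E_Q[\phi^\top\Theta\phi]$ is continuous.
\end{itemize}

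\emph{Main obstacle.} The only delicate point is the topology on $\cP$ in Step 4. It must make $\Dref$ lower semicontinuous (the hypothesis) and also make $Q\mapsto\E_Q[\phi^\top\Theta\phi]$ continuous. I would use the weak topology generated by bounded measurable test functions (the $\tau$-topology). The integrand $\phi^\top\Theta\phi$ is bounded by $\log(d/\varepsilon)$ because $\norm{\phi}_2=1$, so it is continuous in that topology. Alternatively, the weak topology works if $\phi$ is assumed continuous. Sion then gives $\inf_Q\max_\Theta=\max_\Theta\inf_Q$, with the outer maximum attained by compactness of $\Tset$ and upper semicontinuity. If $\min$ is to be read literally, attainment in $Q$ requires a separate existence argument for \eqref{eq:Plambda} (e.g.\ from the appendix); otherwise the identity is stated with $\inf$.
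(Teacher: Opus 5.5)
Your proposal is correct and follows the paper's proof essentially step for step: Klein's inequality gives the matrix Gibbs variational principle (Lemma~\ref{lem:gibbs-matrix}), and the spectral floor of $S_Q^\varepsilon$ gives attainment; the traceless normalization singles out $\Theta^\star(Q)$, centering the log-spectrum bounds the operator norm, part (ii) is the same multiply-by-$\lambda$-and-add-$\Dref$ step, and part (iii) is Sion's theorem with $\Tset$ as the compact side, with concavity of $-\log\Tr(e^{-\Theta})$ read off the same variational identity. Your two caveats are sound refinements the paper passes over: the paper fixes the topology on $\cP$ through the continuity clause of Assumption~\ref{ass:finite-dim} (weak topology, $\phi$ bounded and continuous), and its $\min$'s in (iii) should indeed be read as $\inf$'s unless attainment in $Q$ is established separately.
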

\begin{proof}
We defer the proof to the Appendix.
\end{proof}

We note that the best response in part (i) is the centered log-spectrum of the smoothed covariance, and its per-sample payoff coincides, up to an additive constant, with the entropy energy: the spectral adversary pays the generator exactly $\lambda\Gen{Q}$ of \eqref{eq:energy-def}, rewarding samples along directions that $Q$ underrepresents (Lemma~\ref{lem:best-response}). Therefore, the training-time adversarial payoff and the sampling-time guidance field are induced by the same first-order quantity. The proof of Proposition~\ref{prop:spectral-minmax}, given in Appendix~\ref{app:spectral-minmax}, relies on Klein's matrix relative-entropy inequality and the Gibbs variational principle for matrix entropy, both of which are proved there in full. We refer to Remark~\ref{rem:minmax-readings} for further structural implications of \eqref{eq:IGA-minmax}, including the linearization of the entropy reward and the role of smoothing in compactifying the adversary class.

\subsection{IGA for Sampling From a Pretrained Model: the Special case of KL-divergence}
\label{subsec:deployment}

This subsection presents the application of IGA for sampling from an available (supposedly pretrained) model. At sampling time, the pretrained model is held fixed as the reference distribution in the framework. We specifically consider the reference $P_\theta$ as the \emph{underlying} distribution of the pretrained generator, and the Lagrangian IGA optimization problem becomes:
\begin{equation}\label{eq:IGA-sampling-bregman}
    Q^\star
    =
    \underset{Q\in\cP}{\arg\!\min}\;
    \Bigl\{
        D(Q,P_\theta)-\lambda\He(Q)
    \Bigr\}.
\end{equation}
Here, our goal is to sample from the optimal distribution $Q^\star$.

We recall that \eqref{eq:IGA-sampling-bregman} separates the desired target distribution from the algorithm used to sample it: the Bregman divergence determines which departures from the base generator are costly, while the $\He$ Lagrangian penalty rewards higher spectral diversity.

In our analysis, we specifically focus on the geometry resulting from choosing the discrepancy measure to be the \textbf{KL-divergence}. In this specific case, the optimality conditions yield an explicit density-ratio characterization of the optimal solution. Note that, in the case of KL-divergence, the sampling-based IGA aims to minimize the following objective function:
\begin{equation}\label{eq:IGA-sampling-kl}
    F(Q)
    \defeq
    \KL(Q \| P_\theta)
    -\lambda\He(Q)
\end{equation}
over $Q\in\cP$, where the extended-value convention for the KL term sets $F(Q)=+\infty$ off the set $\{Q\ll P_\theta\}$.

\begin{proposition}[Sampling-time target as a self-consistent exponential tilt]\label{prop:kl-tilt}
Let $\lambda\ge0$ and $\varepsilon\in(0,1)$. If $F$ attains a finite minimum over $\{Q\in\cP:Q\ll P_\theta\}$, then the minimizer $Q^\star$ is unique, $Q^\star$ and $P_\theta$ are mutually absolutely continuous, and, with the \emph{total reward} defined by
\begin{equation}\label{eq:energy-reward}
    R_{Q}(x)
    \defeq
    \lambda\,\Gen{Q}(x),
\end{equation}
the density ratio is the exponential tilt
\begin{equation}\label{eq:kl-tilt}
    \frac{dQ^\star}{dP_\theta}(x)
    =
    \frac{\exp\bigl(R_{Q^\star}(x)\bigr)}
         {\E_{X\sim P_\theta}\bigl[\exp\bigl(R_{Q^\star}(X)\bigr)\bigr]}
\end{equation}
\end{proposition}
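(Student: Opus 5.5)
Uniqueness follows from convexity. $\KL(\cdot\|P_\theta)$ is strictly convex on $\{Q\ll P_\theta\}$, and $-\He$ is convex because $Q\mapsto S_Q^\varepsilon$ is affine and $S\mapsto -\Tr(S\log S)$ is concave (Lemma~\ref{lem:concave}). So $F$ is strictly convex wherever it is finite. If $\cP$ is convex, as in the setting where the tilt is meaningful (e.g.\ all probability measures), two distinct minimizers $Q_1\neq Q_2$ would give $F(\tfrac12 Q_1+\tfrac12 Q_2)<\min F$, which is impossible. Hence the minimizer is unique.

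For the tilt I would linearize the entropy at the minimizer $Q^\star$ and use the first variation $\Gen{Q^\star}$ (Lemma~\ref{lem:firstvar}).

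\emph{Step 1 (variational inequality).} Take any $Q\ll P_\theta$ with $\KL(Q\|P_\theta)<\infty$, and set $Q_t=(1-t)Q^\star+tQ$. The spectral floor $S^\varepsilon\succeq\frac{\varepsilon}{d}I_d$ makes $t\mapsto\He(Q_t)$ differentiable at $t=0^+$, with derivative $\E_{Q}[\Gen{Q^\star}]-\E_{Q^\star}[\Gen{Q^\star}]$.

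\emph{Step 2 (reduce to a linear problem).} Concavity of $\He$ gives $-\lambda\He(Q)\ge -\lambda\He(Q^\star)-\lambda\bigl(\E_Q[\Gen{Q^\star}]-\E_{Q^\star}[\Gen{Q^\star}]\bigr)$. Minimality of $Q^\star$ along the segment $Q_t$, combined with convexity of KL, then shows that $Q^\star$ minimizes the convex functional
\[
L(Q)=\KL(Q\|P_\theta)-\E_Q[R_{Q^\star}].
\]
The cleanest justification: $F\le L+c$ everywhere, with equality at $Q^\star$, where $c=\lambda\E_{Q^\star}[\Gen{Q^\star}]-\lambda\He(Q^\star)$. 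Since $F(Q_t)\ge F(Q^\star)$ for all $t$, dividing by $t$ and letting $t\to0^+$ shows the directional derivative of $L$ at $Q^\star$ toward $Q$ is nonnegative. Because $L$ is convex, this gives $L(Q)\ge L(Q^\star)$.

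\emph{Step 3 (Gibbs variational principle).} The reward $R_{Q^\star}$ is bounded, $0\le R\le\lambda(1-\varepsilon)\log(d/\varepsilon)$, so $Z=\E_{P_\theta}[e^{R_{Q^\star}}]\in[1,\infty)$. Define $Q_G$ by $dQ_G/dP_\theta=e^{R_{Q^\star}}/Z$. The elementary Gibbs-tilt identity (Appendix~\ref{app:prelim}) gives
\[
L(Q)=\KL(Q\|Q_G)-\log Z .
\]
Hence $L$ has the unique minimizer $Q_G$, and therefore $Q^\star=Q_G$, which is \eqref{eq:kl-tilt}. The density $e^{R}/Z$ is bounded above and below by strictly positive constants, so $Q^\star$ and $P_\theta$ are mutually absolutely continuous.

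\emph{Main obstacle.} The delicate point is Step 2's use of directional derivatives of KL at $Q^\star$ before $Q^\star\sim P_\theta$ is known. The one-sided derivative of $t\mapsto\KL(Q_t\|P_\theta)$ may be $-\infty$ when $Q$ charges sets that $Q^\star$ misses. I would handle this with convexity difference quotients, which are monotone in $t$, and monotone convergence. A derivative of $-\infty$ in such a direction would contradict minimality, since the entropy term has a finite derivative. So admissible directions give a finite derivative, and the inequality $L(Q)\ge L(Q^\star)$ passes to the limit.

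One more point to check is $Q_G\in\cP$. If $\cP$ is not all measures, I would assume $\cP$ contains bounded tilts of $P_\theta$, or equivalently treat $\cP$ as all probability measures, which is the implicit setting of the statement.
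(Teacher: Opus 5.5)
Your argument is correct, but it takes a different route from the paper. The paper argues in three Euler--Lagrange steps:
\begin{enumerate}
\item It proves $dQ^\star/dP_\theta>0$ $P_\theta$-a.s.\ by mixing $Q^\star$ toward $P_\theta(\cdot\mid A)$ on a putative null set $A$, along which $F$ drops by $t\log t+O(t)<0$.
\item It differentiates $F$ along bounded multiplicative perturbations $(1+th)\,dQ^\star$, using dominated convergence for the KL term, to get that $\log(dQ^\star/dP_\theta)-\lambda\Gen{Q^\star}$ is constant $Q^\star$-a.s.
\item It uses step 1 to upgrade this to $P_\theta$-a.s.
\end{enumerate}
You instead compare $Q^\star$ with arbitrary finite-KL competitors along mixtures. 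You linearize only the entropy (Lemma~\ref{lem:firstvar}) and identify the minimizer of the resulting linear-reward problem by Lemma~\ref{lem:gibbs}. This gives the tilt directly, and mutual absolute continuity falls out of the bounded, strictly positive density $e^{R_{Q^\star}}/Z$. Neither the separate $t\log t$ argument nor the dominated-convergence step is needed.

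Your ``main obstacle'' also disappears: you never need the limit of the KL difference quotient. You only need the convexity bound $\bigl(\KL(Q_t\|P_\theta)-\KL(Q^\star\|P_\theta)\bigr)/t\le\KL(Q\|P_\theta)-\KL(Q^\star\|P_\theta)$ for $t\in(0,1]$. Combine it with $F(Q_t)\ge F(Q^\star)$ and the entropy derivative from Step~1, and letting $t\downarrow0$ gives $L(Q)\ge L(Q^\star)$ at once. The paper's route makes the no-mass-loss mechanism explicit; yours is shorter and mirrors the decomposition in Proposition~\ref{prop:IGA-pythagoras}.

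One sentence in Step~2 must be fixed: the claim ``$F\le L+c$ everywhere'' is reversed. Your own concavity bound gives $F(Q)-L(Q)-c=\lambda\bigl(\He(Q^\star)+\int\Gen{Q^\star}\,d(Q-Q^\star)-\He(Q)\bigr)\ge0$. With equality at $Q^\star$, this only shows that a minimizer of $L$ minimizes $F$, which is the converse of what you need. What actually proves Step~2 is that this gap is $o(t)$ along $Q_t$, i.e.\ the differentiability from Step~1. So drop that sentence and rely on the difference-quotient argument.

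Both proofs also tacitly need $\cP$ to contain the competitors they use: $Q_G$ for you, and $P_\theta(\cdot\mid A)$-mixtures and bounded reweightings of $Q^\star$ for the paper. This holds for the ambient class of all probability measures.
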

\begin{proof}
We defer the proof to the Appendix.
\end{proof}

\begin{corollary}[Score-function relation under the exponential tilt]
\label{cor:score-tilt}
Under the assumptions of Proposition~\ref{prop:kl-tilt}, suppose that
$P_\theta$ and $Q^\star$ admit differentiable densities
$p_\theta$ and $q^\star$, respectively. Then their score functions satisfy
\begin{equation}
\label{eq:score-gen}
    \nabla \log q^\star(x)
    =
    \nabla \log p_\theta(x)
    +
    \lambda\,\nabla \Gen{Q^\star}(x).
\end{equation}
\end{corollary}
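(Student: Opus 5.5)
The plan is to take logarithms in the exponential-tilt identity \eqref{eq:kl-tilt} of Proposition~\ref{prop:kl-tilt} and then differentiate. The corollary assumes that both $P_\theta$ and $Q^\star$ have differentiable densities $p_\theta$ and $q^\star$ with respect to a common reference measure, namely Lebesgue measure on $\cX\subseteq\R^m$. Under this assumption, the Radon--Nikodym derivative is $\frac{dQ^\star}{dP_\theta}(x)=q^\star(x)/p_\theta(x)$ for $P_\theta$-a.e.\ $x$, on the set where $p_\theta>0$. Proposition~\ref{prop:kl-tilt} gives mutual absolute continuity, so $q^\star$ and $p_\theta$ share the same positivity set. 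Combining this with \eqref{eq:kl-tilt} gives
\[
q^\star(x)=\frac{p_\theta(x)\exp\bigl(\lambda\Gen{Q^\star}(x)\bigr)}{Z},\qquad Z\defeq\E_{X\sim P_\theta}\bigl[\exp\bigl(\lambda\Gen{Q^\star}(X)\bigr)\bigr].
\]
The bound $0\le\Gen{Q}\le(1-\varepsilon)\log(d/\varepsilon)$ shows that $Z\in[1,(d/\varepsilon)^{\lambda(1-\varepsilon)}]$, so $Z$ is a finite positive constant that does not depend on $x$.

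The first step is to take logarithms on the open set $\{p_\theta>0\}$:
\[
\log q^\star(x)=\log p_\theta(x)+\lambda\Gen{Q^\star}(x)-\log Z .
\]
The second step is to differentiate this identity. The term $\log Z$ is constant in $x$, so its gradient vanishes, and \eqref{eq:score-gen} follows.

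This requires $x\mapsto\Gen{Q^\star}(x)$ to be differentiable. By \eqref{eq:energy-def}, $\Gen{Q^\star}(x)=(1-\varepsilon)\phi(x)^\top M\phi(x)$ with the fixed symmetric matrix $M=-\log S^\varepsilon_{Q^\star}$, which is well defined because $S^\varepsilon_{Q^\star}\succeq\frac{\varepsilon}{d}I_d$. Freezing $Q^\star$ is legitimate because it is a fixed distribution once the optimization is solved. The energy is therefore differentiable whenever $\phi$ is, with $\nabla\Gen{Q^\star}(x)=2(1-\varepsilon)J_\phi(x)^\top M\phi(x)$.

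If $\phi$ is not assumed differentiable, the identity still yields differentiability of the energy on the positivity set. There $\lambda\Gen{Q^\star}=\log q^\star-\log p_\theta+\log Z$ is a difference of differentiable functions. When $\lambda>0$ this makes $\Gen{Q^\star}$ differentiable and the gradient in \eqref{eq:score-gen} meaningful. When $\lambda=0$ the claim reduces to $q^\star=p_\theta$.

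The main obstacle is the measure-theoretic gap between the $P_\theta$-a.e.\ identity \eqref{eq:kl-tilt} and a pointwise identity that can be differentiated. I would handle it as follows. Both sides of the displayed density identity are continuous on the open set $\{p_\theta>0\}$, assuming $\phi$ is continuous or, as above, deducing continuity from the identity. The two sides agree Lebesgue-a.e. there, since $p_\theta>0$ implies that Lebesgue-null and $P_\theta$-null sets coincide on this region. Continuous functions that agree almost everywhere on an open set agree everywhere on it, which gives the pointwise identity. Score functions are understood on this set, matching the statement of the corollary.
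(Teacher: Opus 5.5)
Your proposal is correct and follows the paper's one-line proof: take the logarithm of the tilt \eqref{eq:kl-tilt} and differentiate, with $\log Z$ dropping out because it does not depend on $x$. Your passage from the $P_\theta$-a.e.\ identity to a pointwise one by continuity on $\{p_\theta>0\}$ is extra rigor the paper leaves implicit, but your fallback for non-continuous $\phi$ is circular. The identity holds only a.e., so it yields only a differentiable version of $\Gen{Q^\star}$, not differentiability of $\Gen{Q^\star}$ itself; your main line, with $\phi$ continuous (Assumption~\ref{ass:guidance}) and differentiable, does not need that fallback.
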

\begin{proof}
The result follows directly by taking the logarithm of~\eqref{eq:kl-tilt} and differentiating with respect to $x$, noting that the log-normalizing constant is independent of $x$.
\end{proof}

The target is an exponential reweighting of the pretrained law. We highlight that the reweighting is self-consistent rather than externally prescribed, since the total reward $R_{Q^\star}$ depends on the covariance of the unknown target itself. The multiplier $\lambda$ sets the strength of the reweighting, and the uniform bound on the energy noted after~\eqref{eq:energy-def} limits how strongly any single sample can be up- or down-weighted, while the KL term confines the redistribution of mass to the support of the base law.

We note that Proposition~\ref{prop:kl-tilt} characterizes the target distribution over clean outputs and does not yet provide a sampler; also, the finite-minimum hypothesis remains to be verified. Section~\ref{sec:guidance} addresses both points: under mild topological conditions the minimizer exists (Theorem~\ref{thm:tilt}), the tilt propagates exactly through the forward noising process to an explicit time-dependent guidance field (Theorem~\ref{thm:twist}), and the field is approximated with denoised predictions at a quantified endpoint error, all without changing the pretrained score network. Unlike guidance by a fixed sample-wise reward, the tilt depends on the target law itself, through its covariance; practical sampling therefore estimates this distribution-level quantity, for example from a pilot batch. We keep the sampling target $Q^\star$ notationally distinct from the training-time optimum $Q_\lambda^{\mathrm{train}}$ of Section~\ref{subsec:training}, realized by changing generator parameters.

\subsection{IGA for Training Generative Models with Entropy-Regularized Objective}
\label{subsec:training}

For the training-time application of the imaginative generative modeling in IGA, we change the original divergence minimization in standard generative modeling and include the additional Lagrangian term in the objective function $-\lambda H(Q)$ to promote higher spectral entropy in the trained model.

Mathematically, we choose the reference distribution to be the empirical distribution $\widehat{P}_n$ of $n$ training samples $x_1,\ldots , x_n$ (i.e., $\widehat{P}_n= \frac{1}{n}\sum_{i=1}^n \delta_{x_i}$). Then, the optimization problem for training-time IGA will be computing the optimal solution to the spectral entropy-regularized divergence minimization problem:
\begin{equation}\label{eq:IGA-training-population}
    Q_\lambda^{\mathrm{train}}
    \, =\, 
    \underset{Q\in\Pgen}{\arg\!\min}\;
    \Bigl\{
        \Dref(Q;\widehat P_n)-\lambda H(Q)
    \Bigr\}.
\end{equation}
The first term specifies how divergence to the training data distribution is measured, while the second is a distribution-level regularizer: it acts jointly on generated examples and rewards coverage of feature directions that would otherwise be underrepresented. IGA therefore only augments the model's distributional discrepancy objective and can be interpreted as a spectral entropy regularization in the divergence minimization task of training the generative model.  Specifically, in the following, we focus on and apply the IGA training framework to the adversarial training of generative adversarial networks (GANs). Further discussion on application of training-time IGA to other generative modeling frameworks is deferred to the Appendix.

\paragraph{Adversarial training and GANs.}
The min-max format appearing in Proposition~\ref{prop:spectral-minmax} composes smoothly with objective functions that are formed in the adversarial-learning formulations of generative modeling. We note that the standard GAN \cite{goodfellow2014generative,nowozin2016fgan,arjovsky2017wasserstein} objectives measure the discrepancy through a critic (discriminator) class $\cD$ and real-valued link functions $u,v: \mathbb{R}\rightarrow \mathbb{R}$:
\begin{equation}\label{eq:critic-fidelity}
    \Dref(Q;\widehat P_n)
    =\max_{D\in\cD}
    \Bigl\{\E_{X\sim\widehat P_n}\bigl[u\bigl(D(X)\bigr)\bigr]
    -\E_{X\sim Q}\bigl[v\bigl(D(X)\bigr)\bigr]\Bigr\}.
\end{equation}
As notable examples, Wasserstein GANs take a $1$-Lipschitz critic class with $u=v=\mathrm{id}$ being the identity map ~\citep{arjovsky2017wasserstein}; $f$-GANs choose $v=f^*$ for the convex conjugate of the convex $f$ function underlying the target $f$-divergence, recovering the original GAN objective as a special case for JS-divergence~\citep{goodfellow2014generative,nowozin2016fgan}. Substituting \eqref{eq:critic-fidelity} and the entropy dual \eqref{eq:vne-dual} into \eqref{eq:IGA-training-population} gives an exact reformulation in which the entropy reward joins the discriminator inside a single adversary.

\begin{proposition}[IGA-GAN formulation as min-max optimization]\label{prop:IGA-gan}
Let $\Dref(\cdot;\widehat P_n)$ admit the representation \eqref{eq:critic-fidelity}, and let $\lambda\ge0$, $\varepsilon\in(0,1)$. Then, for every class $\cQ$ of distributions, we have the following
\begin{equation}\label{eq:IGA-gan}
\begin{aligned}    \min_{Q\in\cQ}\bigl\{\Dref(Q;\widehat P_n)-\lambda\He(Q)\bigr\}   &\:=\:\min_{Q\in\cQ}\;\max_{(D,\Theta)\in\cD\times\Tset}\;
    \Bigl\{\E_{X\sim\widehat P_n}\bigl[u\bigl(D(X)\bigr)\bigr]
    -\E_{X\sim Q}\bigl[v\bigl(D(X)\bigr)\bigr]\\
    &\qquad
    -\lambda(1-\varepsilon)\,\E_{X\sim Q}\bigl[\phi(X)^\top\Theta\,\phi(X)\bigr]
    -\lambda\log\Tr\bigl(e^{-\Theta}\bigr)\Bigr\},
\end{aligned}
\end{equation}
Note that the above has a single maximization over the joint adversary $(D,\Theta)$. For every fixed $Q$ the joint maximization decouples across the two components, and the $\Theta$-component is attained at the best response $\Theta^\star(Q)$ of Proposition~\ref{prop:spectral-minmax}.
\end{proposition}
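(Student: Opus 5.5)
The identity will be checked pointwise in $Q$: for each fixed $Q\in\cQ$ the bracketed objective on the left equals the inner maximum on the right. Taking $\min_{Q\in\cQ}$ on both sides then gives \eqref{eq:IGA-gan}. This holds whether or not either minimum is attained, reading $\min$ as an infimum if needed. No convexity of $\cQ$ and no minimax interchange are required, because the order $\min_Q\max_{(D,\Theta)}$ is the same on both sides. This is why the statement can hold for an arbitrary class $\cQ$.

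Fix $Q$. The objective inside the maximization splits as $\Phi_1(D)+\Phi_2(\Theta)$, where
\[
\Phi_1(D)=\E_{\widehat P_n}[u(D(X))]-\E_Q[v(D(X))],
\]
\[
\Phi_2(\Theta)=-\lambda(1-\varepsilon)\E_Q[\phi(X)^\top\Theta\phi(X)]-\lambda\log\Tr(e^{-\Theta}).
\]
The feasible set is the product $\cD\times\Tset$. For a separable function on a product set we have $\sup_{(D,\Theta)}\{\Phi_1+\Phi_2\}=\sup_D\Phi_1+\sup_\Theta\Phi_2$. This holds in $(-\infty,+\infty]$, since $\Phi_2$ is bounded: $\Tset$ is compact and $\Phi_2$ is continuous there. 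So there is no $\infty-\infty$ ambiguity. This step is the decoupling claim in the proposition.

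By \eqref{eq:critic-fidelity}, the first supremum equals $\Dref(Q;\widehat P_n)$. For the second supremum, treat the case $\lambda=0$ separately: then $\Phi_2\equiv0$, which matches $-\lambda\He(Q)=0$, and any $\Theta$, in particular $\Theta^\star(Q)$, attains the supremum. For $\lambda>0$, factor out $\lambda$ and apply Proposition~\ref{prop:spectral-minmax}(i). It gives $\max_{\Theta\in\Tset}\Phi_2(\Theta)=\lambda\cdot(-\He(Q))$, attained at $\Theta^\star(Q)$. Summing the two suprema gives $\Dref(Q;\widehat P_n)-\lambda\He(Q)$, which is the desired pointwise identity. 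Equivalently, the identity follows from Proposition~\ref{prop:spectral-minmax}(ii) with $\Dref$ replaced by its critic representation.

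The main obstacle is a bookkeeping point: whether the outer "$\max$" over $(D,\Theta)$ is attained. Attainment in $D$ holds only if the supremum in \eqref{eq:critic-fidelity} is attained, which the statement implicitly assumes by writing max there. I would state that assumption explicitly and otherwise interpret both sides with suprema. The $\Theta$-component is always attained at $\Theta^\star(Q)$ by Proposition~\ref{prop:spectral-minmax}(i).
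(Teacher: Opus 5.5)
Your proof is correct and matches the paper's argument essentially step for step. You fix $Q$ and split the joint supremum over the product $\cD\times\Tset$ into the critic supremum, which equals $\Dref(Q;\widehat P_n)$, plus the spectral maximum, which equals $-\lambda\He(Q)$ by Proposition~\ref{prop:spectral-minmax} (attained at $\Theta^\star(Q)$, with $\lambda=0$ treated separately); you then take the infimum over an arbitrary $\cQ$ with no convexity or minimax interchange. Your added remarks on the finiteness of the $\Theta$-part and on attainment in $D$ are harmless refinements of that same proof.
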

\begin{proof}
We defer the proof to the Appendix.
\end{proof}

The proof, given in Appendix~\ref{app:training}, relies on a structural observation: the critic and the spectral adversary enter through suprema over independent variables, and such suprema combine additively. Hence, the identity holds pointwise in $Q$, and neither convexity of $\Pgen$ nor a minimax interchange is used; the interchange remains reserved for the ambient class (Remark~\ref{conv:ambient}, Appendix~\ref{app:prelim}). Algorithmically, \eqref{eq:IGA-gan} adds one adversary to standard GAN training, and this additional adversary is computationally inexpensive: while the critic $D$ is trained by gradient steps, the spectral player requires no training at all, since its best response is the centered log-spectrum $\Theta^\star(Q)$ and can be computed from an eigendecomposition of the minibatch covariance at $O(d^3)$ cost per refresh.

The two adversaries play complementary roles. The critic enforces the fidelity of individual samples by comparing generated examples against data, whereas $\Theta$ acts on the generated distribution as a whole and pays the generator the spectral novelty reward $\lambda\Gen{Q}(x)$, up to a sample-independent constant, for occupying directions that the current generated law neglects (Lemma~\ref{lem:best-response}). We also highlight that freezing $\Theta$ at its best response is not a heuristic: the gradients of the generator parameters through the frozen payoff are \emph{exactly} the gradients of $\lambda\He(Q_\vartheta)$ (Proposition~\ref{prop:entropy-grad}, Appendix~\ref{app:training}). This envelope-type identity removes the need to differentiate through the eigendecomposition.

\section{The Entropy Wall: Spectral Entropy of Real Data as the Boundary between Imitation and Imagination}
\label{sec:wall}

\begin{figure*}[h]
    \centering
    \begin{minipage}[t]{0.485\textwidth}
        \centering
        \includegraphics[width=\linewidth]
            {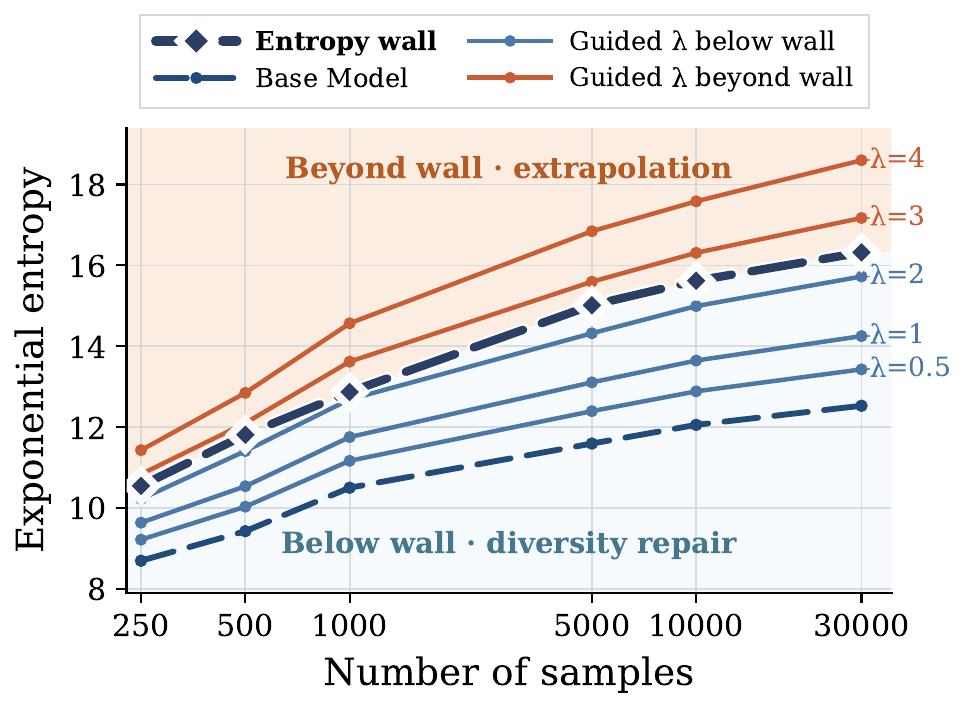}\\[-1mm]
        \small (a) CelebA-HQ
    \end{minipage}
    \hfill
    \begin{minipage}[t]{0.485\textwidth}
        \centering
        \includegraphics[width=\linewidth]
            {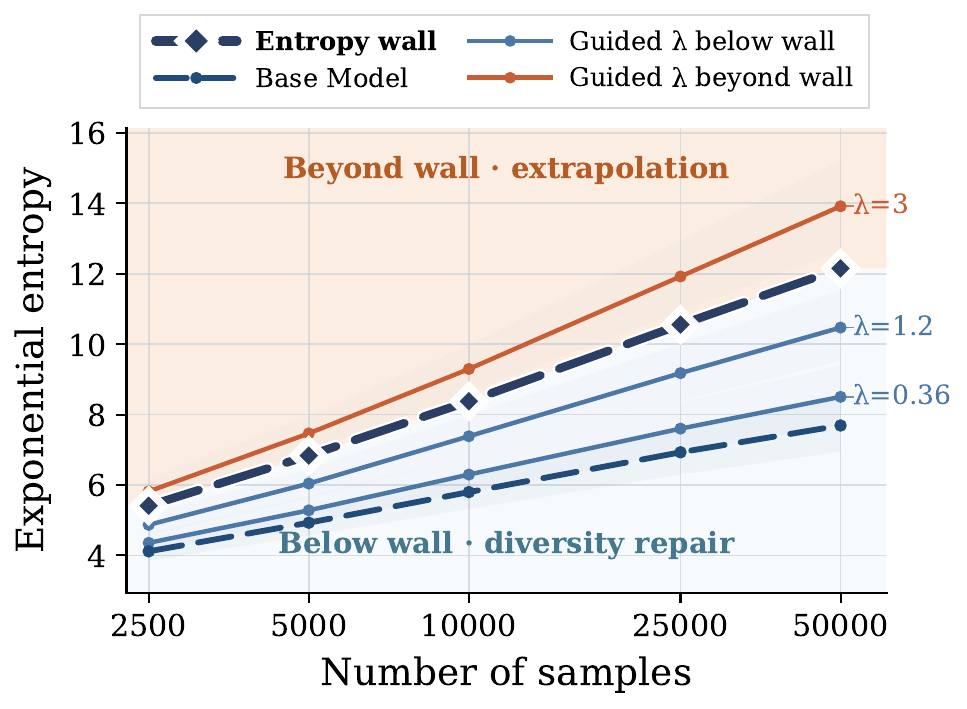}\\[-1mm]
        \small (b) ImageNet
    \end{minipage}
    \caption{
        \textbf{The Entropy walls on CelebA-HQ and ImageNet.}
        Real and generated entropy estimates use matched sample sizes.
        Both base models remain below the data wall; increasing
        $\text{IGA }\lambda$ closes the deficit and eventually crosses into the
        imagination regime.
    }
    \label{fig:vendi-sample-size}
\end{figure*}

As we discussed in the introduction, the entropy wall is the spectral diversity of the data itself, measured in the chosen representation. We note that the concept of the entropy wall is implied by the discussion in \citep{farnia2026exposing}, in which the authors reveal the spectral entropy gap between the standard generative models and their target underlying data distributions. In this section, we formalize the concept and propose the term \textit{"Entropy Wall"} to highlight the spectral entropy level of the underlying real data distribution.

Particularly, we highlight that the definition of entropy wall separates two qualitatively different uses of the IGA regularization framework: as long as the required spectral diversity level in IGA stays at or below what the data exhibits, increasing diversity can be read as \emph{repairing} a deficiency of the learned generator; once the request exceeds it, the data distribution is no longer feasible for~\eqref{eq:Crho}, and pushing further is deliberate \emph{extrapolation} beyond the data. Like every diversity statement in this paper, the wall's location depends on the fixed pair $(\phi,k)$ and is therefore representation-relative.

\begin{definition}[Entropy Wall]\label{def:wall}
For spectral entropy function $H$, the entropy wall is
\[
    \rho_{\star,H}\defeq H(\Pdata)
\]
Based on the above definition, a distribution $Q$ is below, on, or beyond the entropy wall according as $H(Q)$ is $<$, $=$, or $>\rho_{\star,H}$.
\end{definition}

Reading the regularization path of Theorem~\ref{thm:mono} through the wall gives it the statistical interpretation promised in the introduction. Note that we use the notation $Q_\lambda$ for the optimal solution to the problem with Lagrangian coefficient $\lambda$.

\paragraph{Below the wall: diversity repair.}
When $H(Q_\lambda)\le H(\Pdata)$, increasing $\lambda$ moves $Q_\lambda$ toward higher diversity, and this movement is provably safe for a base-anchored, correctly oriented Bregman objective: every below-wall point on the path is no farther from $\Pdata$, in the anchoring divergence, than the base model is (Theorem~\ref{thm:repair}, Appendix~\ref{app:wall}). This is the precise sense in which the sub-wall path performs \emph{repair}, counteracting the spectral contraction reported in modern generators~\citep{farnia2026exposing}. It does not identify $Q_\lambda$ with $\Pdata$, nor does it guarantee that every induced semantic change recovers a genuine data mode.

\paragraph{Beyond the wall: spectral extrapolation.}
When $H(Q_\lambda)>H(\Pdata)$, the same monotone increase in $\lambda$ means something different: $Q_\lambda$ is no longer estimating $\Pdata$ but performing representation-relative extrapolation, spreading its mass across directions of the representation more broadly than the data does. The transition is a change of statistical interpretation, not a geometric barrier: the wall can be crossed at arbitrarily small discrepancy whenever a higher-entropy direction exists in $\cP$ and the discrepancy is continuous along the mixture path toward it (Proposition~\ref{prop:crossing}, Appendix~\ref{app:wall}). Beyond the wall we claim no improved estimation of the data distribution; the regime is evaluated as controlled, representation-relative extrapolation.

\section{IGA Guidance for Score-based and Diffusion Models}
\label{sec:guidance}

\begin{figure}
    \centering
    \includegraphics[width=0.99\linewidth]{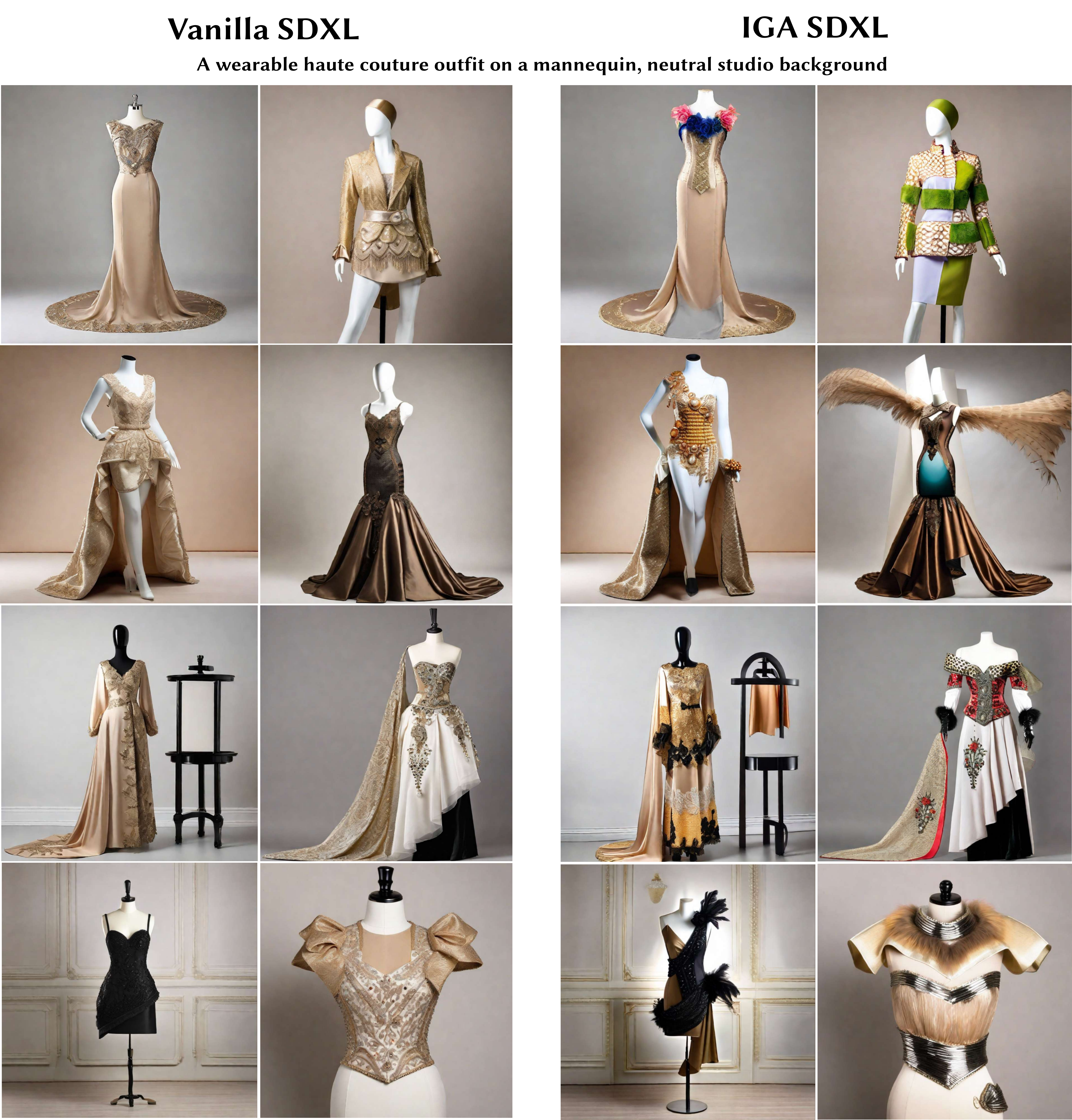}
    \caption{
        \textbf{Fashion design with vanilla and IGA SDXL.}
        Left: vanilla SDXL. Right: IGA SDXL. Corresponding cells use the same
        initial noise seed. Vanilla samples cluster around beige and gold
        eveningwear with familiar gown and tailored shapes. IGA adds bright
        color blocking, asymmetric cuts, mixed materials, and large sculptural
        or feathered elements.
    }
    \label{fig:IGA-sdxl-fashion}
\end{figure}

\begin{figure}
    \centering
    \includegraphics[width=0.99\linewidth]{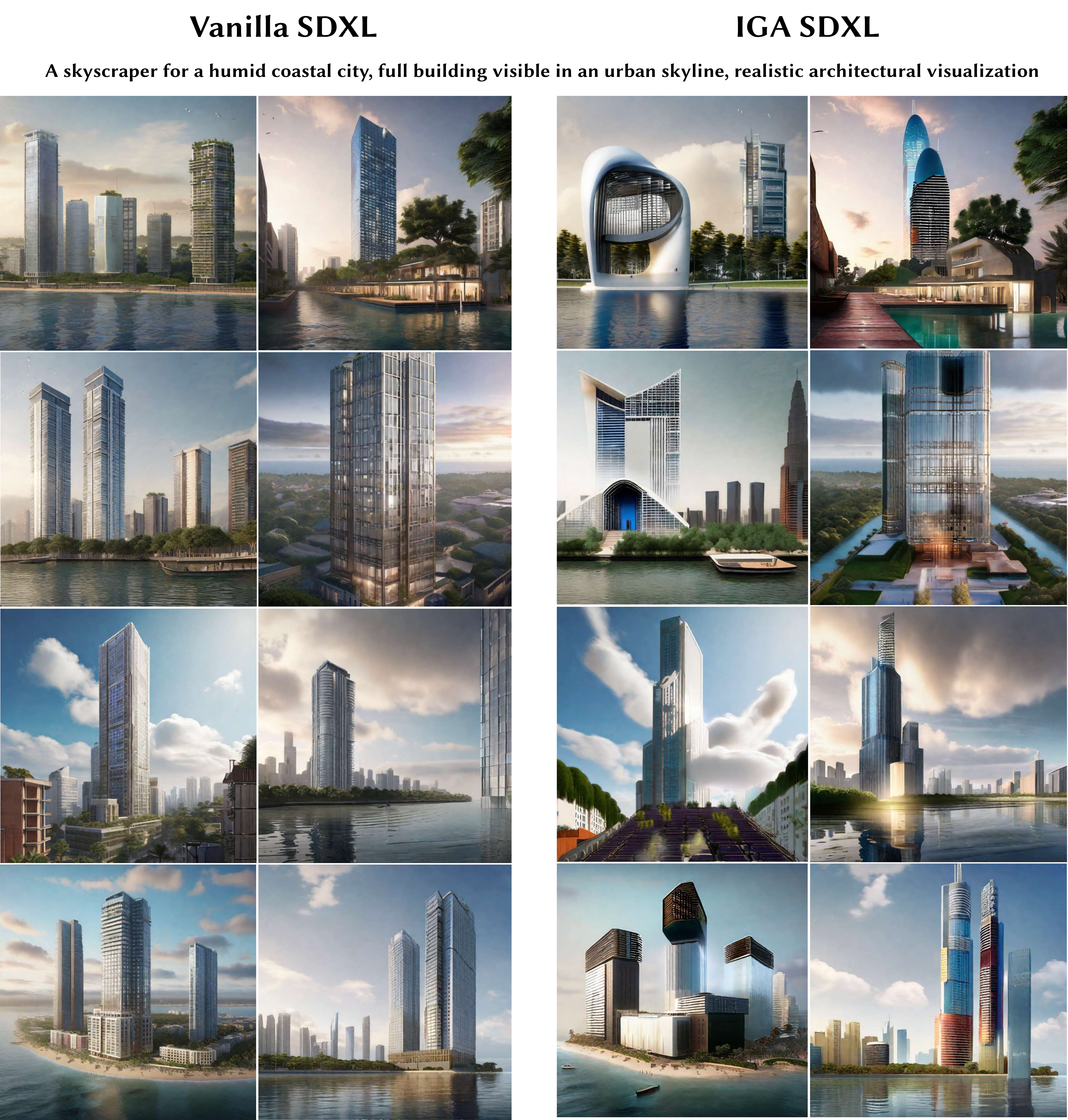}
    \caption{
        \textbf{Architectural design with vanilla and IGA SDXL.}
        Left: vanilla SDXL. Right: IGA SDXL. Corresponding cells use the same
        initial noise seed. Vanilla SDXL mainly produces straight glass towers
        with similar overall forms. IGA introduces curved shells, open frames,
        split tops, stacked blocks, and larger changes in color and proportion,
        while preserving a clear full-building view.
    }
    \label{fig:IGA-sdxl-sky}
\end{figure}

\begin{figure}
    \centering
    \includegraphics[width=0.99\linewidth]
        {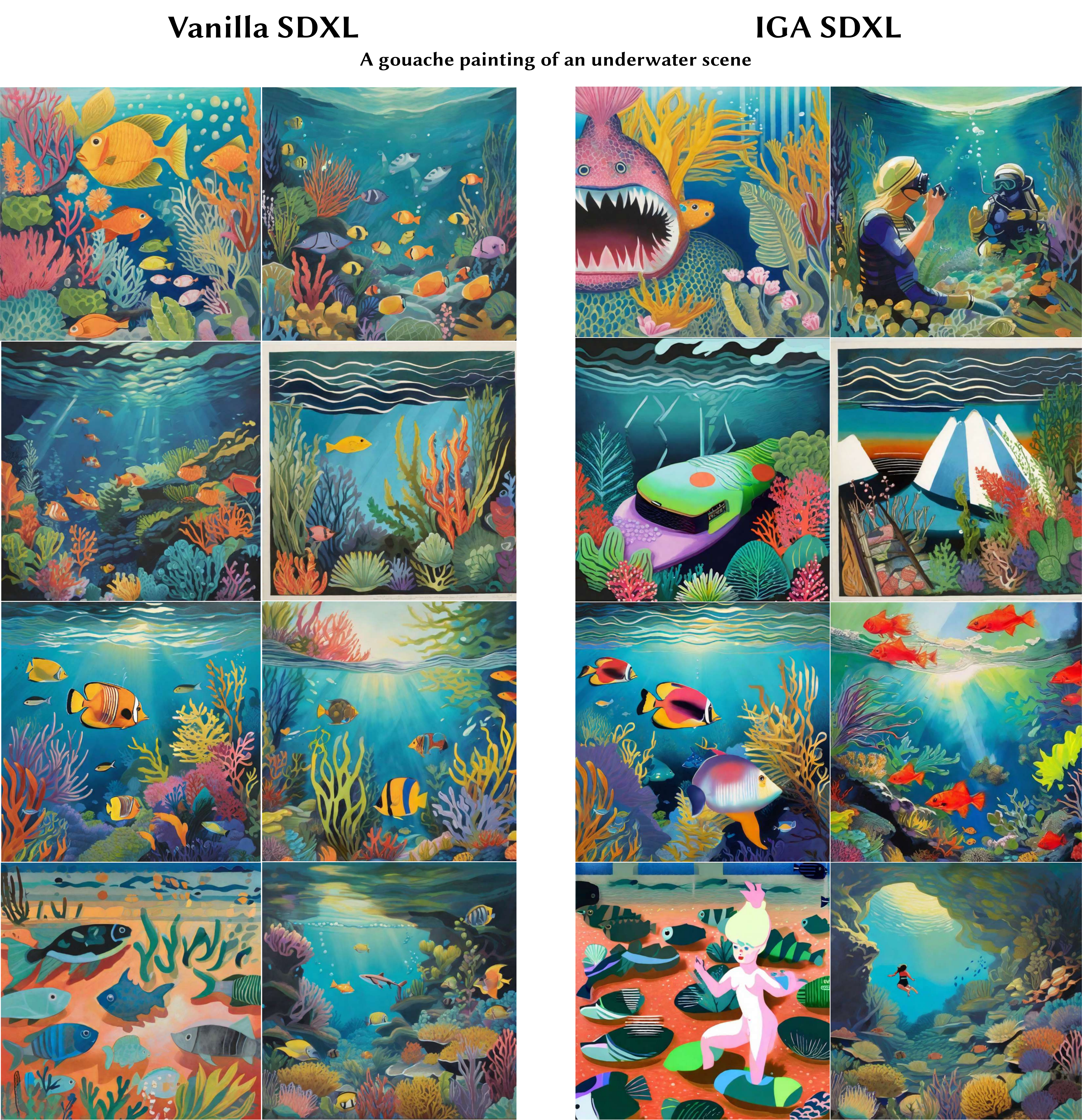}
    \caption{
        \textbf{Stylized underwater scenes with vanilla and IGA SDXL.}
        Left: vanilla SDXL. Right: IGA SDXL. Corresponding cells use the same
        initial noise seed. Vanilla SDXL mostly depicts coral reefs and schools
        of fish. IGA expands the scene content to divers, large creatures,
        vehicles, built structures, and cave-like spaces, while keeping the
        gouache rendering style.
    }
    \label{fig:IGA-sdxl-underwater}
\end{figure}

\begin{figure}
    \centering
    \includegraphics[width=0.99\linewidth]{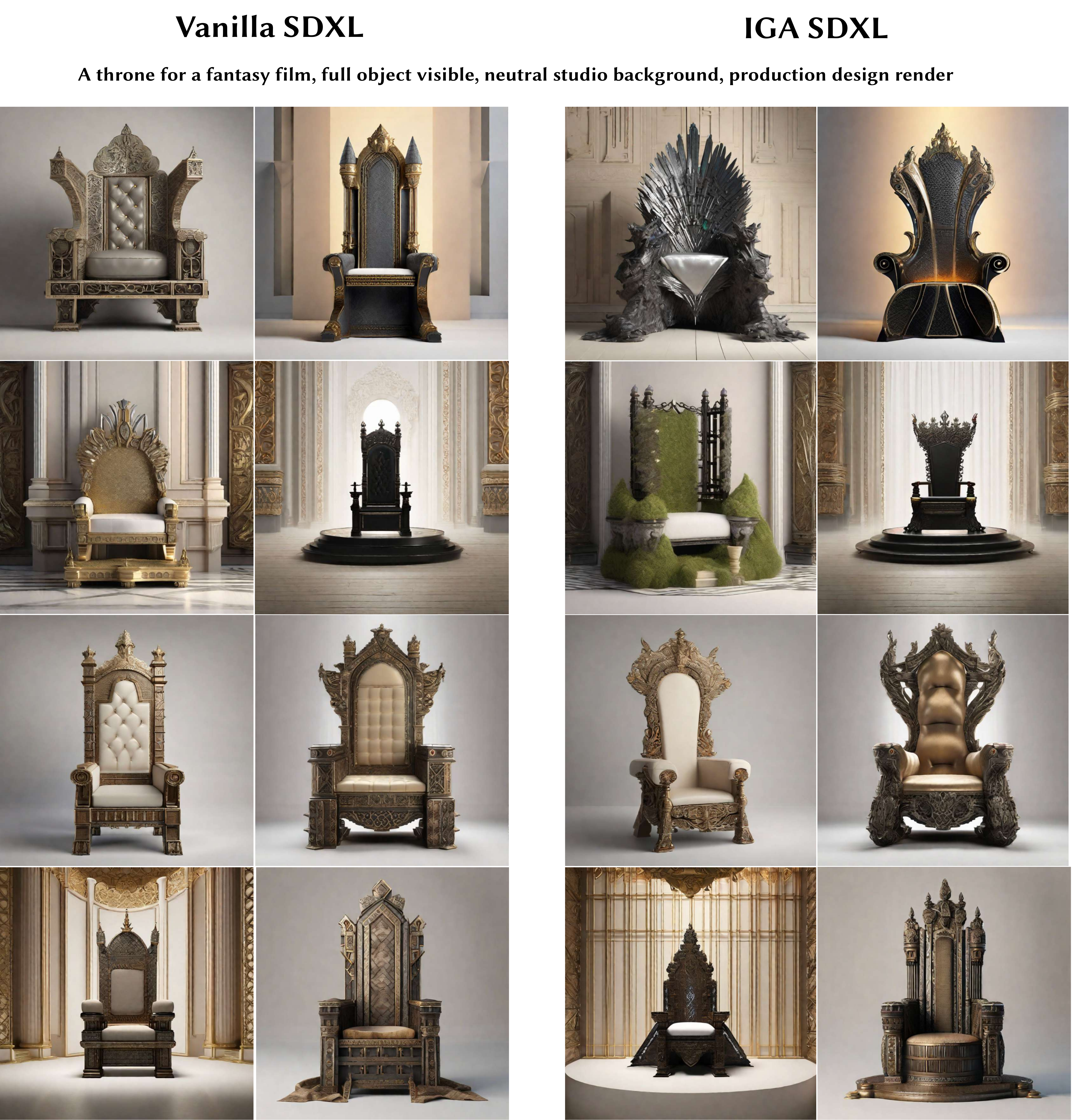}
    \caption{
        \textbf{Fantasy throne design with vanilla and IGA SDXL.}
        Left: vanilla SDXL. Right: IGA SDXL. Corresponding cells use the same
        initial noise seed. Vanilla SDXL mostly returns ornate high-backed
        chairs with similar carved frames. IGA introduces spiked metal forms,
        curved black shells, moss-covered structures, and larger changes in the
        seat and back, while keeping the throne centered and fully visible.
    }
    \label{fig:IGA-sdxl-throne}
\end{figure}

\begin{figure}
    \centering
    \includegraphics[width=0.99\linewidth]{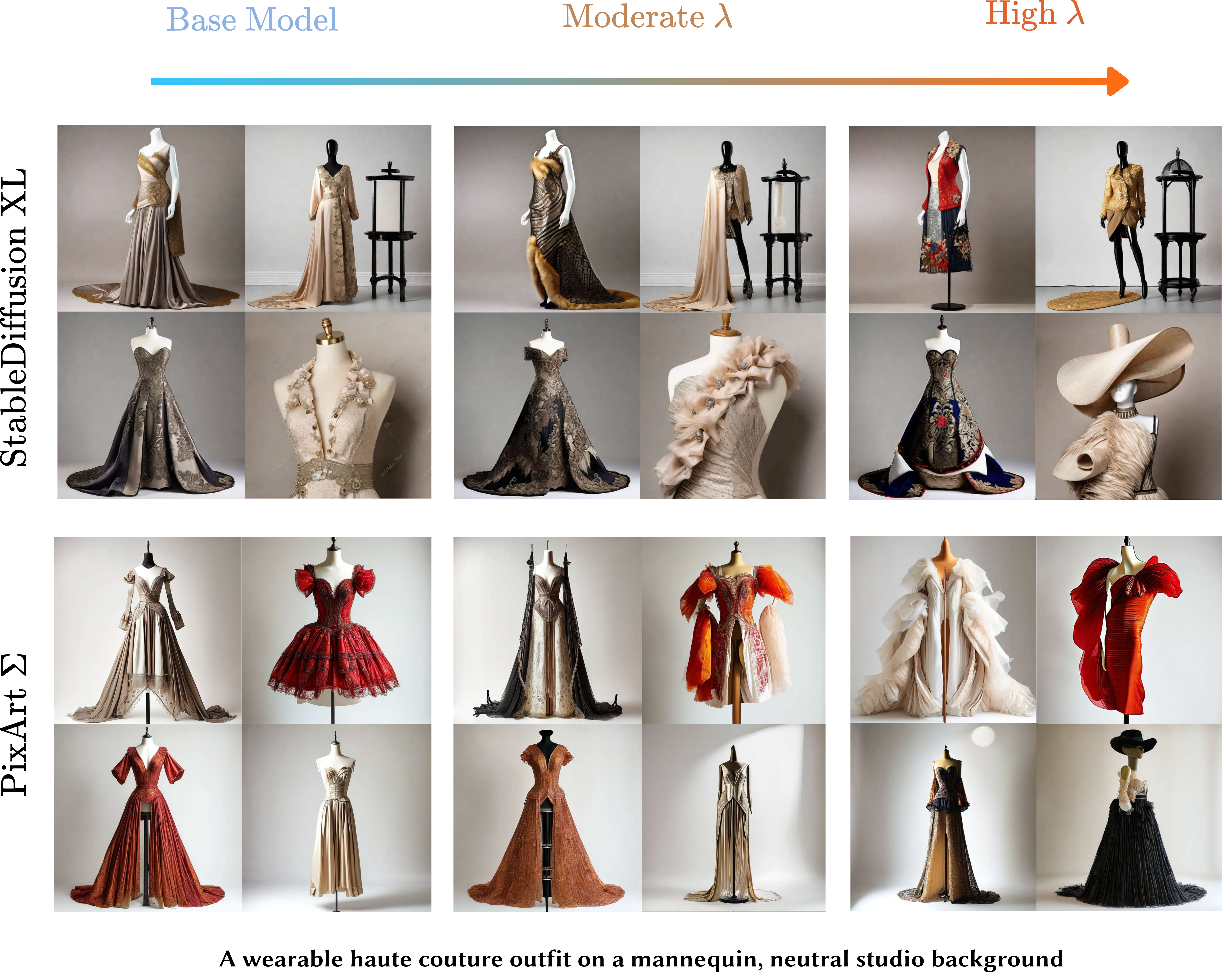}
    \caption{
        \textbf{IGA $\lambda$-sweep across SDXL and PixArt-$\Sigma$.}
        Top: SDXL with $\lambda\in\{0,4,8\}$.
        Bottom: PixArt-$\Sigma$ with $\lambda\in\{0,10,20\}$.
        Within each model, corresponding positions use the same initial noise
        seed. As $\lambda$ increases, both models move from familiar dress
        shapes toward stronger asymmetry, larger volumes and accessories, and
        wider material and color choices, while staying consistent with the
        prompt.
    }
    \label{fig:IGA-text-model-sweep}
\end{figure}

Here, we focus on diffusion models and apply sampling-time IGA to pre-trained score-based and diffusion models. The developments in this section largely build on the score-function characterization established in Corollary~\ref{cor:score-tilt}. We first establish existence and uniqueness of the tilted target $Q^\star$, then propagate the tilt through the forward noising process, and finally derive retraining-free approximations for score-based, DDPM, and DDIM samplers.

\subsection{The target distribution and exact guidance}
\label{subsec:target}

\begin{assumption}[Sampling-time setup]\label{ass:guidance}
Given the sample space $\cX$, the representation map $\phi$ is continuous, hence bounded by the unit-norm normalization. $P_\theta$ is the original distribution of the pretrained generator. Also, as stated in previous sections, we suppose the hyperparameters satisfy $\lambda\ge0$, $\varepsilon\in(0,1)$.
\end{assumption}

Under Assumption~\ref{ass:guidance}, we minimize the functional $F$ of~\eqref{eq:IGA-sampling-kl} over $\cP$. The KL anchor makes the problem tractable for two reasons: the effective domain $\{Q\ll P_\theta\}$ is convex, and the first variation of $\KL(\cdot\| P_\theta)$ along a density perturbation is simply the log-density ratio. The main difficulty is that $\He$ is highly nonlinear in $\Sigma_Q$: although $\Sigma_Q$ is affine in $Q$, the matrix logarithm couples all eigenvalues. The following lemma resolves this difficulty and justifies the role assigned to the entropy energy at its definition~\eqref{eq:energy-def}; it is the analytical core of Proposition~\ref{prop:kl-tilt} and of the existence theorem below.

\begin{lemma}[First variation of the smoothed entropy]\label{lem:firstvar}
Let $\phi$ be bounded and measurable with $\norm{\phi(x)}_2=1$, and let $\Gen{Q}$ be the entropy energy~\eqref{eq:energy-def}. Then for every distribution $Q$ and every finite signed measure $\nu$ with $\nu(\cX)=0$ such that $Q+t\nu$ is a probability measure for all sufficiently small $t>0$,
\[
    \frac{d}{dt}\,\He(Q+t\nu)\Big|_{t=0^+}
    =\int \Gen{Q}(x)\,d\nu(x).
\]
\end{lemma}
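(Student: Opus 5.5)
The proof reduces to a matrix chain rule. The first step is to note that $Q\mapsto\Sigma_Q$ is affine. Hence
\[
  \Sigma_{Q+t\nu}=\Sigma_Q+t\,M_\nu,\qquad M_\nu\defeq\int\phi(x)\phi(x)^\top\,d\nu(x).
\]
The matrix $M_\nu$ is well defined, since $\phi$ is bounded and $\nu$ is a finite signed measure. It is symmetric and satisfies $\Tr(M_\nu)=\nu(\cX)=0$ by the unit-norm normalization. Consequently $S^\varepsilon_{Q+t\nu}=S_Q^\varepsilon+t(1-\varepsilon)M_\nu$ is an affine curve of symmetric matrices. Since $Q+t\nu$ is a probability measure for small $t>0$, each $S^\varepsilon_{Q+t\nu}$ is a genuine smoothed density matrix with eigenvalues at least $\varepsilon/d$. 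Even at $t=0$ the base point $S_Q^\varepsilon$ lies strictly inside the positive definite cone, so the curve stays in a compact subset of the open cone $\{S\succ0\}$ for $t\in[0,t_0]$.

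The second step is to differentiate $f(S)=-\Tr(S\log S)$ on the open cone. I would use the standard trace-function fact: for a $C^1$ function $g$ on $(0,\infty)$ and symmetric $S\succ0$, the map $S\mapsto\Tr g(S)$ is Fréchet differentiable, with derivative in direction $A$ equal to $\Tr(g'(S)A)$. This can be proved by diagonalizing $S$, or via the Daleckii–Krein formula, whose off-diagonal divided differences drop out under the trace. Applying it to $g(s)=-s\log s$, with $g'(s)=-\log s-1$, gives
\[
  Df(S)[A]=-\Tr\bigl((\log S+I)A\bigr).
\]
Taking $A=(1-\varepsilon)M_\nu$, the identity term vanishes because $\Tr(M_\nu)=0$. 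This leaves
\[
  \frac{d}{dt}\He(Q+t\nu)\Big|_{0^+}=(1-\varepsilon)\Tr\bigl(-\log S_Q^\varepsilon\,M_\nu\bigr).
\]
Because the curve is affine and $f$ is differentiable at the interior point $S_Q^\varepsilon$, the one-sided derivative exists and equals this expression.

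The final step is a Fubini/linearity exchange:
\[
  \Tr\bigl(-\log S_Q^\varepsilon\,M_\nu\bigr)=\int\phi(x)^\top\bigl(-\log S_Q^\varepsilon\bigr)\phi(x)\,d\nu(x).
\]
This is justified because the integrand is bounded by $\log(d/\varepsilon)$ and $|\nu|$ is finite. Multiplying by $(1-\varepsilon)$ yields exactly $\int\Gen{Q}\,d\nu$.

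The main obstacle is the trace-derivative step, that is, rigorously justifying $D\Tr g(S)[A]=\Tr(g'(S)A)$ without commutativity of $S$ and $A$. I would handle it by the integral representation
\[
  \log S=\int_0^\infty\Bigl[(1+u)^{-1}I-(S+uI)^{-1}\Bigr]du,
\]
differentiating the resolvent and using cyclicity of the trace. Alternatively, one can write $\Tr g(S)=\frac{1}{2\pi i}\oint g(z)\Tr(z-S)^{-1}dz$ on a contour around $[\varepsilon/d,1]$ that avoids $0$. This contour is available precisely because of the spectral floor supplied by the smoothing.
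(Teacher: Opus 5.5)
Your proposal is correct and follows essentially the same route as the paper. Both arguments use the affine dependence $S^\varepsilon_{Q+t\nu}=S^\varepsilon_Q+t(1-\varepsilon)\int\phi\phi^\top d\nu$ and the Fr\'echet derivative $-\Tr\bigl((\log S+I)B\bigr)$ of the matrix entropy at the positive definite point $S^\varepsilon_Q$, and both cancel the identity term via $\norm{\phi}_2=1$ and $\nu(\cX)=0$. The only difference is that you also justify the trace-derivative formula (through the resolvent or contour representation), which the paper simply asserts as standard.
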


The lemma therefore identifies the total reward $R_Q$ of~\eqref{eq:energy-reward} as the first variation of the reward part $\lambda\He(Q)$ of the objective (proof and further discussion in Appendix~\ref{app:guidance}).

\begin{theorem}\label{thm:tilt}
Under Assumption~\ref{ass:guidance}, the following hold for the functional $F$ of~\eqref{eq:IGA-sampling-kl}.
\begin{enumerate}[leftmargin=*]
    \item[\textup{(i)}] \textbf{Existence and uniqueness.} $F$ has a unique minimizer $Q^\star$, and $Q^\star$ is mutually absolutely continuous with $P_\theta$.
    \item[\textup{(ii)}] \textbf{Self-consistent exponential tilt.} $Q^\star$ satisfies the tilt characterization~\eqref{eq:kl-tilt}.
    \item[\textup{(iii)}] \textbf{Boundedness.} $R_{Q^\star}$ is uniformly bounded; consequently $dQ^\star/dP_\theta$ is bounded above and below by positive constants, and $Q^\star$ has the same $P_\theta$-essential support as $P_\theta$.
    \end{enumerate}
\end{theorem}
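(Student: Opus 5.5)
The proof has three stages: existence, uniqueness, and the tilt characterization. Existence comes first and is the main obstacle.

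\textbf{Existence.} We work on the effective domain $\{Q\ll P_\theta\}$ and identify each $Q$ with its density $f=dQ/dP_\theta\in L^1(P_\theta)$. The term $\He(Q)$ depends on $Q$ only through $\Sigma_Q=\int \phi\phi^\top f\,dP_\theta$. This is a continuous linear functional of $f$, because $\phi$ is bounded. Moreover, $S\mapsto -\Tr(S\log S)$ is continuous on density matrices. Hence $\He$ is weakly continuous in $L^1(P_\theta)$, and it is bounded by $\log d$. We then take a minimizing sequence $(f_k)$. Since $\He\le\log d$, the values $\KL(Q_k\|P_\theta)$ are bounded. By de la Vallée-Poussin ($t\log t$ is superlinear), the densities $f_k$ are uniformly integrable. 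Dunford--Pettis then gives a weakly convergent subsequence in $L^1(P_\theta)$, whose limit is again a probability density. KL is convex and strongly lower semicontinuous, hence weakly l.s.c.; this is the standard fact that the sublevel sets of $\int f\log f$ are weakly compact. Combined with the continuity of $\He$, the limit is a minimizer. Note that continuity of $\phi$ is not even needed here beyond measurability and boundedness.

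\textbf{Uniqueness.} $\KL(\cdot\|P_\theta)$ is strictly convex on its finite domain, and $-\lambda\He$ is convex by the concavity in Lemma~\ref{lem:concave}, which transfers to $\He$ through the affine map $Q\mapsto S^\varepsilon_Q$. Therefore $F$ is strictly convex, and the minimizer $Q^\star$ is unique.

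\textbf{Tilt characterization and mutual absolute continuity.} Set $f^\star=dQ^\star/dP_\theta$. For any probability $Q'\ll P_\theta$ with $\KL(Q'\|P_\theta)<\infty$, we perturb along $Q^\star+t(Q'-Q^\star)$. Lemma~\ref{lem:firstvar} gives the derivative of the entropy part, $\int \Gen{Q^\star}\,d(Q'-Q^\star)$. The derivative of the KL part is the usual one-sided derivative, $\int \log f^\star\, d(Q'-Q^\star)$, which may equal $-\infty$ when $Q'$ charges the set $\{f^\star=0\}$. Optimality forces the total derivative to be $\ge 0$. Taking $Q'=P_\theta$ first rules out $P_\theta(f^\star=0)>0$, because the KL derivative would be $-\infty$ while the energy term stays bounded. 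This yields mutual absolute continuity. Taking $Q'$ with arbitrary bounded densities then shows that $\log f^\star-\lambda\Gen{Q^\star}$ equals its $Q^\star$-mean $P_\theta$-almost everywhere. Normalizing gives \eqref{eq:kl-tilt}; this establishes (ii) and also confirms the hypotheses of Proposition~\ref{prop:kl-tilt}, which could alternatively be invoked once existence is known.

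\textbf{Boundedness.} Part (iii) follows from the spectral floor: $0\le \Gen{Q^\star}\le(1-\varepsilon)\log(d/\varepsilon)$. Hence $R_{Q^\star}\in[0,\lambda(1-\varepsilon)\log(d/\varepsilon)]$, and the tilt density lies between $e^{-\lambda(1-\varepsilon)\log(d/\varepsilon)}$ and $e^{\lambda(1-\varepsilon)\log(d/\varepsilon)}$. From these two-sided bounds, equality of the $P_\theta$-essential supports is immediate.

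The delicate point in the whole argument is the existence step, specifically the weak l.s.c./compactness of the KL sublevel sets together with the continuity of $\He$ under weak $L^1$ convergence. The variational step at the possibly zero set of $f^\star$ also needs the one-sided derivative treated carefully.
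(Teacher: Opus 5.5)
Your proof is correct. It differs from the paper's mainly in the existence step.

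\textbf{Existence.} The paper works in the narrow topology of measures. It shows that KL sublevel sets are tight (Lemma~\ref{lem:kl-tight}) and extracts a weakly convergent subsequence by Prokhorov. It gets weak lower semicontinuity of $\KL(\cdot\|P_\theta)$ from the Donsker--Varadhan formula, and weak continuity of $\He$ from continuity of $\phi$. It then obtains uniqueness, mutual absolute continuity and the tilt by invoking Proposition~\ref{prop:kl-tilt}. You instead run the direct method in $\sigma(L^1,L^\infty)$ on densities: de la Vall\'ee-Poussin plus Dunford--Pettis give compactness, Mazur gives weak l.s.c.\ of $\int f\log f\,dP_\theta$, and $f\mapsto\Sigma_Q$ is weakly continuous because $\phi\phi^\top\in L^\infty$. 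This is genuinely more general. You need neither a Polish sample space nor continuity of $\phi$, only measurability and boundedness. (The paper's argument takes Polishness from the standing convention in Assumption~\ref{ass:finite-dim}, not from Assumption~\ref{ass:guidance}.) The paper's route, in exchange, stays in the topology of measures used elsewhere in the paper and reuses its tightness lemma.

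\textbf{Tilt.} Your derivation is essentially the content of the paper's proof of Proposition~\ref{prop:kl-tilt}. The difference is that you use one-sided convex-combination perturbations instead of the paper's two-sided multiplicative ones $dQ_t=(1+th)\,dQ^\star$. This is slightly more robust. Monotone convergence of the convex difference quotients gives the one-sided KL derivative in $[-\infty,\infty)$ without a domination argument. Its finiteness at the optimum is exactly what makes $\log f^\star$ integrable against bounded densities.

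\textbf{Step left implicit.} You do not spell out the passage from $\int\psi\,d(Q'-Q^\star)\ge 0$, with $\psi=\log f^\star-\lambda G^{\varepsilon}_{Q^\star}$, to $\psi$ being constant. Take $Q'=P_\theta(\cdot\mid A)$ for arbitrary $A$ with $P_\theta(A)>0$; this gives $\psi\ge c:=\E_{Q^\star}\psi$ $P_\theta$-a.e. Then $\int(\psi-c)\,dQ^\star=0$ together with $f^\star>0$ forces $\psi=c$ $P_\theta$-a.e.
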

\begin{proof}
We defer the proof to the Appendix.
\end{proof}

\eqref{eq:kl-tilt} is exactly the tilt announced in~\eqref{eq:intro-exponential-tilt}, with $G_{Q^\star}=\Gen{Q^\star}$. The theorem verifies the finite-minimum hypothesis of Proposition~\ref{prop:kl-tilt} rather than assuming it: the proof, given in Appendix~\ref{app:guidance}, establishes existence by the tightness of KL sublevel sets on a Polish space together with weak lower semicontinuity, and then inherits uniqueness, mutual absolute continuity, and the tilt formula from that proposition. What the theorem adds is that the target is well defined without any attainment hypothesis and that the reward is uniformly bounded, so the tilt redistributes mass within the $P_\theta$-essential support and creates none (Remark~\ref{rem:tilt-scope}).

The tilt~\eqref{eq:kl-tilt} concerns the clean distribution over $x_0$, but diffusion samplers generate $x_0$ as the endpoint of a denoising process that starts from noise at time $T$, so guidance must be injected at every noise level $t$. Then, we let $w(x_0)=\exp(R_{Q^\star}(x_0))$ and $Z=\E_{P_\theta}w(X_0)$, so that $Q^\star(dx_0)=\frac{1}Z w(x_0)P_\theta(dx_0)$, and let $K_t(dx_t\,\vert\,  x_0)$ denote the forward noising kernel with time-$t$ marginals $p_t,q_t^\star$ under $P_\theta,Q^\star$.

\begin{theorem}[Derivation of the exact guidance field]\label{thm:twist}
Let $h_t(x_t)=\E_{P_\theta}[w(X_0)\mid X_t=x_t]$. Then $\frac{dq_t^\star}{dp_t}=\frac{1}Z h_t$, and if $p_t,q_t^\star$ admit positive differentiable densities, then we have
\begin{equation}
    \nabla\log q_t^\star(x_t)=\nabla\log p_t(x_t)+u_t(x_t),
    \qquad
    u_t(x_t)\defeq \nabla_{x_t}\log h_t(x_t).
\end{equation}
\end{theorem}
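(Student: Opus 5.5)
The plan is to identify $q_t^\star$ by disintegrating the joint law of $(X_0,X_t)$ under the forward process, and then to obtain the score identity by differentiating a logarithm; no property of $w$ is needed beyond positivity and integrability. Theorem~\ref{thm:tilt}(iii) makes $R_{Q^\star}$ uniformly bounded, so $w=\exp(R_{Q^\star})$ is bounded above and below by positive constants. Hence $Z\in(0,\infty)$, and $h_t$ is a well-defined version of the conditional expectation with values in $[\inf w,\sup w]$.

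\textbf{Step 1: the time-$t$ density ratio.} Both clean laws are pushed through the same forward kernel $K_t(dx_t\mid x_0)$. The time-$t$ marginals are therefore
\[
q_t^\star(A)=\int K_t(A\mid x_0)\,Q^\star(dx_0)=\tfrac1Z\int K_t(A\mid x_0)\,w(x_0)\,P_\theta(dx_0)
\]
for measurable $A$. Under $P_\theta$ the right-hand side equals $\tfrac1Z\E_{P_\theta}[w(X_0)\1_A(X_t)]$. Conditioning on $X_t$ by the tower property gives $\tfrac1Z\E_{P_\theta}[h_t(X_t)\1_A(X_t)]=\int_A \tfrac1Z h_t\,dp_t$. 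Since $A$ is arbitrary, $dq_t^\star/dp_t=h_t/Z$ holds $p_t$-a.e. Because $h_t$ is bounded away from $0$ and $\infty$, $q_t^\star$ and $p_t$ are also mutually absolutely continuous.

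\textbf{Step 2: the score identity.} When both marginals have positive densities, Step 1 gives $q_t^\star=p_t h_t/Z$ almost everywhere. I would fix the version $h_t\defeq Z\,q_t^\star/p_t$, which is legitimate because $h_t$ is only defined up to $p_t$-null sets. With this choice $h_t$ is positive and differentiable, as a ratio of positive differentiable densities, and $\log q_t^\star=\log p_t+\log h_t-\log Z$ holds everywhere. Differentiating in $x_t$, and using that $Z$ does not depend on $x_t$, yields $\nabla\log q_t^\star=\nabla\log p_t+\nabla_{x_t}\log h_t$, which is the claimed formula with $u_t=\nabla_{x_t}\log h_t$. This mirrors the argument for Corollary~\ref{cor:score-tilt}.

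\textbf{Main obstacle.} The only delicate point is the choice of version of $h_t$. An arbitrary version of the conditional expectation need not be differentiable, so the formula must be read for the continuous version fixed above. For a Gaussian kernel one can also justify it directly: write $h_t(x_t)=\int w\,K_t(x_t\mid x_0)\,P_\theta(dx_0)/p_t(x_t)$ and differentiate under the integral sign, using dominated convergence with the boundedness of $w$. I would include the version argument as the main proof and give the Gaussian case as a remark.
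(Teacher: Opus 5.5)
Your proposal is correct and follows essentially the same route as the paper. Your tower-property computation of $q_t^\star(A)=\tfrac1Z\int_A h_t\,dp_t$ is the paper's disintegration $K_t(dx_t\mid x_0)P_\theta(dx_0)=P_\theta(dx_0\mid x_t)\,p_t(dx_t)$ combined with Fubini, justified by boundedness of $w$ from Theorem~\ref{thm:tilt}; the score identity then follows identically by taking logarithms and differentiating, and your explicit choice of the version $h_t=Z\,q_t^\star/p_t$ usefully makes precise a point the paper leaves implicit when it writes $\log q_t^\star=\log p_t+\log h_t-\log Z$ pointwise.
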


Note that the function $h_t$ averages the clean-sample reward $w$ over all origins that the base posterior regards as plausible for $x_t$. We emphasize that $h_t$ is \emph{not} the clean tilt evaluated at a denoised point estimate, and this distinction is what makes the identity exact: applying the tilt before the noising process does not commute with applying it afterward. Adding $u_t=\nabla\log h_t$ to the base score yields a reverse process whose marginals match $q_t^\star$ at \emph{every} noise level (Theorem~\ref{thm:reverse}, Appendix~\ref{app:guidance}); for continuous-time samplers, the guided score can be used directly in the reverse SDE or the probability-flow ODE.

One caveat accompanies this exactness. The exact reverse process must be initialized at $q_T^\star$, which is not directly samplable, whereas practical samplers initialize from $p_T$ (typically Gaussian noise). The two distributions coincide only when $h_T$ is constant, i.e.\ when the terminal noise level has erased all reward information. The resulting mismatch enters the end-to-end bound of Theorem~\ref{thm:endpoint} as the initialization term $\KL(p_T\| q_T^\star)$, and its magnitude is quantified in Remark~\ref{rem:init}.

\subsection{Practical diffusion guidance}
\label{subsec:practical}

There exist three approximation items that separate the discussed theoretical framework from an implementable sampler. In the following, we make each one explicit and discuss how to address it.

\paragraph{Plug-in guidance fields.}
The exact field $u_t=\nabla\log h_t$ requires the gradient of a conditional log-moment-generating function under the base posterior $P_\theta(dx_0\mid x_t)$, which is generally intractable. What is available at every noise level is a denoiser $\widehat x_0(x_t,t)\approx\E[X_0\mid X_t=x_t]$, and the \emph{plug-in} approximation substitutes this point estimate for the posterior average. Two variants differ in how the reward is turned into a vector field: the \emph{chain-rule} variant differentiates $x_t\mapsto R_{Q^\star}(\widehat x_0(x_t,t))$ through the denoiser Jacobian $J_{\widehat x_0}$, while the cheaper \emph{direct-injection} variant reuses the clean-space gradient as a direction in noisy-sample space:
\[
    \widetilde u_t^{\mathrm{chain}}(x_t)
    =\omega_t J_{\widehat x_0}(x_t,t)^\top
     \nabla_x R_{Q^\star}\bigl(\widehat x_0(x_t,t)\bigr),
    \qquad
    \widetilde u_t^{\mathrm{dir}}(x_t)
    =\omega_t\nabla_x R_{Q^\star}\bigl(\widehat x_0(x_t,t)\bigr),
\]
with guidance scale $\omega_t\ge0$. Both are heuristics without a general error bound: $R_{Q^\star}$ is nonlinear, and neither conditional expectation nor differentiation commutes with a point-mass substitution (Definition~\ref{def:plugin}, Remark~\ref{rem:plugin}). We always report which variant is used.

\paragraph{Estimating the self-referential reward.}
The reward $R_{Q^\star}$ depends on the unknown covariance $S_{Q^\star}^\varepsilon$, so a practical sampler replaces it by an estimate, and how the estimate is maintained determines the statistical status of the outputs. If the covariance is \emph{frozen}, i.e., computed once from a pilot batch and used to define a single estimated potential $\widehat R$ for all subsequent trajectories, the draws are conditionally i.i.d.\ from the frozen-potential law. If instead the covariance is recomputed on the fly from the batch being generated, each particle's drift depends on the others, and the outputs form an exchangeable but non-i.i.d.\ interacting particle system (Remark~\ref{rem:iid}). Frozen-potential estimation is therefore the setting in which IGA guidance can be described as sampling from a well-defined target distribution.

\paragraph{Discrete sampler updates.}
Once a guidance field $\widetilde u_t$ is chosen, it is converted to a correction on the noise prediction. In the $\varepsilon$-prediction parameterization, with the standard
noise--score convention
$\nabla\log p_t(x_t)=-\varepsilon_\theta(x_t,t)/
\sqrt{1-\bar\alpha_t}$~\citep{song2021score}, guiding the score by $+\widetilde u_t$ corresponds to
\begin{equation}\label{eq:eps-IGA}
    \varepsilon_\theta^{\mathrm{IGA}}(x_t,t)
    =\varepsilon_\theta(x_t,t)-\sqrt{1-\bar\alpha_t}\,\widetilde u_t(x_t).
\end{equation}
Substituting~\eqref{eq:eps-IGA} into the DDPM posterior
mean~\citep{ho2020denoising}
\[
\mu_\theta=
\frac{1}{\sqrt{\alpha_t}}
\left(
x_t-\frac{\beta_t}{\sqrt{1-\bar\alpha_t}}
\varepsilon_\theta
\right)
\]
and the DDIM update~\citep{song2021ddim} gives
\begin{equation}\label{eq:ddpm-ddim}
\begin{gathered}
    \mu_\theta^{\mathrm{IGA}}(x_t,t)
    =\mu_\theta(x_t,t)+\frac{\beta_t}{\sqrt{\alpha_t}}\,\widetilde u_t(x_t),
    \\
    x_{t-1}=\sqrt{\bar\alpha_{t-1}}\,\widehat x_0^{\mathrm{IGA}}
    +\sqrt{1-\bar\alpha_{t-1}-\sigma_t^2}\,\varepsilon_\theta^{\mathrm{IGA}}
    +\sigma_t z,
\end{gathered}
\end{equation}
with $\widehat x_0^{\mathrm{IGA}}=(x_t-\sqrt{1-\bar\alpha_t}\,\varepsilon_\theta^{\mathrm{IGA}})/\sqrt{\bar\alpha_t}$ and $\sigma_t=0$ for deterministic DDIM; the $\sqrt{1-\bar\alpha_t}$ in~\eqref{eq:eps-IGA} and the $\beta_t/\sqrt{\alpha_t}$ in~\eqref{eq:ddpm-ddim} cancel algebraically, so the DDPM mean correction is exactly $+(\beta_t/\sqrt{\alpha_t})\widetilde u_t$. A model trained with $v$- or $x_0$-prediction is first converted to an equivalent $\varepsilon_\theta$ in the standard way (e.g.\ $\varepsilon_\theta=\sqrt{\bar\alpha_t}\,v_\theta+\sqrt{1-\bar\alpha_t}\,x_t$ for $v$-prediction). These discrete updates are implementations inspired by Theorem~\ref{thm:twist}, and they do not exactly sample $Q^\star$ even when $\widetilde u_t=u_t$, because the reverse kernels are discretized (Remark~\ref{rem:ddpm-ddim}). The end-to-end guarantee is given by Theorem~\ref{thm:endpoint} in Appendix~\ref{app:guidance}. This theorem bounds $\KL(\widehat Q\| Q^\star)$ and $\TV(\widehat Q,Q^\star)$ for the deployed continuous-time process in terms of the initialization mismatch, the score error, and the guidance error; a separate discretization term is required for the implemented sampler.

\section{Numerical Evaluation}
\begin{figure*}[!t]
  \centering
  \captionsetup[subfigure]{font=normalsize,labelfont=bf,skip=3pt}

  \begin{subfigure}[t]{0.235\textwidth}
    \centering
    \caption{Population}
    \label{fig:toy-population}
    \includegraphics[width=\linewidth]{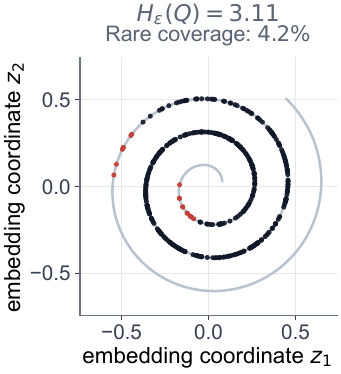}
  \end{subfigure}\hfill
  \begin{subfigure}[t]{0.235\textwidth}
    \centering
    \caption{Fitted DDPM}
    \label{fig:toy-fitted}
    \includegraphics[width=\linewidth]{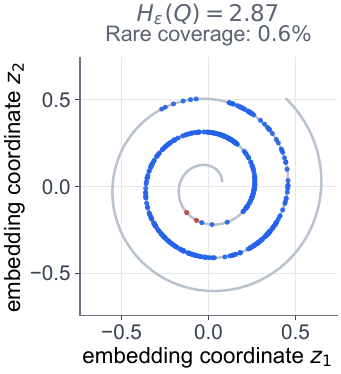}
  \end{subfigure}\hfill
  \begin{subfigure}[t]{0.235\textwidth}
    \centering
    \caption{At the entropy wall}
    \label{fig:toy-wall}
    \includegraphics[width=\linewidth]{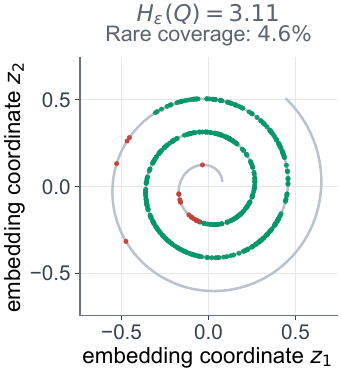}
  \end{subfigure}\hfill
  \begin{subfigure}[t]{0.235\textwidth}
    \centering
    \caption{Beyond the wall}
    \label{fig:toy-beyond}
    \includegraphics[width=\linewidth]{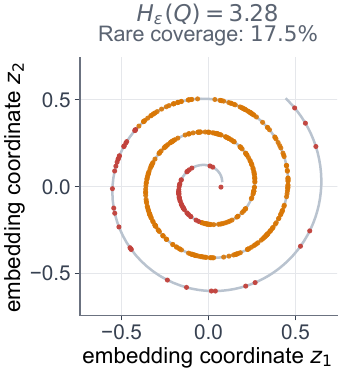}
  \end{subfigure}

  \vspace{0.35em}
  \makebox[\textwidth][c]{\small
    \toyfiglegenddot{population}\hspace{0.25em}Population\hspace{0.9em}
    \toyfiglegenddot{fitted}\hspace{0.25em}Fitted DDPM\hspace{0.9em}
    \toyfiglegenddot{entropywall}\hspace{0.25em}Entropy wall\hspace{0.9em}
    \toyfiglegenddot{beyondwall}\hspace{0.25em}Beyond wall\hspace{0.9em}
    \toyfiglegenddot{rareregion}\hspace{0.25em}Rare-region samples}
  \vspace{0.85em}

  \begin{subfigure}[t]{0.475\textwidth}
    \centering
    \caption{Distributional repair}
    \label{fig:toy-repair}
    \includegraphics[width=\linewidth]{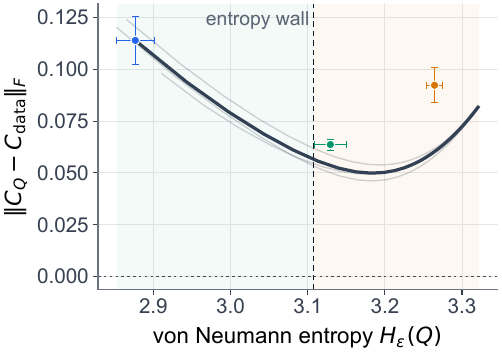}
  \end{subfigure}\hfill
  \begin{subfigure}[t]{0.475\textwidth}
    \centering
    \caption{Rare-region coverage}
    \label{fig:toy-coverage}
    \includegraphics[width=\linewidth]{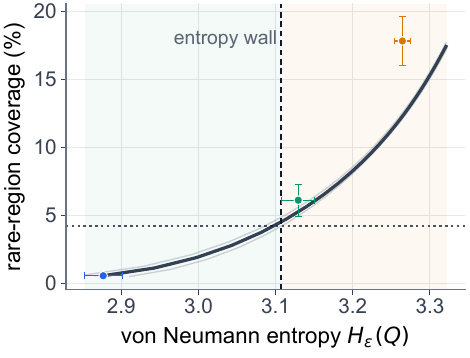}
  \end{subfigure}

  \caption{
    \textbf{Repair and extrapolation on a nonlinear manifold.}
    The fitted DDPM underrepresents the spiral endpoints. IGA restores
    population-level entropy and rare-region coverage near the wall and
    increases endpoint exploration beyond it. Panels (e,f) trace the
    covariance discrepancy and rare-region coverage; thin curves denote
    individual seeds and thick curves their mean.
  }
  \label{fig:entropy-wall-toy}
\end{figure*}

Throughout our numerical study, we aim to empirically address the following questions:

\begin{enumerate}[leftmargin=*]
    \item Do widely used pretrained generative models show a diversity deficit and existence of the entropy wall?

    \item Does our proposed framework address this deficit, and are there values of $\lambda$ that reach and go beyond the entropy wall? If so, as predicted by Theorem~\ref{thm:repair}, is there an initial repair region in which the pretrained model moves closer to the data distribution it was meant to imitate?

    \item Does crossing the wall produce structured, novel, and imaginative variations across different pretrained generative models, including text-conditional models? We test our theory and hypothesis on CelebA-HQ and ImageNet, and then ask what the resulting variation looks like in a large text-conditioned model.
    
\end{enumerate}

\subsection{Experimental Protocol}

Our experiments cover sampling-time IGA on controlled synthetic distributions and real-world image benchmarks, a training-time study on MNIST, and qualitative text-conditioned generation with SDXL. Because these settings use different models and evaluation criteria, we state only the shared experimental conventions here and introduce the setting-specific configurations in the corresponding subsections.

\begin{figure*}[!tp]
    \centering
    \includegraphics[width=\textwidth]{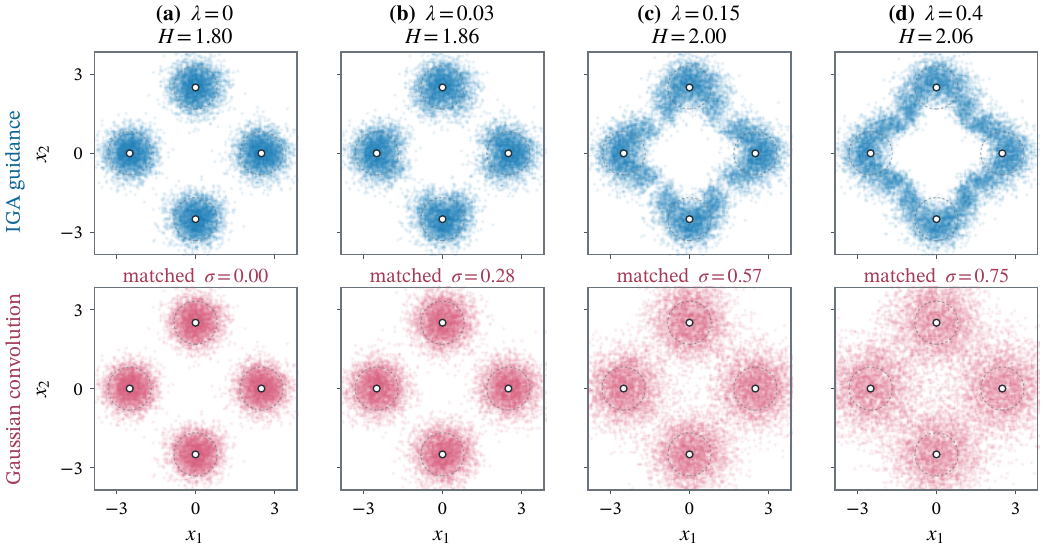}
    \caption{
        \textbf{IGA fills underrepresented inter-mode regions more
        coherently than entropy-matched noise.}
        Top: base and IGA-guided DDIM samples as $\lambda$ increases.
        Bottom: Gaussian-convolved samples with $\sigma$ selected to
        match the entropy of the corresponding IGA distribution.
        IGA connects the gaps between modes while preserving the
        original modal structure; Gaussian convolution broadens each
        mode isotropically.
    }
    \label{fig:ega-extrapolation}
\end{figure*}

\begin{figure*}[!b]
    \centering
    \includegraphics[width=0.9\textwidth]{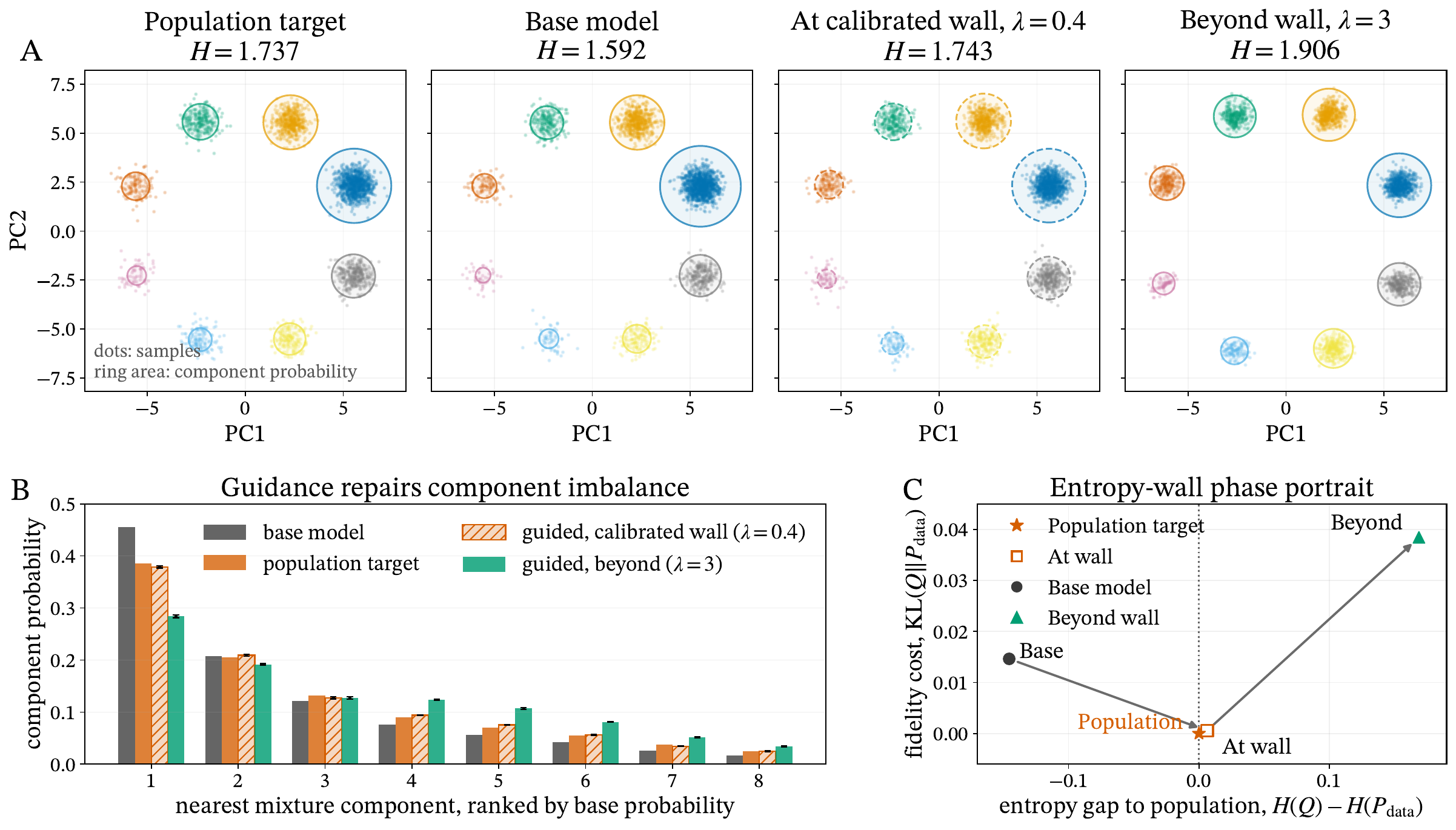}
    \caption{
        \textbf{The entropy wall separates repair from extrapolation
        in a controlled mixture.}
        \textbf{(A)} Population, base, calibrated-wall, and beyond-wall
        distributions. \textbf{(B)} IGA repairs the component imbalance
        at the wall and produces more uniform weights beyond it.
        \textbf{(C)} Population KL first decreases and then increases
        as the path crosses the entropy wall.
    }
    \label{fig:mixture-entropy-wall}
\end{figure*}

\paragraph{Entropy representation and evaluation.}
Across all sampling-time experiments, we set the spectral floor
of~\eqref{eq:smoothed-def} to $\varepsilon=10^{-3}$, for which Lemma~3 bounds the gap between $H_\varepsilon$ and $H_0$ by $0.015$ nats, several times smaller than the smallest entropy difference we report. IGA guidance uses the smoothed entropy $H_\varepsilon$, while, unless stated otherwise, we report the unsmoothed von Neumann entropy $H_0$ and its exponential, $\exp(H_0)$, corresponding to the Vendi score~\citep{friedman2023vendi}. In all sampling-time experiments, we fix the coefficient of the IGA score correction to its theoretically prescribed value of one and vary only $\lambda$, so each point along the reported path corresponds to a different target $Q_\lambda^\star$, rather than to a different guidance strength.

\noindent For the real-image benchmarks, we use the CLS-token embeddings of DINOv2 ViT-B/14~\citep{oquab2023dinov2} and approximate an RBF kernel on these embeddings using 1024 random Fourier features~\citep{rahimi2007random}, with the RBF bandwidth selected by the median heuristic. Because empirical spectral entropy depends on the number of samples, we compare real and generated distributions using matched sample sizes when locating the empirical entropy wall. In DINOv2 feature space, we report Fr\'echet distance, kernel distance, and recall. As an evaluation independent of the guidance representation, we additionally report FID and KID in Inception-v3 feature space~\citep{heusel2017gans,binkowski2018demystifying,kynkaanniemi2019improved}. Synthetic and training-time experiments use the problem-specific metrics introduced in their respective subsections.

\paragraph{Pilot estimation.}
In all sampling-time experiments, we use the chain-rule plug-in guidance field. The covariance entering the IGA potential is estimated from an independent pilot batch of 2048 samples drawn from the unguided base model and then frozen during subsequent sampling. Thus, conditioned on the frozen pilot estimate, individual sampling trajectories are independent.

\subsection{Numerical Application of IGA in Post-hoc Sampling-time Mode}

\subsubsection{Synthetic Experiments with Known Groundtruth Model}

We first study three controlled settings in which the population distribution is known. These experiments allow us to evaluate whether the IGA path approaches the population below the entropy wall and departs from it beyond the wall. The three settings provide complementary evidence: a nonlinear manifold illustrates rare-region repair, an entropy-matched control distinguishes IGA from isotropic noise for increasing entropy, and a finite mixture exposes the redistribution of probability mass across modes.

\paragraph{Rare-region repair on a nonlinear manifold.}
Figure~\ref{fig:entropy-wall-toy} considers a
DDPM~\citep{ho2020denoising} trained on a one-dimensional population embedded in $\mathbb{R}^{128}$. The fitted model captures the dominant central portion of the manifold but substantially underrepresents its endpoints, leading to lower entropy and reduced rare-region coverage. Here, rare regions are defined as the portions outside the central $70\%$ of the normalized manifold coordinate.

\noindent Increasing $\lambda$ initially corrects this contraction. Near the entropy wall, IGA recovers both the population entropy and the missing endpoint mass. Panels~\ref{fig:toy-repair}
and~\ref{fig:toy-coverage} show the corresponding transition:
the covariance discrepancy decreases as the path approaches the wall, while rare-region coverage increases. Beyond the wall, coverage
continues to grow, but the discrepancy to the population turns upward. Thus, the same path first repairs variation lost by the fitted model and then promotes exploration beyond the population level.

\paragraph{Structured coverage versus entropy-matched noise.}
The spiral experiment shows that IGA directs probability toward
underrepresented regions. To determine whether this behavior could be reproduced by simply adding noise, Figure~\ref{fig:ega-extrapolation}
compares IGA with an entropy-matched Gaussian-convolution baseline applied to a multimodal DDIM model~\citep{song2021ddim}. For each IGA setting, the convolution scale $\sigma$ is selected by bisection
so that
\(
    Q_\sigma
    =
    P_{\mathrm{DDIM}} * \mathcal{N}(0,\sigma^2 I)
\)
attains the same representation-space von Neumann entropy.

\noindent Despite matching entropy, the two methods distribute their additional mass differently. Gaussian convolution broadens every mode approximately isotropically, producing increasingly diffuse clouds around the original modal centers. IGA instead selectively fills the underrepresented regions between neighboring modes. As $\lambda$ increases, these inter-mode regions form a coherent ring while the original modes remain visible. The entropy increase produced by IGA therefore reflects structure-aware redistribution rather than an undirected increase in noise.

\paragraph{Population-level confirmation in a controlled mixture.}
Figure~\ref{fig:mixture-entropy-wall} provides a complementary view using an eight-component mixture with known population weights. The base distribution overweights its most frequent components and underrepresents the remaining modes, resulting in lower entropy than the population. Increasing $\lambda$ initially corrects this imbalance: at the calibrated wall, the guided distribution approximately recovers both the population entropy and its component probabilities. Beyond the wall, the component probabilities become more uniform than those of the population.

\noindent Panel C of Figure~\ref{fig:mixture-entropy-wall} makes the change in regime explicit. Along the below-wall portion of the path, the
population KL decreases as IGA repairs the component imbalance. After the wall is crossed, entropy continues to increase while KL turns upward. The path therefore first approaches the population through diversity repair and subsequently departs from it through deliberate extrapolation.

\noindent Together, these controlled experiments show that IGA restores underrepresented population structure below the entropy wall and enters an extrapolative regime beyond it. They further show that the increase in diversity arises from selective redistribution toward underrepresented regions rather than isotropic perturbation. We next examine whether the same progression appears in pretrained diffusion models on real-world image benchmarks.

\subsubsection{Real-World Image Distribution Benchmarks}

Having established the repair-to-extrapolation transition in
controlled settings, we next ask whether the same progression appears
in pretrained diffusion models on real-world image distributions. We
evaluate sampling-time IGA on unconditional CelebA-HQ and
class-conditional ImageNet generation.

\paragraph{Benchmark settings.}
On CelebA-HQ~\citep{karras2018progressive}, we guide the pretrained
\texttt{google/ddpm-ema-celebahq-256}
DDPM~\citep{ho2020denoising} at $256\times256$, using deterministic
DDIM sampling~\citep{song2021ddim} for $100$ steps. On ImageNet, we use the standard ImageNet-100 subset introduced by Tian et al.~\citep{tian2020contrastive}, consisting of their fixed 100-class subset of ILSVRC-2012~\citep{russakovsky2015imagenet}. We guide the class-conditional
\texttt{facebook/DiT-XL-2-256}
model~\citep{peebles2023scalable} for $50$ DDIM steps, using
classifier-free guidance~\citep{ho2022classifierfree} at scale $2.0$.

\begin{figure*}[!t]
    \centering

    \begin{subfigure}[t]{0.615\textwidth}
        \vspace{0pt}
        \centering
        \includegraphics[width=\linewidth]
            {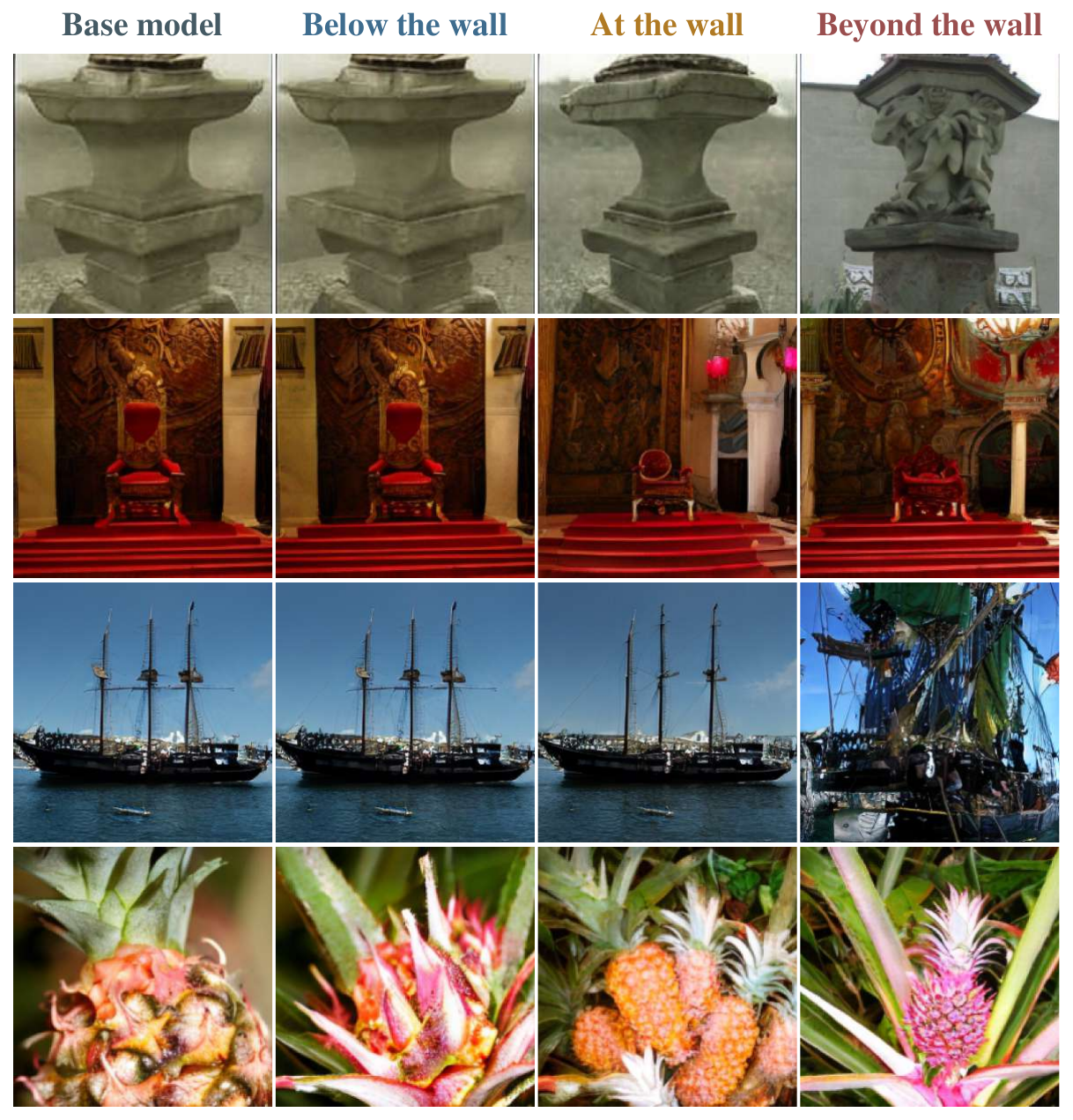}
        \caption{Qualitative transition across the entropy wall.}
        \label{fig:imagenet-wall-samples}
    \end{subfigure}
    \hfill
    \begin{minipage}[t]{0.365\textwidth}
        \vspace{0pt}
        \centering

        \begin{subfigure}[t]{\linewidth}
            \centering
            \includegraphics[width=\linewidth]
                {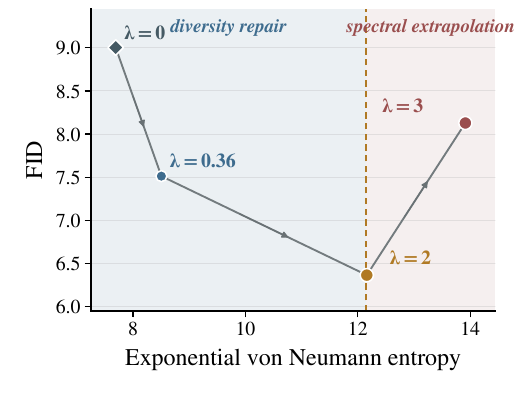}
            \caption{FID--entropy phase portrait.}
            \label{fig:imagenet-wall-fid}
        \end{subfigure}

        \vspace{0.7em}

        \begin{subfigure}[t]{\linewidth}
            \centering
            \includegraphics[width=\linewidth]
                {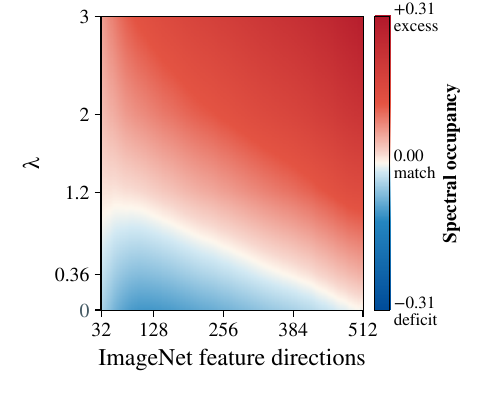}
            \caption{Spectral occupancy.}
            \label{fig:imagenet-wall-spectrum}
        \end{subfigure}
    \end{minipage}

    \caption{
        \textbf{IGA across the ImageNet entropy wall.}
        \textbf{(a)} Matched samples along the IGA target path as
        $\lambda$ increases.
        \textbf{(b)} FID initially decreases as entropy approaches
        the empirical wall and turns beyond it.
        \textbf{(c)} Cumulative spectral occupancy relative to the
        data, across feature directions ordered from dominant to
        rare.
    }
    \label{fig:imagenet-wall-main}
\end{figure*}

The coefficient on the IGA score correction is fixed to one, as
prescribed by the sampling-time construction. We vary only the entropy
multiplier $\lambda$. Each point along the reported path therefore
corresponds to a different entropy-regularized target, rather than to
a stronger or weaker application of the same guidance field. The
DDIM sampler implements the corresponding unit-scale plug-in
correction at each denoising step.

\paragraph{Diversity deficit and wall crossing.}
Because empirical spectral entropy depends on sample size, Figure~\ref{fig:vendi-sample-size} compares
real and generated distributions using matched numbers of samples.
On both datasets, the base model remains below the corresponding empirical data wall throughout the evaluated sample-size range.
Increasing $\lambda$ progressively closes this deficit, reaches the wall at an intermediate point, and crosses it for larger values. The tested path therefore spans three interpretable regimes: a diversity-deficient base model, below-wall repair, and beyond-wall imagination.

\paragraph{The ImageNet path across the wall.}
Figure~\ref{fig:imagenet-wall-main} summarizes the progression on
ImageNet. The matched samples in
Figure~\ref{fig:imagenet-wall-samples} show the transition from the
base model through below-wall repair and into beyond-wall
extrapolation. The FID--entropy phase portrait in
Figure~\ref{fig:imagenet-wall-fid} shows the corresponding
distributional trend: FID initially decreases as the entropy deficit
is repaired and turns after the target approaches and crosses the
empirical wall. To examine how the additional entropy is obtained, we define the
cumulative spectral-occupancy ratio
\[
    T_{\lambda}(r)
    =
    \log
    \frac{\sum_{i=r}^{d} v_i^{\top}S_{\lambda}v_i}
         {\sum_{i=r}^{d} v_i^{\top}S_{\mathrm{data}}v_i},
\]
where $S_{\lambda}$ and $S_{\mathrm{data}}$ denote the generated and data covariance matrices in DINOv2 feature space, respectively, and the data-covariance eigenvectors $v_i$ are ordered from dominant to rare. Negative values indicate an occupancy deficit relative to
the data, whereas positive values indicate excess occupancy.
Figure~\ref{fig:imagenet-wall-spectrum} shows that increasing
$\lambda$ progressively closes the deficit across underrepresented
directions and produces excess occupancy after the wall is crossed.
IGA therefore gains entropy by allocating more probability to
directions that the base generator covers insufficiently.

\begin{figure*}[!t]
    \centering
    \captionsetup[subfigure]{font=normalsize,labelfont=bf,skip=3pt}
    \begin{subfigure}[t]{0.48\textwidth}
        \centering
        \includegraphics[width=\linewidth]
            {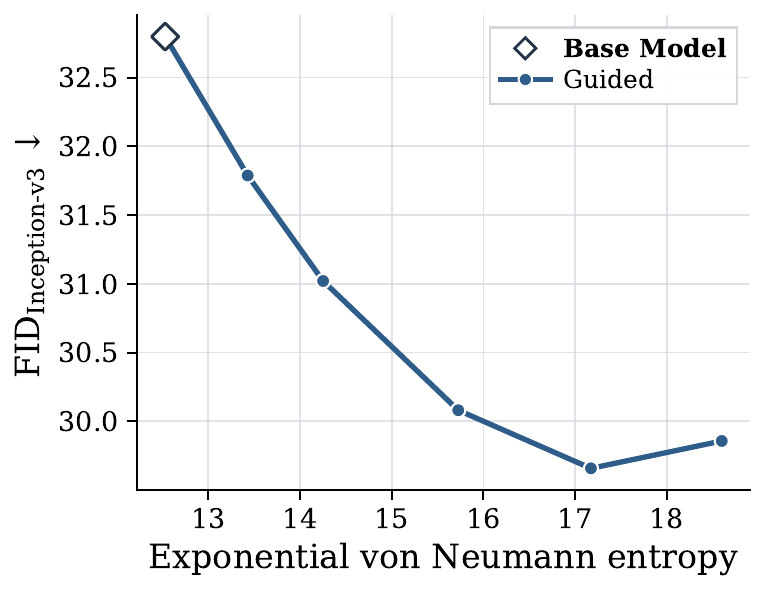}
        \caption{CelebA-HQ: FID.}
        \label{fig:celeba-inception-fid}
    \end{subfigure}
    \hfill
    \begin{subfigure}[t]{0.48\textwidth}
        \centering
        \includegraphics[width=\linewidth]
            {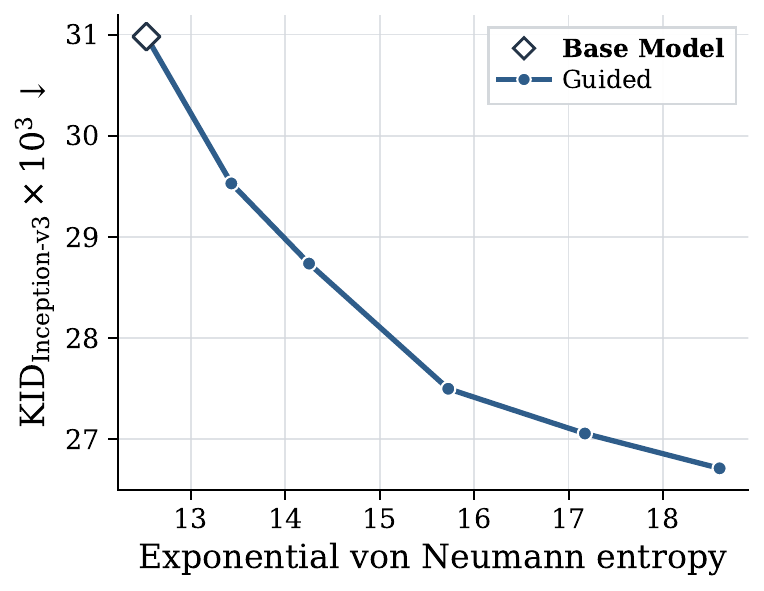}
        \caption{CelebA-HQ: KID.}
        \label{fig:celeba-inception-kid}
    \end{subfigure}

    \vspace{0.6em}

    \begin{subfigure}[t]{0.48\textwidth}
        \centering
        \includegraphics[width=\linewidth]
            {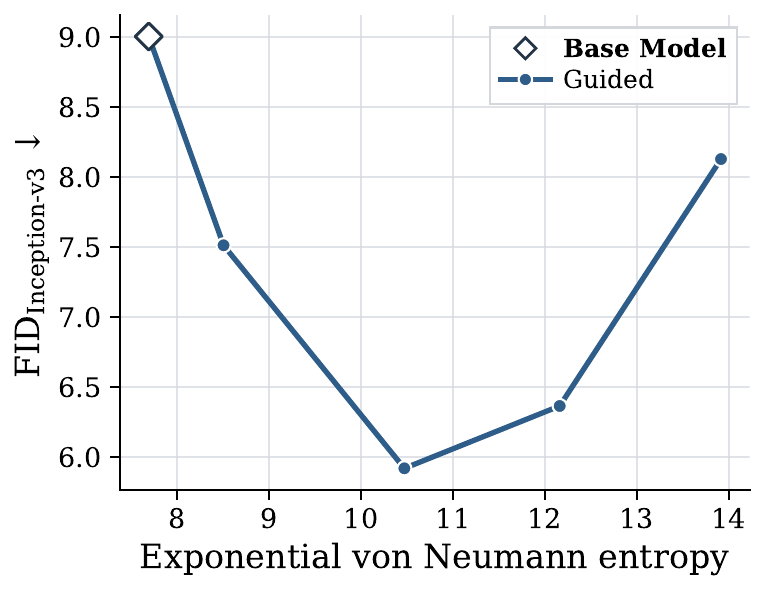}
        \caption{ImageNet: FID.}
        \label{fig:imagenet-inception-fid}
    \end{subfigure}
    \hfill
    \begin{subfigure}[t]{0.48\textwidth}
        \centering
        \includegraphics[width=\linewidth]
            {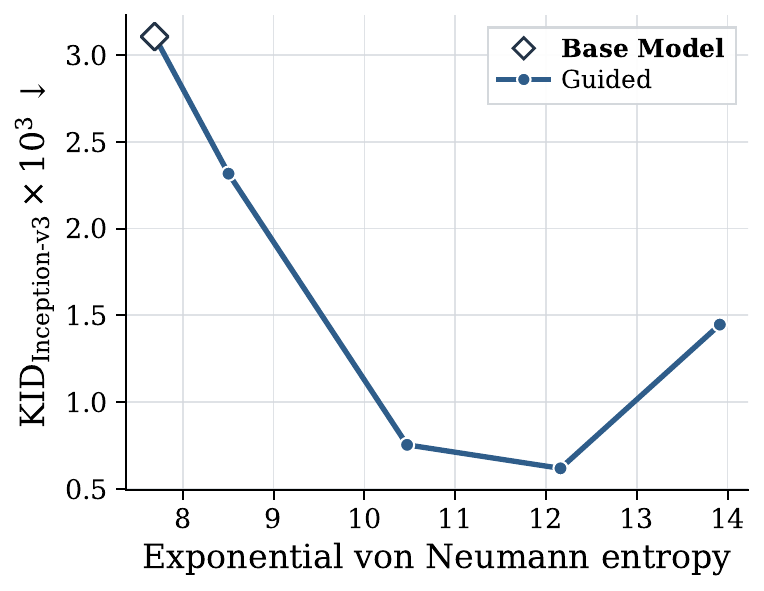}
        \caption{ImageNet: KID.}
        \label{fig:imagenet-inception-kid}
    \end{subfigure}

    \caption{
        \textbf{Independent Inception-v3 evaluation along the IGA
        target path.}
        FID and KID are measured in Inception-v3 feature space,
        independently of the DINOv2 representation used to define
        spectral diversity and the empirical entropy wall.
        On both CelebA-HQ and ImageNet, the initial below-wall portion
        of the path improves distributional agreement with the data
        while diversity increases. At larger values of $\lambda$,
        the behavior becomes metric- and dataset-dependent, with
        distributional distances eventually flattening or turning as
        the target enters the extrapolative regime.
        Corresponding DINOv2-space FD and KD curves are reported in
        Appendix~\ref{app:additional}.
    }
    \label{fig:inception-tradeoffs}
\end{figure*}

\begin{table*}[!t]
    \centering
    \small
    \setlength{\tabcolsep}{4.2pt}
    \renewcommand{\arraystretch}{1.08}

    \caption{
        \textbf{Comparison with diversity-guidance methods.}
        Distributional distances, coverage, and spectral diversity on
        CelebA-HQ and ImageNet. Shaded rows trace the IGA target
        path as $\lambda$ increases. Bold indicates the lowest
        distributional distance or highest recall within each dataset
        block.
    }
    \label{tab:main-results}

    \begin{tabular}{@{}clccccccc@{}}
        \toprule

        & & &
        \multicolumn{2}{c}{Inception-v3}
        & \multicolumn{2}{c}{DINOv2}
        & \multicolumn{2}{c}{Coverage \& diversity} \\

        \cmidrule(lr){4-5}
        \cmidrule(lr){6-7}
        \cmidrule(l){8-9}

        & \multicolumn{1}{l}{Method}
        & Vendi
        & KID\,$\times 10^{3}\downarrow$
        & FID\,$\downarrow$
        & KD\,$\times 10^{2}\downarrow$
        & FD\,$\downarrow$
        & Recall\,$\uparrow$
        & $H_0$ \\

        \midrule

        \multirow{7}{*}{
            \rotatebox[origin=c]{90}{\textbf{CelebA-HQ}}
        }
        & Base
        & 12.5
        & 30.98
        & 32.80
        & 12.331
        & 158.7
        & 0.611
        & 2.528 \\

        & SPARKE~\citep{jalali2025sparke}
        & 23.0
        & 50.314
        & 54.77
        & 30.294
        & 792.5
        & 0.479
        & 3.134 \\

        \addlinespace[2pt]

        & \igacell{IGA, $\lambda=0.5$}
        & \igacell{13.4}
        & \igacell{29.53}
        & \igacell{31.79}
        & \igacell{12.018}
        & \igacell{152.5}
        & \igacell{0.627}
        & \igacell{2.597} \\

        & \igacell{IGA, $\lambda=1$}
        & \igacell{14.3}
        & \igacell{28.74}
        & \igacell{31.02}
        & \igacell{\textbf{11.876}}
        & \igacell{\textbf{151.0}}
        & \igacell{0.649}
        & \igacell{2.657} \\

        & \igacell{IGA, $\lambda=2$}
        & \igacell{15.7}
        & \igacell{27.50}
        & \igacell{30.08}
        & \igacell{12.049}
        & \igacell{156.4}
        & \igacell{0.685}
        & \igacell{2.755} \\

        & \igacell{IGA, $\lambda=3$}
        & \igacell{17.2}
        & \igacell{27.06}
        & \igacell{\textbf{29.66}}
        & \igacell{12.616}
        & \igacell{169.8}
        & \igacell{0.713}
        & \igacell{2.843} \\

        & \igacell{IGA, $\lambda=4$}
        & \igacell{18.6}
        & \igacell{\textbf{26.71}}
        & \igacell{29.86}
        & \igacell{13.272}
        & \igacell{192.1}
        & \igacell{\textbf{0.732}}
        & \igacell{2.923} \\

        \midrule

        \multirow{7}{*}{
            \rotatebox[origin=c]{90}{\textbf{ImageNet}}
        }
        & Base
        & 7.7
        & 3.107
        & 9.00
        & 4.595
        & 118.3
        & 0.598
        & 2.040 \\

        & CADS~\citep{sadat2024cads}
        & 13.1
        & 1.260
        & 6.05
        & 4.518
        & 123.3
        & 0.645
        & 2.571 \\

        & SPARKE~\citep{jalali2025sparke}
        & 7.9
        & 2.912
        & 7.89
        & 4.593
        & 115.2
        & 0.470
        & 2.061 \\

        \addlinespace[2pt]

        & \igacell{IGA, $\lambda=0.36$}
        & \igacell{8.5}
        & \igacell{2.317}
        & \igacell{7.51}
        & \igacell{4.572}
        & \igacell{\textbf{114.3}}
        & \igacell{0.636}
        & \igacell{2.141} \\

        & \igacell{IGA, $\lambda=1.2$}
        & \igacell{10.5}
        & \igacell{0.753}
        & \igacell{\textbf{5.92}}
        & \igacell{4.550}
        & \igacell{123.0}
        & \igacell{0.696}
        & \igacell{2.349} \\

        & \igacell{IGA, $\lambda=2$}
        & \igacell{12.2}
        & \igacell{\textbf{0.617}}
        & \igacell{6.37}
        & \igacell{\textbf{4.546}}
        & \igacell{143.5}
        & \igacell{0.727}
        & \igacell{2.498} \\

        & \igacell{IGA, $\lambda=3$}
        & \igacell{13.9}
        & \igacell{1.446}
        & \igacell{8.13}
        & \igacell{4.548}
        & \igacell{170.7}
        & \igacell{\textbf{0.760}}
        & \igacell{2.633} \\

        \bottomrule
    \end{tabular}
\end{table*}

\paragraph{Repair below the wall.}
Figure~\ref{fig:inception-tradeoffs}
shows a consistent initial repair regime
on both benchmarks. At the beginning of the below-wall path, spectral
diversity and recall increase while the reported feature-space
distances decrease relative to the base model. IGA therefore recovers
variation missing from the pretrained generator while improving its
agreement with the data under both DINOv2 and Inception-v3
representations. This behavior is consistent with the repair result
of Theorem~\ref{thm:repair}. As the target approaches the empirical wall, entropy and recall continue to increase, while the different distances attain their minima at nearby but nonidentical values of $\lambda$. This is expected: Inception-v3 and DINOv2 encode different properties of image distributions and need not identify the same target as closest to the data.

\paragraph{Imagination beyond the wall.}
Once $\lambda$ moves the target beyond the wall, entropy and recall
continue to increase, but the feature-space distances no longer
decrease uniformly. Their turning points depend on the dataset,
metric, and evaluation representation. This should not be interpreted
as a direct measurement of declining perceptual image quality. FID,
KID, FD, and KD measure distributional departure from the data in
particular feature spaces. Beyond the wall, their increase instead
indicates that the generated distribution is moving farther from the
data reference while occupying additional feature directions.

\noindent The two benchmarks therefore exhibit the same overall progression:
IGA first repairs a measurable diversity deficit and then enters a
different statistical regime after crossing the wall. The central
result is not a single optimal value of $\lambda$, but an
interpretable target path whose meaning changes from distributional
repair to deliberate spectral extrapolation.

\paragraph{Reference diversity-guidance methods.}
Table~\ref{tab:main-results} includes CADS~\citep{sadat2024cads} and SPARKE~\citep{jalali2025sparke} as reference points rather than
like-for-like baselines. CADS perturbs the conditioning signal and is therefore reported only on class-conditional ImageNet, while the
evaluated SPARKE configuration uses joint batch guidance and produces coupled samples. Neither method defines its operating point relative to the data entropy or distinguishes below-wall repair from beyond-wall extrapolation. In contrast, IGA traces a wall-calibrated family of target distributions and, once its potential is estimated and frozen, applies the same guidance independently to each sampling trajectory. The table therefore provides numerical context under common evaluation metrics rather than a comparison of identical objectives or guarantees.

\noindent Across both benchmarks, the empirical picture is consistent. The pretrained model begins below the entropy wall; below-wall values of $\lambda$ repair part of this deficit while increasing coverage and reducing distributional distances; and larger values cross the wall, where additional coverage is accompanied by a representation-dependent departure from the data distribution. IGA therefore exposes an interpretable target path from diversity repair to controlled imagination.

\subsection{Numerical Application of Training-Time IGA}
\label{subsec:training-time-iga}

\noindent
The main empirical focus of this paper is sampling-time IGA, which can be applied to a pretrained generator without retraining. Nevertheless, the same distribution-level regularization principle also extends naturally to model training. Section~\ref{subsec:training} formulates
training-time IGA as~\eqref{eq:IGA-training-population}, where the model is trained to balance fidelity to the empirical data distribution with the spectral entropy of its generated distribution. The corresponding MNIST results are reported in Appendix~\ref{app:mnist-training-results}.

\noindent
We evaluate this training-time realization using a GAN on MNIST. Following Proposition~\ref{prop:IGA-gan}, the entropy reward is added to the adversarial objective through the joint-adversary formulation in~\eqref{eq:IGA-gan}. For each generated minibatch, we compute the spectral adversary at its closed-form best response and hold it fixed
during the generator update. By Proposition~\ref{prop:entropy-grad}, this frozen payoff gives the exact gradient of the minibatch entropy at the refresh point.

\paragraph{Implementation details.}
We use a convolutional GAN with a 64-dimensional latent and batch size 128, trained for 20 epochs with Adam using learning rate $2\times10^{-4}$ and $(\beta_1,\beta_2)=(0.5,0.999)$ for both generator and discriminator. We use the non-saturating logistic generator objective and one discriminator update per generator update. The IGA representation is the unit-normalized 64-dimensional embedding of a frozen MNIST classifier, while an architecturally distinct frozen classifier with a 96-dimensional embedding is used for independent evaluation. We set $\varepsilon=0.05$ and report means and standard errors over five random seeds.

\noindent
The baseline GAN exhibits a noticeable imbalance in generated digit frequencies despite being trained on the nearly balanced MNIST distribution. Figure~\ref{fig:mnist-IGA-repair} in Appendix~\ref{app:mnist-training-results} shows that moderate IGA regularization redistributes probability mass toward digit classes underrepresented by the baseline generator. At the best intermediate settings, the total variation distance between the generated and
empirical class distributions decreases by $37.1\%$, while the Fr\'echet distance measured in the feature space of a separate evaluator network decreases by $40.1\%$. The improvement in both metrics indicates that the effect is not limited to the class-frequency statistic. Their nonmonotone dependence on the IGA multiplier also illustrates the tradeoff between the GAN fidelity objective and the distribution-level entropy reward.

\subsection{IGA Application to Prompt-Conditioned Generative Models}

We finally test sampling-time IGA on two text-to-image models:
Stable Diffusion XL (SDXL)~\citep{podell2024sdxl} and PixArt-$\Sigma$~\citep{chen2024pixartsigma}. We use \texttt{stable-diffusion-xl-base-1.0} at $768\times768$ and \texttt{PixArt-Sigma-XL-2-1024-MS} at $1024\times1024$, with deterministic DDIM sampling for $50$ steps. The IGA score correction is fixed at unit scale, and only $\lambda$ is varied.
Figure~\ref{fig:IGA-text-model-sweep} shows the resulting path, using $\lambda\in\{0,4,8\}$ for SDXL and $\lambda\in\{0,10,20\}$ for PixArt-$\Sigma$. Within each model, the initial noise seeds are matched across the sweep. Increasing $\lambda$ leads to larger changes in garment shape, volume, material, and color
while remaining consistent with the prompt.

\noindent Figures~\ref{fig:IGA-sdxl-fashion}--\ref{fig:IGA-sdxl-throne} compare vanilla and IGA SDXL across fashion, architecture, underwater painting, and throne design. The base samples tend to stay near familiar forms, whereas IGA produces sculptural garments, curved and stacked towers, underwater scenes with divers and vehicles, and more varied throne structures. Across these examples, the main changes are in shape, structure, and scene composition rather than only color or texture. Additional qualitative results for PixArt-$\Sigma$ are provided in Appendix~\ref{app:additional_viz}.

\section{Conclusion and Discussion}
Generative modeling is typically formulated as distributional imitation, i.e., the ultimate goal is to generate fresh samples from the underlying distribution of real training samples. However, as shown in \cite{farnia2026exposing}, such an approach can empirically lead to a model that generates high-quality samples while remaining systematically less diverse than the target real distribution. Our work introduces \emph{Imaginative Generative AI (IGA)}, a distribution-level framework that incorporates spectral entropy as an explicit and controllable diversity component of the generative modeling objective. The real data distribution's spectral entropy establishes an \emph{Entropy Wall} in the application of IGA: below this wall, IGA entropy regularization repairs diversity lost during training while remaining compatible with the diversity of the data; beyond the wall, the generated distribution intentionally attains greater representation-relative spectral diversity than the real data.

We note that the IGA regularization principle can be applied to both the training of a generative model and post-hoc sampling from a pretrained model. Therefore, IGA provides a general framework for diversity regularization and imaginative generation. Beginning with improving the imitation regime, the approach first counteracts spectral-diversity deficits and encourages the recovery of variation underrepresented by the learned generator. Upon reaching the Entropy Wall, additional regularization transitions into a controlled extrapolative regime, balancing increased representation-relative diversity with closeness to the reference distribution. During sampling, once the IGA guidance potential is fixed, the resulting target enables independent and identically distributed generation, eliminating the need for an interacting batch.

Our numerical results support the application of IGA for both diversity repair and imaginative data generation. Pretrained diffusion models demonstrate a measurable entropy deficit compared with matched real-data samples; moderate IGA guidance addresses this deficit, enhancing diversity and, in several cases, distributional fidelity. Stronger guidance crosses the Entropy Wall and generates structured variation beyond the data reference level, including qualitatively novel and imaginative changes in large text-to-image models. These findings indicate that diversity enhancement does not need to be treated as an architecture-specific heuristic or as an uncontrolled deviation from quality. Instead, IGA offers a general regularization framework for systematically transitioning from imitation, through diversity repair, to controlled imaginative extrapolation.

The notion of imagination in IGA is intentionally representation-relative: exceeding the Entropy Wall means exceeding the spectral diversity of the data in a specified embedding space, rather than satisfying a representation-independent notion of creativity or novelty. Consequently, the choice of representation, the reference distribution, and the fidelity discrepancy remain important modeling decisions. Subject to these choices, the Entropy Wall provides an explicit and measurable boundary between improving imitation and deliberately moving beyond it, making the transition from imitation to imagination mathematically well-defined and controllable.

\bibliographystyle{unsrt}
\bibliography{ref}

\clearpage

\appendix

\section{Related Work}
\label{sec:related-work}

\paragraph{Diversity in generative models.}
Diversity loss is a persistent problem across generative modeling. In generative adversarial networks (GANs), mode collapse and limited effective support motivated minibatch discrimination and support-size diagnostics \citep{salimans2016improved,arora2018gans}. Language models likewise tend toward generic, repetitive, or homogeneous outputs, motivating diversity-aware objectives and decoding strategies \citep{li2016diversity,holtzman2020degeneration,jiang2025hivemind}. More recently, R\'enyi Kernel Entropy (RKE) evaluations have shown that modern generators can produce high-quality samples while still missing modes \citep{jalali2023information}. Related deficits have been documented in image diffusion models, together with a systematic gap between real and generated diversity for which finite-sample entropy underestimation is one statistical source \citep{dombrowski2025image,farnia2026exposing}. These findings motivate IGA's premise: diversity should be specified as a property of the target distribution rather than left as a by-product of distribution fitting.

\paragraph{Measuring novelty and diversity.}
Reference-free measures assess variation within a distribution through similarities among its samples. The Vendi Score uses the von Neumann entropy of a normalized kernel matrix \citep{friedman2023vendi}, while RKE provides a tractable order-two counterpart with mode-count interpretations \citep{jalali2023information}. Kernel-based Entropic Novelty compares which modes are more strongly expressed than in a reference distribution \citep{zhang2024interpretable}. Conditional Vendi and Scendi extend diversity evaluation to prompt-conditioned generators \citep{jalali2026conditional,ospanov2025scendi}, while scalable and truncated variants address computational cost and finite-sample estimation \citep{ospanov2024scalable,ospanov2025truncated}. A complementary line folds sample quality directly into the diversity score itself, yielding quality-weighted Vendi scores \citep{nguyen2024quality}. IGA moves this spectral perspective from post-hoc evaluation into the generative objective itself. Population-data entropy then defines an entropy wall that separates recovery of lost diversity from deliberate extrapolation beyond the data.

\paragraph{Promoting novelty and diversity.}
Existing interventions typically specialize either training or generation. At training time, diversity has been promoted through reinforcement learning with explicit diversity rewards \citep{miao2024diverse} and through diversity-aware diffusion modules \citep{dombrowski2025image}. At inference, CADS anneals noise in the conditioning signal \citep{sadat2024cads}; c-VSG and SPARKE guide generation using contextualized Vendi and conditional RKE, respectively \citep{askarihemmat2024cvsg,jalali2025sparke}; and STRIDE perturbs intermediate features in distilled one- and few-step generators \citep{yadav2026stride}. Particle Guidance instead evolves an interacting set under a pairwise diversity potential and is explicitly non-i.i.d.\ \citep{corso2024particle}, while SPELL repels trajectories from protected, concurrent, or previously generated images \citep{kirchhof2025shielded}.

\noindent IGA instead provides a single distribution-level regularizer with both training- and sampling-time realizations. It does not define diversity through the active batch or generation history: during sampling, its potential is estimated beforehand, held fixed, and applied independently to each initialized trajectory, yielding i.i.d.\ samples from the approximated IGA target. The entropy wall additionally identifies whether regularization repairs diversity lost during learning or intentionally moves beyond the population data. IGA thus unifies diversity control across end-to-end learning and post-hoc sampling while providing a principled transition from imitation to extrapolation.

\paragraph{Creative generation.}
Creative generation has been pursued through deviation from learned styles, novel composition, rare-region sampling, and repulsion from exemplars. Creative Adversarial Networks depart from established artistic styles \citep{elgammal2017can}; DoodlerGAN recombines object parts into unseen sketches \citep{ge2021creative}; and ConceptLab searches for new category members \citep{richardson2024conceptlab}. For diffusion models, evidence of training-data replication sharpens the distinction between creativity and reproduction \citep{somepalli2023forgery}. Low-density sampling explores rare regions of the learned distribution \citep{sehwag2022lowdensity}, whereas ProCreate pushes generations away from reference images \citep{lu2024procreate}. IGA instead gives creativity a distributional interpretation: crossing the entropy wall produces a target whose representation-relative spectral diversity exceeds that of the population data, while the fidelity term controls departure from the reference. The same definition governs both training and sampling.

\section{Additional Qualitative Results}
We provide additional qualitative comparisons on PixArt-$\Sigma$.
Unless stated otherwise, IGA uses $\lambda=20$. Across prompts, the vanilla model often concentrates on a narrow set of familiar forms or compositions, while IGA produces broader
structural and semantic variation. This is particularly visible in the underwater example, where vanilla PixArt repeatedly generates very similar coral-reef scenes, whereas IGA explores substantially different subjects and
layouts while retaining the requested rendering style.

\label{app:additional_viz}

\begin{figure}[!h]
    \centering
    \includegraphics[width=0.99\linewidth]{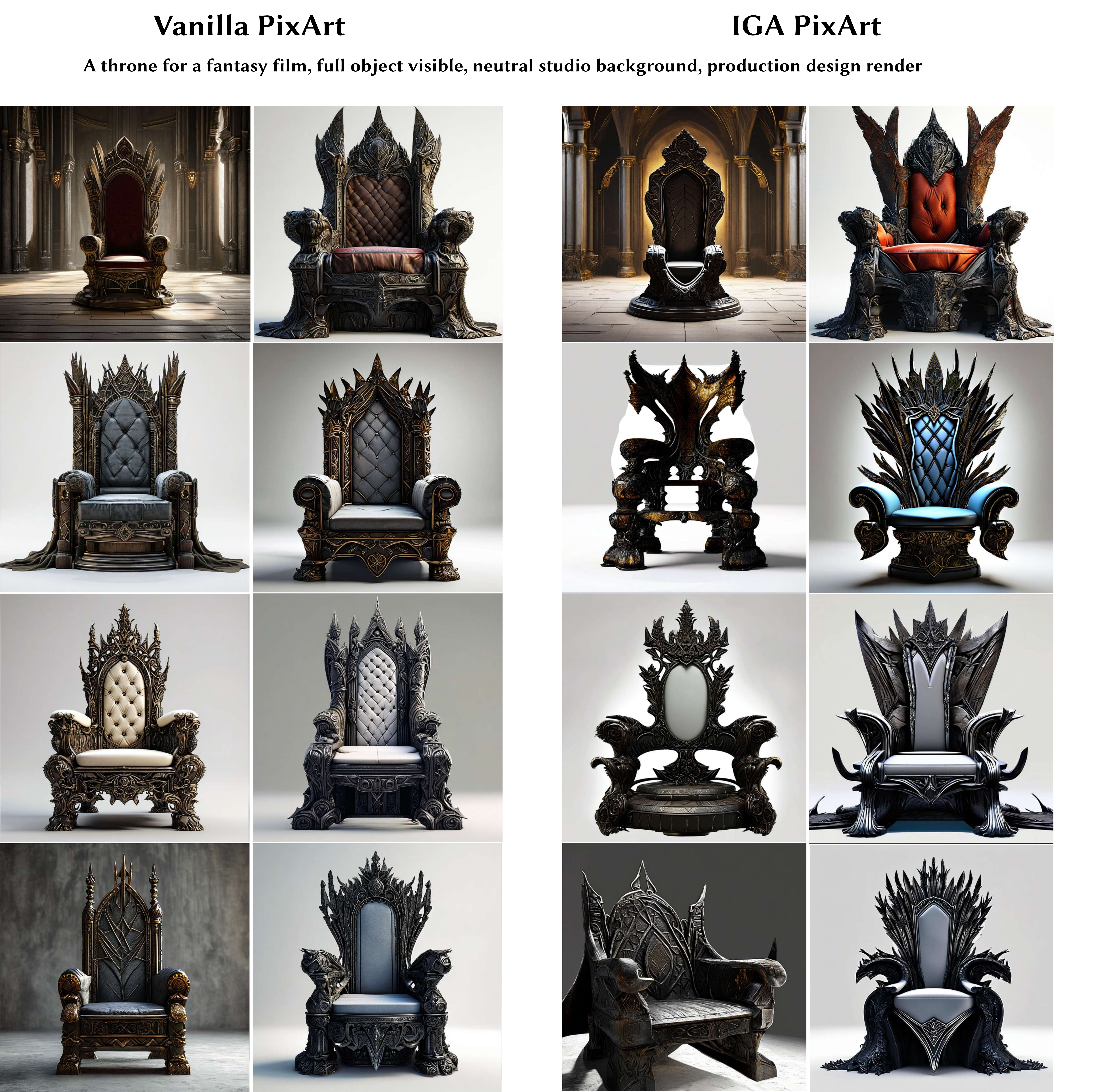}
    \caption{\textbf{Fantasy throne design with PixArt-$\Sigma$.}
    Vanilla PixArt (left) and IGA with $\lambda=20$ (right).
    IGA produces broader variation in silhouette and structure.}
\label{fig:app:throne}
\end{figure}
\begin{figure}
    \centering
    \includegraphics[width=0.99\linewidth]{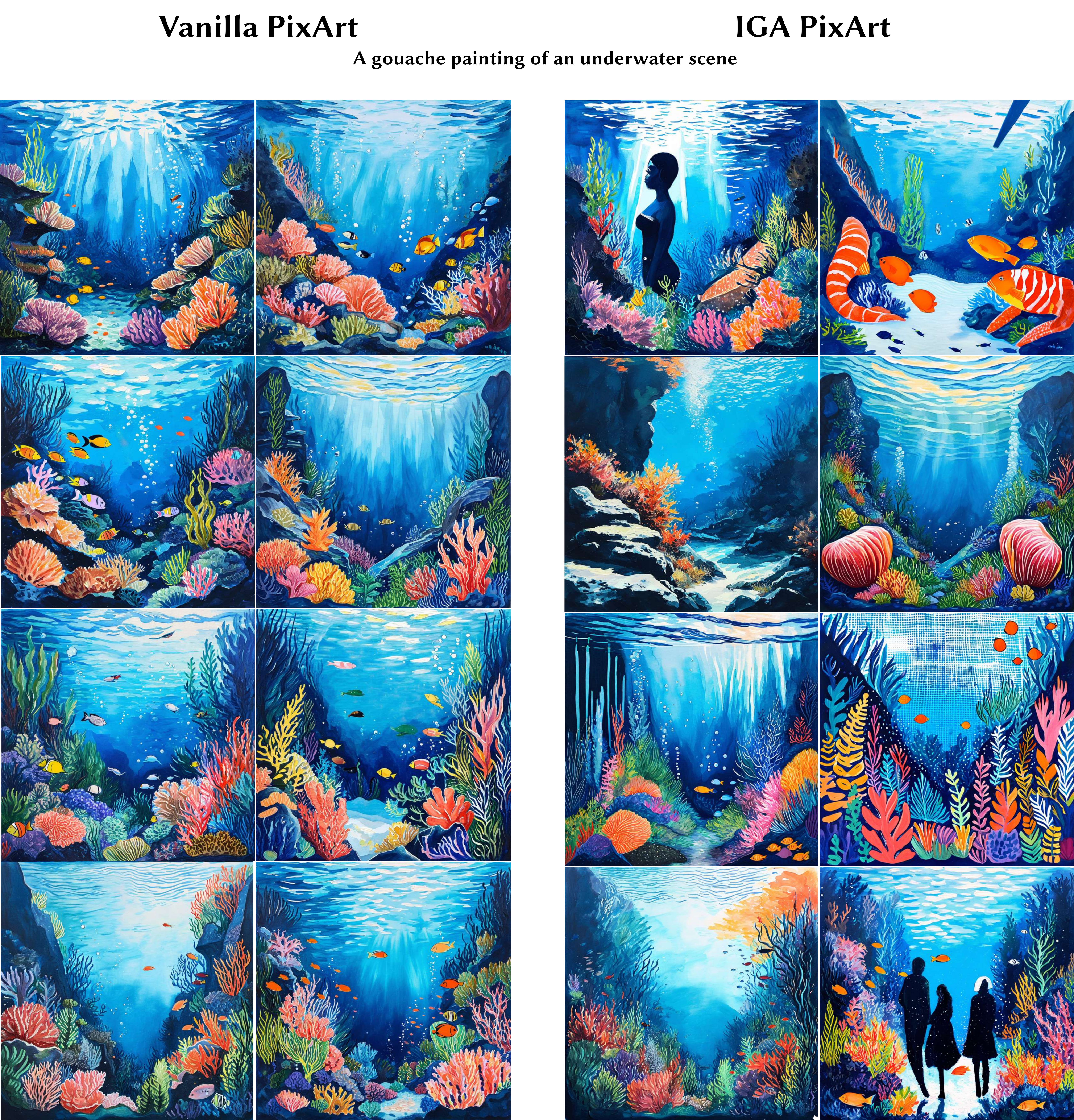}
     \caption{\textbf{Underwater scenes with PixArt-$\Sigma$.}
    Vanilla PixArt (left) largely collapses to the same coral-reef scene across
    seeds. IGA with $\lambda=40$ (right) produces substantially broader variation
    in subjects and composition while preserving the gouache style.}
     \label{fig:app:underwater}
\end{figure}

\begin{figure}
    \centering
    \includegraphics[width=0.99\linewidth]{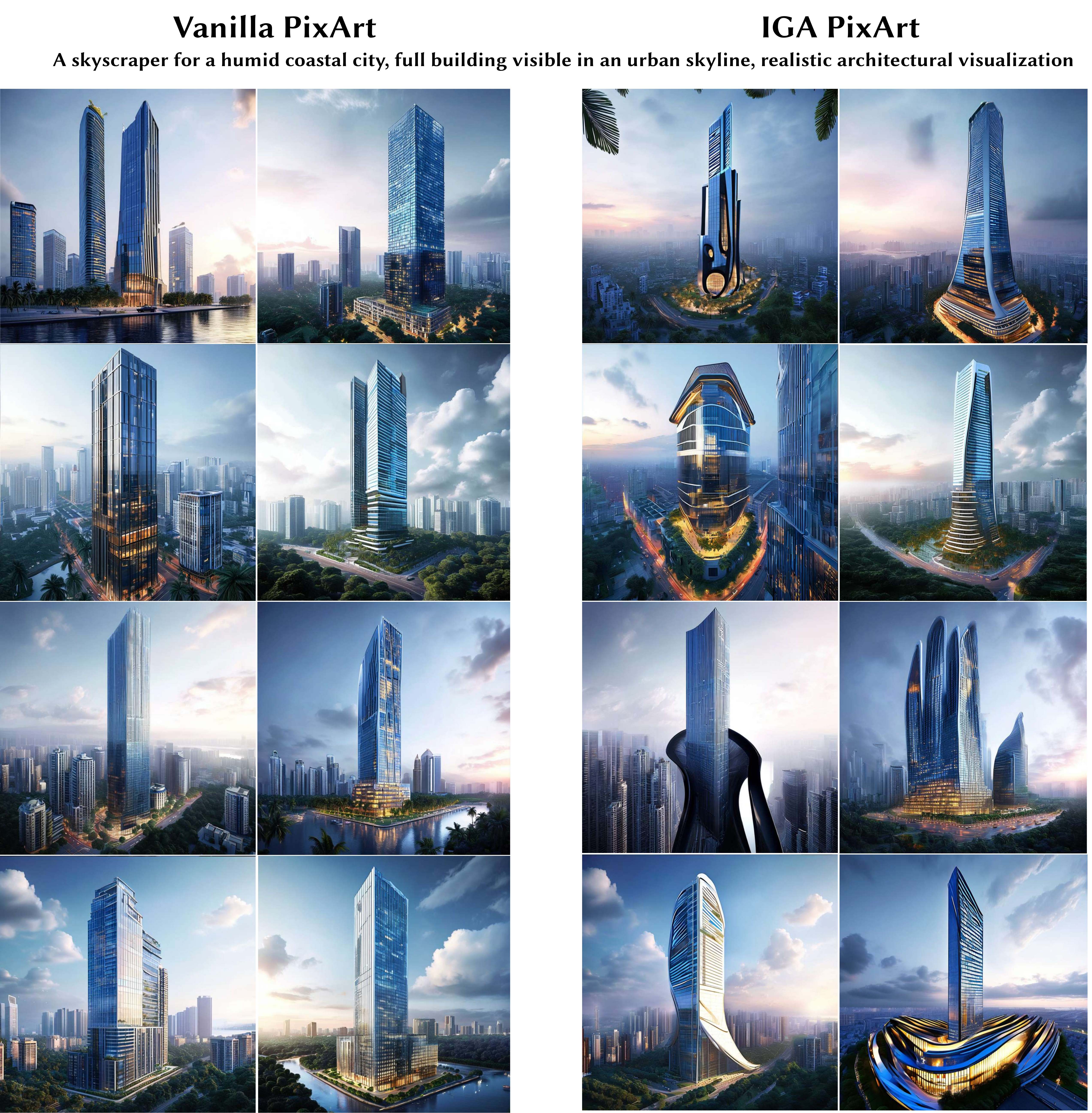}
    \caption{\textbf{Architectural design with PixArt-$\Sigma$.}
    Vanilla PixArt (left) and IGA with $\lambda=20$ (right).
    IGA introduces stronger geometric and structural variation while retaining
    the skyscraper concept.}    
\label{fig:app:sky}
\end{figure}

\begin{figure}
    \centering
    \includegraphics[width=0.99\linewidth]{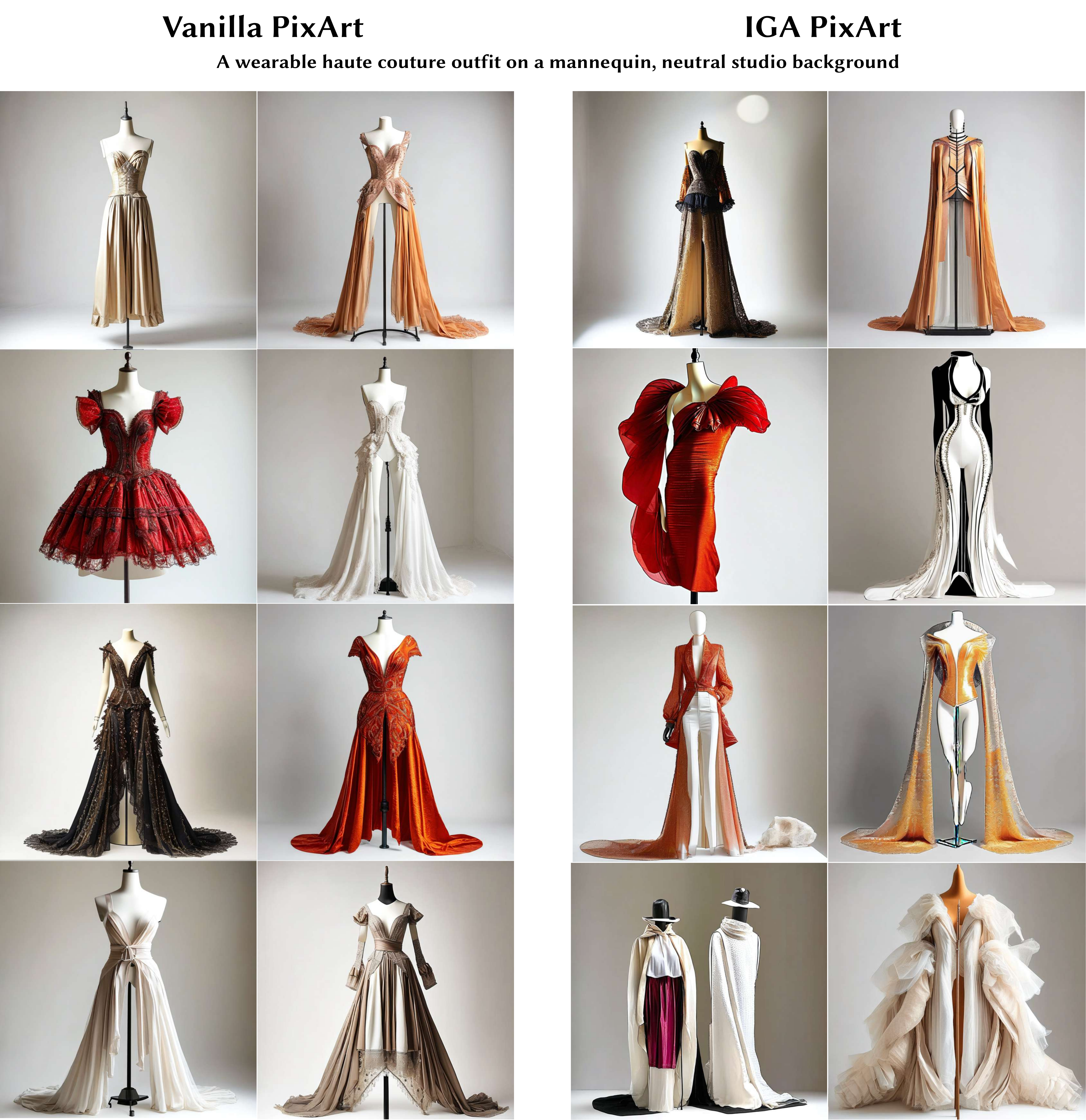}
    \caption{\textbf{Fashion design with PixArt-$\Sigma$.}
    Vanilla PixArt (left) and IGA with $\lambda=20$ (right).
    IGA produces broader variation in garment silhouette, volume, material, and color, while remaining consistent with the wearable haute-couture prompt.}
\label{fig:app:fashion}
\end{figure}

\section{Extended Preliminaries and Conventions}
\label{app:prelim}

We begin by stating two conventions that are used throughout the paper. We would like to further clarify that some of the theoretical lemmas and basic statements are also discussed in \cite{farnia2026exposing}.

\begin{remark}\label{conv:orient}
For any discrepancy $\Dref$ we write $\Dref(Q;\Pref)$ with the optimized distribution $Q$ first and the reference $\Pref$ second. For symmetric discrepancies this is cosmetic. However, for KL and general Bregman divergences the ordering cannot be generally swapped. 

In our notation, the KL anchor is always $\KL(Q\|\Pref)$, and a Bregman anchor is always $D_\Phi(Q,\Pref)$. A data-first object such as $\KL(\widehat P_n\|Q)$ or $D_\Phi(\widehat P_n,Q)$ is a different problem and does not inherit the guarantees below (see also Remark~\ref{rem:repair-scope} and, for the maximum-likelihood setting where the data-first orientation is forced, Proposition~\ref{prop:mle-vae}).
\end{remark}

\begin{remark}\label{conv:ambient}
$\cP$ is assumed to be the convex ambient class of probability measures on $\cX$, used for the convex-analytic theory (concavity of entropy, convex duality, Bregman projection). $\Pgen$ is the possibly nonconvex class realizable by a fixed architecture, used for training. Convexity, strong duality, and Pythagorean statements are proved on $\cP$ and never silently transferred to $\Pgen$; a trained or guided model reaches the ambient optimum only in an approximation-theoretic sense.
\end{remark}

\begin{assumption}[Finite-dimensional embedding]\label{ass:finite-dim}
Unless stated otherwise, $\phi:\cX\to\R^d$ with $\norm{\phi(x)}_2=1$ is measurable, and $\Sigma_Q\in\R^{d\times d}$ with $\Sigma_Q\succeq0$ and $\Tr(\Sigma_Q)=1$. When weak continuity of $Q\mapsto\Sigma_Q$ is invoked, we further assume $\cX$ is Polish and $\phi$ is bounded and continuous.
\end{assumption}

\begin{remark}[Use of entropy symbol $H$]\label{rem:two-H}
When a result holds for either entropy functional we let $H$ denote a fixed but arbitrary choice from $\{\Hz,\He\}$ (as in \eqref{eq:Crho}, \eqref{eq:Plambda}, Theorem~\ref{thm:mono}, and Definition~\ref{def:wall}); $H$ is never used to mix the two within a single statement. When differentiability of the entropy is needed we specialize to the smoothed $\He$ (Section~\ref{sec:guidance}), and when defining the population wall in its main statistical interpretation we use the unsmoothed $\Hz$ (Section~\ref{sec:wall}). Lemma~\ref{lem:smoothing} controls the gap between the two.
\end{remark}

\begin{lemma}[Concavity of $\Hz$ and $\He$ in $Q$]\label{lem:concave}
If $Q\mapsto\Sigma_Q$ is affine, then $Q\mapsto\Hz(Q)$ and $Q\mapsto\He(Q)$ are concave.
\end{lemma}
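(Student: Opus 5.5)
The plan is to reduce both claims to the concavity of the scalar matrix function $f(S)\defeq-\Tr(S\log S)$ on the cone of positive semidefinite matrices, and then compose with affine maps. Concavity is preserved under precomposition with an affine map, so two facts suffice. First, $S\mapsto f(S)$ is concave on density matrices. Second, $Q\mapsto\Sigma_Q$ and $Q\mapsto S_Q^\varepsilon=(1-\varepsilon)\Sigma_Q+\varepsilon I_d/d$ are affine. The second fact is immediate. $\Sigma_Q$ is affine in $Q$ by hypothesis, which for mixtures is linearity of the expectation $\E_{X\sim Q}[\phi(X)\phi(X)^\top]$ in $Q$. $S_Q^\varepsilon$ is then an affine function of $\Sigma_Q$, and a composition of affine maps is affine. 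Explicitly, for $Q_t=tQ_1+(1-t)Q_0$ we have $S^\varepsilon_{Q_t}=tS^\varepsilon_{Q_1}+(1-t)S^\varepsilon_{Q_0}$, because the constant term $\varepsilon I_d/d$ carries total weight $t+(1-t)=1$.

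The substantive step is the concavity of $f$. I would derive it from the standard fact that for a convex (resp.\ concave) scalar function $g$, the trace function $S\mapsto\Tr\,g(S)$ is convex (resp.\ concave) on Hermitian matrices. Here $g(s)=-s\log s$ is concave on $[0,\infty)$, with the convention $0\log 0=0$. To keep the argument self-contained, I would prove the trace-function fact via Klein's inequality, which the paper already proves in Appendix~\ref{app:spectral-minmax} for the matrix relative entropy. The chain of steps is as follows.

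\begin{enumerate}
    \item For positive definite $A,B$, write $\Tr(A\log A-A\log B)\ge\Tr(A-B)$.
    \item Apply this with $A=S_i$ and $B=\bar S=tS_1+(1-t)S_0$.
    \item Average the resulting inequalities with weights $t$ and $1-t$. The right-hand sides cancel because $\sum_i t_i S_i=\bar S$.
    \item The weighted left-hand sides give $t\Tr(S_1\log S_1)+(1-t)\Tr(S_0\log S_0)-\Tr(\bar S\log\bar S)\ge0$.
\end{enumerate}

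The last line is exactly the convexity of $\Tr(S\log S)$, which is the concavity of $f$.

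The main obstacle is the boundary case, where singular density matrices are allowed; this is needed for $\Hz$, since $\Sigma_Q$ may be rank deficient. For $\He$ there is no issue, because the spectral floor keeps $S_Q^\varepsilon\succeq(\varepsilon/d)I_d\succ0$, so Klein's inequality applies directly. For $\Hz$ I would handle singular matrices by perturbation. Replace each $S_i$ by $S_i^\delta=(1-\delta)S_i+\delta I_d/d$, which is positive definite. Apply the inequality above, and let $\delta\downarrow0$. This is legitimate because $S\mapsto\Tr(S\log S)$ is continuous on the compact set of density matrices: it is a continuous symmetric function of the eigenvalues, and $s\log s\to0$ as $s\to0$. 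Combining this concavity with the affine maps of the first paragraph yields concavity of both $\Hz$ and $\He$ in $Q$.
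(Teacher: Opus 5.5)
Your proposal is correct and matches the paper's proof: concavity of $S\mapsto-\Tr(S\log S)$ on density matrices, composed with the affine maps $Q\mapsto\Sigma_Q$ and $Q\mapsto S_Q^\varepsilon$. The difference is that the paper cites the concavity of the matrix entropy as a standard fact (Remark~\ref{rem:hz-dual} also notes it follows from the Gibbs variational principle, as a supremum of affine functions), whereas you prove it from Klein's inequality, and your perturbation-plus-continuity treatment of singular covariances is valid.
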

\begin{proof}
Note that $S\mapsto-\Tr(S\log S)$ is a concave functional of density matrices (unit-trace PSD matrices), and a concave function composed with an affine map is concave. This proves concavity of $\Hz$. 

For $\He$, the map $Q\mapsto S_Q^\varepsilon=(1-\varepsilon)\Sigma_Q+\varepsilon \tfrac{1}{d}I_d$ is also affine in $Q$, and therefore the same argument applies.
\end{proof}

\begin{lemma}[Bounding Smoothing Gap of Spectral Entropy]\label{lem:smoothing}
Let $d\ge2$, let $\Sigma$ be a $d\times d$ density matrix, and set $S=(1-\varepsilon)\Sigma+\varepsilon \tfrac{1}{d}I_d$ for $\varepsilon\in[0,1]$. Writing $\tau=\varepsilon(1-\frac{1}{d})$ and $h_2(t)=-t\log t-(1-t)\log(1-t)$,
\[
    \bigl|\mHe(S)-\mHe(\Sigma)\bigr|\le\tau\log(d-1)+h_2(\tau).
\]
Since $\He(Q)=\mHe(S_Q^\varepsilon)$ and $\Hz(Q)=\mHe(\Sigma_Q)$, this implies the following for every $Q$:
\[
    \bigl|\He(Q)-\Hz(Q)\bigr|\le\tau\log(d-1)+h_2(\tau).
\]
In particular, $|\rho_{\star,\He}-\rho_\star|$ admits the same bound. The bound is achieved when $\Sigma$ is rank one: then $\tfrac12\norm{S-\Sigma}_1=\tau$ exactly, $S$ has eigenvalues $1-\tau,\ \tfrac{\tau}{d-1},\dots,\tfrac{\tau}{d-1}$, and $\bigl|\mHe(S)-\mHe(\Sigma)\bigr|=\tau\log(d-1)+h_2(\tau)$.
\end{lemma}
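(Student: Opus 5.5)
The plan is to use the Fannes--Audenaert continuity inequality for von Neumann entropy. For density matrices $A,B$ on $\R^d$ with trace distance $T=\tfrac12\norm{A-B}_1\le 1-\tfrac1d$, it gives $|\mHe(A)-\mHe(B)|\le T\log(d-1)+h_2(T)$. The right-hand side is nondecreasing in $T$ on $[0,1-\tfrac1d]$, since its derivative $\log(d-1)+\log\frac{1-T}{T}$ is nonnegative there. It therefore suffices to show $\tfrac12\norm{S-\Sigma}_1\le\tau$, with $\tau=\varepsilon(1-\tfrac1d)\le 1-\tfrac1d$.

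First, I will compute $S-\Sigma=\varepsilon\bigl(\tfrac{I_d}{d}-\Sigma\bigr)$. Let $\sigma_1,\dots,\sigma_d$ be the eigenvalues of $\Sigma$. Then $\tfrac12\norm{S-\Sigma}_1=\tfrac{\varepsilon}{2}\sum_i|\tfrac1d-\sigma_i|$. Because $\sum_i(\tfrac1d-\sigma_i)=0$, this equals $\varepsilon\sum_i(\sigma_i-\tfrac1d)_+$. The trace distance between the uniform distribution and any probability vector is maximized at a point mass, so this is at most $\varepsilon(1-\tfrac1d)=\tau$. Applying Audenaert's bound together with monotonicity then gives the first inequality.

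For the second statement, I will substitute $\Sigma=\Sigma_Q$, which by definition gives $\He(Q)=\mHe(S_Q^\varepsilon)$ and $\Hz(Q)=\mHe(\Sigma_Q)$. Taking $Q=\Pdata$ yields the bound on $|\rho_{\star,\He}-\rho_\star|$.

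For tightness, I will take $\Sigma$ of rank one. Then $S$ has eigenvalues $1-\varepsilon+\tfrac{\varepsilon}{d}=1-\tau$ and $\tfrac{\varepsilon}{d}=\tfrac{\tau}{d-1}$, the latter with multiplicity $d-1$. The trace distance is exactly $\tau$. Since $\mHe(\Sigma)=0$, a direct computation gives $\mHe(S)=-(1-\tau)\log(1-\tau)-\tau\log\tfrac{\tau}{d-1}=h_2(\tau)+\tau\log(d-1)$.

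The main obstacle is that Audenaert's sharp inequality is not proved in the paper. I would invoke it as a known result. If a self-contained proof is wanted, I would use the pinching/majorization reduction to commuting spectra: sort the eigenvalues, note that the classical trace distance of the sorted spectra is at most the quantum one, and then apply the classical sharp Fannes bound via the optimal coupling and the grouping inequality $H(p)\le H(q)+h_2(T)+T\log(d-1)$ in both directions.
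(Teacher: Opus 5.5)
Your proposal is correct and follows the paper's proof essentially step for step. You bound $\tfrac12\norm{S-\Sigma}_1$ by $\tau$ through the eigenvalues of $\varepsilon(\tfrac1d I_d-\Sigma)$, invoke Fannes--Audenaert together with monotonicity of $t\log(d-1)+h_2(t)$ on $[0,1-\tfrac1d]$, specialize to $\Sigma_Q$ and $\Pdata$, and check equality in the rank-one case; the paper likewise cites Audenaert's inequality without proof. If you want the self-contained version, $S$ is a function of $\Sigma$, so the two commute and the pinching/majorization step is unnecessary: the classical coupling-plus-Fano argument applied to the two spectra in a common eigenbasis already suffices.
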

\begin{proof}
First, we bound the trace distance between $S$ and $\Sigma$. Since $S-\Sigma=\varepsilon(\tfrac{1}{d}I_d-\Sigma)$, we have $\tfrac12\norm{S-\Sigma}_1=\varepsilon\cdot\tfrac12\norm{\tfrac{1}{d}I_d-\Sigma}_1$. If $\Sigma$ has eigenvalues $p_i$, then $\tfrac12\norm{\tfrac{1}{d}I_d-\Sigma}_1=\tfrac12\sum_i|p_i-\frac{1}{d}|$, which over the probability simplex is maximized at a vertex $p=e_j$, giving $1-\frac{1}{d}$. Hence
\[
    \tfrac12\norm{S-\Sigma}_1\le\varepsilon(1-\frac{1}{d})=\tau.
\]
Next, we apply the Fannes--Audenaert inequality, showing that for $d\times d$ density matrices $A,B$ with $\tfrac12\norm{A-B}_1\le t\le 1-\frac{1}{d}$,
\[
    |\mHe(A)-\mHe(B)|\le t\log(d-1)+h_2(t).
\]
The right-hand side is non-decreasing in $t$ on $[0,1-\frac{1}{d}]$: its derivative $\log(d-1)+\log\frac{1-t}{t}$ is non-negative there, vanishing only at $t=1-\frac{1}{d}$. Applying the inequality at $t=\tfrac12\norm{S-\Sigma}_1\le\tau\le1-\frac{1}{d}$ (here $d\ge2$ ensures $\log(d-1)\ge0$) gives the claim with $A=S$ and $B=\Sigma$. Finally, the consequence for $\He,\Hz$ follows by the stated identities, and the wall bound follows by taking $Q=\Pdata$. For rank-one $\Sigma=vv^\top$, the eigenvalues of $\tfrac{1}{d}I_d-\Sigma$ are $\tfrac1d-1$ (once) and $\tfrac1d$ (with multiplicity $d-1$), so $\tfrac12\norm{S-\Sigma}_1=\tau$ exactly; the spectrum of $S$ is then $\bigl(1-\tau,\ \tfrac{\tau}{d-1},\dots,\tfrac{\tau}{d-1}\bigr)$, whence $\mHe(\Sigma)=0$ and $\mHe(S)=\tau\log(d-1)+h_2(\tau)$, so equality holds.
\end{proof}

The last ingredient is an elementary identity for exponential tilts. For a \emph{fixed} bounded reward $G$, it identifies the minimizer of the KL-anchored linear objective in closed form; the sampling-time tilt of Theorem~\ref{thm:tilt} is its self-consistent analogue, in which $G$ is the entropy energy evaluated at the optimum itself.

\begin{lemma}[Elementary Gibbs identity]\label{lem:gibbs}
Let $P$ be a probability law and $G$ measurable with $Z_G=\E_P[e^{G}]<\infty$; define $P^{G}(dx)=Z_G^{-1}e^{G(x)}P(dx)$. For every $Q\ll P$,
\begin{equation}\label{eq:gibbs-decomposition}
    \KL(Q\|P)-\E_Q[G]=\KL(Q\|P^{G})-\log Z_G,
\end{equation}
so $P^{G}$ is the unique minimizer over $\{Q\ll P\}$ of the left-hand side whenever it is finite.
\end{lemma}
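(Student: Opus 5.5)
The plan is to verify the decomposition \eqref{eq:gibbs-decomposition} by a direct density computation, and then read off the minimizer from nonnegativity of KL.

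\emph{Step 1: the tilted law.} First note that $P^{G}$ is a well-defined probability measure. Since $e^{G}>0$ and $Z_G<\infty$, we also have $Z_G>0$. Moreover, $P^{G}$ and $P$ are mutually absolutely continuous, with
\[
\frac{dP^{G}}{dP}=Z_G^{-1}e^{G}.
\]

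\emph{Step 2: the density identity.} Take $Q\ll P$. Then $Q\ll P^{G}$, and by the chain rule for Radon--Nikodym derivatives,
\[
\log\frac{dQ}{dP^{G}}=\log\frac{dQ}{dP}-G+\log Z_G
\]
holds $Q$-a.s. Integrating this against $Q$ gives
\[
\KL(Q\|P^{G})=\KL(Q\|P)-\E_Q[G]+\log Z_G,
\]
which rearranges to \eqref{eq:gibbs-decomposition}.

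\emph{Step 3: justifying the integration (the main obstacle).} The delicate point is that splitting $\E_Q$ of a sum into a sum of expectations is legitimate only when the terms are not of the form $\infty-\infty$. I would handle this by restricting to the case of interest, namely when the left-hand side is finite, and treating the terms as follows.
\begin{itemize}[leftmargin=*]
\item The negative part of $\log\frac{dQ}{dP^{G}}$ is always $Q$-integrable, by the standard bound $\E_Q\bigl[(\log \tfrac{dQ}{dR})^-\bigr]\le 1/e$.
\item The same bound applies to $\log\frac{dQ}{dP}$.
\item For $G$, Jensen gives $\E_Q[G]\le \KL(Q\|P)+\log Z_G$, in the Donsker--Varadhan style, so the positive part of $G$ is controlled whenever $\KL(Q\|P)<\infty$.
\end{itemize}
Alternatively, it is cleaner to state the identity under the convention that whenever one side is finite, so is the other. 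One can then check both directions using the a.s.\ identity together with these integrability bounds.

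\emph{Step 4: uniqueness of the minimizer.} Over $\{Q\ll P\}$, the right-hand side of \eqref{eq:gibbs-decomposition} is $\KL(Q\|P^{G})-\log Z_G$. Since $\KL(Q\|P^{G})\ge 0$ with equality iff $Q=P^{G}$ (Gibbs' inequality, i.e.\ strict convexity of $t\log t$), the left-hand side is minimized exactly at $Q=P^{G}$, where it attains the value $-\log Z_G$. The minimizer is admissible because $P^{G}\ll P$, and its value $-\log Z_G$ is finite. Hence $P^{G}$ is the unique minimizer among all $Q$ for which the objective is finite.
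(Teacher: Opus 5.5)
Your proposal is correct and follows the paper's proof: the paper likewise writes $\log(dQ/dP^{G})=\log(dQ/dP)-G+\log Z_G$ on $\{Q\ll P\}$, integrates against $Q$, and reads off the unique minimizer from $\KL(Q\|P^{G})\ge 0$ with equality iff $Q=P^{G}$. Your Step~3 integrability bookkeeping is extra care the paper skips. One caveat: the left-hand side equals $-\log Z_G$ at $Q=P^{G}$ only when $\E_{P^{G}}[G]<\infty$, which does not follow from $Z_G<\infty$; otherwise it is $\infty-\infty$, and this case is exactly what the lemma's ``whenever it is finite'' clause excludes.
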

\begin{proof}
On $\{Q\ll P\}$ we have $\log(dQ/dP^{G})=\log(dQ/dP)-G+\log Z_G$. Integrating against $Q$ gives
\[
    \KL(Q\|P^{G})=\KL(Q\|P)-\E_Q[G]+\log Z_G,
\]
which is \eqref{eq:gibbs-decomposition}. The left-hand side equals $\KL(Q\|P^{G})-\log Z_G$, minimized (over $Q\ll P$, equivalently $Q\ll P^{G}$ since the two are equivalent) uniquely at $Q=P^{G}$, where $\KL=0$.
\end{proof}

\section{The Constrained--Penalized Correspondence, the Spectral Game, and Training}
\label{app:formulation}

This appendix proves the results of Section~\ref{sec:formulation}. We first state and prove the constrained--penalized correspondence invoked in Section~\ref{subsec:objectives}, together with the monotone regularization path; Appendix~\ref{app:spectral-minmax} then proves the spectral min--max representation, and Appendix~\ref{app:training} the training-time instantiations of Section~\ref{subsec:training}.

\begin{theorem}[Constrained--penalized correspondence]\label{thm:dual}
Suppose that:
\begin{enumerate}[leftmargin=*]
    \item[\textup{(i)}] $\cP$ is a nonempty compact convex subset of a locally convex Hausdorff space of finite signed measures;
    \item[\textup{(ii)}] $Q\mapsto\Dref(Q;\Pref)$ is proper, convex, and lower semicontinuous on $\cP$;
    \item[\textup{(iii)}] $Q\mapsto\Sigma_Q$ is affine and continuous, so that $H\in\{\Hz,\He\}$ is concave (Lemma~\ref{lem:concave}) and upper semicontinuous;
    \item[\textup{(iv)}] \emph{(Slater condition)} there exists $\bar Q\in\cP$ with $\Dref(\bar Q;\Pref)<\infty$ and $H(\bar Q)>\rho$.
\end{enumerate}
Then the following hold.
\begin{enumerate}[leftmargin=*]
    \item[\textup{(a)}] \emph{Attainment.} The feasible set $\cP\cap\{H\ge\rho\}$ is nonempty and compact, and both the constrained minimum in \eqref{eq:Crho} and the inner minimum defining $F_\lambda$ are attained.
    \item[\textup{(b)}] \emph{Strong duality.}
    \begin{equation}\label{eq:dual}
        \min_{\substack{Q\in\cP\\H(Q)\ge\rho}}\Dref(Q;\Pref)
        \;=\;
        \max_{\lambda\ge0}\;
        \Bigl\{\min_{Q\in\cP}\bigl\{\Dref(Q;\Pref)-\lambda H(Q)\bigr\}
        +\lambda\rho\Bigr\}.
    \end{equation}
    \item[\textup{(c)}] \emph{Optimal multiplier.} There exists $\lambda^\star\ge0$ such that every solution $Q^\star$ of \eqref{eq:Crho} minimizes $F_{\lambda^\star}$ and satisfies the complementary-slackness identity $\lambda^\star\bigl(\rho-H(Q^\star)\bigr)=0$.
\end{enumerate}
\end{theorem}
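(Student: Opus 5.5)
We will prove Theorem~\ref{thm:dual} by standard Lagrangian duality for a convex program with one concave inequality constraint, carried out in three steps.

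\textbf{Step 1: attainment, part (a).} By (iii), $H$ is upper semicontinuous on $\cP$, so the superlevel set $\{Q\in\cP: H(Q)\ge\rho\}$ is closed in the compact set $\cP$, hence compact. It is nonempty because it contains $\bar Q$ from (iv). The objective $\Dref(\cdot;\Pref)$ is lower semicontinuous, so it attains its minimum on this compact set. The minimum is finite, since $\Dref(\bar Q;\Pref)<\infty$ and properness excludes the value $-\infty$.

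For the penalized problem, $F_\lambda=\Dref-\lambda H$ is lower semicontinuous for $\lambda\ge0$, being a sum of lower semicontinuous functions. It is also bounded below: $\Dref$ is bounded below on the compact set, and $H\le\log d$. Hence $F_\lambda$ attains its minimum on $\cP$.

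\textbf{Step 2: strong duality, parts (b) and (c).} Define the value function
\[
v(r)\defeq\inf\{\Dref(Q;\Pref): Q\in\cP,\ H(Q)\ge r\}.
\]
Convexity of $\Dref$ and concavity of $H$ make $v$ convex and nondecreasing. Slater's condition places $\rho$ in the interior of $\mathrm{dom}\,v$, because $v(r)\le\Dref(\bar Q;\Pref)<\infty$ for all $r<H(\bar Q)$. A finite convex function of one variable has a nonempty subdifferential at interior points of its domain, and monotonicity gives a subgradient $\lambda^\star\ge0$. The subgradient inequality
\[
v(r)\ge v(\rho)+\lambda^\star(r-\rho)
\]
then applies. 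For any $Q\in\cP$, take $r=H(Q)$ and use $\Dref(Q;\Pref)\ge v(H(Q))$ to get
\[
\Dref(Q;\Pref)-\lambda^\star H(Q)\ge v(\rho)-\lambda^\star\rho.
\]
So the dual function evaluated at $\lambda^\star$ is at least $v(\rho)$.

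Weak duality supplies the reverse inequality for every $\lambda\ge0$: restricting the inner minimum to feasible $Q$ gives
\[
\min_{Q\in\cP}F_\lambda(Q)+\lambda\rho\le\min_{\text{feasible }Q}\Dref(Q;\Pref).
\]
Combining the two inequalities proves \eqref{eq:dual}, with the outer maximum attained at $\lambda^\star$.

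For (c), let $Q^\star$ solve \eqref{eq:Crho}. Then
\[
v(\rho)=\Dref(Q^\star;\Pref)\ge F_{\lambda^\star}(Q^\star)+\lambda^\star\rho\ge\min F_{\lambda^\star}+\lambda^\star\rho=v(\rho).
\]
The first inequality uses $H(Q^\star)\ge\rho$ and $\lambda^\star\ge0$. Equality therefore holds throughout. Equality in the first inequality gives $\lambda^\star(H(Q^\star)-\rho)=0$, which is complementary slackness. Equality in the second shows that $Q^\star$ minimizes $F_{\lambda^\star}$.

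\textbf{Main obstacle.} The delicate point is existence of the multiplier, that is, showing $\rho\in\operatorname{int}\mathrm{dom}\,v$ and that $v$ is finite (not $-\infty$) near $\rho$. Finiteness follows because $\Dref$ is proper and lower semicontinuous on a compact set, hence bounded below. Interiority follows from (iv). One should also check that the subgradient is nonnegative; this follows from monotonicity of $v$. Alternatively, a separating-hyperplane argument in $\R^2$ on the convex set $\{(\Dref(Q;\Pref)+s,\ \rho-H(Q)+t): s,t\ge0\}$ gives the same multiplier, and Slater's condition rules out a vertical separating hyperplane.
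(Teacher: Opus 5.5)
Your proposal is correct and follows the paper's proof essentially step for step. It obtains attainment from lower/upper semicontinuity on the compact $\cP$, takes a nonnegative subgradient $\lambda^\star\in\partial v(\rho)$ of the convex nondecreasing value function (finite near $\rho$ by Slater and the lower bound on $\Dref$), derives strong duality from the subgradient inequality together with weak duality, and gets complementary slackness from the same equality chain. The one point the paper proves that you only assert is the convexity of $v$, which it establishes by mixing $\eta$-approximate minimizers and using convexity of $\Dref$ and concavity of $H$.
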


Theorem~\ref{thm:dual} justifies replacing \eqref{eq:Crho} by \eqref{eq:Plambda} at the specific multiplier $\lambda^\star$ dual to $\rho$; it does \emph{not} claim that every $\lambda\ge0$ corresponds to a user-chosen target level. The proof proceeds through the value function of the constrained problem, after recording an unconditional min--max identity (Proposition~\ref{prop:primal-minmax}).

\begin{proposition}[Exact primal min--max identity]\label{prop:primal-minmax}
For any feasible set $\cQ$ and arbitrary functionals $J,H$,
\[
    \inf_{\substack{Q\in\cQ\\H(Q)\ge\rho}}J(Q)
    =
    \inf_{Q\in\cQ}\sup_{\lambda\ge0}
    \bigl\{J(Q)+\lambda(\rho-H(Q))\bigr\}.
\]
\end{proposition}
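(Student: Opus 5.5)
The plan is to prove the identity pointwise in $Q$ and then take the infimum over $\cQ$ on both sides. The key observation is that the inner supremum over $\lambda\ge0$ acts as an exact indicator of the constraint set. For a fixed $Q\in\cQ$, I will evaluate
\[
    \psi(Q)\defeq\sup_{\lambda\ge0}\bigl\{J(Q)+\lambda(\rho-H(Q))\bigr\},
\]
splitting into two cases according to the sign of $\rho-H(Q)$.

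\emph{Case of a feasible $Q$.} If $H(Q)\ge\rho$, then $\rho-H(Q)\le0$, so each term $\lambda(\rho-H(Q))$ is nonpositive and vanishes at $\lambda=0$. Hence $\psi(Q)=J(Q)$.

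\emph{Case of an infeasible $Q$.} If $H(Q)<\rho$, then $\rho-H(Q)>0$, and letting $\lambda\to\infty$ gives $\psi(Q)=+\infty$. This holds provided $J(Q)>-\infty$. If $J(Q)=+\infty$, then $\psi(Q)=+\infty$ trivially.

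Combining the two cases, $\psi(Q)=J(Q)+\iota_{\{H\ge\rho\}}(Q)$, where $\iota$ is the convex-analysis indicator ($0$ on the set, $+\infty$ off it). Taking $\inf_{Q\in\cQ}$, the infeasible points contribute $+\infty$ and do not affect the infimum. What remains is exactly $\inf\{J(Q):Q\in\cQ,\ H(Q)\ge\rho\}$. This also covers the degenerate situation of an empty feasible set, where both sides equal $+\infty$ under the convention $\inf\emptyset=+\infty$.

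The only delicate point is the extended-real arithmetic, since $J$ and $H$ are called arbitrary functionals. I will fix conventions so that $J(Q)+\lambda(\rho-H(Q))$ is well defined. The intended setting has $J$ valued in $(-\infty,+\infty]$ (as for a divergence) and $H$ real-valued (as for $\Hz,\He$, which are bounded by $\log d$). Under these conventions there are no $\infty-\infty$ ambiguities.

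If $J$ could equal $-\infty$ at an infeasible point, the pointwise claim would fail. I will therefore state the real/proper-valuedness assumption explicitly. No convexity, compactness, or duality argument is needed, because no interchange of $\inf$ and $\sup$ is performed.
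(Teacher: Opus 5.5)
Your proof is correct and is the paper's argument: you evaluate the inner supremum pointwise as $J(Q)$ on $\{H\ge\rho\}$ and as $+\infty$ off it, then take the infimum over $\cQ$. Your extra caveat is a legitimate refinement that the paper's proof leaves implicit: for "arbitrary functionals" you need $J(Q)>-\infty$ at infeasible points and a convention ruling out $\infty-\infty$ in $J(Q)+\lambda(\rho-H(Q))$.
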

\begin{proof}
For fixed $Q$, $\sup_{\lambda\ge0}\{J(Q)+\lambda(\rho-H(Q))\}$ equals $J(Q)$ if $H(Q)\ge\rho$ (the coefficient of $\lambda$ is nonpositive, so the supremum is at $\lambda=0$) and $+\infty$ if $H(Q)<\rho$ (the coefficient is positive, so the expression diverges as $\lambda\to\infty$). Taking the infimum over $Q\in\cQ$ retains only feasible $Q$ and reproduces the constrained value.
\end{proof}

\begin{proof}[Proof of Theorem~\ref{thm:dual}]
By hypotheses (ii)--(iii), $J(Q)=\Dref(Q;\Pref)$ is proper, convex, l.s.c.\ on the convex set $\cP$, and $H\in\{\Hz,\He\}$ is concave (Lemma~\ref{lem:concave}) and u.s.c., so the feasible set $\cP\cap\{H\ge\rho\}$ is convex and closed.

First, we verify feasibility and attainment. By Slater (iv) the feasible set contains $\bar Q$, hence is nonempty; it is a closed subset of the compact $\cP$ (i), hence compact. A l.s.c.\ function attains its minimum on a nonempty compact set, so the constrained minimum in \eqref{eq:Crho} is attained; likewise, for each $\lambda\ge0$, $Q\mapsto J(Q)-\lambda H(Q)$ is l.s.c.\ on the compact $\cP$ and attains its minimum, so $F_\lambda$ has a minimizer and the displayed $\min$'s in \eqref{eq:dual} are justified.

The remainder of the proof runs through the value function of the constrained problem,
\[
    v(r)=\inf_{Q\in\cP}\bigl\{J(Q):H(Q)\ge r\bigr\},
\]
with $v(r)=+\infty$ if no feasible $Q$ exists; by the previous paragraph, $v(\rho)$ is finite and attained. Note also that $J$, being l.s.c.\ on the compact $\cP$, is bounded below on $\cP$, so $v(r)\ge\inf_{\cP}J>-\infty$ for every $r$.

Next, we establish the two structural properties of $v$. The value function is nondecreasing: if $r_1\le r_2$ then $\{H\ge r_2\}\subseteq\{H\ge r_1\}$, so the infimum over the smaller set is at least as large, i.e.\ $v(r_1)\le v(r_2)$. The value function is also convex. To see this, fix $r_1,r_2\in\R$, $\theta\in[0,1]$, and $\eta>0$, and choose feasible $Q_i$ (that is, $H(Q_i)\ge r_i$) with $J(Q_i)\le v(r_i)+\eta$. The mixture $Q_\theta=\theta Q_1+(1-\theta)Q_2\in\cP$ then satisfies $H(Q_\theta)\ge\theta r_1+(1-\theta)r_2$ by concavity of $H$, and, by convexity of $J$,
\[
    J(Q_\theta)\le\theta J(Q_1)+(1-\theta)J(Q_2)
    \le\theta v(r_1)+(1-\theta)v(r_2)+\eta.
\]
Hence $v(\theta r_1+(1-\theta)r_2)\le\theta v(r_1)+(1-\theta)v(r_2)+\eta$, and letting $\eta\downarrow0$ gives convexity.

Then, we show that $v$ is subdifferentiable at the target level $\rho$. By the Slater condition there is $\bar Q\in\cP$ with $J(\bar Q)<\infty$ and $H(\bar Q)>\rho$; hence $v(r)\le J(\bar Q)<\infty$ for all $r\le H(\bar Q)$. Combined with the lower bound above, $v$ is finite on $(-\infty,H(\bar Q)]$, an interval whose interior contains $\rho$. A finite convex function on an open interval is subdifferentiable at every interior point; pick $\lambda^\star\in\partial v(\rho)$. Since $v$ is nondecreasing, $\lambda^\star\ge0$.

With the multiplier $\lambda^\star$ in hand, we can prove strong duality. The subgradient inequality gives, for every $Q\in\cP$,
\[
    J(Q)\ge v(H(Q))\ge v(\rho)+\lambda^\star\bigl(H(Q)-\rho\bigr),
\]
hence $J(Q)-\lambda^\star H(Q)\ge v(\rho)-\lambda^\star\rho$; taking the infimum over $Q\in\cP$ yields $\inf_{Q\in\cP}\{J-\lambda^\star H\}+\lambda^\star\rho\ge v(\rho)$. Conversely, weak duality holds: for any $\lambda\ge0$ and any feasible $Q$ (that is, $H(Q)\ge\rho$),
\[
    J(Q)\ge J(Q)-\lambda\bigl(H(Q)-\rho\bigr)
    \ge\inf_{Q'\in\cP}\{J-\lambda H\}+\lambda\rho,
\]
and taking the infimum over feasible $Q$ gives $v(\rho)\ge\sup_{\lambda\ge0}\{\inf_{Q'}\{J-\lambda H\}+\lambda\rho\}$. The two inequalities together yield \eqref{eq:dual}, with the outer supremum attained at $\lambda^\star$.

Finally, we establish complementary slackness. Let $Q^\star$ solve \eqref{eq:Crho}. Feasibility gives $H(Q^\star)\ge\rho$, and by strong duality
\[
    J(Q^\star)=v(\rho)=\inf_{Q}\{J-\lambda^\star H\}+\lambda^\star\rho.
\]
On the one hand, $J(Q^\star)-\lambda^\star H(Q^\star)\ge\inf_Q\{J-\lambda^\star H\}=v(\rho)-\lambda^\star\rho$. On the other hand, feasibility and $\lambda^\star\ge0$ give $J(Q^\star)-\lambda^\star H(Q^\star)\le J(Q^\star)-\lambda^\star\rho=v(\rho)-\lambda^\star\rho$. The two bounds match, forcing $\lambda^\star\bigl(H(Q^\star)-\rho\bigr)=0$ and $J(Q^\star)-\lambda^\star H(Q^\star)=\inf_Q\{J-\lambda^\star H\}$, i.e.\ $Q^\star$ minimizes $F_{\lambda^\star}$.
\end{proof}

\begin{remark}[Which hypotheses do what]\label{rem:attainment}
The hypotheses of Theorem~\ref{thm:dual} play three separable roles. \emph{Duality.} The value-function argument shows that strong duality and the existence of an optimal multiplier $\lambda^\star\in\partial v(\rho)$ require only convexity of $\cP$, convex l.s.c.\ $J$, concave $H$, the Slater condition (iv), and the value function $v$ being proper and finite near $\rho$, i.e.\ $v(\rho)>-\infty$ (equivalently, $J$ bounded below on the feasible set), in addition to $v(\rho)<\infty$ from Slater. Convexity and Slater alone do not guarantee $v(\rho)>-\infty$: if $J$ is unbounded below on $\cP$ then $v\equiv-\infty$, no finite subgradient exists, and the duality statement is vacuous. In the compact setting of Theorem~\ref{thm:dual} this cannot happen, because an l.s.c.\ $J$ on the compact $\cP$ is bounded below; the properness caveat matters only in the noncompact variant. \emph{Attainment.} The compactness in (i), with l.s.c.\ $J$ and u.s.c.\ $H$, is otherwise used only to guarantee attainment of the constrained minimum, of the inner minima defining $F_\lambda$, and hence of the displayed $\min/\max$ in \eqref{eq:dual}. \emph{Noncompact classes.} In settings where $\{Q\ll P_\theta\}$ is convex but not compact (Section~\ref{sec:guidance}), existence and attainment are instead obtained by the direct method of Theorem~\ref{thm:tilt}, where $J=\KL(\cdot\|P_\theta)\ge0$ is automatically bounded below.
\end{remark}

\begin{proposition}[Saddle representation]\label{prop:saddle}
Under the hypotheses of Theorem~\ref{thm:dual}, with $Q^\star$ a solution of \eqref{eq:Crho} and $\lambda^\star$ the optimal multiplier, $(Q^\star,\lambda^\star)$ is a saddle point of $\mathcal L(Q,\lambda)=J(Q)+\lambda(\rho-H(Q))$ on $\cP\times[0,\infty)$:
\[
    \mathcal L(Q^\star,\lambda)\le\mathcal L(Q^\star,\lambda^\star)
    \le\mathcal L(Q,\lambda^\star)
    \qquad\forall Q\in\cP,\ \lambda\ge0,
\]
and consequently $\inf_{Q\in\cP}\sup_{\lambda\ge0}\mathcal L(Q,\lambda)=\sup_{\lambda\ge0}\inf_{Q\in\cP}\mathcal L(Q,\lambda)$.
\end{proposition}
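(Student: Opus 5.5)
The plan is to verify the two saddle inequalities directly, using only the conclusions of Theorem~\ref{thm:dual}. Write $J(Q)=\Dref(Q;\Pref)$ and $\mathcal L(Q,\lambda)=J(Q)+\lambda(\rho-H(Q))$. Theorem~\ref{thm:dual}(c) gives a multiplier $\lambda^\star\ge0$ with two properties: every solution $Q^\star$ of \eqref{eq:Crho} minimizes $F_{\lambda^\star}=J-\lambda^\star H$ over $\cP$, and complementary slackness $\lambda^\star(\rho-H(Q^\star))=0$ holds. These, together with the feasibility $H(Q^\star)\ge\rho$, are all that is needed.

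\emph{Left inequality.} Fix $\lambda\ge0$. Feasibility gives $\rho-H(Q^\star)\le0$, so $\lambda(\rho-H(Q^\star))\le 0$. Complementary slackness gives $\lambda^\star(\rho-H(Q^\star))=0$. Hence
\[
\mathcal L(Q^\star,\lambda)\le J(Q^\star)=\mathcal L(Q^\star,\lambda^\star).
\]

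\emph{Right inequality.} For any $Q\in\cP$, adding the constant $\lambda^\star\rho$ to $F_{\lambda^\star}$ gives $\mathcal L(Q,\lambda^\star)=F_{\lambda^\star}(Q)+\lambda^\star\rho$. Since $Q^\star$ minimizes $F_{\lambda^\star}$,
\[
\mathcal L(Q,\lambda^\star)\ge F_{\lambda^\star}(Q^\star)+\lambda^\star\rho=\mathcal L(Q^\star,\lambda^\star).
\]

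\emph{Min--max equality.} This is the standard saddle-point argument. By the saddle inequalities,
\[
\inf_Q\sup_\lambda\mathcal L\le\sup_\lambda\mathcal L(Q^\star,\lambda)=\mathcal L(Q^\star,\lambda^\star)=\inf_Q\mathcal L(Q,\lambda^\star)\le\sup_\lambda\inf_Q\mathcal L.
\]
The weak inequality $\sup_\lambda\inf_Q\mathcal L\le\inf_Q\sup_\lambda\mathcal L$ always holds, so the two sides are equal. As a consistency check, Proposition~\ref{prop:primal-minmax} identifies the left-hand side with the constrained value $v(\rho)$, matching \eqref{eq:dual}.

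The only point needing care is values: one must ensure $J(Q^\star)$ is finite so that no $\infty-\infty$ arises. This holds because Slater's condition and attainment give $v(\rho)<\infty$, and $J$ is bounded below on the compact $\cP$. Also, $H$ is finite-valued on density matrices. I expect no real obstacle beyond this bookkeeping, since the substantive work is already done in Theorem~\ref{thm:dual}.
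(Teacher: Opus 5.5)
Your proposal is correct and follows the paper's proof essentially step for step. The right inequality comes from $Q^\star$ minimizing $F_{\lambda^\star}$ together with $\mathcal L(\cdot,\lambda^\star)=F_{\lambda^\star}+\lambda^\star\rho$, and the left inequality from feasibility plus complementary slackness. The min--max equality is the standard saddle-point consequence; you write out that chain of inequalities and the finiteness bookkeeping, where the paper only cites it as standard.
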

\begin{proof}
Since $Q^\star$ minimizes $F_{\lambda^\star}=J-\lambda^\star H$ over $\cP$ (Theorem~\ref{thm:dual}) and $\mathcal L(\cdot,\lambda^\star)=F_{\lambda^\star}(\cdot)+\lambda^\star\rho$, the right inequality $\mathcal L(Q^\star,\lambda^\star)\le\mathcal L(Q,\lambda^\star)$ holds for all $Q\in\cP$. For the left inequality, $\mathcal L(Q^\star,\lambda)=J(Q^\star)+\lambda(\rho-H(Q^\star))$ is nonincreasing in $\lambda\ge0$ because $\rho-H(Q^\star)\le0$; together with $\lambda^\star(\rho-H(Q^\star))=0$ this gives $\mathcal L(Q^\star,\lambda)\le\mathcal L(Q^\star,\lambda^\star)$ for all $\lambda\ge0$. The equality of the two mixed extrema is the standard consequence of a saddle point.
\end{proof}

Even without convexity, the multiplier acts as a monotone control on global minimizers; the following statement, summarized in Section~\ref{subsec:objectives}, applies both to the ambient problem and to a nonconvex generator family.

\begin{theorem}[Monotone regularization path]\label{thm:mono}
Let $\cQ\in\{\cP,\Pgen\}$, write $J(Q)=\Dref(Q;\Pref)$, and suppose that a minimizer $Q_\lambda\in\argmin_{Q\in\cQ}F_\lambda(Q)$ exists for every $\lambda\ge0$. Then, for any $0\le\lambda_1<\lambda_2$ and any choices of minimizers $Q_{\lambda_1}$ and $Q_{\lambda_2}$,
\[
    H(Q_{\lambda_2})\ge H(Q_{\lambda_1}),
    \qquad
    J(Q_{\lambda_2})\ge J(Q_{\lambda_1}).
\]
\end{theorem}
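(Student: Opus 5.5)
The argument is the standard exchange (revealed-preference) argument for penalized problems. It needs no convexity, which is why it applies verbatim to both $\cQ=\cP$ and the nonconvex $\Pgen$. Fix $0\le\lambda_1<\lambda_2$ and minimizers $Q_1:=Q_{\lambda_1}$, $Q_2:=Q_{\lambda_2}$ of $F_{\lambda_1}$ and $F_{\lambda_2}$ over $\cQ$. Global optimality of each against the other gives
\[
J(Q_1)-\lambda_1H(Q_1)\le J(Q_2)-\lambda_1H(Q_2),\qquad
J(Q_2)-\lambda_2H(Q_2)\le J(Q_1)-\lambda_2H(Q_1).
\]

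Adding the two inequalities cancels the $J$ terms and leaves $(\lambda_2-\lambda_1)\bigl(H(Q_2)-H(Q_1)\bigr)\ge0$. Since $\lambda_2>\lambda_1$, this yields $H(Q_{\lambda_2})\ge H(Q_{\lambda_1})$. Substituting this back into the first inequality gives $J(Q_1)\le J(Q_2)-\lambda_1\bigl(H(Q_2)-H(Q_1)\bigr)\le J(Q_2)$, using $\lambda_1\ge0$. This is the monotonicity of $J$.

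The main point requiring care is finiteness, since the subtraction and addition steps are only legitimate when the quantities involved are finite. I will first note that $H\in\{\Hz,\He\}$ is always finite, taking values in $[0,\log d]$ because $\Sigma_Q$ is a $d\times d$ density matrix. Next, $J$ could a priori be $+\infty$ (for example, a KL anchor off the support). I will handle this by assuming, as is implicit in the existence of minimizers of a proper problem, that $\inf_{\cQ}F_\lambda<\infty$. Then $F_{\lambda_i}(Q_i)<\infty$ forces $J(Q_i)<\infty$, because $H$ is bounded. With all four quantities finite, the rearrangements above are valid. If $J(Q_i)$ could be $-\infty$, the statement would degenerate; I would exclude this by the standing properness of $\Dref$, which is nonnegative in all cases of interest.
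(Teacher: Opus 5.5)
Your proof is correct and is the paper's argument: compare each minimizer against the other, add the two optimality inequalities to get $(\lambda_2-\lambda_1)\bigl(H(Q_{\lambda_2})-H(Q_{\lambda_1})\bigr)\ge0$, then substitute back into the first inequality using $\lambda_1\ge0$. Your finiteness check is a point the paper leaves implicit. You use that $H$ is bounded and that $\inf_{\cQ}F_\lambda<\infty$, so $J(Q_{\lambda_i})$ is finite and the rearrangements are legitimate. The check is genuinely needed: if $J\equiv+\infty$ on $\cQ$, every $Q$ would be a minimizer and the ordering could fail.
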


\begin{proof}[Proof of Theorem~\ref{thm:mono}]
Let $0\le\lambda_1<\lambda_2$ and let $Q_{\lambda_1},Q_{\lambda_2}$ be any minimizers of $F_{\lambda_1},F_{\lambda_2}$ over $\cQ$. Abbreviate $H_i=H(Q_{\lambda_i})$, $J_i=J(Q_{\lambda_i})$. Optimality of $Q_{\lambda_1}$ at $\lambda_1$ and of $Q_{\lambda_2}$ at $\lambda_2$ gives
\[
    J_1-\lambda_1H_1\le J_2-\lambda_1H_2,
    \qquad
    J_2-\lambda_2H_2\le J_1-\lambda_2H_1.
\]
Adding these two inequalities cancels $J_1,J_2$ and yields $(\lambda_2-\lambda_1)(H_2-H_1)\ge0$, so $H_2\ge H_1$. Substituting into the first inequality, rearranged as $J_1-J_2\le\lambda_1(H_1-H_2)\le0$, gives $J_2\ge J_1$. No convexity, differentiability, uniqueness, or path continuity is used; the argument is valid for any selection of minimizers, so the ordering holds even when minimizers are nonunique.
\end{proof}

\begin{proposition}[Every penalized optimizer is a constrained optimizer at its attained level]\label{prop:attained}
Under the hypotheses of Theorem~\ref{thm:mono}, set $\rho_\lambda\defeq H(Q_\lambda)$. Then $Q_\lambda\in\underset{Q\in\cQ}{\arg\!\min}\:\bigl\{J(Q):H(Q)\ge\rho_\lambda\bigr\}$.
\end{proposition}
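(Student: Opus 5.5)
The argument is a direct comparison. Fix $\lambda\ge0$ and a minimizer $Q_\lambda$ of $F_\lambda$ over $\cQ$, and set $\rho_\lambda=H(Q_\lambda)$. Feasibility is immediate, since $H(Q_\lambda)=\rho_\lambda\ge\rho_\lambda$. It therefore remains to show that $J(Q_\lambda)\le J(Q)$ for every $Q\in\cQ$ with $H(Q)\ge\rho_\lambda$. As in Theorem~\ref{thm:mono}, we use no convexity, duality or uniqueness, only the global optimality of $Q_\lambda$ for $F_\lambda$. This is why the statement holds for both $\cQ=\cP$ and $\cQ=\Pgen$.

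The key step is the following. Take any feasible $Q$, so $Q\in\cQ$ and $H(Q)\ge\rho_\lambda$. Optimality of $Q_\lambda$ gives
\[
J(Q_\lambda)-\lambda\rho_\lambda\le J(Q)-\lambda H(Q).
\]
Rearranging,
\[
J(Q_\lambda)\le J(Q)-\lambda\bigl(H(Q)-\rho_\lambda\bigr)\le J(Q),
\]
where the last inequality uses $\lambda\ge0$ and $H(Q)-\rho_\lambda\ge0$. Since $Q$ was an arbitrary feasible point, $Q_\lambda$ attains the constrained minimum at level $\rho_\lambda$.

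The only real obstacle is bookkeeping with extended values. If $J(Q)=+\infty$, the inequality $J(Q_\lambda)\le J(Q)$ is trivial. If $J(Q)<\infty$, the rearrangement is legitimate provided $J(Q_\lambda)$ and $H$ are finite. $J(Q_\lambda)$ is finite because $F_\lambda(Q_\lambda)\le F_\lambda(Q)<\infty$ and $H$ is bounded: $0\le H\le\log d$ for both $\Hz$ and $\He$. This bounded range of $H$ also rules out any $\infty-\infty$ ambiguity.

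For $\lambda=0$ the statement reduces to saying that $Q_0$ minimizes $J$ over all of $\cQ$, hence in particular over the smaller set $\{Q\in\cQ:H(Q)\ge\rho_0\}$. This case is covered by the same inequality.
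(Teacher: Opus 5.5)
Your proposal is correct and follows the paper's proof exactly: you compare $F_\lambda(Q_\lambda)\le F_\lambda(Q)$ for a feasible $Q$, rearrange to $J(Q_\lambda)\le J(Q)-\lambda\bigl(H(Q)-\rho_\lambda\bigr)$, and use $\lambda\ge0$ with $H(Q)\ge\rho_\lambda$. Your extended-value bookkeeping, which relies on the bounded range $0\le H\le\log d$, is a small extra detail the paper leaves implicit.
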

\begin{proof}
Take any $Q\in\cQ$ with $H(Q)\ge\rho_\lambda$. Penalized optimality gives $J(Q_\lambda)-\lambda H(Q_\lambda)\le J(Q)-\lambda H(Q)$, so
\[
    J(Q_\lambda)\le J(Q)+\lambda(H(Q_\lambda)-H(Q))
    =J(Q)-\lambda(H(Q)-\rho_\lambda)\le J(Q),
\]
using $\lambda\ge0$ and $H(Q)\ge\rho_\lambda$.
\end{proof}

\begin{remark}[Constrained and penalized problems are not interchangeable in general]\label{rem:nonequiv}
Outside the convex setting the two problems need not share solutions for a prescribed $\lambda$ or $\rho$. Proposition~\ref{prop:attained} matches each $Q_\lambda$ to its \emph{own} attained level $\rho_\lambda$, while Theorem~\ref{thm:dual} recovers a \emph{prescribed} level $\rho$ only under its convexity and Slater hypotheses. On a nonconvex $\Pgen$ the saddle representation (Proposition~\ref{prop:saddle}) can fail with a positive duality gap; we therefore do not claim per-$\lambda$ constrained--penalized equivalence for $\Pgen$.
\end{remark}

\subsection{The spectral min--max representation: proofs}
\label{app:spectral-minmax}

This appendix proves Proposition~\ref{prop:spectral-minmax} and the best-response identity quoted in Sections~\ref{subsec:objectives} and~\ref{subsec:training}. The main tool is the Gibbs variational principle for matrix entropy, which we derive from Klein's inequality; both results are proved in full.

\begin{lemma}[Klein's inequality for matrix relative entropy]\label{lem:klein}
Let $S,T$ be $d\times d$ density matrices with $T\succ0$. Then
\[
    \Tr\bigl(S\log S-S\log T\bigr)\ge\Tr(S)-\Tr(T)=0,
\]
with equality if and only if $S=T$.
\end{lemma}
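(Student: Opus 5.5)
The plan is to diagonalize both matrices and reduce the claim to the scalar inequality $a\log a - a\log b \ge a - b$ for $a\ge0$, $b>0$. Equality in that scalar inequality holds iff $a=b$, with the convention $0\log 0=0$. The scalar inequality is just convexity of $t\mapsto t\log t$, or equivalently $\log x\le x-1$ applied at $x=b/a$.

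Write the spectral decompositions $S=\sum_i s_i\, u_iu_i^\top$ and $T=\sum_j t_j\, v_jv_j^\top$, with $s_i\ge0$ and $t_j>0$ since $T\succ0$, and set $c_{ij}\defeq (u_i^\top v_j)^2\ge0$. Because both bases are orthonormal, the matrix $(c_{ij})$ is doubly stochastic: $\sum_j c_{ij}=1$ and $\sum_i c_{ij}=1$. Expanding in these bases gives
\[
    \Tr(S\log S)=\sum_i s_i\log s_i=\sum_{i,j}c_{ij}\,s_i\log s_i,
    \qquad
    \Tr(S\log T)=\sum_{i,j}c_{ij}\,s_i\log t_j .
\]
Hence
\[
    \Tr(S\log S-S\log T)-\Tr(S)+\Tr(T)=\sum_{i,j}c_{ij}\bigl(s_i\log s_i-s_i\log t_j-s_i+t_j\bigr),
\]
using $\sum_{i,j}c_{ij}s_i=\Tr S$ and $\sum_{i,j}c_{ij}t_j=\Tr T$. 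Every bracket is nonnegative by the scalar inequality, so the sum is $\ge0$. The unit traces then give the stated right-hand side $0$.

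The one genuinely delicate step is the equality case. Equality forces $s_i=t_j$ whenever $c_{ij}>0$. I would handle this by fixing an eigenvector $v_j$ of $T$ and expanding it in the basis $\{u_i\}$. Only indices with $c_{ij}>0$ contribute, and all of them have $s_i=t_j$, so $Sv_j=t_jv_j$. Since this holds for every $j$ and the $v_j$ form a basis, $S=T$. The converse direction, that $S=T$ gives equality, is immediate. Kernel eigenvalues $s_i=0$ cause no trouble: the scalar inequality at $a=0$ reads $b\ge0$, with equality iff $b=0$, and $b=t_j>0$ here.
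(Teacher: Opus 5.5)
Your proposal is correct and follows essentially the same route as the paper's proof: both diagonalize $S$ and $T$, use the doubly stochastic overlap matrix $c_{ij}=(u_i^\top v_j)^2$ to apply the scalar tangent-line inequality termwise, and settle the equality case by showing $Sv_j=t_jv_j$ for every eigenvector $v_j$ of $T$. Your treatment of zero eigenvalues of $S$, via the $a=0$ case of the scalar inequality (which is strict because $t_j>0$), also matches the paper's.
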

\begin{proof}
Write spectral decompositions $S=\sum_i\alpha_iu_iu_i^\top$ and $T=\sum_j\beta_jv_jv_j^\top$ with orthonormal bases $(u_i),(v_j)$, eigenvalues $\alpha_i\ge0$, $\beta_j>0$, and set $c_{ij}\defeq(u_i^\top v_j)^2$. The matrix $(c_{ij})$ is doubly stochastic: $\sum_jc_{ij}=\norm{u_i}^2=1$ and $\sum_ic_{ij}=\norm{v_j}^2=1$, since each basis is orthonormal. Expanding the traces in these bases,
\[
    \Tr(S\log S)=\sum_i\alpha_i\log\alpha_i
    =\sum_{i,j}c_{ij}\,\alpha_i\log\alpha_i,
    \qquad
    \Tr(S\log T)=\sum_{i,j}c_{ij}\,\alpha_i\log\beta_j,
\]
using $\sum_jc_{ij}=1$ for the first identity and $u_i^\top(\log T)u_i=\sum_jc_{ij}\log\beta_j$ for the second. The scalar inequality $x\log x-x\log y\ge x-y$, valid for $x\ge0$, $y>0$ (with $0\log0=0$; it is the tangent-line inequality for the convex function $x\mapsto x\log x$ at $y$), holds with equality if and only if $x=y$: for $x>0$ this is strict convexity, and at $x=0$ the inequality reads $0\ge-y$, strict since $y>0$. Applying it termwise,
\[
    \Tr\bigl(S\log S-S\log T\bigr)
    =\sum_{i,j}c_{ij}\bigl(\alpha_i\log\alpha_i-\alpha_i\log\beta_j\bigr)
    \ge\sum_{i,j}c_{ij}(\alpha_i-\beta_j)
    =\sum_i\alpha_i-\sum_j\beta_j=0,
\]
where the last step again uses double stochasticity. If equality holds, then every pair $(i,j)$ with $c_{ij}>0$ satisfies $\alpha_i=\beta_j$. Fix $j$ and expand $v_j=\sum_i(u_i^\top v_j)u_i$; then
\[
    Sv_j=\sum_i\alpha_i(u_i^\top v_j)\,u_i
    =\sum_i\beta_j(u_i^\top v_j)\,u_i=\beta_jv_j,
\]
since every index $i$ contributing a nonzero coefficient has $c_{ij}>0$, hence $\alpha_i=\beta_j$. Thus $S$ acts as $\beta_j$ on each $v_j$, so $S=\sum_j\beta_jv_jv_j^\top=T$. Conversely $S=T$ gives equality trivially.
\end{proof}

\begin{lemma}[Gibbs variational principle for matrix entropy]\label{lem:gibbs-matrix}
For every $d\times d$ density matrix $S$,
\[
    \sup_{\Theta=\Theta^\top}
    \bigl\{-\Tr(S\Theta)-\log\Tr(e^{-\Theta})\bigr\}
    =-\mHe(S),
\]
where the supremum runs over all symmetric $d\times d$ matrices. The objective is invariant under $\Theta\mapsto\Theta+cI_d$ for $c\in\R$. If $S\succ0$, the supremum is attained exactly at the family $\Theta=-\log S+cI_d$, $c\in\R$; if $S$ is singular, the supremum is not attained, but is approached along $\Theta_\delta=-\log(S+\delta I_d)$ as $\delta\downarrow0$, whose objective value is $\sum_i\alpha_i\log(\alpha_i+\delta)-\log(1+\delta d)$ in terms of the eigenvalues $\alpha_i$ of $S$.
\end{lemma}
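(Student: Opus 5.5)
The plan is to reduce everything to Klein's inequality (Lemma~\ref{lem:klein}) by normalizing the Gibbs state. The three claims are handled in order: the shift invariance, the variational identity with attainment when $S\succ0$, and the singular case by an explicit approximating family.

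\textbf{Shift invariance.} This is immediate. Replacing $\Theta$ by $\Theta+cI_d$ changes $-\Tr(S\Theta)$ by $-c\Tr(S)=-c$. It also gives $\Tr(e^{-\Theta-cI_d})=e^{-c}\Tr(e^{-\Theta})$, so $-\log\Tr(e^{-\Theta})$ changes by $+c$. The two changes cancel.

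\textbf{The variational identity.} Fix a symmetric $\Theta$ and define the Gibbs density matrix $T_\Theta\defeq e^{-\Theta}/\Tr(e^{-\Theta})$. This matrix is positive definite with unit trace, and $\log T_\Theta=-\Theta-\log\Tr(e^{-\Theta})\,I_d$. Since $\Tr(S)=1$, we get $\Tr(S\log T_\Theta)=-\Tr(S\Theta)-\log\Tr(e^{-\Theta})$. Hence the objective equals
\[
\Tr(S\log T_\Theta)=-\mHe(S)-\Tr\bigl(S\log S-S\log T_\Theta\bigr).
\]
By Lemma~\ref{lem:klein}, the bracketed relative entropy is $\ge0$, so the objective is at most $-\mHe(S)$. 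Equality holds if and only if $T_\Theta=S$. This requires $S\succ0$, because $T_\Theta$ is positive definite. When $S\succ0$, the condition $T_\Theta=S$ means $-\Theta=\log S+cI_d$ for some scalar $c$ absorbing the normalization. So the maximizers are exactly $\Theta=-\log S+cI_d$, and plugging in confirms the value $-\mHe(S)$. When $S$ is singular, no positive definite $T_\Theta$ equals $S$, so the strict part of Klein's inequality shows the supremum is not attained.

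\textbf{The singular case.} It remains to show that the supremum still equals $-\mHe(S)$, which I expect to be the only mildly delicate step. Take $\Theta_\delta=-\log(S+\delta I_d)$. Then $e^{-\Theta_\delta}=S+\delta I_d$ has trace $1+\delta d$, and $\Tr(S\Theta_\delta)=-\sum_i\alpha_i\log(\alpha_i+\delta)$, computed in the eigenbasis of $S$. So the objective equals $\sum_i\alpha_i\log(\alpha_i+\delta)-\log(1+\delta d)$, which is the displayed formula. As $\delta\downarrow0$, the terms with $\alpha_i>0$ converge to $\alpha_i\log\alpha_i$. The terms with $\alpha_i=0$ vanish identically for every $\delta$, so no $0\cdot\log 0$ issue arises. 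The last term tends to $0$. The limit is therefore $\sum_i\alpha_i\log\alpha_i=-\mHe(S)$, so the supremum is at least $-\mHe(S)$, and combined with the upper bound it is exactly that value.
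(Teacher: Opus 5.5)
Your proposal is correct and follows essentially the same approach as the paper's proof. You obtain shift invariance by direct computation, the upper bound and its equality case from Klein's inequality applied to the normalized Gibbs state $e^{-\Theta}/\Tr(e^{-\Theta})$, and non-attainment for singular $S$ from the positive definiteness of that state. You then recover the value $-\mHe(S)$ in the singular case through the same approximating family $\Theta_\delta=-\log(S+\delta I_d)$ that the paper uses.
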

\begin{proof}
First, the invariance: replacing $\Theta$ by $\Theta+cI_d$ changes $-\Tr(S\Theta)$ by $-c\,\Tr(S)=-c$ and changes $-\log\Tr(e^{-\Theta-cI})=-\log(e^{-c}\Tr(e^{-\Theta}))$ by $+c$, so the objective is unchanged.

Next, the upper bound. For symmetric $\Theta$, let $R_\Theta\defeq e^{-\Theta}/\Tr(e^{-\Theta})$, a positive-definite density matrix with $\log R_\Theta=-\Theta-\log\Tr(e^{-\Theta})\,I_d$. Klein's inequality (Lemma~\ref{lem:klein}) with $T=R_\Theta$ gives $\Tr(S\log S)-\Tr(S\log R_\Theta)\ge0$, which expands to
\[
    0\le\Tr(S\log S)+\Tr(S\Theta)+\log\Tr(e^{-\Theta}),
\]
i.e.\ $-\Tr(S\Theta)-\log\Tr(e^{-\Theta})\le-\mHe(S)$, with equality if and only if $R_\Theta=S$.

Then, attainment. If $S\succ0$, the equation $R_\Theta=S$ has the solutions $\Theta=-\log S+cI_d$, $c\in\R$, and no others: $R_\Theta=S$ forces $-\Theta=\log S+\log\Tr(e^{-\Theta})I_d$. If $S$ is singular, no symmetric $\Theta$ satisfies $R_\Theta=S$, since $R_\Theta\succ0$ always; hence the supremum is not attained. Finally, evaluating the objective at $\Theta_\delta=-\log(S+\delta I_d)$ gives
\begin{align*}
    -\Tr(S\Theta_\delta)-\log\Tr(e^{-\Theta_\delta})
    &=\Tr\bigl(S\log(S+\delta I_d)\bigr)-\log\Tr(S+\delta I_d)\\
    &=\sum_i\alpha_i\log(\alpha_i+\delta)-\log(1+\delta d),
\end{align*}
which converges to $\sum_i\alpha_i\log\alpha_i=-\mHe(S)$ as $\delta\downarrow0$ (the terms with $\alpha_i=0$ contribute $0\cdot\log\delta=0$). Hence the supremum equals $-\mHe(S)$ in all cases.
\end{proof}

\begin{remark}[Smoothed versus unsmoothed dual, and concavity as a byproduct]\label{rem:hz-dual}
Lemma~\ref{lem:gibbs-matrix} explains why the min--max form is stated for the \emph{smoothed} entropy. For the unsmoothed $\Hz$, the covariance $\Sigma_Q$ can be singular, in which case the supremum is not attained, and the near-maximizers $\Theta_\delta=-\log(\Sigma_Q+\delta I_d)$ have operator norm growing like $\log(1/\delta)$: no compact adversary class captures the supremum uniformly over all $Q$. The spectral floor $S_Q^\varepsilon\succeq(\varepsilon/d)I_d$ removes both obstructions: it confines the best response to the compact class $\Tset$ of Proposition~\ref{prop:spectral-minmax} and guarantees attainment. The lemma also yields an independent proof of fact (a) of the organization paragraph: it displays $-\mHe(S)$ as a supremum of affine functions of $S$, hence convex, so $\mHe$ is concave, which also yields Lemma~\ref{lem:concave}.
\end{remark}

\begin{remark}[Further readings of the min--max form]\label{rem:minmax-readings}
Two structural readings of \eqref{eq:IGA-minmax} complement the closed-form best response discussed in Section~\ref{subsec:objectives}. \emph{Linearization:} for fixed $\Theta$, the inner objective depends on $Q$ only through the expectation of the per-sample payoff $\phi(x)^\top\Theta\,\phi(x)$, so the distribution-level reward $-\lambda\He(Q)$ becomes an ordinary expected loss at the cost of one $d\times d$ symmetric adversarial variable; Section~\ref{subsec:training} exploits this directly, where the spectral player joins the discriminator as a second adversary that admits a closed-form best response. \emph{Why smoothing:} for the unsmoothed entropy the supremum runs over an unbounded matrix class and is not attained at rank-deficient covariances (Remark~\ref{rem:hz-dual}), whereas the spectral floor $\varepsilon \tfrac{1}{d}I_d$ confines the adversary to the compact class $\Tset$ and guarantees attainment; the interchange in part (iii) then follows from Sion's minimax theorem, whose compactness requirement is satisfied by the $\Theta$-side alone.
\end{remark}

\begin{proof}[Proof of Proposition~\ref{prop:spectral-minmax}]
First, part (i). Since $S_Q^\varepsilon\succeq(\varepsilon/d)I_d\succ0$, Lemma~\ref{lem:gibbs-matrix} gives
\[
    -\He(Q)=-\mHe(S_Q^\varepsilon)
    =\sup_{\Theta=\Theta^\top}
    \bigl\{-\Tr(S_Q^\varepsilon\Theta)-\log\Tr(e^{-\Theta})\bigr\},
\]
attained exactly at the family $-\log S_Q^\varepsilon+cI_d$. Imposing $\Tr(\Theta)=0$ pins the constant at $c=\tfrac1d\Tr(\log S_Q^\varepsilon)$, which is the matrix $\Theta^\star(Q)$ of the statement; it is the unique traceless maximizer, since attainment forces membership in the family. For the operator-norm bound, the eigenvalues of $S_Q^\varepsilon$ lie in $[\varepsilon/d,\,1-\varepsilon(1-\tfrac1d)]\subseteq[\varepsilon/d,1]$, so the eigenvalues $\ell_1,\dots,\ell_d$ of $-\log S_Q^\varepsilon$ lie in $[0,\log(d/\varepsilon)]$; centering replaces $\ell_i$ by $\ell_i-\bar\ell$ with $\bar\ell=\tfrac1d\sum_j\ell_j\in[0,\log(d/\varepsilon)]$, so each centered eigenvalue satisfies $|\ell_i-\bar\ell|\le\max_j\ell_j-\min_j\ell_j\le\log(d/\varepsilon)$. Hence $\Theta^\star(Q)\in\Tset$, and restricting the supremum to $\Tset$ preserves both the value and the attainment, upgrading $\sup$ to $\max$. It remains to pass to the per-sample form: for traceless $\Theta$,
\[
    \Tr(S_Q^\varepsilon\Theta)
    =(1-\varepsilon)\Tr(\Sigma_Q\Theta)+\frac{\varepsilon}{d}\Tr(\Theta)
    =(1-\varepsilon)\,\E_{X\sim Q}\bigl[\phi(X)^\top\Theta\,\phi(X)\bigr],
\]
using $\Tr(\Sigma_Q\Theta)=\E_Q[\Tr(\phi\phi^\top\Theta)]=\E_Q[\phi^\top\Theta\phi]$. This is \eqref{eq:vne-dual}.

Next, part (ii). Multiplying \eqref{eq:vne-dual} by $\lambda\ge0$ preserves the maximum (for $\lambda=0$ both sides of the resulting identity vanish identically on $\Tset$, since $-\lambda\He(Q)=0$ and the $\Theta$-dependent terms carry the factor $\lambda$), and adding $\Dref(Q;\Pref)$, which does not depend on $\Theta$, gives the pointwise identity \eqref{eq:IGA-minmax}. Since the two sides agree as functions of $Q$, the problem \eqref{eq:Plambda} of minimizing the left side over $\cP$ is the two-player game of minimizing the right side, as claimed.

Finally, part (iii). $\Tset$ is convex and compact. For fixed $\Theta$, $Q\mapsto\mathcal A_\lambda(Q,\Theta)$ is convex and l.s.c.: the expectation term is affine in $Q$ (and weakly continuous when the continuity clause of Assumption~\ref{ass:finite-dim} is in force, $\phi$ being bounded and continuous), and $\Dref(\cdot;\Pref)$ is convex l.s.c.\ by hypothesis. For fixed $Q$, $\Theta\mapsto\mathcal A_\lambda(Q,\Theta)$ is concave and continuous: the expectation term is linear in $\Theta$, and $-\lambda\log\Tr(e^{-\Theta})$ is concave, since Lemma~\ref{lem:gibbs-matrix} exhibits $\Theta\mapsto-\log\Tr(e^{-\Theta})$ as an infimum over $S$ of affine functions of $\Theta$ (namely $-\log\Tr(e^{-\Theta})=\inf_S\{\Tr(S\Theta)-\mHe(S)\}$, the dual reading of the same variational identity). Sion's minimax theorem requires compactness of only one side, here the $\Theta$-side $\Tset$, so no compactness of $\cP$ is needed, and the interchange holds as stated.
\end{proof}

\begin{lemma}[Best response and the entropy energy]\label{lem:best-response}
For every distribution $Q$ and every $x\in\cX$,
\[
    \lambda(1-\varepsilon)\,\phi(x)^\top\Theta^\star(Q)\,\phi(x)
    =\lambda\,\Gen{Q}(x)-c_Q,
\]
where $c_Q\defeq\frac{\lambda(1-\varepsilon)}{d}\Tr\bigl(-\log S_Q^\varepsilon\bigr)\in\bigl[0,\ \lambda(1-\varepsilon)\log(d/\varepsilon)\bigr]$ is a constant independent of $x$.
In particular, the per-sample payoff of the best-responding spectral adversary equals the entropy energy \eqref{eq:energy-def}, up to an additive constant independent of $x$.
\end{lemma}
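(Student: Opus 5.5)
The argument is a direct substitution of the closed-form best response from Proposition~\ref{prop:spectral-minmax}(i) into the per-sample payoff, followed by a spectral bound on the trace term. No variational argument is needed.

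\textbf{Substitution.} Write $L\defeq-\log S_Q^\varepsilon$, so that $\Theta^\star(Q)=L-\tfrac1d\Tr(L)\,I_d$. Then, for every $x\in\cX$,
\[
    \phi(x)^\top\Theta^\star(Q)\,\phi(x)
    =\phi(x)^\top L\,\phi(x)-\tfrac1d\Tr(L)\,\norm{\phi(x)}_2^2
    =\phi(x)^\top L\,\phi(x)-\tfrac1d\Tr(L).
\]
The last equality uses the unit-norm normalization $\norm{\phi(x)}_2=1$, and this is the only place the normalization enters.

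Multiplying by $\lambda(1-\varepsilon)$ gives
\[
    \lambda(1-\varepsilon)\,\phi(x)^\top\Theta^\star(Q)\,\phi(x)
    =\lambda(1-\varepsilon)\,\phi(x)^\top L\,\phi(x)-\tfrac{\lambda(1-\varepsilon)}{d}\Tr(L).
\]
By the definition \eqref{eq:energy-def}, the first term on the right is exactly $\lambda\Gen{Q}(x)$. The second term is $c_Q$, which does not depend on $x$. This establishes the claimed identity.

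\textbf{Range of $c_Q$.} The spectral floor places the eigenvalues of $S_Q^\varepsilon$ in $[\varepsilon/d,\,1-\varepsilon(1-\tfrac1d)]\subseteq[\varepsilon/d,1]$, as recorded after \eqref{eq:smoothed-def}. Hence every eigenvalue of $L$ lies in $[0,\log(d/\varepsilon)]$. The normalized trace $\tfrac1d\Tr(L)$ is the average of these eigenvalues, so it also lies in $[0,\log(d/\varepsilon)]$. Multiplying by $\lambda(1-\varepsilon)\ge0$ gives
\[
    c_Q\in\bigl[0,\ \lambda(1-\varepsilon)\log(d/\varepsilon)\bigr].
\]

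\textbf{Interpretive claim.} The best-responding adversary enters $\mathcal A_\lambda$ through the term $-\lambda(1-\varepsilon)\,\E_Q[\phi^\top\Theta^\star(Q)\,\phi]$. By the identity just proved, this payoff equals $\lambda\Gen{Q}$ up to the $x$-independent constant $c_Q$, which is the statement.

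The only step requiring care is keeping track of signs and the centering constant, so I do not expect a real obstacle.
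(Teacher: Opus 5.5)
Your proof is correct and matches the paper's argument essentially step for step. You substitute the closed form of $\Theta^\star(Q)$, use $\norm{\phi(x)}_2=1$ to turn the centering term into the $x$-independent constant $c_Q$, and bound $c_Q$ by averaging the eigenvalues of $-\log S_Q^\varepsilon$, which lie in $[0,\log(d/\varepsilon)]$ by the spectral floor.
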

\begin{proof}
By definition $\Theta^\star(Q)=-\log S_Q^\varepsilon+\tfrac1d\Tr(\log S_Q^\varepsilon)I_d$, so
\begin{align*}
    \lambda(1-\varepsilon)\,\phi(x)^\top\Theta^\star(Q)\,\phi(x)
    &=\lambda(1-\varepsilon)\,\phi(x)^\top\bigl(-\log S_Q^\varepsilon\bigr)\phi(x)\\
    &\qquad+\frac{\lambda(1-\varepsilon)}{d}\Tr\bigl(\log S_Q^\varepsilon\bigr)\norm{\phi(x)}_2^2.
\end{align*}
The first term is $\lambda\,\Gen{Q}(x)$ by \eqref{eq:energy-def}, and since $\norm{\phi(x)}_2=1$ the second term is the constant $-c_Q$. The range of $c_Q$ follows because the eigenvalues of $-\log S_Q^\varepsilon$ lie in $[0,\log(d/\varepsilon)]$, so their average lies in the same interval.
\end{proof}

\subsection{Training-time instantiations: adversarial and maximum-likelihood models}
\label{app:training}

This appendix proves the results invoked in Section~\ref{subsec:training}: the joint-adversary identity for adversarially trained generators (Proposition~\ref{prop:IGA-gan}), the exactness of gradients computed through the frozen spectral adversary (Proposition~\ref{prop:entropy-grad}), and the likelihood--KL--ELBO relations underlying the maximum-likelihood instantiation (Proposition~\ref{prop:mle-vae}).

\begin{proof}[Proof of Proposition~\ref{prop:IGA-gan}]
Fix $Q\in\cQ$. By the critic representation \eqref{eq:critic-fidelity} and the entropy dual \eqref{eq:vne-dual},
\[
    \Dref(Q;\widehat P_n)-\lambda\He(Q)
    =\sup_{D\in\cD}A_Q(D)+\max_{\Theta\in\Tset}B_Q(\Theta),
\]
where $A_Q(D)=\E_{\widehat P_n}[u(D)]-\E_Q[v(D)]$ and $B_Q(\Theta)=-\lambda(1-\varepsilon)\E_Q[\phi^\top\Theta\phi]-\lambda\log\Tr(e^{-\Theta})$; here the passage from \eqref{eq:vne-dual} to $-\lambda\He(Q)=\max_\Theta B_Q(\Theta)$ is part (ii) of the proof of Proposition~\ref{prop:spectral-minmax} (multiplication by $\lambda\ge0$, with the degenerate case $\lambda=0$ giving $B_Q\equiv0=-\lambda\He(Q)$). Since $D$ and $\Theta$ range over independent sets and the two objectives share no variable, the suprema add:
\[
    \sup_{D\in\cD}A_Q(D)+\max_{\Theta\in\Tset}B_Q(\Theta)
    =\sup_{D\in\cD}\,\max_{\Theta\in\Tset}\,\bigl\{A_Q(D)+B_Q(\Theta)\bigr\},
\]
and the right-hand side is the inner expression of \eqref{eq:IGA-gan}. The identity therefore holds \emph{pointwise} in $Q$, and taking the infimum over an arbitrary class $\cQ$, convex or not, preserves it. Attainment of the $\Theta$-maximum at $\Theta^\star(Q)$ is part (i) of Proposition~\ref{prop:spectral-minmax}. No convexity of $\cQ$ was used and no minimax interchange was performed.
\end{proof}

\begin{proposition}[Frozen spectral adversary yields exact entropy gradients]\label{prop:entropy-grad}
Let $Z\sim P_Z$ on a latent space $\mathcal Z$, let $g_\vartheta:\mathcal Z\to\cX$ be measurable for each $\vartheta\in\R^p$ and differentiable in $\vartheta$ at $P_Z$-a.e.\ $z$, let $\phi$ be differentiable on an open set containing the relevant ranges with $\norm{\phi}_2\equiv1$, and let $Q_\vartheta$ denote the law of $g_\vartheta(Z)$. Suppose there are a neighborhood $U$ of $\vartheta_0$ and $L\in L^1(P_Z)$ such that
\[
    \bigl\|\nabla_\vartheta\bigl[\phi(g_\vartheta(z))\phi(g_\vartheta(z))^\top\bigr]\bigr\|\le L(z)
    \qquad\text{for all }\vartheta\in U\text{ and }P_Z\text{-a.e.\ }z.
\]
Then $\vartheta\mapsto\He(Q_\vartheta)$ is differentiable at $\vartheta_0$ and
\[
    \nabla_\vartheta\,\He(Q_\vartheta)\Big|_{\vartheta_0}
    =(1-\varepsilon)\,\E_{Z}\Bigl[\nabla_\vartheta\,
    \phi\bigl(g_\vartheta(Z)\bigr)^\top\,\Theta^\star(Q_{\vartheta_0})\,
    \phi\bigl(g_\vartheta(Z)\bigr)\Big|_{\vartheta_0}\Bigr],
\]
i.e.\ the exact gradient of the entropy coincides with the gradient of the expected per-sample payoff in which the spectral adversary is \emph{frozen} at its best response $\Theta^\star(Q_{\vartheta_0})$.
\end{proposition}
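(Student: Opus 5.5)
The plan is to split the derivative into two pieces: differentiate the covariance map $\vartheta\mapsto S_{Q_\vartheta}^\varepsilon$ via dominated convergence, then push it through the smooth matrix function $\mHe$ on the positive-definite cone. The frozen-adversary form will follow from the trace-zero property of the derivative of $\Sigma_{Q_\vartheta}$ together with the best-response formula of Proposition~\ref{prop:spectral-minmax}(i). An alternative route through the envelope theorem applied to \eqref{eq:vne-dual} gives the same identity. I would still use the direct chain rule, because it needs no regularity of the argmax beyond what is already available.

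\textbf{Step 1: covariance derivative.} Write $\Sigma_{Q_\vartheta}=\E_Z[M_\vartheta(Z)]$ with $M_\vartheta(z)=\phi(g_\vartheta(z))\phi(g_\vartheta(z))^\top$. For each $z$, the map $\vartheta\mapsto M_\vartheta(z)$ is differentiable at $\vartheta_0$, with the derivative dominated on $U$ by $L(z)\in L^1(P_Z)$. By the mean value inequality the difference quotients are dominated by $L$, so dominated convergence gives
\[
\partial_k\Sigma_{Q_\vartheta}\big|_{\vartheta_0}=\E_Z\bigl[\partial_k M_\vartheta(Z)\big|_{\vartheta_0}\bigr].
\]
Moreover $\Tr M_\vartheta(z)=\norm{\phi}_2^2\equiv1$, so each $\partial_k M$ is traceless. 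Hence $\partial_k S^\varepsilon=(1-\varepsilon)\partial_k\Sigma$ is symmetric and traceless.

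\textbf{Step 2: chain rule through the matrix entropy.} On the open cone of positive-definite matrices, $f(S)=-\Tr(S\log S)$ is smooth with Fr\'echet derivative $Df(S)[E]=-\Tr((\log S+I)E)$. This follows from the standard identity $\frac{d}{dt}\Tr\,g(S+tE)=\Tr(g'(S)E)$ for analytic $g$, here $g(x)=x\log x$, proved via the spectral decomposition or the Daleckii--Krein formula. Since $S^\varepsilon_{Q_{\vartheta_0}}\succeq(\varepsilon/d)I_d$ lies in the interior of the cone, the chain rule applies and gives
\[
\partial_k\He(Q_\vartheta)=-(1-\varepsilon)\Tr\bigl((\log S^\varepsilon+I)\,\partial_k\Sigma\bigr).
\]
Because $\partial_k\Sigma$ is traceless, any multiple of $I$ may be added to $-\log S^\varepsilon$, in particular the centering constant. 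The expression therefore equals $(1-\varepsilon)\Tr(\Theta^\star(Q_{\vartheta_0})\,\partial_k\Sigma)$.

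\textbf{Step 3: return to per-sample form.} Substituting Step 1 and using linearity of the trace gives
\[
(1-\varepsilon)\E_Z\bigl[\partial_k\,\phi(g_\vartheta(Z))^\top\Theta^\star\phi(g_\vartheta(Z))\bigr],
\]
with $\Theta^\star$ held fixed at $\vartheta_0$, which is the claimed formula. The interchange of the trace with the expectation is justified by the same domination, since $\Theta^\star$ has bounded operator norm (it lies in $\Tset$).

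The main obstacle is Step 1. The derivative of $M_\vartheta$ exists only $P_Z$-a.e. and only at $\vartheta_0$, whereas the mean-value domination needs differentiability along segments inside $U$. I would read the hypothesis as requiring the bound $L(z)$ to hold at all $\vartheta\in U$ where the derivative exists. I would then argue with absolute continuity along segments, or assume differentiability on $U$ as is implicit, so that $\|M_\vartheta-M_{\vartheta_0}\|\le L(z)\,|\vartheta-\vartheta_0|$. Differentiability of $\He$ at $\vartheta_0$ then follows from the Fr\'echet (not merely directional) differentiability of both maps in the composition.
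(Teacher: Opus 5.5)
Your proposal is correct and follows essentially the same route as the paper's proof. Both arguments use three ingredients: dominated convergence for the covariance derivative, the Fr\'echet derivative $-\Tr\bigl((\log S+I_d)B\bigr)$ of the matrix entropy at the positive-definite point $S^\varepsilon_{Q_{\vartheta_0}}$, and the unit-norm constraint to remove the identity and centering terms. You remove those terms through the tracelessness of $\partial_k\Sigma$, while the paper uses the vanishing $\vartheta$-gradient of $\norm{\phi}_2^2\equiv1$; this is the same fact. Your added care on two points fills in details the paper leaves implicit: you need differentiability on $U$ for the mean-value domination, and you establish full Fr\'echet (not merely partial) differentiability in $\vartheta$.
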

\begin{proof}
Write $S(\vartheta)\defeq S^\varepsilon_{Q_\vartheta}=(1-\varepsilon)\,\E_Z\bigl[\phi(g_\vartheta(Z))\phi(g_\vartheta(Z))^\top\bigr]+\varepsilon \tfrac{1}{d}I_d$. First, the domination hypothesis justifies differentiation under the expectation: $\vartheta\mapsto S(\vartheta)$ is differentiable at $\vartheta_0$ with $\partial_{\vartheta_k}S(\vartheta_0)=(1-\varepsilon)\,\E_Z\bigl[\partial_{\vartheta_k}\bigl(\phi(g_\vartheta(Z))\phi(g_\vartheta(Z))^\top\bigr)\big|_{\vartheta_0}\bigr]$. Next, as in the proof of Lemma~\ref{lem:firstvar}, the matrix entropy $\mHe$ is Fr\'echet differentiable at every positive-definite matrix with $D\mHe(S)[B]=-\Tr\bigl(B(\log S+I_d)\bigr)$ for symmetric $B$, and this applies at $S(\vartheta_0)\succeq(\varepsilon/d)I_d\succ0$. By the chain rule,
\[
    \partial_{\vartheta_k}\He(Q_\vartheta)\Big|_{\vartheta_0}
    =-\Tr\Bigl(\partial_{\vartheta_k}S(\vartheta_0)\,
    \bigl(\log S(\vartheta_0)+I_d\bigr)\Bigr).
\]
Then, the trace term drops out: since $\norm{\phi}_2^2\equiv1$, we have $\Tr\bigl(\partial_{\vartheta_k}S(\vartheta_0)\bigr)=(1-\varepsilon)\,\partial_{\vartheta_k}\E_Z\bigl[\norm{\phi(g_\vartheta(Z))}_2^2\bigr]=\partial_{\vartheta_k}(1-\varepsilon)=0$, so the $I_d$ contribution vanishes and
\[
    \partial_{\vartheta_k}\He(Q_\vartheta)\Big|_{\vartheta_0}
    =\Tr\Bigl(\partial_{\vartheta_k}S(\vartheta_0)\,\bigl(-\log S(\vartheta_0)\bigr)\Bigr)
    =(1-\varepsilon)\,\E_Z\Bigl[\partial_{\vartheta_k}\,
    \phi^\top\bigl(-\log S(\vartheta_0)\bigr)\phi\Big|_{\vartheta_0}\Bigr],
\]
where the matrix $-\log S(\vartheta_0)$ is held fixed under the derivative. Finally, replacing $-\log S(\vartheta_0)$ by $\Theta^\star(Q_{\vartheta_0})=-\log S(\vartheta_0)+\tfrac1d\Tr(\log S(\vartheta_0))I_d$ changes the per-sample payoff by a multiple of $\norm{\phi}_2^2\equiv1$, whose $\vartheta$-gradient is zero; hence the displayed identity. In the language of the min--max game \eqref{eq:IGA-minmax}, this is an envelope (Danskin-type) statement: the inner maximum is attained at the unique $\Theta^\star(Q_{\vartheta_0})$, and differentiating the value equals differentiating at the frozen maximizer. The direct computation above proves the identity without invoking any general envelope theorem.
\end{proof}

\paragraph{Maximum-likelihood training and VAEs.}
Deep maximum-likelihood models fit an explicit density $q_\vartheta$ by minimizing the empirical negative log-likelihood, which is the per-sample empirical proxy for the \emph{data-first} divergence: up to an additive constant independent of $\vartheta$, $\E_{\Pdata}[-\log q_\vartheta(X)]=\KL(\Pdata\|Q_\vartheta)+\mathrm{const}$ (Proposition~\ref{prop:mle-vae} below). This orientation is forced at training time: the model-first quantity $\KL(Q_\vartheta\|\widehat P_n)$ is typically infinite for a continuously supported model against an atomic empirical reference, whereas the likelihood is finite and estimable sample by sample. The IGA-regularized maximum-likelihood objective is
\begin{equation}\label{eq:IGA-mle}
    \min_\vartheta\;
    \E_{X\sim\widehat P_n}\bigl[-\log q_\vartheta(X)\bigr]
    -\lambda\,\He(Q_\vartheta),
\end{equation}
and when the likelihood is intractable, as in variational autoencoders, the negative evidence lower bound takes its place~\citep{kingma2014auto,rezende2014stochastic}:
\begin{equation}\label{eq:IGA-vae}
    \min_{\vartheta,\eta}\;
    \E_{X\sim\widehat P_n}\bigl[-\mathrm{ELBO}(X;\vartheta,\eta)\bigr]
    -\lambda\,\He(Q_\vartheta).
\end{equation}
Proposition~\ref{prop:mle-vae} records the exact relation between the two: the negative ELBO exceeds the negative log-likelihood by the encoder-posterior gap $\KL(r_\eta(\cdot\mid x)\,\|\,p_\vartheta(\cdot\mid x))\ge0$, a $\lambda$-independent quantity, so \eqref{eq:IGA-vae} is \eqref{eq:IGA-mle} plus a nonnegative gap that only the encoder parameters $\eta$ tighten; the same reading applies to diffusion models trained through variational bounds~\citep{ho2020denoising}.

We highlight two features of this combination. First, the entropy regularizer is \emph{likelihood-free}: evaluating $\He(Q_\vartheta)$ requires only samples from the decoder, never density values, so it applies to any latent-variable model whose sampler is differentiable, alongside a fidelity term that does require likelihoods. Second, the orientation caveat of Remark~\ref{rem:repair-scope} applies: the repair guarantee of Theorem~\ref{thm:repair} is proved for base-anchored objectives and does not transfer to this data-first geometry, while the monotone path of Theorem~\ref{thm:mono}, which is orientation- and convexity-agnostic, continues to describe the global minimizers of \eqref{eq:IGA-mle} and \eqref{eq:IGA-vae} as $\lambda$ grows. When the reference is instead a smooth law, such as a pretrained teacher in fine-tuning rather than $\widehat P_n$, the model-first KL anchor becomes directly usable.

\begin{proposition}[Likelihood, data-first KL, and the ELBO]\label{prop:mle-vae}
Let $\nu$ be a $\sigma$-finite measure on $\cX$ and let each model law $Q_\vartheta$ have $\nu$-density $q_\vartheta$.
\begin{enumerate}[leftmargin=*]
    \item[\textup{(i)}] \emph{Likelihood is data-first KL.} Suppose $\Pdata\ll\nu$ with density $p_0$ and $\E_{\Pdata}|\log p_0(X)|<\infty$. Then, for every $\vartheta$,
    \[
        \E_{\Pdata}\bigl[-\log q_\vartheta(X)\bigr]
        =\KL(\Pdata\|Q_\vartheta)+h_\nu(\Pdata),
    \]
    where $h_\nu(\Pdata)\defeq-\E_{\Pdata}\bigl[\log p_0(X)\bigr]$ is finite and independent of $\vartheta$, and the two sides are finite or $+\infty$ together.
    \item[\textup{(ii)}] \emph{Structure of the data-first divergence.} For fixed $P$, the map $Q\mapsto\KL(P\|Q)$ is convex on $\cP$, and if $\cX$ is Polish it is weakly lower semicontinuous. Consequently Theorem~\ref{thm:mono} applies to the objectives \eqref{eq:IGA-mle} and \eqref{eq:IGA-vae} whenever global minimizers exist, while Theorem~\ref{thm:repair}, proved for the base-anchored orientation, does not transfer (Remark~\ref{rem:repair-scope}).
    \item[\textup{(iii)}] \emph{ELBO gap.} Let $q_\vartheta(x)=\int p_\vartheta(x\mid z)\,p_Z(dz)$ be a latent-variable model and $r_\eta(\cdot\mid x)$ an encoder with $r_\eta(\cdot\mid x)\ll p_\vartheta(\cdot\mid x)$, where $p_\vartheta(\cdot\mid x)$ is the model posterior. Then, for every $x$ with $q_\vartheta(x)\in(0,\infty)$,
    \[
        -\mathrm{ELBO}(x;\vartheta,\eta)
        =-\log q_\vartheta(x)
        +\KL\bigl(r_\eta(\cdot\mid x)\,\big\|\,p_\vartheta(\cdot\mid x)\bigr)
        \ \ge\ -\log q_\vartheta(x),
    \]
    with equality if and only if the encoder matches the model posterior at $x$.
\end{enumerate}
\end{proposition}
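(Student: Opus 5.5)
For part (i) the plan is to expand the logarithm of the density ratio directly. Since $\E_{\Pdata}|\log p_0|<\infty$, we have $h_\nu(\Pdata)$ finite. Then $\KL(\Pdata\|Q_\vartheta)=+\infty$ unless $\Pdata\ll Q_\vartheta$. In the absolutely continuous case, $d\Pdata/dQ_\vartheta=p_0/q_\vartheta$ holds $\Pdata$-a.s., so
\[
\log\tfrac{d\Pdata}{dQ_\vartheta}=\log p_0-\log q_\vartheta .
\]
Integrating against $\Pdata$ is legitimate because $\log p_0$ is integrable. The negative part of $-\log q_\vartheta$ needs care. Writing $-\log q_\vartheta=-\log p_0+\log(p_0/q_\vartheta)$, the term $\log(p_0/q_\vartheta)$ has integrable negative part, since $\E_{\Pdata}[(\log\tfrac{p_0}{q_\vartheta})^-]\le \E_{\Pdata}[q_\vartheta/p_0]\le1$. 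Hence both expectations are well defined in $(-\infty,+\infty]$, and the identity holds with both sides infinite together. If $\Pdata\not\ll Q_\vartheta$, then $q_\vartheta=0$ on a set of positive $\Pdata$-mass, so $-\log q_\vartheta=+\infty$ there and both sides equal $+\infty$.

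For part (ii), convexity will follow from joint convexity of KL. Alternatively, I will use the Donsker--Varadhan/variational form
\[
\KL(P\|Q)=\sup_{g\in C_b}\{\E_P g-\log\E_Q e^{g}\}.
\]
For fixed $P$, each map $Q\mapsto \E_Pg-\log\E_Qe^g$ is convex, because $\log$ of a positive affine function is concave. The same maps are weakly continuous on a Polish space, because $e^g$ is bounded and continuous. A supremum of convex, weakly continuous functions is convex and weakly l.s.c. The step I expect to be the main obstacle is justifying the restriction to bounded continuous $g$ on Polish $\cX$, which is standard and which I would cite rather than reprove. The consequences then follow directly. Theorem~\ref{thm:mono} needs no structure beyond the existence of global minimizers, so it applies verbatim to \eqref{eq:IGA-mle}. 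It also applies to \eqref{eq:IGA-vae} when minimizing jointly over $(\vartheta,\eta)$ with $J$ replaced by the ELBO term, because the entropy depends only on $\vartheta$ and the argument there is purely order-theoretic. The non-transfer of Theorem~\ref{thm:repair} is a scope remark referencing Remark~\ref{rem:repair-scope}.

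For part (iii), I will use Bayes' rule,
\[
p_\vartheta(z\mid x)=p_\vartheta(x\mid z)/q_\vartheta(x),
\]
which is valid because $q_\vartheta(x)\in(0,\infty)$. I then write
\[
\mathrm{ELBO}=\E_{r_\eta}[\log p_\vartheta(x\mid Z)]-\KL(r_\eta\|p_Z).
\]
Combining the two logarithms gives
\[
\mathrm{ELBO}=\E_{r_\eta}\bigl[\log\tfrac{p_\vartheta(x\mid Z)p_Z(Z)}{r_\eta(Z\mid x)}\bigr]=\log q_\vartheta(x)-\KL(r_\eta\|p_\vartheta(\cdot\mid x)).
\]
The absolute continuity $r_\eta\ll p_\vartheta(\cdot\mid x)$ ensures the density ratios are well defined. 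The inequality and its equality case are then Gibbs' inequality: KL is nonnegative, with equality iff the two laws coincide.
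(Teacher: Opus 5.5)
Your proposal is correct and follows the paper's proof essentially step for step. Part (i) uses the same decomposition $-\log q_\vartheta=\log(p_0/q_\vartheta)-\log p_0$, the same bound $\E_{\Pdata}[q_\vartheta/p_0]\le1$ on the negative part, and the same case split on $\Pdata\ll Q_\vartheta$; part (ii) uses Donsker--Varadhan over $C_b$ for lower semicontinuity; and part (iii) uses the same Bayes-rule substitution. The differences are minor: you get convexity from joint convexity of KL (or as a supremum of convex maps) where the paper argues pointwise under a dominating measure, and you spell out more explicitly than the paper why Theorem~\ref{thm:mono} covers the joint $(\vartheta,\eta)$ minimization in \eqref{eq:IGA-vae}.
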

\begin{proof}
First, part (i). Decompose $-\log q_\vartheta=\log(p_0/q_\vartheta)-\log p_0$ on the set $\{p_0>0\}$, which carries full $\Pdata$-mass. The second term integrates to $h_\nu(\Pdata)$, finite by hypothesis. For the first term, set $r\defeq q_\vartheta/p_0$ on $\{p_0>0\}$; the positive part of $\log r$ is $\Pdata$-integrable, since $\log r\le r-1$ gives $\E_{\Pdata}[(\log r)_+]\le\E_{\Pdata}[r]=\int_{\{p_0>0\}}q_\vartheta\,d\nu\le1$, so $\E_{\Pdata}[-\log r]=\E_{\Pdata}[\log(p_0/q_\vartheta)]$ is well defined in $(-\infty,+\infty]$. If $\Pdata\ll Q_\vartheta$, then $d\Pdata/dQ_\vartheta=p_0/q_\vartheta$ holds $\Pdata$-a.s.\ and $\E_{\Pdata}[\log(p_0/q_\vartheta)]=\KL(\Pdata\|Q_\vartheta)$ by definition. If $\Pdata\not\ll Q_\vartheta$, pick $A$ with $Q_\vartheta(A)=0<\Pdata(A)$; then $q_\vartheta=0$ $\nu$-a.e.\ on $A$, so $\log(p_0/q_\vartheta)=+\infty$ on a set of positive $\Pdata$-measure and, the negative part being integrable, $\E_{\Pdata}[\log(p_0/q_\vartheta)]=+\infty=\KL(\Pdata\|Q_\vartheta)$ under the extended-value convention. In both cases the displayed identity holds, with both sides finite or $+\infty$ together since $h_\nu(\Pdata)$ is finite.

Next, part (ii). For convexity, fix $Q_0,Q_1$ and $\theta\in(0,1)$, and let $\nu'$ be a $\sigma$-finite measure dominating $P$, $Q_0$, and $Q_1$ (for instance $P+Q_0+Q_1$), with densities $p,q_0,q_1$; the mixture $Q_\theta=(1-\theta)Q_0+\theta Q_1$ has density $q_\theta=(1-\theta)q_0+\theta q_1$. For fixed $x$ with $p(x)>0$, the map $q\mapsto p(x)\log(p(x)/q)$ is convex in $q>0$ (as $-\log$ is convex), and extends convexly to $q\ge0$ with value $+\infty$ at $q=0$; composing with the affine $\theta\mapsto q_\theta(x)$ and integrating $d\nu'$ preserves convexity, giving $\KL(P\|Q_\theta)\le(1-\theta)\KL(P\|Q_0)+\theta\KL(P\|Q_1)$. For lower semicontinuity, we invoke the Donsker--Varadhan variational formula, a standard fact: for probability measures on a Polish space,
\[
    \KL(P\|Q)=\sup_{f\in C_b(\cX)}
    \bigl\{\E_P[f]-\log\E_Q[e^f]\bigr\}.
\]
For each fixed $f\in C_b(\cX)$, the map $Q\mapsto\E_P[f]-\log\E_Q[e^f]$ is weakly continuous: $e^f$ is bounded continuous, so $Q\mapsto\E_Q[e^f]$ is weakly continuous with values in the compact interval $[e^{-\norm f_\infty},e^{\norm f_\infty}]\subset(0,\infty)$, on which $\log$ is continuous. A supremum of weakly continuous functions is weakly lower semicontinuous, which proves the claim. The consequences for Theorems~\ref{thm:mono} and~\ref{thm:repair} are as stated: the former uses only the existence of global minimizers and is agnostic to orientation and convexity, while the latter's three-point argument differentiates the Bregman divergence in its \emph{first} argument and is unavailable in the data-first orientation.

Finally, part (iii). Write the ELBO with encoder $r_\eta$:
\[
    \mathrm{ELBO}(x;\vartheta,\eta)
    =\E_{Z\sim r_\eta(\cdot\mid x)}
    \bigl[\log p_\vartheta(x\mid Z)+\log p_Z(Z)-\log r_\eta(Z\mid x)\bigr],
\]
with $\log p_Z$ understood as the density of the prior with respect to the latent reference measure. By Bayes' rule, $p_\vartheta(z\mid x)=p_\vartheta(x\mid z)\,p_Z(z)/q_\vartheta(x)$ for $q_\vartheta(x)\in(0,\infty)$, so $\log p_\vartheta(x\mid z)+\log p_Z(z)=\log p_\vartheta(z\mid x)+\log q_\vartheta(x)$, and substituting,
\begin{align*}
    \mathrm{ELBO}(x;\vartheta,\eta)
    &=\log q_\vartheta(x)
    -\E_{Z\sim r_\eta(\cdot\mid x)}
    \Bigl[\log\frac{r_\eta(Z\mid x)}{p_\vartheta(Z\mid x)}\Bigr]\\
    &=\log q_\vartheta(x)
    -\KL\bigl(r_\eta(\cdot\mid x)\,\big\|\,p_\vartheta(\cdot\mid x)\bigr).
\end{align*}
Negating gives the display; nonnegativity of KL gives the inequality, with equality if and only if $r_\eta(\cdot\mid x)=p_\vartheta(\cdot\mid x)$.
\end{proof}

\section{Proofs for Section~\ref{sec:wall}}
\label{app:wall}

This appendix proves the results of Section~\ref{sec:wall}: the bias and consistency of the empirical entropy wall in Theorem~\ref{thm:bias}, the sub-wall repair guarantee, and the wall-crossing statement (Proposition~\ref{prop:crossing}).

\begin{theorem}[Bias and consistency of the empirical wall]\label{thm:bias}
Let $X_1,X_2,\ldots$ be i.i.d.\ from $\Pdata$, let $\phi:\cX\to\R^d$ satisfy $\norm{\phi(x)}_2=1$, and let $\widehat P_N=\tfrac1N\sum_{i=1}^N\delta_{X_i}$. Then, for either $H\in\{\Hz,\He\}$ and every $N\ge1$:
\begin{enumerate}[leftmargin=*]
    \item[\textup{(i)}] \emph{downward bias:}\quad
        $\E\,H(\widehat P_N)\le H(\Pdata)$;
    \item[\textup{(ii)}] \emph{monotonicity in the sample size:}\quad
        $\E\,H(\widehat P_{N+1})\ge\E\,H(\widehat P_N)$;
    \item[\textup{(iii)}] \emph{consistency:}\quad
        $\E\,H(\widehat P_N)\longrightarrow H(\Pdata)$ as $N\to\infty$.
\end{enumerate}
\end{theorem}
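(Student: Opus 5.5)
All three parts rest on two facts already recorded in the paper. First, $Q\mapsto\Sigma_Q$ is affine, so $\Sigma_{\widehat P_N}=\tfrac1N\sum_{i}\phi(X_i)\phi(X_i)^\top$ and $\E\,\Sigma_{\widehat P_N}=\Sigma_{\Pdata}$. Second, $H=\mHe\circ\Sigma$ (or $\mHe\circ S^\varepsilon$) with $\mHe$ concave on density matrices (Lemma~\ref{lem:concave}). The plan is therefore a matrix-valued Jensen argument for (i), an exchangeability/leave-one-out argument for (ii), and the law of large numbers plus continuity of $\mHe$ for (iii).

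\emph{Part (i).} The random matrix $\Sigma_{\widehat P_N}$ takes values in the compact convex set of $d\times d$ density matrices, and $\mHe$ is concave and continuous there. Jensen's inequality in finite dimension gives
\[
\E\,\mHe(\Sigma_{\widehat P_N})\le\mHe(\E\,\Sigma_{\widehat P_N})=\mHe(\Sigma_{\Pdata}).
\]
This is $\E\,\Hz(\widehat P_N)\le\Hz(\Pdata)$. For $\He$, the same argument applies to the affine image $S^\varepsilon$, since $\E\, S^\varepsilon_{\widehat P_N}=S^\varepsilon_{\Pdata}$.

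\emph{Part (ii).} Write $\widehat P_{N+1}$ as the average of its $N+1$ leave-one-out empirical measures, $\widehat P_{N+1}=\tfrac1{N+1}\sum_{j=1}^{N+1}\widehat P_N^{(-j)}$, where $\widehat P_N^{(-j)}$ is the empirical measure of the sample with $X_j$ removed. This identity holds because each $X_i$ appears in exactly $N$ of the $N+1$ leave-one-out averages, each weighted by $\tfrac1N$. Concavity of $H$ in $Q$ (Lemma~\ref{lem:concave}) then gives $H(\widehat P_{N+1})\ge\tfrac1{N+1}\sum_j H(\widehat P_N^{(-j)})$. Since each $\widehat P_N^{(-j)}$ is distributed as $\widehat P_N$, taking expectations yields $\E\,H(\widehat P_{N+1})\ge\E\,H(\widehat P_N)$. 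All quantities are bounded by $\log d$, so every expectation is finite.

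\emph{Part (iii).} The summands $\phi(X_i)\phi(X_i)^\top$ are i.i.d.\ with operator norm at most $1$, so the strong law of large numbers gives $\Sigma_{\widehat P_N}\to\Sigma_{\Pdata}$ almost surely, entrywise and hence in trace norm. Continuity of $\mHe$ on density matrices (quantitatively via the Fannes--Audenaert bound used in Lemma~\ref{lem:smoothing}) gives $H(\widehat P_N)\to H(\Pdata)$ almost surely. Because $0\le H\le\log d$, dominated convergence upgrades this to convergence of expectations. Alternatively, one can obtain an explicit rate from $\E\norm{\Sigma_{\widehat P_N}-\Sigma_{\Pdata}}_1\le\sqrt d\,\E\norm{\cdot}_F\le\sqrt{d/N}$ combined with Fannes--Audenaert.

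\emph{Expected obstacle.} The only step requiring care is the leave-one-out identity and the exchangeability claim in (ii). One must check that the $N+1$ leave-one-out averages reproduce $\widehat P_{N+1}$ exactly with equal weights, and that each has the law of $\widehat P_N$ (true since the $X_i$ are i.i.d.). The remaining steps are standard. Continuity of $\mHe$ at singular matrices, which matters for $\Hz$, is covered by Fannes--Audenaert, so no differentiability is needed.
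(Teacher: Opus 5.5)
Your proposal is correct and follows essentially the same route as the paper. Part (i) uses Jensen with the unbiased empirical covariance, (ii) uses the leave-one-out averaging identity with concavity and the fact that each $\widehat P_N^{(-j)}$ has the law of $\widehat P_N$, and (iii) uses the strong law of large numbers, continuity of the matrix entropy on density matrices, and dominated convergence; your optional Fannes--Audenaert rate of order $\sqrt{d/N}$ is a small extra that the paper does not state.
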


\begin{proof}[Proof of Theorem~\ref{thm:bias}]
We prove the three parts in turn.

First, for part (i), recall from Lemma~\ref{lem:concave} that $H\in\{\Hz,\He\}$ is concave in its distribution argument through the affine map $Q\mapsto\Sigma_Q$. The empirical covariance is unbiased:
\[
    \E\,\Sigma_{\widehat P_N}
    =\frac1N\sum_{i=1}^N\E\bigl[\phi(X_i)\phi(X_i)^\top\bigr]
    =\Sigma_{\Pdata}.
\]
Jensen's inequality for the concave map $\Sigma\mapsto H$ then gives
\[
    \E\,H(\widehat P_N)\le H\bigl(\E\,\Sigma_{\widehat P_N}\bigr)=H(\Pdata).
\]

Next, for part (ii), we use a leave-one-out averaging identity. Fix $N+1$ samples and, for $j=1,\dots,N+1$, let $\widehat P_N^{(-j)}=\tfrac1N\sum_{i\ne j}\delta_{X_i}$. Each index appears in exactly $N$ of the $N+1$ leave-one-out measures, so
\[
    \widehat P_{N+1}=\frac1{N+1}\sum_{j=1}^{N+1}\widehat P_N^{(-j)},
    \qquad\text{equivalently}\qquad
    \Sigma_{\widehat P_{N+1}}=\frac1{N+1}\sum_{j=1}^{N+1}\Sigma_{\widehat P_N^{(-j)}}.
\]
Concavity of $H$ gives, pathwise,
\[
    H(\widehat P_{N+1})\ge\frac1{N+1}\sum_{j=1}^{N+1}H\bigl(\widehat P_N^{(-j)}\bigr).
\]
Each $\widehat P_N^{(-j)}$ has the same distribution as $\widehat P_N$, so taking expectations yields $\E H(\widehat P_{N+1})\ge\E H(\widehat P_N)$.

Finally, for part (iii), note that since $\norm{\phi(x)}_2=1$, the summands $\phi(X_i)\phi(X_i)^\top$ are i.i.d.\ bounded random matrices with mean $\Sigma_{\Pdata}$, so by the (matrix) strong law of large numbers $\Sigma_{\widehat P_N}\to\Sigma_{\Pdata}$ almost surely. Both $\Hz$ and $\He$ are continuous functions of the covariance matrix on the compact set of density matrices and bounded in $[0,\log d]$, so $H(\widehat P_N)\to H(\Pdata)$ a.s.; dominated convergence then gives $\E H(\widehat P_N)\to H(\Pdata)$.
\end{proof}

\begin{remark}[Interpretation and experimental consequence]\label{rem:bias-consequence}
Theorem~\ref{thm:bias} upgrades the informal assumption of downward diversity bias to a theorem for the empirical law; it does not by itself establish that a trained generator $Q_0$ satisfies $\Hz(Q_0)<\rho_\star$, which remains a separate empirical claim, consistent with reported spectral deficits in modern generators. Because $\widehat\rho_\star=\Hz(\widehat P_N)$ is downward biased, and a finite generated batch inherits the same downward bias when estimating $\Hz(Q_\lambda)$, wall-crossing plots should use matched sample sizes, repeated subsampling, or a bias-aware estimator.
\end{remark}

\begin{theorem}[Population discrepancy improves up to the wall]\label{thm:repair}
Let $\Phi$ be Fr\'echet differentiable and strictly convex on an open convex set containing $\cP$, with Bregman divergence $D_\Phi(P,Q)=\Phi(P)-\Phi(Q)-\ip{\nabla\Phi(Q)}{P-Q}$, and let $Q_\lambda\in\underset{Q\in\cP}{\arg\!\min}\:\{D_\Phi(Q,Q_0)-\lambda H(Q)\}$ with $H$ concave and $\cP$ convex. If $\Pdata\in\cP$ and $H(Q_\lambda)\le H(\Pdata)$, then
\[
    D_\Phi(\Pdata,Q_\lambda)+D_\Phi(Q_\lambda,Q_0)
    \le
    D_\Phi(\Pdata,Q_0),
\]
and hence in particular $D_\Phi(\Pdata,Q_\lambda)\le D_\Phi(\Pdata,Q_0)$.
\end{theorem}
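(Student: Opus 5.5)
The plan is the classical three-point (generalized Pythagorean) argument for Bregman projections. The only new ingredient is handling the entropy term via concavity.

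\emph{Step 1: first-order optimality of $Q_\lambda$ along the segment toward $\Pdata$.} For $t\in(0,1]$ set $Q_t=Q_\lambda+t(\Pdata-Q_\lambda)\in\cP$, using convexity of $\cP$ and $\Pdata\in\cP$. Minimality of $Q_\lambda$ gives
\[
D_\Phi(Q_t,Q_0)-D_\Phi(Q_\lambda,Q_0)\ge \lambda\bigl(H(Q_t)-H(Q_\lambda)\bigr).
\]
Concavity of $H$ gives $H(Q_t)\ge (1-t)H(Q_\lambda)+tH(\Pdata)$. Hence the right-hand side is at least $\lambda t\bigl(H(\Pdata)-H(Q_\lambda)\bigr)\ge0$, and this is where the hypothesis $H(Q_\lambda)\le H(\Pdata)$ and $\lambda\ge0$ enter. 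Dividing by $t$ and letting $t\downarrow0$ yields the variational inequality
\[
\ip{\nabla\Phi(Q_\lambda)-\nabla\Phi(Q_0)}{\Pdata-Q_\lambda}\ge0.
\]
The limit is legitimate because $Q\mapsto D_\Phi(Q,Q_0)=\Phi(Q)-\Phi(Q_0)-\ip{\nabla\Phi(Q_0)}{Q-Q_0}$ is Fréchet differentiable in its first argument, with derivative $\nabla\Phi(Q_\lambda)-\nabla\Phi(Q_0)$. Note that no differentiability of $H$ is needed, since we only use the one-sided concavity bound.

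\emph{Step 2: the three-point identity.} Expanding the definitions directly gives
\[
D_\Phi(\Pdata,Q_0)=D_\Phi(\Pdata,Q_\lambda)+D_\Phi(Q_\lambda,Q_0)+\ip{\nabla\Phi(Q_\lambda)-\nabla\Phi(Q_0)}{\Pdata-Q_\lambda}.
\]
This is a routine cancellation of the $\Phi$ terms. Combining it with Step 1 gives the claimed inequality. Nonnegativity of $D_\Phi(Q_\lambda,Q_0)$, which follows from convexity of $\Phi$, then gives the corollary.

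\emph{Main obstacle.} The delicate step is justifying the derivative limit in Step 1. This requires that $D_\Phi(Q_\lambda,Q_0)$ be finite, and that the directional derivative of $\Phi$ at $Q_\lambda$ along $\Pdata-Q_\lambda$ exist within the open convex domain. Both are guaranteed by the Fréchet differentiability hypothesis on a neighborhood of $\cP$. I would also remark that $H(Q_\lambda)$ must be finite, which holds since $H\in[0,\log d]$.
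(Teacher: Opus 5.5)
Your proof is correct and follows the paper's argument. Both combine the variational inequality $\ip{\nabla\Phi(Q_\lambda)-\nabla\Phi(Q_0)}{\Pdata-Q_\lambda}\ge0$ with the three-point Bregman identity; the paper gets the inequality by invoking Proposition~\ref{prop:attained} to treat $Q_\lambda$ as a minimizer of $D_\Phi(\cdot,Q_0)$ over the convex set $\cP\cap\{H\ge H(Q_\lambda)\}\ni\Pdata$, whereas you inline that step via penalized optimality and concavity of $H$ along the segment to $\Pdata$, which also keeps the extra nonnegative term $\lambda(H(\Pdata)-H(Q_\lambda))$ that the paper's route discards.
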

\begin{proof}
By Proposition~\ref{prop:attained} applied to $J(Q)=D_\Phi(Q,Q_0)$ over the convex $\cP$, $Q_\lambda\in\argmin\{D_\Phi(Q,Q_0):Q\in\cP,\ H(Q)\ge\rho_\lambda\}$ with $\rho_\lambda=H(Q_\lambda)$. The feasible set $\cC=\cP\cap\{H\ge\rho_\lambda\}$ is convex ($H$ concave). The below-wall hypothesis $H(\Pdata)\ge\rho_\lambda$ and $\Pdata\in\cP$ give $\Pdata\in\cC$.

Because $Q\mapsto D_\Phi(Q,Q_0)$ is convex and differentiable in its first argument (with $\nabla_QD_\Phi(Q,Q_0)=\nabla\Phi(Q)-\nabla\Phi(Q_0)$) and $\cC$ is convex, first-order optimality of the minimizer $Q_\lambda$ over $\cC$ gives, for every feasible $P$,
\[
    \ip{\nabla\Phi(Q_\lambda)-\nabla\Phi(Q_0)}{P-Q_\lambda}\ge0.
\]
Setting $P=\Pdata\in\cC$ and using the three-point Bregman identity
\begin{align*}
    D_\Phi(\Pdata,Q_0)&=D_\Phi(\Pdata,Q_\lambda)+D_\Phi(Q_\lambda,Q_0)\\
    &\qquad+\ip{\nabla\Phi(Q_\lambda)-\nabla\Phi(Q_0)}{\Pdata-Q_\lambda},
\end{align*}
together with nonnegativity of the inner-product term, yields $D_\Phi(\Pdata,Q_0)\ge D_\Phi(\Pdata,Q_\lambda)+D_\Phi(Q_\lambda,Q_0)$. Nonnegativity of $D_\Phi(Q_\lambda,Q_0)$ gives the second inequality.
\end{proof}

\begin{remark}[What the repair theorem does not cover]\label{rem:repair-scope}
Theorem~\ref{thm:repair} concerns the base-anchored orientation $D_\Phi(Q,Q_0)$ (optimized law first) and does not transfer to a data-first objective $D_\Phi(\widehat P_n,Q)$: the gradient of a Bregman divergence in its \emph{second} argument involves the Hessian of $\Phi$ and is not $\nabla\Phi(Q)-\nabla\Phi(\widehat P_n)$, so the three-point identity no longer collapses the cross term. In particular, the maximum-likelihood instantiation of Section~\ref{subsec:training}, whose fidelity is the data-first KL by Proposition~\ref{prop:mle-vae}(i), inherits the monotone path of Theorem~\ref{thm:mono} but not the repair guarantee. Two clean options remain: (i) the base-anchored objective $D_\Phi(Q,Q_0)-\lambda H(Q)$, especially with $Q_0=P_\theta$. (ii) a symmetric Hilbertian discrepancy such as squared MMD, for which the orientation is immaterial.
\end{remark}

\begin{proposition}[Local wall crossing]\label{prop:crossing}
Let $\cP$ be convex, $\Pdata\in\cP$, and suppose there exists $R\in\cP$ with $\Hz(R)>\Hz(\Pdata)$. Put $Q_t=(1-t)\Pdata+tR$ for $t\in(0,1]$. If $\Dref(Q_t;\Pdata)\to0$ as $t\downarrow0$, then for every $\delta>0$ there exists $Q\in\cP$ with $\Dref(Q;\Pdata)\le\delta$ and $\Hz(Q)>\Hz(\Pdata)$.
\end{proposition}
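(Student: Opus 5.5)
The plan is to exploit concavity of $\Hz$ along the mixture path $Q_t=(1-t)\Pdata+tR$, together with the assumed continuity of the discrepancy at $t=0$. Since $Q\mapsto\Sigma_Q$ is affine, Lemma~\ref{lem:concave} gives, for every $t\in(0,1]$,
\[
    \Hz(Q_t)\ge(1-t)\Hz(\Pdata)+t\,\Hz(R)=\Hz(\Pdata)+t\bigl(\Hz(R)-\Hz(\Pdata)\bigr).
\]
The hypothesis $\Hz(R)>\Hz(\Pdata)$ makes the increment strictly positive. Hence every point of the open path, however close to $\Pdata$, lies strictly beyond the wall. Convexity of $\cP$ guarantees $Q_t\in\cP$ for all $t\in[0,1]$.

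Next, given $\delta>0$, the assumption $\Dref(Q_t;\Pdata)\to0$ as $t\downarrow0$ supplies some $t_\delta\in(0,1]$ with $\Dref(Q_{t_\delta};\Pdata)\le\delta$. Setting $Q\defeq Q_{t_\delta}$ then yields both required properties at once: $\Dref(Q;\Pdata)\le\delta$ by the choice of $t_\delta$, and $\Hz(Q)>\Hz(\Pdata)$ by the concavity bound with $t=t_\delta>0$.

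There is no real obstacle here. The one point needing care is that the entropy gain must be \emph{strict} for every $t>0$, not merely in the limit. This is why I would use the concavity lower bound rather than a continuity or derivative argument. The derivative of $\Hz$ along the path may be undefined at rank-deficient $\Sigma_{\Pdata}$, whereas the concavity bound holds regardless. The statement does not require $t_\delta$ to be small, only positive, so no quantitative rate is needed.

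As a remark, I would note that the same argument applies verbatim to $\He$. The continuity hypothesis is easily verified for KL or Bregman anchors, for example via $\KL(Q_t\|\Pdata)\le-\log(1-t)$, but the proposition only assumes it.
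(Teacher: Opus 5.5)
Your argument is correct and is exactly the paper's proof: convexity of $\cP$ keeps $Q_t$ feasible, and concavity of $\Hz$ (Lemma~\ref{lem:concave}) gives the strict bound $\Hz(Q_t)\ge\Hz(\Pdata)+t\bigl(\Hz(R)-\Hz(\Pdata)\bigr)>\Hz(\Pdata)$ for every $t>0$, after which the continuity hypothesis supplies $t_\delta$. One correction to your closing aside, which the proof does not use: under the paper's model-first orientation the bound $\KL(Q_t\|\Pdata)\le-\log(1-t)$ is false (that bound holds for $\KL(\Pdata\|Q_t)$, because $Q_t\ge(1-t)\Pdata$), and in fact $\KL(Q_t\|\Pdata)=+\infty$ whenever $R\not\ll\Pdata$, so for the KL anchor the hypothesis requires something like $\KL(R\|\Pdata)<\infty$, in which case convexity gives $\KL(Q_t\|\Pdata)\le t\,\KL(R\|\Pdata)\to0$.
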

\begin{proof}
By convexity of $\cP$, $Q_t\in\cP$. By concavity of $\Hz$ (Lemma~\ref{lem:concave}), $\Hz(Q_t)\ge(1-t)\Hz(\Pdata)+t\Hz(R)$; since $\Hz(R)>\Hz(\Pdata)$, the right-hand side equals $\Hz(\Pdata)+t(\Hz(R)-\Hz(\Pdata))>\Hz(\Pdata)$ for all $t\in(0,1]$, so $\Hz(Q_t)>\Hz(\Pdata)$ (plain concavity suffices; no strictness is used). By hypothesis choose $t_\delta>0$ with $\Dref(Q_{t_\delta};\Pdata)\le\delta$; then $Q=Q_{t_\delta}$ satisfies both requirements.
\end{proof}

\section{Proofs for Section~\ref{subsec:deployment} and Section~\ref{sec:guidance}}
\label{app:guidance}

This appendix proves the sampling-time results in the following order. We first record a tightness lemma for KL sublevel sets, which drives the existence argument in Theorem~\ref{thm:tilt}, and compute the first variation of the smoothed entropy (Lemma~\ref{lem:firstvar}). We then record the formal stationarity condition behind the density-ratio representations (Remark~\ref{rem:bregman-stationarity}), prove the tilt characterization (Proposition~\ref{prop:kl-tilt}), and derive Theorem~\ref{thm:tilt} by combining the existence argument with that proposition. The remainder of the appendix treats the propagation of the tilt through the noising process and the endpoint guarantees.

\begin{lemma}[Relative-entropy sublevel tightness]\label{lem:kl-tight}
Let $P$ be a probability measure on a Polish space and $C\ge0$. The sublevel set $\{Q:\KL(Q\|P)\le C\}$ is tight. Concretely, for any measurable $A$ with $0<P(A)<1$ and any $Q$ with $\KL(Q\|P)\le C$,
\begin{equation}\label{eq:kl-tight}
    Q(A)\;\le\;\frac{C+1}{\log(1/P(A))}.
\end{equation}
\end{lemma}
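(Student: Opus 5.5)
The bound \eqref{eq:kl-tight} comes from the data-processing inequality for relative entropy applied to the two-cell partition $\{A,A^c\}$. Tightness then follows by choosing compact sets of small $P$-measure complement. Throughout, fix $Q$ with $\KL(Q\|P)\le C$ (so $Q\ll P$) and a measurable $A$ with $0<P(A)<1$. Write $q=Q(A)$ and $p=P(A)$.

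\emph{Step 1: reduce to a binary divergence.} Pushing $Q$ and $P$ forward through the indicator $\1_A$ can only decrease KL. This can be checked by Jensen's inequality on each cell, or equivalently via the log-sum inequality applied to $dQ/dP$. It gives
\[
    q\log\frac{q}{p}+(1-q)\log\frac{1-q}{1-p}\le C .
\]

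\emph{Step 2: bound the binary divergence from below.} Expand the left-hand side as
\[
    q\log\frac{1}{p}+(1-q)\log\frac{1}{1-p}-h_2(q).
\]
Here $h_2$ is the binary entropy of Lemma~\ref{lem:smoothing}, with natural logarithms. Two facts control the extra terms: $(1-q)\log\frac{1}{1-p}\ge0$, and $h_2(q)\le\log 2\le1$. Hence $q\log(1/p)\le C+1$. Since $p<1$ makes $\log(1/p)>0$, dividing gives \eqref{eq:kl-tight}. This is the only real computation. The main care needed is that the constant $1$ absorbs $\log 2$, and that edge cases $q\in\{0,1\}$ hold under the convention $0\log0=0$.

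\emph{Step 3: deduce tightness.} On a Polish space the single measure $P$ is tight. Given $\eta>0$, choose $\delta\in(0,1)$ with $(C+1)/\log(1/\delta)\le\eta$, and pick a compact $K$ with $P(K^c)\le\delta$.
\begin{itemize}
    \item If $P(K^c)=0$, then $Q\ll P$ gives $Q(K^c)=0$.
    \item If $0<P(K^c)<1$, apply \eqref{eq:kl-tight} to $A=K^c$. Monotonicity of $x\mapsto(C+1)/\log(1/x)$ on $(0,1)$ then gives $Q(K^c)\le\eta$.
    \item If $P(K^c)=1$, this cannot occur because $\delta<1$.
\end{itemize}
The compact set $K$ does not depend on $Q$, so the bound holds uniformly over the sublevel set, which is therefore tight.
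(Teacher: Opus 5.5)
Your proposal is correct and follows essentially the same route as the paper. Both arguments apply data processing onto the partition $\{A,A^c\}$, drop the nonnegative $(1-q)\log\frac{1}{1-p}$ contribution, absorb the entropy terms into the constant $1$, and then pick a compact $K$ with small $P(K^c)$. The differences are minor: you bound the entropy terms by $h_2(q)\le\log 2$ where the paper uses $t\log t\ge -e^{-1}$ twice, and you treat the case $P(K^c)=0$ explicitly via $Q\ll P$, which the paper skips.
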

\begin{proof}
By the data-processing inequality applied to the binary partition $\{A,A^c\}$,
\[
    \KL(Q\|P)\ge d_2\bigl(Q(A)\,\|\,P(A)\bigr),
    \qquad
    d_2(q\|p)=q\log\tfrac qp+(1-q)\log\tfrac{1-q}{1-p},
\]
where $d_2$ is the binary KL, with the usual conventions and $d_2(q\|p)=+\infty$ if $p\in\{0,1\}$ while $q\notin\{0,1\}$. Write $p=P(A)\in(0,1)$ and $q=Q(A)$. Since $1-p\le1$ gives $\log\tfrac{1-q}{1-p}\ge\log(1-q)$, and $t\log t\ge-e^{-1}$ on $[0,1]$, the terms $q\log q$ and $(1-q)\log\tfrac{1-q}{1-p}$ together contribute at least $-2/e\ge-1$; hence
\[
    d_2(q\|p)=q\log(1/p)+q\log q+(1-q)\log\tfrac{1-q}{1-p}
    \ge q\log(1/p)-1;
\]
hence $Q(A)\log(1/P(A))\le\KL(Q\|P)+1\le C+1$, which is \eqref{eq:kl-tight}.

For tightness, fix $\eta>0$. Since every probability measure on a Polish space is tight, there is a compact $K$ with $P(K^c)$ as small as desired; taking $P(K^c)$ small enough that $(C+1)/\log(1/P(K^c))\le\eta$ yields $\sup_{\KL(Q\|P)\le C}Q(K^c)\le\eta$.
\end{proof}

\begin{proof}[Proof of Lemma~\ref{lem:firstvar}]
Write $A_\nu\defeq\int\phi(x)\phi(x)^\top d\nu(x)$, a symmetric matrix (finite since $\phi$ is bounded and $\nu$ is finite). Since $Q\mapsto\Sigma_Q$ is affine, $\Sigma_{Q+t\nu}=\Sigma_Q+tA_\nu$ and hence $S_{Q+t\nu}^\varepsilon=S_Q^\varepsilon+t(1-\varepsilon)A_\nu$ for all $t$ for which $Q+t\nu$ is a probability measure. The matrix entropy $\mHe(S)=-\Tr(S\log S)$ is Fr\'echet differentiable at every positive definite $S$, with derivative $D\mHe(S)[B]=-\Tr\bigl(B(\log S+I_d)\bigr)$ for symmetric $B$; this applies at $S=S_Q^\varepsilon$ because $S_Q^\varepsilon\succeq(\varepsilon/d)I_d\succ0$, and the perturbed matrices $S_Q^\varepsilon+t(1-\varepsilon)A_\nu$ remain in a compact neighborhood of positive definite matrices for small $t$. By the chain rule along the affine path,
\begin{align*}
    \frac{d}{dt}\,\He(Q+t\nu)\Big|_{t=0^+}
    &=-(1-\varepsilon)\,\Tr\bigl(A_\nu(\log S_Q^\varepsilon+I_d)\bigr)\\
    &=(1-\varepsilon)\!\int\!\phi(x)^\top\bigl(-\log S_Q^\varepsilon\bigr)\phi(x)\,d\nu(x)\\
    &\qquad
    -(1-\varepsilon)\!\int\!\norm{\phi(x)}_2^2\,d\nu(x).
\end{align*}
Since $\norm{\phi(x)}_2^2=1$, the last integral equals $\nu(\cX)=0$, and the first term is $\int\Gen{Q}\,d\nu$ by definition \eqref{eq:energy-def}.
\end{proof}

We note that the normalization $\norm{\phi}_2=1$ makes the trace term of the derivative drop out, so $\Gen{Q}$ represents the first variation of the entropy functional: the first variation of $\He$ at $Q$, along any admissible mass-preserving perturbation, integrates $\Gen{Q}$ against the perturbation.

\begin{remark}[Bregman stationarity]\label{rem:bregman-stationarity}
For a general Bregman anchor, a formal first-order condition explains how the geometry of $\Phi$ converts the entropy first variation into a displacement of the law: an interior optimizer $Q^\star$ of \eqref{eq:IGA-sampling-bregman} satisfies
\[
    \nabla\Phi(Q^\star)-\nabla\Phi(P_\theta)=\lambda\,g_{Q^\star}+c,
\]
where $g_Q=\Gen{Q}$ is the first variation of $\He$ (Lemma~\ref{lem:firstvar}) and $c$ is the scalar multiplier of the unit-mass constraint. We do not rely on this identity: in the KL geometry, the derivation below obtains the density ratio directly from the first-variation computation, with no interiority hypothesis.
\end{remark}

We now prove the tilt characterization of Section~\ref{subsec:deployment}.

\begin{proof}[Proof of Proposition~\ref{prop:kl-tilt}]
Throughout, write $\cP_\theta=\{Q\in\cP:Q\ll P_\theta\}$, a convex set, let $F$ be extended by $+\infty$ off $\cP_\theta$, and recall from the statement that $F$ attains a finite minimum on $\cP_\theta$; let $Q^\star$ be any minimizer and $q^\star=dQ^\star/dP_\theta$. We first prove uniqueness, then mutual absolute continuity, then derive the tilt from the first-variation computation, and finally bound the density ratio.

\emph{Uniqueness.} On its finite domain the KL term is strictly convex in $Q$, and $-\lambda\He$ is convex by Lemma~\ref{lem:concave}. Hence $F$ is strictly convex where finite, and its minimizer is unique.

\emph{Mutual absolute continuity.} We have $Q^\star\ll P_\theta$ with $\KL(Q^\star\|P_\theta)<\infty$ by finiteness of the minimum; it remains to prove $q^\star>0$ $P_\theta$-a.s. Suppose instead that $q^\star=0$ on a measurable set $A$ with $P_\theta(A)>0$. Let $R=P_\theta(\cdot\mid A)$ and $Q_t=(1-t)Q^\star+tR\in\cP_\theta$ for $t\in(0,1)$. Exactly as in the corresponding computation for the KL term (splitting the integral over $A$, where the density of $Q_t$ is $t\,\1_A/P_\theta(A)$, and $A^c$, where it is scaled by $1-t$),
\[
    \KL(Q_t\|P_\theta)-\KL(Q^\star\|P_\theta)=t\log t+O(t).
\]
The remaining entropy term of $F$ changes by only $O(t)$. Indeed, $S_{Q_t}^\varepsilon-S_{Q^\star}^\varepsilon=t(1-\varepsilon)(\Sigma_R-\Sigma_{Q^\star})$ has norm $O(t)$ (with constants depending only on $\norm{\phi}_2=1$), and $S\mapsto-\Tr(S\log S)$ is Lipschitz on the compact spectral range $[\varepsilon/d,1]$, its derivative $-(\log S+I)$ being bounded in operator norm by $\log(d/\varepsilon)+1$ there. Therefore
\[
    F(Q_t)-F(Q^\star)=t\log t+O(t)<0
    \qquad\text{for small }t>0,
\]
since $t\log t\to0^-$ dominates $O(t)$; this contradicts optimality. Thus $q^\star>0$ $P_\theta$-a.s.\ and $Q^\star\sim P_\theta$.

\emph{First variation and the tilt.} For bounded measurable $h$ with $\E_{Q^\star}h=0$, set $dQ_t=(1+th)\,dQ^\star$, a valid probability law for $|t|\le1/(1+\norm h_\infty)$, and let $\nu=h\,dQ^\star$, a finite signed measure with $\nu(\cX)=0$. By Lemma~\ref{lem:firstvar} and the identity $g_Q=\Gen{Q}$, the entropy term has derivative $\tfrac{d}{dt}\He(Q_t)|_{t=0}=\int g_{Q^\star}\,h\,dQ^\star$. The KL term has derivative
\[
    \frac{d}{dt}\KL(Q_t\|P_\theta)\Big|_{t=0}
    =\int(\log q^\star+1)\,h\,dQ^\star
    =\int\log q^\star\,h\,dQ^\star,
\]
where the $+1$ term vanishes since $\E_{Q^\star}h=0$; differentiation under the integral is justified by dominated convergence, as $h$ is bounded and $\int q^\star|\log q^\star|\,dP_\theta<\infty$ from $\KL(Q^\star\|P_\theta)<\infty$ together with the uniform bound $t\log t\ge-e^{-1}$. First-order optimality $\tfrac{d}{dt}F(Q_t)|_{t=0}=0$ for all such $h$ therefore gives
\[
    \int\Bigl(\log q^\star-\lambda g_{Q^\star}\Bigr)\,h\,dQ^\star=0
\]
for all bounded $h$ with $\E_{Q^\star}h=0$, so the integrand in parentheses is $Q^\star$-a.s.\ (and, by mutual absolute continuity, $P_\theta$-a.s.) equal to a constant. Exponentiating and normalizing, with the constant absorbed into the normalizer, yields \eqref{eq:kl-tilt}.

\emph{Boundedness of the ratio.} The eigenvalues of $S_Q^\varepsilon$ lie in $[\varepsilon/d,1]$, so $\opnorm{-\log S_Q^\varepsilon}\le\log(d/\varepsilon)$ and $0\le\lambda g_Q\le\lambda(1-\varepsilon)\log(d/\varepsilon)$ uniformly over $Q$ and $x$. The exponent in \eqref{eq:kl-tilt} is therefore uniformly bounded, so the normalizer lies in $(0,\infty)$ and the density ratio is bounded above and below by positive constants.
\end{proof}

\begin{proof}[Proof of Theorem~\ref{thm:tilt}]
We first establish existence, then obtain the remaining claims from Proposition~\ref{prop:kl-tilt}.

\emph{Existence.} Work on $\cP_\theta=\{Q:Q\ll P_\theta\}$, a convex set, and extend $F$ by $+\infty$ off it. The infimum is finite: $F(P_\theta)=-\lambda\He(P_\theta)\in[-\lambda\log d,0]$, while $F\ge-\lambda\log d>-\infty$ termwise. Let $(Q_n)$ be a minimizing sequence. Since $0\le\He(Q)\le\log d$, boundedness of $F(Q_n)$ implies $\sup_n\KL(Q_n\|P_\theta)=:C<\infty$. By Lemma~\ref{lem:kl-tight}, the KL sublevel set $\{Q:\KL(Q\|P_\theta)\le C\}$ is tight, so $(Q_n)$ is tight. By Prokhorov's theorem a subsequence converges weakly to some $Q^\star$. Since $\phi$ is bounded and continuous, $Q\mapsto\Sigma_Q,S_Q^\varepsilon$ are weakly continuous; because $S_Q^\varepsilon\succeq(\varepsilon/d)I_d\succ0$ uniformly, $S\mapsto-\Tr(S\log S)$ is continuous on the relevant compact spectral range, so $Q\mapsto\He(Q)$ is weakly continuous. The map $Q\mapsto\KL(Q\|P_\theta)$ is weakly l.s.c.: by the Donsker--Varadhan formula, $\KL(Q\|P_\theta)=\sup_{f\in C_b(\cX)}\{\E_Q[f]-\log\E_{P_\theta}[e^f]\}$ is a supremum of weakly continuous functions of $Q$, exactly as in the proof of Proposition~\ref{prop:mle-vae}(ii) with the roles of the two arguments exchanged. Therefore $F(Q^\star)\le\liminf_nF(Q_n)$, so $Q^\star$ attains the infimum; in particular $\KL(Q^\star\|P_\theta)<\infty$, so $Q^\star\ll P_\theta$.

\emph{Specialization.} By the existence step, $F$ attains a finite minimum on $\cP_\theta$, so Proposition~\ref{prop:kl-tilt} applies: the minimizer $Q^\star$ is unique and mutually absolutely continuous with $P_\theta$, which proves part (i), and the tilt \eqref{eq:kl-tilt} holds with the total reward $R_{Q^\star}$ of \eqref{eq:energy-reward}, proving part (ii).

\emph{Boundedness.} The spectral floor gives $0\le\lambda\,\Gen{Q^\star}\le\lambda(1-\varepsilon)\log(d/\varepsilon)$. Thus $R_{Q^\star}$ is uniformly bounded, so $Z=\E_{P_\theta}\exp(R_{Q^\star})\in(0,\infty)$ and $dQ^\star/dP_\theta$ is bounded above and below by positive constants; hence $Q^\star$ has the same $P_\theta$-essential support as $P_\theta$, proving part (iii).
\end{proof}

\begin{remark}[Scope of the tilt characterization]\label{rem:tilt-scope}
The characterization \eqref{eq:kl-tilt} is a fixed point: $R_{Q^\star}$ depends on $Q^\star$ through $S_{Q^\star}^\varepsilon$ (contrast Lemma~\ref{lem:gibbs}, where the reward is fixed and the tilt is explicit). The KL anchor only reweights within $\supp(P_\theta)$ and creates no mass where $P_\theta=0$. Strict convexity proves uniqueness of the target law but does not imply that any particular fixed-point iteration is contractive; convergence of a numerical solver requires a separate argument.
\end{remark}

\begin{proof}[Proof of Theorem~\ref{thm:twist}]
For measurable $A$,
\[
    q_t^\star(A)=\int K_t(A\mid x_0)\,Q^\star(dx_0)
    =Z^{-1}\!\int K_t(A\mid x_0)\,w(x_0)\,P_\theta(dx_0).
\]
Disintegrate the base joint law of $(X_0,X_t)$ as $K_t(dx_t\mid x_0)P_\theta(dx_0)=P_\theta(dx_0\mid x_t)\,p_t(dx_t)$. By Fubini's theorem (applicable since $w$ is bounded, by Theorem~\ref{thm:tilt}),
\[
    q_t^\star(A)
    =Z^{-1}\!\int_A\!\Bigl(\int w(x_0)P_\theta(dx_0\mid x_t)\Bigr)p_t(dx_t)
    =Z^{-1}\!\int_A h_t(x_t)\,p_t(dx_t),
\]
with $h_t(x_t)=\E_{P_\theta}[w(X_0)\mid X_t=x_t]$. Since $A$ was arbitrary, $dq_t^\star/dp_t=h_t/Z$. If both marginals have positive differentiable densities, then $\log q_t^\star=\log p_t+\log h_t-\log Z$; as $Z$ is constant in $x_t$, $\nabla\log q_t^\star=\nabla\log p_t+\nabla\log h_t$.
\end{proof}

\begin{theorem}[Exact reverse process]\label{thm:reverse}
Suppose the forward SDE $dX_t=f(X_t,t)\,dt+g(t)\,dW_t$ (for $t$ increasing from $0$ to $T$) admits strictly positive differentiable marginal densities and satisfies the standard regularity conditions for time reversal and the probability-flow construction. We use the standard reverse-time convention in which the displayed equations are integrated with \emph{decreasing} $t$ from $T$ to $0$. Then the reverse-time SDE, initialized at $q_T^\star$,
\[
    d X_t=\bigl[f(X_t,t)-g(t)^2\bigl(\nabla\log p_t(X_t)+u_t(X_t)\bigr)\bigr]\,dt
    +g(t)\,d\overline W_t,
\]
has time-zero law exactly $Q^\star$, where $\overline W$ is a reverse-time Brownian motion. The probability-flow ODE
\[
    \dot X_t=f(X_t,t)-\tfrac12\,g(t)^2\bigl(\nabla\log p_t(X_t)+u_t(X_t)\bigr),
\]
likewise integrated with decreasing $t$, has the same one-time marginals. Equivalently, under the forward reparameterization $\tau=T-t$ and $Y_\tau=X_{T-\tau}$, both dynamics run with increasing $\tau$ and their drifts are the \emph{negatives} of the displayed drifts evaluated at $t=T-\tau$ (the diffusion term is unchanged).
\end{theorem}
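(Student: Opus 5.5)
The plan is to reduce the claim to the classical time-reversal theorem for diffusions (Anderson; Haussmann--Pardoux), applied to the \emph{tilted} forward process rather than the base one. The starting observation is that the forward noising kernel $K_t$ does not depend on the initial law. So if we start the forward SDE $dX_t=f(X_t,t)\,dt+g(t)\,dW_t$ from $X_0\sim Q^\star$ instead of $P_\theta$, the resulting process has time-$t$ marginals exactly $q_t^\star$. This holds because $q_t^\star$ was defined as the pushforward of $Q^\star$ through $K_t$, and the semigroup property of the forward Markov process is the same for both initial laws.

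The first step is to record that this $Q^\star$-initialized forward process satisfies the hypotheses assumed in the statement. Its marginals $q_t^\star$ are strictly positive and differentiable. By Theorem~\ref{thm:twist}, $dq_t^\star/dp_t=h_t/Z$, and $h_t$ is bounded above and below by positive constants, because $w=\exp(R_{Q^\star})$ is (Theorem~\ref{thm:tilt}(iii)). Hence positivity transfers from $p_t$. Differentiability and the regularity needed for time reversal are taken as part of the standing assumption. One should note that the tilted process is covered by that assumption, since bounded tilting preserves integrability conditions such as $\int_0^T\E\|\nabla\log q_t^\star\|^2\,dt<\infty$ as long as $\nabla\log h_t$ is integrable.

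The second step applies the time-reversal theorem. The reversal of the forward SDE with marginals $q_t^\star$, run from $q_T^\star$ with decreasing $t$, is
\[
dX_t=\bigl[f-g^2\nabla\log q_t^\star\bigr]\,dt+g\,d\overline W_t .
\]
Its law at each time $t$ equals $q_t^\star$, and in particular its time-zero law is $q_0^\star=Q^\star$ (taking $K_0=\delta$). Substituting the identity $\nabla\log q_t^\star=\nabla\log p_t+u_t$ from Theorem~\ref{thm:twist} gives the displayed SDE.

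The third step handles the probability-flow ODE. The marginals $q_t^\star$ solve the Fokker--Planck equation
\[
\partial_tq=-\nabla\cdot(fq)+\tfrac12g^2\Delta q .
\]
Rewriting the diffusion term as $\tfrac12 g^2\Delta q=\nabla\cdot\bigl(\tfrac12 g^2 q\,\nabla\log q\bigr)$ turns this into a continuity equation with velocity field $f-\tfrac12g^2\nabla\log q_t^\star$. By uniqueness for the continuity equation, assumed under the regularity hypothesis, the ODE flow started at $q_T^\star$ transports $q_T^\star$ through the same marginals. Substituting Theorem~\ref{thm:twist} again gives the stated ODE.

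The final sentence about the reparameterization is bookkeeping. With $Y_\tau=X_{T-\tau}$ we have $dY_\tau=-dX_t$, so the drift changes sign, while the Brownian increment keeps the same law. The main obstacle is the first step: justifying that the reversal regularity conditions, stated for the base process, carry over to the tilted one. Here I would lean on the two-sided bounds on $h_t$ together with the assumption of differentiable positive densities, rather than try to prove it in general.
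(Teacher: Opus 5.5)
Your proposal is correct and follows the paper's own route. Apply Anderson's time-reversal theorem to the forward process whose marginals are $q_t^\star$, substitute $\nabla\log q_t^\star=\nabla\log p_t+u_t$ from Theorem~\ref{thm:twist}, take the probability-flow ODE as the deterministic process with the same one-time marginals, and get the $\tau=T-t$ statement from the sign flip of the drift.

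Your additional checks only make explicit what the paper folds into ``under the stated regularity'': that starting the forward SDE at $Q^\star$ yields exactly the marginals $q_t^\star$, that positivity transfers because $1\le h_t\le e^{\lambda(1-\varepsilon)\log(d/\varepsilon)}$, and the Fokker--Planck-to-continuity-equation derivation. The one small slip is in your integrability aside, which needs $\nabla\log h_t$ to be square-integrable under $q_t^\star$, not merely integrable; this is subsumed by the assumed regularity anyway.
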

\begin{proof}
Anderson's time-reversal theorem, under the stated regularity, gives the reverse-time SDE (integrated with decreasing $t$) for the process with marginals $q_t^\star$ as having drift $f-g^2\nabla\log q_t^\star$ and initial law $q_T^\star$. Substituting $\nabla\log q_t^\star=\nabla\log p_t+u_t$ from Theorem~\ref{thm:twist} yields the stated drift, and the marginals are $q_t^\star$ for all $t$, in particular $Q^\star$ at $t=0$. The probability-flow ODE $\dot X_t=f-\tfrac12g^2\nabla\log q_t^\star$ is the deterministic process with identical one-time marginals under the $q_T^\star$ initialization. The $\tau=T-t$ statement follows from the chain rule $\tfrac{d}{d\tau}Y_\tau=-\tfrac{d}{dt}X_t|_{t=T-\tau}$, which flips the sign of every drift while preserving the (sign-indifferent) diffusion coefficient. If instead one initializes at $p_T\ne q_T^\star$, the time-zero law is not $Q^\star$; the discrepancy is quantified in Theorem~\ref{thm:endpoint}.
\end{proof}

\begin{remark}[Initialization]\label{rem:init}
Practical samplers initialize from $p_T$, not $q_T^\star$. These coincide only when $h_T$ is constant. Because $dq_T^\star/dp_T=h_T/Z$, the mismatch is $\KL(p_T\|q_T^\star)=\E_{p_T}\log\frac{dp_T}{dq_T^\star}=\log Z-\E_{p_T}\log h_T(X_T)$, which enters any rigorous comparison between a deployed sampler and $Q^\star$ (Theorem~\ref{thm:endpoint}). Theorem~\ref{thm:reverse} is exact only with the $q_T^\star$ initialization.
\end{remark}

\begin{definition}[Plug-in fields]\label{def:plugin}
Assume $\cX\subseteq\R^D$ and that $\phi$ and the denoiser are differentiable. With guidance scale $\omega_t\ge0$,
\[
    \widetilde u_t^{\mathrm{chain}}(x_t)
    =\frac{\omega_t}{a}J_{\widehat x_0}(x_t,t)^\top
     \nabla_xR_{Q^\star}(\widehat x_0(x_t,t)),
\]
where $J_{\widehat x_0}$ is the Jacobian of the denoiser. The \emph{direct-injection} variant, applicable when clean and noisy states share dimension, is
\[
\widetilde u_t^{\mathrm{dir}}(x_t)=\omega_t \nabla_xR_{Q^\star}(\widehat x_0(x_t,t)),
\]
which uses a chosen state-space direction rather than the derivative of the composite map $x_t\mapsto R_{Q^\star}(\widehat x_0(x_t,t))$.
\end{definition}

\begin{remark}[Plug-in is uncontrolled]\label{rem:plugin}
No general equality or one-sided bound relates $u_t$ and $\widetilde u_t$: $R_{Q^\star}$ is nonlinear, and neither conditional expectation nor differentiation commutes with a point-mass substitution. For the reward $R_{Q^\star}=\lambda\Gen{Q^\star}$ of \eqref{eq:energy-reward},
\[
    \nabla_xR_{Q^\star}(x)
    =J_\phi(x)^\top\bigl[2\lambda(1-\varepsilon)(-\log S_{Q^\star}^\varepsilon)\phi(x)\bigr],
\]
where $J_\phi$ is the Jacobian of $\phi$ (the factor $2$ comes from differentiating the quadratic form $\phi^\top M\phi$ with symmetric $M=-\log S_{Q^\star}^\varepsilon$, which is the $x$-gradient of $\lambda\Gen{Q^\star}$).
\end{remark}

\begin{remark}[Discrete updates are approximations]\label{rem:ddpm-ddim}
The DDPM/DDIM updates \eqref{eq:eps-IGA}--\eqref{eq:ddpm-ddim} are algebraically consistent with the corrected noise prediction under the noise--score convention $\nabla\log p_t=-\varepsilon_\theta/\sqrt{1-\bar\alpha_t}$, and are discrete implementations inspired by Theorem~\ref{thm:twist}. They do not exactly sample $Q^\star$ even if $\widetilde u_t=u_t$; DDIM adds a further ODE discretization and path-selection approximation. A discretization term must therefore be added to the continuous-time bound of Theorem~\ref{thm:endpoint}.
\end{remark}

\begin{theorem}[Endpoint KL and TV bounds]\label{thm:endpoint}
Let the exact reverse process have initial law $q_T^\star$, base score $s_t=\nabla\log p_t$, and exact guidance $u_t=\nabla\log h_t$; let the deployed process have initial law $\pi_T$, learned score $\widehat s_t$, and approximate guidance $\widetilde u_t$. Put
\[
    e_t=\widehat s_t-s_t,
    \qquad
    \delta_t=\widetilde u_t-u_t.
\]
Assume that:
\begin{enumerate}[leftmargin=*]
    \item[\textup{(i)}] both continuous-time processes share the diffusion coefficient $g(t)I$ with $g(t)>0$ on $(0,T)$ (nondegeneracy on the open interval); any endpoint degeneracy $g(0)=0$ or $g(T)=0$ is handled by truncating to $[\eta,T-\eta]$, applying the bound there, and letting $\eta\downarrow0$, assuming the resulting integral converges;
    \item[\textup{(ii)}] the absolute-continuity and Novikov conditions for Girsanov's theorem hold on each such subinterval.
\end{enumerate}
Then, for the orientation $\KL(\widehat Q\|Q^\star)$ between the clean endpoint laws $Q^\star,\widehat Q$ (deployed law first),
\[
    \KL(\widehat Q\|Q^\star)
    \le\KL(\pi_T\|q_T^\star)
    +\frac12\int_0^T g(t)^2\,
     \E_{\widehat\Pp}\bigl[\norm{e_t(X_t)+\delta_t(X_t)}_2^2\bigr]\,dt,
\]
and, by Pinsker's inequality (again for the orientation $\KL(\widehat Q\|Q^\star)$),
\[
    \TV(\widehat Q,Q^\star)
    \le\Bigl[\tfrac12\KL(\pi_T\|q_T^\star)
    +\tfrac14\int_0^T g(t)^2\,
     \E_{\widehat\Pp}\bigl[\norm{e_t(X_t)+\delta_t(X_t)}_2^2\bigr]\,dt\Bigr]^{1/2}.
\]
\end{theorem}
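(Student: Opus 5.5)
The argument compares path measures and pushes the comparison down to the clean endpoint. Let $\Pp^\star$ denote the path law on $C([0,T],\R^D)$ of the exact reverse process of Theorem~\ref{thm:reverse}, started at $q_T^\star$ with drift built from $s_t+u_t$. Let $\widehat\Pp$ denote the path law of the deployed process, started at $\pi_T$ with drift built from $\widehat s_t+\widetilde u_t$. Both processes share the diffusion coefficient $g(t)I$ by (i). The clean endpoint laws $\widehat Q$ and $Q^\star$ are the images of $\widehat\Pp$ and $\Pp^\star$ under the measurable map $\omega\mapsto\omega(0)$. The data-processing inequality for KL therefore gives
\[
\KL(\widehat Q\|Q^\star)\le\KL(\widehat\Pp\|\Pp^\star),
\]
and it suffices to bound the path-space divergence.

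To compute $\KL(\widehat\Pp\|\Pp^\star)$, I would use the chain rule for KL, conditioning on the initial state $X_T$:
\[
\KL(\widehat\Pp\|\Pp^\star)=\KL(\pi_T\|q_T^\star)+\E_{x\sim\pi_T}\,\KL\bigl(\widehat\Pp_x\|\Pp^\star_x\bigr),
\]
where $\widehat\Pp_x$ and $\Pp^\star_x$ are the path laws started from the same point $x$. For a fixed start, the two SDEs share the diffusion coefficient and differ only in drift. In reverse time, the drift difference is
\[
-g(t)^2\bigl[(\widehat s_t+\widetilde u_t)-(s_t+u_t)\bigr]=-g(t)^2\bigl(e_t+\delta_t\bigr).
\]
Girsanov's theorem, valid by (ii) on $[\eta,T-\eta]$, gives the log-density ratio as a stochastic integral plus $\tfrac12\int g(t)^{-2}\|g(t)^2(e_t+\delta_t)\|^2\,dt$. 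Under $\widehat\Pp$, the martingale term has zero expectation, which is where Novikov and the integrability are used. What remains is
\[
\tfrac12\int g(t)^2\,\E_{\widehat\Pp}\|e_t+\delta_t\|_2^2\,dt.
\]
Here $g(t)>0$ on the open interval is needed to divide by $g^2$.

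The main obstacle is the endpoint degeneracy and the justification that the local-martingale term vanishes in expectation. I would handle the degeneracy with the truncation prescribed in (i). First run both processes on $[\eta,T-\eta]$, applying the chain rule at the time-$(T-\eta)$ marginals. Then let $\eta\downarrow0$, using lower semicontinuity of KL under weak convergence of the endpoint marginals together with monotone convergence of the integral, which is assumed finite. If Novikov only yields a local martingale, I would localize with stopping times and pass to the limit by Fatou.

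Finally, the TV bound follows from Pinsker, $\TV(\widehat Q,Q^\star)\le\sqrt{\tfrac12\KL(\widehat Q\|Q^\star)}$, applied to the bound above, which produces the constants $\tfrac12$ and $\tfrac14$.
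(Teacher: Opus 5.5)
Your proposal is correct and follows the paper's proof essentially step for step. Both use the chain rule for path-space KL at the initial time $T$, Girsanov with drift difference $-g(t)^2(e_t+\delta_t)$, data processing down to the time-$0$ coordinate, and Pinsker for the TV constants.

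One caution on your truncation step. Lower semicontinuity handles the time-$\eta$ end. At the time-$(T-\eta)$ end, with marginals $\widehat p_{T-\eta},q^\star_{T-\eta}$, it only gives $\KL(\pi_T\|q_T^\star)\le\liminf_{\eta\downarrow0}\KL(\widehat p_{T-\eta}\|q^\star_{T-\eta})$, which is the wrong direction for an upper bound. It is cleaner to keep conditioning on $X_T$, as the paper does. Note also that the drift difference equals $g(t)\theta_t$ with $\theta_t=-g(t)(e_t+\delta_t)$, so Girsanov requires no division by $g$ even where $g$ vanishes.
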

\begin{proof}
Let $\Pp^\star$ be the path law of the exact reverse process (initial law $q_T^\star$, drift $b_t^\star=f-g^2(s_t+u_t)$) and $\widehat\Pp$ the path law of the deployed process (initial law $\pi_T$, drift $\widehat b_t=f-g^2(\widehat s_t+\widetilde u_t)$), both with diffusion coefficient $g(t)I$. The drift difference is $\widehat b_t-b_t^\star=-g(t)^2(e_t+\delta_t)$. First, we decompose the path-space relative entropy in the direction $\KL(\widehat\Pp\|\Pp^\star)$ (deployed first) by the chain rule over the initial time $T$,
\[
    \KL(\widehat\Pp\|\Pp^\star)
    =\KL(\pi_T\|q_T^\star)
    +\E_{\pi_T}\KL\bigl(\widehat\Pp(\cdot\mid X_T)\,\|\,\Pp^\star(\cdot\mid X_T)\bigr).
\]
Next, we evaluate the conditional term. Conditionally on $X_T$, the two processes share the diffusion coefficient $g(t)I$, which is nondegenerate on $(0,T)$ (or on each $[\eta,T-\eta]$, with $\eta\downarrow0$ afterwards), and differ only in drift, so on that interval the change of measure is absolutely continuous and Girsanov's theorem applies, giving
\begin{align*}
    \KL\bigl(\widehat\Pp(\cdot\mid X_T)\,\|\,\Pp^\star(\cdot\mid X_T)\bigr)
    &=\tfrac12\,\E_{\widehat\Pp}\Bigl[\int_0^T
    \bigl\|g(t)^{-1}(\widehat b_t-b_t^\star)\bigr\|^2\,dt\ \Big|\ X_T\Bigr]\\
    &=\tfrac12\,\E_{\widehat\Pp}\Bigl[\int_0^T g(t)^2\norm{e_t+\delta_t}^2\,dt\ \Big|\ X_T\Bigr],
\end{align*}
using $g(t)^{-1}(\widehat b_t-b_t^\star)=-g(t)(e_t+\delta_t)$. Averaging over $X_T\sim\pi_T$, we arrive at
\[
    \KL(\widehat\Pp\|\Pp^\star)=\KL(\pi_T\|q_T^\star)
    +\frac12\int_0^T g(t)^2\,\E_{\widehat\Pp}\norm{e_t+\delta_t}^2\,dt.
\]
Finally, the clean endpoint laws $\widehat Q,Q^\star$ are measurable images (the time-$0$ coordinate) of the path laws, so the data-processing inequality gives $\KL(\widehat Q\|Q^\star)\le\KL(\widehat\Pp\|\Pp^\star)$, which is the stated KL bound. Pinsker's inequality $\TV(\mu,\nu)\le\sqrt{\tfrac12\KL(\mu\|\nu)}$ applied to $\widehat Q,Q^\star$ gives the TV bound.
\end{proof}

The exact-guidance bound is recovered only when $\pi_T=q_T^\star$ and $e_t\equiv0$; a separate discretization term is still needed for the implemented DDPM/DDIM sampler (Remark~\ref{rem:ddpm-ddim}).

\begin{remark}[i.i.d.\ sampling holds only for fixed guidance]\label{rem:iid}
At the population level $Q^\star$ is a single law, so independent exact samplers with a fixed potential $R_{Q^\star}$ produce i.i.d.\ draws from $Q^\star$; this distinguishes IGA from methods that define diversity only through a coupled batch objective. However, $R_{Q^\star}$ depends on the unknown $Q^\star$ through $S_{Q^\star}^\varepsilon$. If a practical algorithm recomputes covariance or entropy gradients from the same batch being generated, each particle's drift depends on the others: the outputs are exchangeable but not independent. An i.i.d.\ guarantee requires one of the following: \emph{frozen-potential sampling}, in which the potential is estimated in a separate stage, frozen, and used to run independent trajectories; \emph{independent-pilot estimation}, in which the potential is estimated on an independent pilot sample; or a \emph{mean-field analysis}, invoking a propagation-of-chaos argument when the potential is updated from the active batch. Absent these, finite-batch IGA guidance should be described as an interacting particle system.
\end{remark}

\section{Training-Time IGA for Diffusion Models: Proofs and Discussion}
\label{app:diffusion-training}

Diffusion models are trained through variational bounds, which places them in the maximum-likelihood family of Appendix~\ref{app:training}. Attaching $-\lambda\He(Q_\vartheta)$ directly to the denoising objective requires samples from $Q_\vartheta$, and hence full reverse rollouts inside the training loop. The framework offers a rollout-free alternative: perform the IGA correction on the \emph{data} before fitting the denoiser. To this end, we apply Proposition~\ref{prop:kl-tilt} with the reference distribution $\widehat P_n$; note that the proposition depends on $P_\theta$ only through its role as the reference measure, and the finite-minimum hypothesis holds automatically since the feasible set is the simplex over the training atoms. This application yields unique weights
\begin{equation}\label{eq:IGA-weights}
    q_i^\star=\frac{w_i}{\sum_{j=1}^n w_j},
    \qquad
    w_i=\exp\Bigl(\lambda\,\Gen{Q^\star_\lambda}(x_i)\Bigr),
\end{equation}
which form a self-consistent softmax over the training set and define $Q^\star_\lambda=\sum_{i=1}^n q_i^\star\delta_{x_i}$. The weights can be computed as a finite-dimensional convex--concave saddle problem through the spectral dual of Proposition~\ref{prop:spectral-minmax}, with the spectral adversary and the reweighting playing the detection and response roles described after that proposition. The outcome can be viewed as a distributionally robust reweighting of the dataset, although not a worst-case-loss one (Remark~\ref{rem:dro}, Appendix~\ref{app:diffusion-training}). The following proposition shows that training on the reweighted data is justified exactly rather than heuristically:

\begin{proposition}[IGA training as divergence minimization toward reweighted data]\label{prop:IGA-pythagoras}
Let $\Pref$ be a probability measure, let $\lambda\ge0$, $\varepsilon\in(0,1)$, and let $F$ be the objective~\eqref{eq:IGA-sampling-kl} with $\Pref$ in place of $P_\theta$. Suppose $F$ attains a finite minimum over $\{Q\in\cP:Q\ll\Pref\}$, at $Q^\star_\lambda$. Then for every $Q$ with $F(Q)<\infty$,
\begin{equation}\label{eq:IGA-pythagoras}
    F(Q)-F(Q^\star_\lambda)
    =\KL(Q\,\|\,Q^\star_\lambda)
    +\lambda\,B_{-\He}(Q,Q^\star_\lambda),
\end{equation}
where $B_{-\He}(Q,Q')\defeq\He(Q')-\He(Q)+\int\Gen{Q'}\,d(Q-Q')\ge0$ is the Bregman divergence of the convex functional $-\He$.
\end{proposition}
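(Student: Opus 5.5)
Since $F$ attains a finite minimum on $\{Q\ll\Pref\}$, Proposition~\ref{prop:kl-tilt} (with $\Pref$ in place of $P_\theta$; it uses the reference only as a reference measure) applies. It gives that $Q^\star_\lambda\sim\Pref$ with
\[
    \log\frac{dQ^\star_\lambda}{d\Pref}=\lambda\,\Gen{Q^\star_\lambda}-\log Z,\qquad Z=\E_{\Pref}\bigl[e^{\lambda\Gen{Q^\star_\lambda}}\bigr]\in(0,\infty).
\]
Moreover $\Gen{Q^\star_\lambda}$ is bounded, by the spectral floor. The plan is to split the identity into a KL part, handled by the Gibbs identity, and an entropy part, which after rearrangement is literally the definition of $B_{-\He}$.

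\emph{KL part.} Fix $Q$ with $F(Q)<\infty$. Since $\He$ is bounded, this means $\KL(Q\|\Pref)<\infty$, so $Q\ll\Pref$ and hence $Q\ll Q^\star_\lambda$. Apply Lemma~\ref{lem:gibbs} with $P=\Pref$ and $G=\lambda\Gen{Q^\star_\lambda}$, a bounded function, so $Z_G<\infty$ and $P^G=Q^\star_\lambda$. This gives
\[
    \KL(Q\|\Pref)=\KL(Q\|Q^\star_\lambda)+\lambda\!\int\!\Gen{Q^\star_\lambda}\,dQ-\log Z .
\]
Applying the same identity at $Q=Q^\star_\lambda$ gives $\KL(Q^\star_\lambda\|\Pref)=\lambda\int\Gen{Q^\star_\lambda}\,dQ^\star_\lambda-\log Z$. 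Subtracting the two,
\[
    \KL(Q\|\Pref)-\KL(Q^\star_\lambda\|\Pref)=\KL(Q\|Q^\star_\lambda)+\lambda\!\int\!\Gen{Q^\star_\lambda}\,d(Q-Q^\star_\lambda).
\]
All integrals are finite because $\Gen{}$ is bounded.

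\emph{Entropy part and assembly.} The remaining terms of $F(Q)-F(Q^\star_\lambda)$ are $-\lambda(\He(Q)-\He(Q^\star_\lambda))$. Adding them to the display above gives
\[
    \KL(Q\|Q^\star_\lambda)+\lambda\Bigl[\He(Q^\star_\lambda)-\He(Q)+\int\Gen{Q^\star_\lambda}\,d(Q-Q^\star_\lambda)\Bigr],
\]
and the bracket is exactly $B_{-\He}(Q,Q^\star_\lambda)$. This proves \eqref{eq:IGA-pythagoras}.

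\emph{Nonnegativity of $B_{-\He}$.} This is the step that needs an actual argument. Set $\nu=Q-Q^\star_\lambda$ and $Q_t=Q^\star_\lambda+t\nu$ for $t\in[0,1]$. By Lemma~\ref{lem:concave}, $t\mapsto\He(Q_t)$ is concave, and by Lemma~\ref{lem:firstvar} its right derivative at $0$ is $\int\Gen{Q^\star_\lambda}\,d\nu$. A concave function lies below its tangent line, so $\He(Q)=\He(Q_1)\le\He(Q^\star_\lambda)+\int\Gen{Q^\star_\lambda}\,d\nu$, which is exactly $B_{-\He}(Q,Q^\star_\lambda)\ge0$.

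The only delicate points are bookkeeping: checking that $Q\ll Q^\star_\lambda$, which follows from mutual absolute continuity, and that the affine path $Q_t$ stays in the probability simplex, which holds for any convex combination. I do not expect a genuine obstacle beyond these.
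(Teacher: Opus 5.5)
Your proposal is correct and is essentially the paper's proof. Both arguments have the same three steps. First, the KL chain rule $\KL(Q\|\Pref)=\KL(Q\|Q^\star_\lambda)+\lambda\E_Q[\Gen{Q^\star_\lambda}]-\log Z$ comes from the tilt of Proposition~\ref{prop:kl-tilt}; you cite Lemma~\ref{lem:gibbs} where the paper re-derives it with explicit integrability bookkeeping. Second, evaluating at $Q=Q^\star_\lambda$ identifies the constant as $F(Q^\star_\lambda)$. Third, nonnegativity of $B_{-\He}$ follows from concavity of $t\mapsto\He((1-t)Q^\star_\lambda+tQ)$ combined with Lemma~\ref{lem:firstvar}.
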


Every term on the right-hand side of \eqref{eq:IGA-pythagoras} is a divergence between $Q$ and the IGA-reweighted reference, and both terms vanish exactly at $Q=Q^\star_\lambda$. Therefore, over any generator class, minimizing the IGA objective is equivalent to matching the reweighted law. For a diffusion model, this equivalence justifies \emph{weighted denoising score matching}, i.e., the standard training loss with clean samples drawn according to the weights $q^\star$ in place of uniform weights, which coincides with diffusion training under the data law $Q^\star_\lambda$. When the variational bound is tight and the generator class is expressive, the minimizers of the weighted bound attain the IGA optimum. In general, the weighted bound controls the data-first divergence $\KL(Q^\star_\lambda\|Q_\vartheta)$, whereas the IGA excess \eqref{eq:IGA-pythagoras} is the model-first sum; this is the standard mass-covering versus mode-seeking asymmetry, stated here in an exact form (Corollary~\ref{cor:diffusion-IGA}, Appendix~\ref{app:diffusion-training}, which also records the empirical-versus-population role of the reference).

The following proves the results of the diffusion-training paragraph of Section~\ref{subsec:training}: the finite-sample IGA reweighting of the data (Corollary~\ref{cor:data-tilt}), the exact decomposition of Proposition~\ref{prop:IGA-pythagoras}, its consequence for weighted denoising training (Corollary~\ref{cor:diffusion-IGA}), and the relation to distributionally robust optimization (Remark~\ref{rem:dro}).

\begin{corollary}[Finite-sample IGA reweighting]\label{cor:data-tilt}
Let $x_1,\dots,x_n$ be the training samples and $\widehat P_n=\tfrac1n\sum_{i=1}^n\delta_{x_i}$. For every $\lambda\ge0$ and $\varepsilon\in(0,1)$, the objective of \eqref{eq:IGA-sampling-kl} with reference $\widehat P_n$ attains a finite minimum over $\{Q\in\cP:Q\ll\widehat P_n\}$, and its unique minimizer $Q^\star_\lambda=\sum_{i=1}^nq_i^\star\delta_{x_i}$ has strictly positive weights given by the self-consistent softmax \eqref{eq:IGA-weights}.
\end{corollary}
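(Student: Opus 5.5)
The plan is to reduce the corollary to Proposition~\ref{prop:kl-tilt} applied with $\widehat P_n$ in place of $P_\theta$. That proposition uses $P_\theta$ only as a reference probability measure. So it suffices to verify its single hypothesis, that $F(Q)=\KL(Q\|\widehat P_n)-\lambda\He(Q)$ attains a finite minimum over $\{Q\in\cP:Q\ll\widehat P_n\}$, and then read off the conclusions in the atomic setting.

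\emph{Step 1: finite-dimensional reduction.} Every $Q\ll\widehat P_n$ is supported on the atoms $\{x_1,\dots,x_n\}$, so $Q=\sum_i q_i\delta_{x_i}$ with $q$ in the simplex $\Delta_n$. If some atoms coincide, we first merge them into distinct support points with aggregated reference weights; this changes nothing below. In these coordinates $\KL(Q\|\widehat P_n)=\sum_i q_i\log(nq_i)$, with $0\log0=0$, which is finite and continuous on all of $\Delta_n$. The covariance is $\Sigma_Q=\sum_i q_i\phi(x_i)\phi(x_i)^\top$, which is affine in $q$. Because of the spectral floor $S_Q^\varepsilon\succeq(\varepsilon/d)I_d$, the map $q\mapsto\He(Q)$ is continuous on $\Delta_n$.

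\emph{Step 2: existence.} The function $F$ is therefore a real-valued continuous function on the compact set $\Delta_n$. By Weierstrass it attains a finite minimum, bounded below by $-\lambda\log d$. This is the only place where finiteness of the reference support is used; it replaces the tightness/Prokhorov argument of Theorem~\ref{thm:tilt}, and no topological assumption on $\phi$ is needed.

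\emph{Step 3: invoke Proposition~\ref{prop:kl-tilt}.} With the hypothesis verified, the proposition gives three conclusions:
\begin{itemize}
\item uniqueness of the minimizer $Q^\star_\lambda$;
\item mutual absolute continuity with $\widehat P_n$, which on atoms means $q_i^\star>0$ for every $i$;
\item the tilt $\frac{dQ^\star_\lambda}{d\widehat P_n}(x_i)=nq_i^\star=e^{\lambda\Gen{Q^\star_\lambda}(x_i)}\big/\tfrac1n\sum_j e^{\lambda\Gen{Q^\star_\lambda}(x_j)}$.
\end{itemize}
The factors of $n$ cancel, leaving exactly the softmax \eqref{eq:IGA-weights}.

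\emph{Main obstacle.} The main subtlety is checking that nothing in the proof of Proposition~\ref{prop:kl-tilt} relied on $P_\theta$ being non-atomic. The mutual-absolute-continuity step uses $R=\widehat P_n(\cdot\mid A)$ for a zero-weight atom set $A$; the $t\log t$ versus $O(t)$ comparison holds verbatim. The first-variation step uses bounded $h$ on the atoms, which is also immediate.

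Alternatively, the whole corollary can be proved directly in $\Delta_n$. First, the KL gradient $\log(nq_i)+1$ diverges to $-\infty$ at the boundary, which rules out boundary minimizers. Then the KKT conditions with Lemma~\ref{lem:firstvar} give $\log q_i^\star-\lambda\Gen{Q^\star_\lambda}(x_i)=\text{const}$. Uniqueness follows from strict convexity of the KL term together with concavity of $\He$ (Lemma~\ref{lem:concave}).

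With repeated atoms, the weights should be read per atom. The softmax formula is consistent with this because the energy depends only on $x_i$.
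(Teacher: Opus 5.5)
Your proposal is correct and follows the paper's proof essentially step for step. You identify $\{Q\ll\widehat P_n\}$ with the simplex $\Delta_n$ (merging repeated atoms), use continuity of $\sum_i q_i\log(nq_i)$ and $\He(Q_q)$ on the compact $\Delta_n$ to get a finite minimum, and then invoke Proposition~\ref{prop:kl-tilt} with $\widehat P_n$ as the reference to obtain uniqueness, $q_i^\star>0$, and $nq_i^\star=w_i/Z$, from which cancelling $n$ yields \eqref{eq:IGA-weights}. Your extra check that the proposition's proof does not need a non-atomic reference, and your alternative KKT argument directly on $\Delta_n$, are both sound but go beyond what the paper's proof needs.
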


\begin{proof}
The map $q\mapsto Q_q=\sum_iq_i\delta_{x_i}$ identifies $\{Q\in\cP:Q\ll\widehat P_n\}$ with the simplex $\Delta_n=\{q\in\R^n:q\ge0,\ \sum_iq_i=1\}$; if some training points coincide, the identification merges the corresponding atoms and the argument below is unchanged. On $\Delta_n$ each term of the objective is finite and continuous: $\KL(Q_q\|\widehat P_n)=\sum_iq_i\log(nq_i)$, continuous with the convention $0\log0=0$; and $\He(Q_q)$, the composition of the matrix entropy $\mHe$, continuous on density matrices, with the affine map $q\mapsto(1-\varepsilon)\sum_iq_i\phi(x_i)\phi(x_i)^\top+\tfrac\varepsilon dI_d$. A continuous function on the compact set $\Delta_n$ attains its minimum, which is finite, so the hypothesis of Proposition~\ref{prop:kl-tilt} holds with $\widehat P_n$ in the role of the reference. That proposition gives uniqueness, mutual absolute continuity (equivalently, $q_i^\star>0$ for every $i$), and the tilt \eqref{eq:kl-tilt}, which on atoms reads $nq_i^\star=w_i/Z$ with $Z=\tfrac1n\sum_{j}w_j$ and with $w_i$ as in \eqref{eq:IGA-weights}; cancelling the factors of $n$ gives \eqref{eq:IGA-weights}.
\end{proof}

\paragraph{Two roles of the reference.}
The corollary solves the IGA problem anchored at the \emph{empirical} training distribution exactly; the resulting $Q^\star_\lambda$ is supported on the training set and is the implicit data law of the weighted training scheme below. When the generated law $Q_\vartheta$ of a continuously supported model is compared against a reference, $\KL(Q_\vartheta\|\widehat P_n)=+\infty$, so the decomposition of Proposition~\ref{prop:IGA-pythagoras} is applied with a population reference, such as $\Pdata$ or a smooth teacher law in fine-tuning, for which $F(Q_\vartheta)$ is finite for absolutely continuous models. The empirical weights \eqref{eq:IGA-weights} are then the plug-in counterpart of the population tilt: the exponent is a fixed continuous function of $x$ once the covariance $S^\varepsilon_{Q^\star_\lambda}$ is given, and the empirical fixed point estimates exactly this covariance. We do not pursue a finite-sample analysis of this plug-in step here; Theorem~\ref{thm:bias} describes the behavior of the underlying moment estimates.

\begin{proof}[Proof of Proposition~\ref{prop:IGA-pythagoras}]
Write $Q^\star=Q^\star_\lambda$ and let
\[
    T(x)\defeq\lambda\,\Gen{Q^\star}(x)
\]
be the total reward \eqref{eq:energy-reward}, with $\Pref$ in the role of the reference, so that $dQ^\star/d\Pref=e^T/Z$ with $Z=\E_{\Pref}[e^{T(X)}]$ by \eqref{eq:kl-tilt}. Since the eigenvalues of $S^\varepsilon_{Q}$ lie in $[\varepsilon/d,1]$, the exponent is uniformly bounded: $0\le T\le\lambda(1-\varepsilon)\log(d/\varepsilon)$. Fix $Q$ with $F(Q)<\infty$. Because $0\le\He\le\log d$, finiteness of $F(Q)$ is equivalent to $\KL(Q\|\Pref)<\infty$, and in particular $Q\ll\Pref$.

\emph{Step 1: chain rule for the KL term.} The ratio $dQ^\star/d\Pref=e^T/Z$ is bounded above and below by positive constants, so $Q\ll\Pref$ if and only if $Q\ll Q^\star$, and $\Pref$-almost surely
\[
    \log\frac{dQ}{d\Pref}
    =\log\frac{dQ}{dQ^\star}+T-\log Z.
\]
The negative part of $\log(dQ/d\Pref)$ is $Q$-integrable (as always for a log-density ratio: $t(\log t)_-\le e^{-1}$ for $t\ge0$), and $T-\log Z$ is bounded, hence $Q$-integrable; therefore all three expectations below are well defined in $(-\infty,+\infty]$ and additivity holds:
\begin{equation}\label{eq:kl-chain}
    \KL(Q\|\Pref)
    =\KL(Q\|Q^\star)+\E_Q[T]-\log Z,
\end{equation}
with $\KL(Q\|Q^\star)<\infty$ exactly when $\KL(Q\|\Pref)<\infty$.

\emph{Step 2: the entropy Bregman term is nonnegative.} Let $\nu=Q-Q^\star$, a finite signed measure with $\nu(\cX)=0$. For $t\in[0,1]$, $Q^\star+t\nu=(1-t)Q^\star+tQ\in\cP$, and $h(t)\defeq\He(Q^\star+t\nu)$ is concave on $[0,1]$ (Lemma~\ref{lem:concave}, through the affine map $Q\mapsto\Sigma_Q$) with right derivative $h'(0^+)=\int\Gen{Q^\star}\,d\nu$ (Lemma~\ref{lem:firstvar}). Concavity places $h(1)$ below the tangent at $0$:
\[
    \He(Q)\le\He(Q^\star)+\int\Gen{Q^\star}\,d(Q-Q^\star),
\]
which is exactly $B_{-\He}(Q,Q^\star)\ge0$, and by the definition of $B_{-\He}$,
\begin{equation}\label{eq:entropy-bregman}
    -\lambda\He(Q)
    =-\lambda\He(Q^\star)
    -\lambda\!\int\!\Gen{Q^\star}\,d(Q-Q^\star)
    +\lambda\,B_{-\He}(Q,Q^\star).
\end{equation}

\emph{Step 3: assembly.} Summing \eqref{eq:kl-chain} and \eqref{eq:entropy-bregman}, and substituting $\E_Q[T]=\lambda\E_Q[\Gen{Q^\star}]$,
\[
    F(Q)=\KL(Q\|Q^\star)
    +\lambda\,B_{-\He}(Q,Q^\star)
    +C,
\]
where the $\E_Q[\Gen{Q^\star}]$ terms cancel against the integral in \eqref{eq:entropy-bregman}, leaving $\lambda\E_{Q^\star}[\Gen{Q^\star}]$, and
\[
    C=\lambda\E_{Q^\star}[\Gen{Q^\star}]-\log Z-\lambda\He(Q^\star).
\]
Finally, applying \eqref{eq:kl-chain} at $Q=Q^\star$ gives
\[
    \KL(Q^\star\|\Pref)=\lambda\E_{Q^\star}[\Gen{Q^\star}]-\log Z,
\]
so $C=\KL(Q^\star\|\Pref)-\lambda\He(Q^\star)=F(Q^\star)$, which is \eqref{eq:IGA-pythagoras}.
\end{proof}

\begin{corollary}[Weighted diffusion training]\label{cor:diffusion-IGA}
Let $q^\star$ be the weights of Corollary~\ref{cor:data-tilt} and let $\ell(x;\vartheta)$ be any per-example diffusion training loss (a denoising-score-matching loss or a negative variational bound). Then:
\begin{enumerate}[label=\textup{(\roman*)}]
    \item The weighted objective $\sum_{i=1}^nq_i^\star\ell(x_i;\vartheta)=\E_{X\sim Q^\star_\lambda}[\ell(X;\vartheta)]$ coincides with the corresponding standard training objective with data law $Q^\star_\lambda$; no other component of the training pipeline changes.
    \item Let $\Pref$ and $Q^\star_\lambda$ be as in Proposition~\ref{prop:IGA-pythagoras}. A law $Q$ with $F(Q)<\infty$ attains $\min_{Q'\ll\Pref}F$ if and only if $Q=Q^\star_\lambda$; and for any sequence $(\vartheta_k)$ with $\KL(Q_{\vartheta_k}\|Q^\star_\lambda)\to0$, $F(Q_{\vartheta_k})\to\min_{Q'\ll\Pref}F(Q')$.
\end{enumerate}
\end{corollary}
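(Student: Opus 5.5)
Part (i) is immediate from linearity of expectation. By Corollary~\ref{cor:data-tilt}, $Q^\star_\lambda=\sum_i q_i^\star\delta_{x_i}$. Hence for any measurable $\ell(\cdot;\vartheta)$ we have $\E_{X\sim Q^\star_\lambda}[\ell(X;\vartheta)]=\sum_i q_i^\star\ell(x_i;\vartheta)$. A standard denoising-score-matching loss or negative variational bound with data law $Q^\star_\lambda$ is exactly such an expectation over clean samples. The forward noising, network, and time sampling enter only inside $\ell$, so nothing else changes. If training points coincide, the merged atoms carry the summed weights, and the identity still holds.

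For part (ii) I will rely entirely on the decomposition \eqref{eq:IGA-pythagoras} of Proposition~\ref{prop:IGA-pythagoras}. For any $Q$ with $F(Q)<\infty$,
\[
F(Q)-F(Q^\star_\lambda)=\KL(Q\|Q^\star_\lambda)+\lambda B_{-\He}(Q,Q^\star_\lambda).
\]
Both terms are nonnegative. \emph{If} direction: $Q=Q^\star_\lambda$ gives zero, so $Q$ attains the minimum. \emph{Only if} direction: if $F(Q)=F(Q^\star_\lambda)$, then both nonnegative terms vanish. In particular $\KL(Q\|Q^\star_\lambda)=0$, which forces $Q=Q^\star_\lambda$. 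Equivalently, one can cite the uniqueness part of Proposition~\ref{prop:kl-tilt}.

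For the convergence claim, take $\vartheta_k$ with $\KL(Q_{\vartheta_k}\|Q^\star_\lambda)\to0$. Finiteness comes first. By Step~1 of the proof of Proposition~\ref{prop:IGA-pythagoras}, $dQ^\star_\lambda/d\Pref$ is bounded above and below. So finite $\KL(Q_{\vartheta_k}\|Q^\star_\lambda)$ gives finite $\KL(Q_{\vartheta_k}\|\Pref)$, hence $F(Q_{\vartheta_k})<\infty$ once the KL is finite, and the decomposition applies. It then remains to show $B_{-\He}(Q_{\vartheta_k},Q^\star_\lambda)\to0$. By Pinsker, $\TV(Q_{\vartheta_k},Q^\star_\lambda)\to0$. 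Since $\norm{\phi}_2=1$, this gives $\norm{\Sigma_{Q_{\vartheta_k}}-\Sigma_{Q^\star_\lambda}}\le 2\TV\to0$. The map $\He$ is Lipschitz in the covariance on the spectral range $[\varepsilon/d,1]$, as used in the proof of Proposition~\ref{prop:kl-tilt}, so $\He(Q_{\vartheta_k})\to\He(Q^\star_\lambda)$. The energy $\Gen{Q^\star_\lambda}$ is bounded by $(1-\varepsilon)\log(d/\varepsilon)$. Therefore $\bigl|\int\Gen{Q^\star_\lambda}\,d(Q_{\vartheta_k}-Q^\star_\lambda)\bigr|\le 2(1-\varepsilon)\log(d/\varepsilon)\,\TV\to0$. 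Both parts of $B_{-\He}$ thus vanish, and $F(Q_{\vartheta_k})\to F(Q^\star_\lambda)=\min F$.

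The only delicate point is ensuring $F(Q_{\vartheta_k})<\infty$ so that \eqref{eq:IGA-pythagoras} applies; the bounded density ratio settles it. The remaining steps are bookkeeping through TV continuity of the Bregman term.
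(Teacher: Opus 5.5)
Your proposal is correct and follows essentially the same route as the paper. Part (i) is the expectation under the finitely supported law $Q^\star_\lambda$, and part (ii) reads both the characterization and the convergence off \eqref{eq:IGA-pythagoras}, using Pinsker to send $\He(Q_{\vartheta_k})\to\He(Q^\star_\lambda)$ and the bounded-energy integral to zero; your explicit check that $F(Q_{\vartheta_k})<\infty$ eventually, via the two-sided bound on $dQ^\star_\lambda/d\Pref$, makes precise a step the paper's proof uses only tacitly when it applies the decomposition along the sequence.
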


\begin{proof}
Part (i) is the definition of expectation under a finitely supported law. For part (ii), the ``only if'' direction: if $F(Q)=F(Q^\star_\lambda)<\infty$, then by \eqref{eq:IGA-pythagoras} the two nonnegative terms vanish, in particular $\KL(Q\|Q^\star_\lambda)=0$, so $Q=Q^\star_\lambda$; the converse is trivial. For the convergence claim, write $Q_k=Q_{\vartheta_k}$ and $\delta_k=\sup_{A}|Q_k(A)-Q^\star_\lambda(A)|$, so that $\bigl|\int f\,d(Q_k-Q^\star_\lambda)\bigr|\le2\norm f_\infty\delta_k$ for bounded measurable $f$, and $\delta_k\le\sqrt{\KL(Q_k\|Q^\star_\lambda)/2}\to0$ by Pinsker's inequality. By \eqref{eq:IGA-pythagoras} it suffices that each right-hand term vanishes along the sequence. The KL term does by hypothesis. For the Bregman term, each entry of $\Sigma_{Q_k}-\Sigma_{Q^\star_\lambda}$ is $\int\phi_a\phi_b\,d(Q_k-Q^\star_\lambda)$ with $|\phi_a\phi_b|\le1$, so $\Sigma_{Q_k}\to\Sigma_{Q^\star_\lambda}$, hence $S^\varepsilon_{Q_k}\to S^\varepsilon_{Q^\star_\lambda}$ within the compact set of density matrices with spectrum in $[\varepsilon/d,1]$, on which $\mHe$ is continuous, giving $\He(Q_k)\to\He(Q^\star_\lambda)$; and $\bigl|\int\Gen{Q^\star_\lambda}\,d(Q_k-Q^\star_\lambda)\bigr|\le2(1-\varepsilon)\log(d/\varepsilon)\,\delta_k\to0$. Hence $B_{-\He}(Q_k,Q^\star_\lambda)\to0$, completing the proof.
\end{proof}

\begin{remark}[Saddle computation of the weights, and the relation to DRO]\label{rem:dro}
\emph{(a) Computation.} By Proposition~\ref{prop:spectral-minmax}, on $\Delta_n$ the weights of Corollary~\ref{cor:data-tilt} solve the finite-dimensional saddle problem
\[
\begin{aligned}
    \min_{q\in\Delta_n}\;\max_{\Theta\in\Tset}\;
    \Bigl\{
    \sum_iq_i\log(nq_i)
    &-\lambda(1-\varepsilon)\sum_iq_i\,\phi(x_i)^\top\Theta\,\phi(x_i)
    -\lambda\log\Tr\bigl(e^{-\Theta}\bigr)
    \Bigr\},
\end{aligned}
\]
whose objective is convex and continuous in $q$ on the compact $\Delta_n$ and concave and continuous in $\Theta$ on the compact $\Tset$; by Sion's minimax theorem the order of optimization may be interchanged, and both optima are attained. Alternating best responses are natural: at fixed $q$ the inner maximum is the closed form $\Theta^\star(Q_q)$ of Proposition~\ref{prop:spectral-minmax}, while at fixed $\Theta$ the outer minimization is, by Lemma~\ref{lem:gibbs} with reward $G(x)=\lambda(1-\varepsilon)\phi(x)^\top\Theta\,\phi(x)$, the explicit softmax $q_i\propto\exp\bigl(\lambda(1-\varepsilon)\phi(x_i)^\top\Theta\,\phi(x_i)\bigr)$.

\emph{(b) Not worst-case-loss DRO.} It is instructive to contrast \eqref{eq:IGA-weights} with KL-penalized distributionally robust training of the denoiser,
\[
    \min_\vartheta\;\max_{Q\ll\widehat P_n}\;\bigl\{\E_Q[\ell(X;\vartheta)]-\eta\,\KL(Q\|\widehat P_n)\bigr\},
    \qquad\eta>0.
\]
There, by Lemma~\ref{lem:gibbs}, the inner maximizer reweights the data by the \emph{loss}, $q_i\propto\exp(\ell(x_i;\vartheta)/\eta)$: the adversarial reweighting tracks loss hardness and changes with $\vartheta$ at every step. The IGA reweighting \eqref{eq:IGA-weights} is $\vartheta$-independent and tilts by the \emph{entropy energy}: it up-weights points along spectral directions the data underpopulates, whether or not the current model finds them hard. Comparing the two Gibbs exponents, the schemes produce the same weights only when $\ell(\cdot;\vartheta)$ is, on the training set, an affine function of the IGA exponent, which is a nongeneric coincidence. Replacing the denoiser's training distribution by a worst-case-loss adversary therefore optimizes robustness, not spectral diversity, and is not equivalent to IGA training; the rigorous route to training-time IGA for diffusion models is the reweighting-and-refit composition of Corollaries~\ref{cor:data-tilt} and~\ref{cor:diffusion-IGA}.
\end{remark}

\section{Additional Numerical Results}
\label{app:additional}

\begin{figure*}[!t]
    \centering
    \captionsetup[subfigure]{font=normalsize,labelfont=bf,skip=3pt}

    \begin{subfigure}[t]{0.48\textwidth}
        \centering
        \includegraphics[width=\linewidth]
            {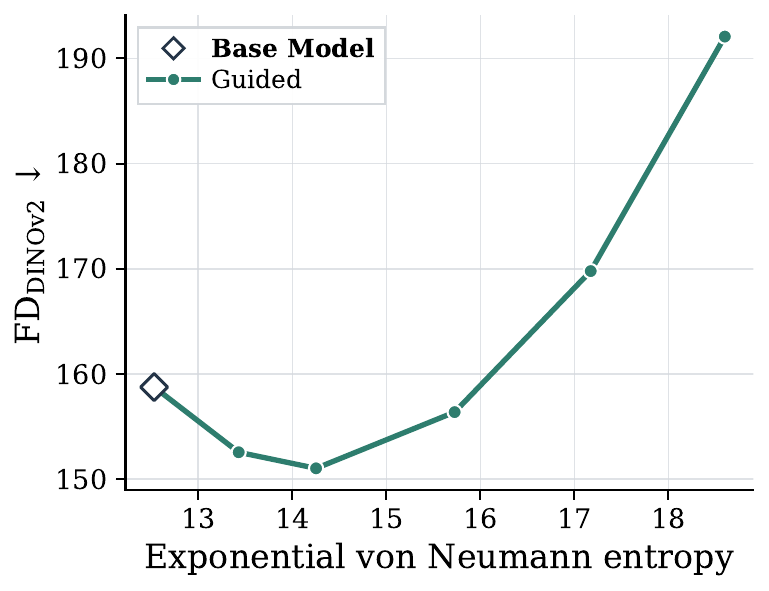}
        \caption{CelebA-HQ: Fr\'echet distance (FD).}
        \label{fig:celeba-dino-fd}
    \end{subfigure}
    \hfill
    \begin{subfigure}[t]{0.48\textwidth}
        \centering
        \includegraphics[width=\linewidth]
            {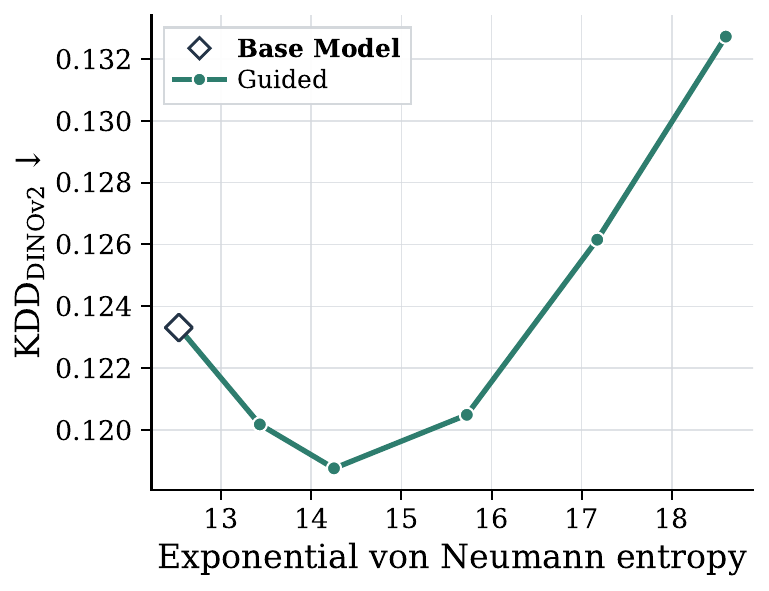}
        \caption{CelebA-HQ: kernel distance (KD).}
        \label{fig:celeba-dino-kd}
    \end{subfigure}

    \vspace{0.6em}

    \begin{subfigure}[t]{0.48\textwidth}
        \centering
        \includegraphics[width=\linewidth]
            {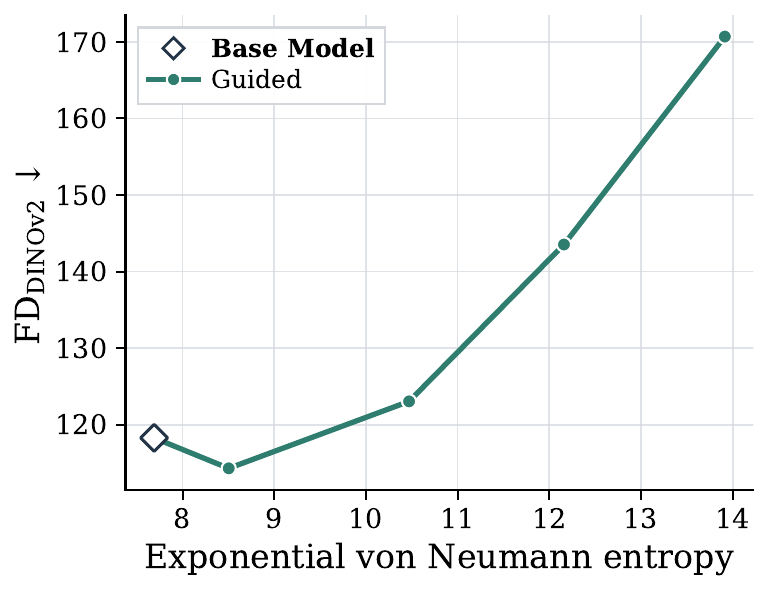}
        \caption{ImageNet: Fr\'echet distance (FD).}
        \label{fig:imagenet-dino-fd}
    \end{subfigure}
    \hfill
    \begin{subfigure}[t]{0.48\textwidth}
        \centering
        \includegraphics[width=\linewidth]
            {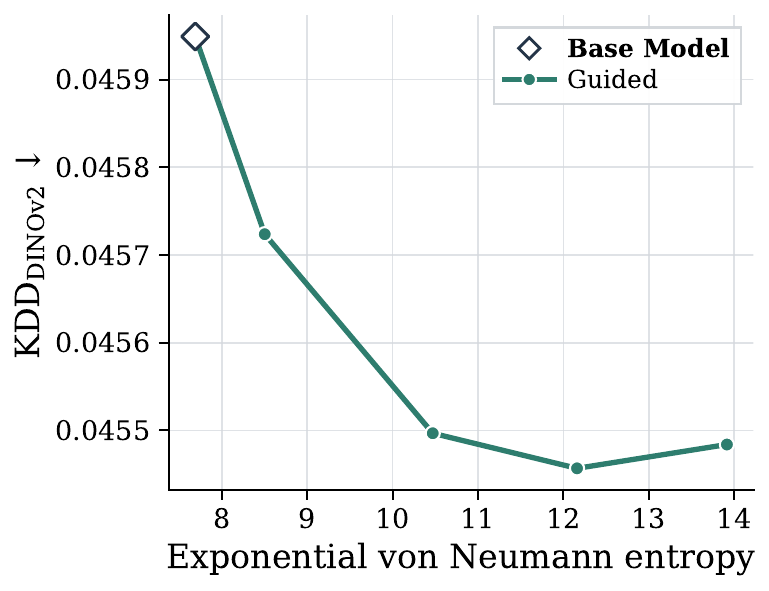}
        \caption{ImageNet: kernel distance (KD).}
        \label{fig:imagenet-dino-kd}
    \end{subfigure}

    \caption{
        \textbf{DINOv2-space distributional distances along the IGA
        target path.}
        Fr\'echet distance (FD) and kernel distance (KD) are evaluated
        in the DINOv2 representation used for the spectral analysis.
        The initial portion of the IGA path reduces the diversity
        deficit while improving or maintaining distributional
        agreement with the data. As $\lambda$ increases further,
        the distance minima occur at metric- and dataset-dependent
        operating points, consistent with the transition from
        below-wall diversity repair to beyond-wall spectral
        extrapolation.
    }
    \label{fig:dino-tradeoffs}
\end{figure*}

This appendix collects supplementary numerical results for the sampling-time and training-time experiments. 
Section~\ref{app:dino} reports Dinov2-space distributional distances for the CelebA-HQ and ImageNet
experiments, complementing the independent Inception-v3 evaluation in the main text. Section~\ref{app:mnist-training-results} reports the MNIST GAN results for the training-time realization of IGA discussed in Section~\ref{subsec:training-time-iga}.

\subsection{DINOv2-Space Distributional Distances}
\label{app:dino}

The main text reports FID and KID in Inception-v3 feature space as an
evaluation independent of the DINOv2 representation used to define the
spectral entropy and entropy wall. Here we provide the corresponding
DINOv2-space distributional distances. These results complement the
independent Inception-v3 evaluation and make explicit the
representation dependence of the precise fidelity optimum along the
IGA regularization path.

\begin{figure}[!t]
    \centering

    \begin{subfigure}[t]{0.95\textwidth}
        \centering
        \includegraphics[
            width=\linewidth
        ]{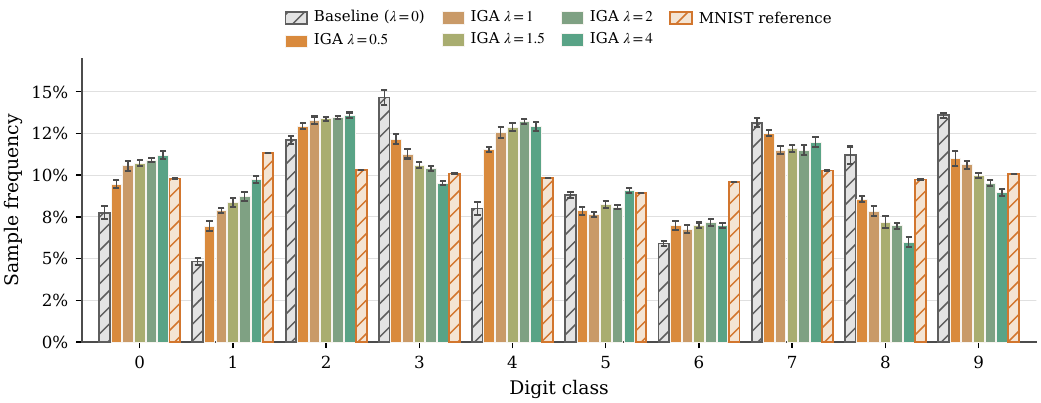}
        \caption{}
        \label{fig:mnist-class-frequencies}
    \end{subfigure}

    \vspace{0.3em}

    \begin{subfigure}[t]{0.46\textwidth}
        \centering
        \includegraphics[
            width=\linewidth
        ]{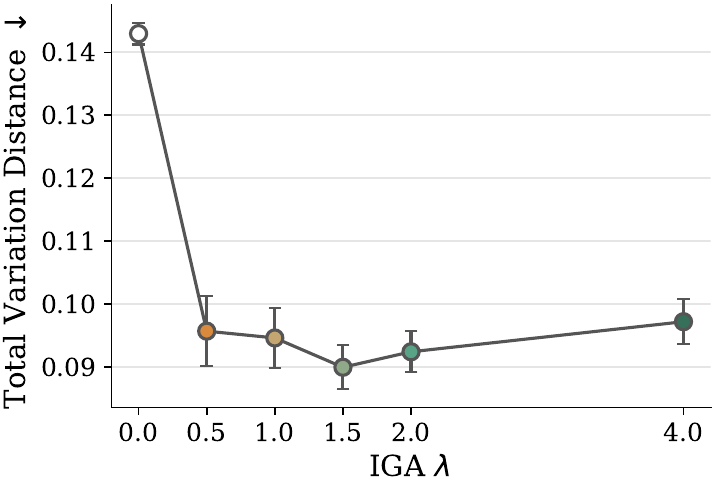}
        \caption{}
        \label{fig:mnist-class-tv}
    \end{subfigure}
    \hfill
    \begin{subfigure}[t]{0.46\textwidth}
        \centering
        \includegraphics[
            width=\linewidth
        ]{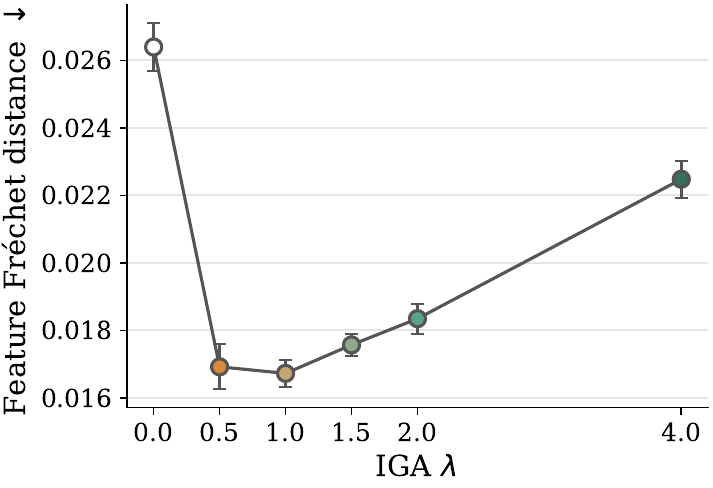}
        \caption{}
        \label{fig:mnist-feature-frechet}
    \end{subfigure}

    \caption{
        \textbf{Training-time IGA reduces class imbalance in an
        MNIST GAN.}
        \textbf{(a)} Generated digit frequencies move toward the
        empirical MNIST distribution as the IGA multiplier increases.
        \textbf{(b,c)} Moderate regularization reduces both
        class-distribution total variation and independent
        evaluator-feature Fr\'echet distance. Results are averaged
        over five seeds; error bars denote standard error.
    }
    \label{fig:mnist-IGA-repair}
\end{figure}

\subsection{Training-Time IGA on MNIST}
\label{app:mnist-training-results}
The training-time experiment of Section~\ref{subsec:training-time-iga}
evaluates whether the same spectral-entropy regularizer used for
sampling-time IGA can be incorporated directly into adversarial
training. Figure~\ref{fig:mnist-IGA-repair} reports the resulting
class-frequency and distributional-fidelity measurements across the
IGA path.

\paragraph{Experimental details.}
We train a convolutional GAN on MNIST for 20 epochs. The generator maps a 64-dimensional standard-normal latent through a fully connected projection and two transposed-convolution stages to a $28\times28$ image, while the discriminator uses two strided convolutional layers followed by a linear output. We use batch size 128 and Adam with learning rate $2\times10^{-4}$ and
$(\beta_1,\beta_2)=(0.5,0.999)$ for both networks, with the
non-saturating logistic generator objective and one discriminator
update per generator update. We evaluate $\lambda\in\{0,0.5,1,1.5,2,4\}$ over five random seeds.

\noindent The fixed IGA representation is the unit-normalized 64-dimensional embedding of a separately trained MNIST classifier. At each generator update, the spectral adversary is recomputed from the current generated minibatch at its closed-form best response and then held fixed during the generator update, as in Proposition~\ref{prop:entropy-grad}. We set $\varepsilon=0.05$. For independent evaluation, we use a second frozen classifier with a different architecture and a 96-dimensional embedding. For each run, 10,000 generated samples are used to compute class frequencies, total variation distance to the empirical MNIST test-set class distribution, and feature Fr\'echet distance between generated and real test samples in the independent evaluator space.

\noindent As shown in Figure~\ref{fig:mnist-IGA-repair}(a), increasing $\lambda$ initially redistributes generated mass away from overrepresented classes and toward classes that are underrepresented by the baseline GAN. This redistribution is reflected quantitatively in panels (b) and (c): both class-distribution total variation and independent evaluator-feature Fr\'echet distance improve substantially at intermediate values of $\lambda$. Their nonmonotone behavior at larger $\lambda$ illustrates the tradeoff between the GAN fidelity objective and the distribution-level entropy reward.

\end{document}